\newcommand{\diff}{{\rm d}}
\newcommand{\rd}{{\rm d}}
\newcommand{\attn}{{\tt Attn}}
\newcommand{\ffn}{{\tt FFN}}
\newcommand{\bzero}{\mathbf{0}}
\newcommand{\bbR}{\mathbb R}
\newcommand{\rF}{\mathrm F}
\newcommand{\fmult}{f_{\rm mult}}
\newcommand{\emult}{\epsilon_{\rm mult}}
\newcommand{\set}[1]{{\left\{ #1 \right\}}}
\newcommand{\abs}[1]{{\left| #1 \right|}}
\newcommand{\paren}[1]{{\left( #1 \right)}}
\newcommand{\brac}[1]{{\left[ #1 \right] }}
\newcommand{\indic}[1]{{\mathbf{1}\left\{ #1 \right\} }}
\newcommand{\trunclhat}{\hat{\ell}^{\rm trunc}}
\newcommand{\shat}{\hat{\sbb}}
\newcommand{\scoreloss}{h}
\newcommand{\truncscoreloss}{h^{\rm trunc}}
\title{\bf Diffusion Transformer Captures Spatial-Temporal Dependencies: A Theory for Gaussian Process Data}
\author{
Hengyu Fu\thanks{Stanford University. Email: \texttt{fhycz2018@gmail.com}}
\and
Zehao Dou\thanks{Yale University. Email: \texttt{zehao.dou@yale.edu}}
\and
Jiawei Guo\thanks{Northwestern University. Email: \texttt{guokaku666@gmail.com}}
\and
Mengdi Wang\thanks{Princeton University. Email: \texttt{\{mengdiw,minshuochen\}@princeton.edu}}
\and
Minshuo Chen\footnotemark[4]
}
\date{}
\begin{document}
\maketitle

\begin{abstract}
Diffusion Transformer, the backbone of Sora for video generation, successfully scales the capacity of diffusion models, pioneering new avenues for high-fidelity sequential data generation. Unlike static data such as images, sequential data consists of consecutive data frames indexed by time, exhibiting rich spatial and temporal dependencies. These dependencies represent the underlying dynamic model and are critical to validate the generated data. In this paper, we make the first theoretical step towards bridging diffusion transformers for capturing spatial-temporal dependencies. Specifically, we establish score approximation and distribution estimation guarantees of diffusion transformers for learning Gaussian process data with covariance functions of various decay patterns. We highlight how the spatial-temporal dependencies are captured and affect learning efficiency. Our study proposes a novel transformer approximation theory, where the transformer acts to unroll an algorithm. We support our theoretical results by numerical experiments, providing strong evidence that spatial-temporal dependencies are captured within attention layers, aligning with our approximation theory.
\end{abstract}
\section{Introduction}\label{sec:intro}
Diffusion models have emerged as a powerful new technology for generative AI, which is widely adopted in computer vision and audio generation \citep{song2019generative, song2020denoising, ho2020denoising, zhang2023survey}, sequential data modeling \citep{alcaraz2022diffusion, tashiro2021csdi, tian2023fast}, reinforcement learning and control \citep{pearce2023imitating, hansen2023idql, zhu2023diffusion, ding2023consistency}, as well as computational biology \citep{xu2022geodiff, guo2023diffusion}. The basic functionality of diffusion models is to generate new samples replicating essential characteristics in the training data.

Diffusion models generate new samples by sequentially transforming Gaussian white noise. Each step of the transformation is driven by a so-called ``score function'', which is parameterized by a neural network. In order to train the score neural network, diffusion models utilize a forward process to produce noise corrupted data and the score neural network attempts to remove the added noise. In early implementations of diffusion models, the score neural network is typically chosen as the U-Net \citep{ronneberger2015u}. Afterwards, a few works also demonstrate the capability of using transformers as a score neural network \citep{peebles2023scalable, wu2024medsegdiff, bao2023one}. Throughout the paper, we adopt the terminology in \citep{peebles2023scalable} to name diffusion models with transformers as diffusion transformers.

Recently, the astounding success of applying diffusion models in \textit{dynamic (sequential) data}, including video generation \citep{gupta2023photorealistic, liu2024sora} and financial data augmentation \citep{gao2024diffsformerdiffusiontransformerstock}, strengthens the seemingly unlimited potentiality of diffusion models . These models advocate transformers over the traditional U-Net for parameterizing the score function. A high-level intuition is that video data comprises rich spatial and temporal dependencies, induced by the underlying dynamics of objects and background. For example, the movement of an object should be continuous along the time. These dependencies resonate well with the self-attention mechanism in transformers for capturing token-wise correlation, suggesting benefits for learning with sequential data. We illustrate diffusion transformer accurately learning spatial-temporal dependencies in Figure~\ref{fig:dit_dependencies}.

\begin{figure}[t]
\centering
\includegraphics[width = 0.95\textwidth]{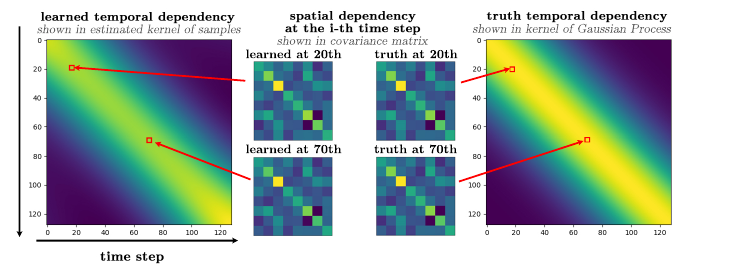}

\caption{\it Diffusion transformer learns spatial-temporal dependencies. The diffusion transformer is trained with data sampled from a stationary Gaussian process consisting of $128$ time steps. At each time step, the data dimension is $8$. We obtain 1000 generated samples at each time step. The left large heat map demonstrates the estimated temporal correlation (see Appendix~\ref{appendix:exp_detail} for the estimation method) in the process between different time steps, which aligns well with the ground truth on the right. The smaller heat maps are the estimated covariance matrix of data at a single time step, which demonstrate the spatial dependencies in data. They also align well with the ground truth.}
\label{fig:dit_dependencies}

\end{figure}

Despite the empirical success, there lacks a rigorous understanding of diffusion transformers for sequential data modeling. Different from static data, sequential data consists of a series of dependent data frames. The data sequence can be extensively long. For instance, a  one-minute video would contain over 1440 image frames, and intraday data in financial applications may be even longer.

Therefore, na\"{i}vely treating the entire sequential data solely as high-dimensional data without considering its inner spatial-temporal correlations will lead to a large dimensionality dependence and inefficient learning. As a result, existing results of diffusion models for static data can provide few insights on the following fundamental questions:
\begin{center}

\it Can diffusion transformers efficiently capture spatial-temporal dependencies in sequential data? \\
If yes, how do spatial-temporal dependencies affect the learning efficiency?

\end{center}
We answer these questions for the first time by studying using diffusion transformers for learning Gaussian process data. Besides its simplicity, Gaussian process exhibits intriguing and salient properties. Firstly, Gaussian process data can be \textit{high-dimensional}, highlighting the influence of data dimensionality in diffusion transformer. Secondly, the spatial-temporal dependencies are the \textit{defining quantities} of a Gaussian process. This necessitates an effective learning of these dependencies. In fact, Gaussian process can encode a wide variety of complicated correlations in real-world applications \citep{seeger2004gaussian, williams2006gaussian}. For instance, Brownian motion falls into the category of Gaussian process for describing particle movements in a fluid. Gaussian process is also a powerful statistical tool for regression, classification and forecasting problems \citep{banerjee2013efficient, deringer2021gaussian, chen2021solving, borovitskiy2021matern}.

\paragraph{Contributions} Our results show that by construction, transformers can adapt to the spatial-temporal dependencies so as to promote the learning efficiency. Furthermore, we show sample complexity bounds of diffusion transformers, demonstrating the influence of the decay of correlation in the sequential data. We summarize our contributions as follows.

\noindent $\bullet$ We propose a novel score function approximation scheme for Gaussian process data, which represents the score function by a gradient descent algorithm (Lemma~\ref{lemma:score_as_gd}). Then we construct a transformer architecture to unroll the gradient descent algorithm in Theorem~\ref{thm:approx}. We particularly highlight that the attention layer effectively captures the spatial-temporal dependencies. Meanwhile, the decay pattern of those dependencies influences the approximation efficiency.

\noindent $\bullet$ Built upon our score function approximation theory, we establish the first sample complexity bound for diffusion transformers in learning sequential data (Theorem~\ref{thm:generalization}). We show that the generalization error scales with $1/\sqrt{n}$, where $n$ is the sample size. {Our generalization error also demonstrates the influence of dependency decay speed and the length of sequences.}

\noindent $\bullet$ We provide numerical results to support our theory by showing the learning performance under various settings. More interestingly, we demonstrate that a well-trained diffusion transformer reproduces the ground-truth spatial-temporal dependencies accurately within an attention layer, emphasizing the applicability of our theoretical insights.

\paragraph{Related Works} Our work establishes score approximation and distribution estimation theories of diffusion transformer with sequential data. Prior works focus on static data and provide sampling and learning guarantees of diffusion models. In particular, assuming access to a relatively accurate estimated score function, \citet{benton2022denoising, benton2023linear, benton2024nearly, li2024accelerating, li2023towards, chen2022sampling, lee2022convergencea, lee2022convergenceb, chen2023restoration, chen2023probability} show that the generated distribution of diffusion models stays close to the ground-truth distribution. Towards an end-to-end analysis, i.e., involving the score estimation procedure, \citet{chen2023score, oko2023diffusion, li2024diffusion, mei2023deep, tang2024adaptivity, jiao2024latent} all provide sample complexity bounds of diffusion models for various types of data, including manifold data and graphical models. Yet these results are not directly applicable to understanding how spatial-temporal dependencies are captured by diffusion transformers in sequential data.

For score approximation using transformers, we adopt an algorithm unrolling approach \citep{monga2021algorithm}. In particular, we view the score function as the last iterate of a gradient descent algorithm and utilize transformers to implement gradient descent iterations. We are aware of \citet{mei2023deep, mei2024u} showing U-Net performing algorithm unrolling in diffusion models.

On the empirical side, Diffusion Transformer (DiT) \citep{peebles2023scalable} challenges the common choice of U-Net for image generation, providing state-of-the-art performance. Moreover, diffusion transformer exhibits appealing scalability towards better generation qualities with larger model sizes. More recently, diffusion transformers are leveraged in video generation \citep{gupta2023photorealistic, liu2024sora}, where the video data is patchified and rearranged into a long sequence. Besides, diffusion transformers are also used for other sequential data \citep{sun2022score, austin2021structured,campbell2022continuous,gao2024diffsformerdiffusiontransformerstock}, such as language, music and financial data.

\paragraph{Notation} We use bold letters to denote vectors and matrices. For a vector $\vb$, we denote its Euclidean norm as $\norm{\vb}_2$. For a matrix $\Ab$, we denote its operator, Frobenius norm as $\norm{\Ab}_{2}$ and $\norm{\Ab}_{\rm F}$, respectively. Moreover, we denote $\norm{\Ab}_{\infty}=\max_{i,j}\abs{\Ab_{i,j}}$. We denote the condition number of a positive definite matrix $\Ab$ by $\kappa(\Ab) = \lambda_{\max}(\Ab) / \lambda_{\min}(\Ab)$, where $\lambda_{\max}$ and $\lambda_{\min}$ denote the maximum and minimum eigenvalues. We denote $f\lesssim g$ if $f \leq Cg$ holds for a constant $C>0$.

\section{Gaussian Process and Diffusion Transformer}\label{sec:pre}
In this section, we formalize our data modeling and sampling problem with Gaussian process data. Meanwhile, we briefly introduce diffusion processes and transformer architectures.

\paragraph{Gaussian Process} We denote $\{\Xb_{h}\}_{h \in [0, H]}$ as a continuous-time Gaussian process in the time interval $[0, H]$. The process lives in the $d$-dimensional Euclidean space, i.e., $\Xb_h \in \RR^d$ for any $h \in [0, H]$. A defining property of Gaussian process is that for any finite collection of time indices $h_1, \dots, h_N$ with $N \in \NN^+$, the joint distribution of $\{\Xb_{h_1}, \dots, \Xb_{h_N}\}$ is still Gaussian. As a particular example, for a fixed time $h$, the marginal distribution of $\Xb_h$ is Gaussian.

To fully describe a Gaussian process, we need the concept of mean and covariance functions. Roughly speaking, a mean function $\bmu(h) = \EE[\Xb_h]$ characterizes the expected evolution trend of the process. A covariance function $\bGamma(h_1, h_2) = \EE[(\Xb_{h_1} - \bmu(h_1)) (\Xb_{h_2} - \bmu(h_2))^\top]$ captures the correlation between two time indices in the process. Note that, when $h_1 = h_2 = h$, the covariance function computes the covariance matrix of $\Xb_h$. Remarkably, the covariance function determines many basic properties of the continuous-time process, such as its stationarity, periodicity, and smoothness \citep{williams2006gaussian}. In our later study, we reveal an intimate connection between the behavior of the covariance function to the learning efficiency of diffusion transformer.

Throughout the paper, we focus on Gaussian processes whose covariance functions only depend on the gap between time indices. Accordingly, we reparameterize the covariance function as $\bGamma(h_1, h_2) = \gamma(h_1, h_2) \bSigma$, where $\gamma(\cdot, \cdot)$ is a scalar-output function and $\bSigma = \Cov[\Xb_h]$ (identical for any $h$).

\paragraph{Sequential Data Sampled from Gaussian Process} In real-world scenarios, an underlying continuous-time process is often perceived by a sequence of data sampled at discrete times. For example, a video typically consists of 24 to 30 image frames per second. When played back, these frames appear seamless to the human eye, which cannot distinguish between individual frames as if the video is continuous. Following the same spirit, we denote $h_1, \dots, h_N$ for a sufficiently large $N \in \NN^+$ as a uniform grid on the interval $[0, H]$ and form a discrete sequence $\{\Xb_{h_1}, \dots, \Xb_{h_N}\}$ observed at those time indices from an underlying Gaussian process. By the definition of Gaussian process, if we stack $\Xb_{h_1}, \dots, \Xb_{h_N}$ consecutively as a vector in $\RR^{dN}$, it follows a Gaussian distribution. The mean is $\bmu = [\bmu_1^\top, \dots, \bmu_N^\top]^\top$ with $\bmu_i = \bmu(h_i)$ and the covariance is a block-wise matrix represented as $\bGamma \otimes \bSigma$, where
\begin{align*}
\bGamma = \begin{bmatrix}
\gamma(h_1, h_1) & \cdots & \gamma(h_1, h_N) \\
\vdots & \ddots & \vdots \\
\gamma(h_N, h_1) & \cdots & \gamma(h_N, h_N)
\end{bmatrix}
\quad
\text{and}
\quad
\bGamma \otimes \bSigma = \begin{bmatrix}
\bGamma_{11} \bSigma & \cdots & \bGamma_{1N} \bSigma \\
\vdots & \ddots & \vdots \\
\bGamma_{N1} \bSigma & \cdots & \bGamma_{NN} \bSigma
\end{bmatrix}.
\end{align*}
Here, $\otimes$ is the matrix Kronecker product. Notably, $\bGamma$ captures the \textit{temporal dependency} between time indices and $\bSigma$ captures the \textit{spatial dependency} of entries in $\Xb_h$. Since $h_1, \dots, h_N$ form a uniform grid, $\bGamma$ is a symmetric Toeplitz matrix with entries taking at most $N$ different values.

{Gaussian process data provides explicit description of the spatial-temporal dependencies, but still remains highly relevant to real-world diffusion models. These models utilize a pre-trained Variational AutoEncoder (VAE) to map data into a low-dimensional latent representation \citep{wang2023laviehighqualityvideogeneration,blattmann2023stablevideodiffusionscaling}. The typical prior distribution of the low-dimensional representation in VAEs is assumed to be Gaussian. Empirical results have demonstrated the effectiveness of Gaussian latent prior for sequential data modeling in some variants of VAEs \citep{casale2018gaussianprocesspriorvariational,fortuin2020gp}. Our study aligns well with these empirical observations.}

In a learning setting, we collect $n$ i.i.d. realizations of the discrete sequence $\{\Xb_{h_1}, \dots, \Xb_{h_N}\}$, denoted as $\cD = \{\xb_{1, j}, \dots, \xb_{N, j}\}_{j=1}^n$. We aim to use a diffusion transformer for learning and generating new samples mimicking the distribution of the discrete sequence. The subtlety here is that na\"{i}vely learning the joint distribution of $\{\Xb_{h_1}, \dots, \Xb_{h_N}\}$ is subject to a large dimension factor $N$, heavily exaggerating the problem dimension and jeopardizing the learning efficiency. Fortunately, we will show that the behavior of the covariance function may induce benign temporal dependencies and largely promote the sample complexity.

\paragraph{Diffusion Processes}
A diffusion model generates new data by progressively removing noise using the so-called ``score function''. We adopt a continuous-time perspective for a brief review of diffusion models. Interested readers may refer to recent surveys for a comprehensive exposure \citep{chen2024overview, tang2024score, chan2024tutorial}. A diffusion model utilizes a forward and a backward process for training and sample generation, respectively:
\begin{align*}
\tag{Forward} \diff \Xb_t & = -\frac{1}{2} \Xb_t \diff t + \diff \Wb_t, ~\text{for} ~ t \in [0, T] ~ \text{and} ~ \Xb_0 \sim P_0, \\
\tag{Backward} \diff \Xb_t^{\leftarrow} & = \left[\frac{1}{2} \Xb_t^{\leftarrow} + \nabla \log p_{T-t}(\Xb_t^{\leftarrow})\right] \diff t + \diff \overline{\Wb}_t, ~\text{for} ~ t \in [0, T]~ \text{and}~\Xb_0^{\leftarrow} \sim {\sf N}(\bzero, \Ib).
\end{align*}
Here, $\Wb_t$ and $\overline{\Wb}_t$ are independent Wiener processes and $T$ is a finite terminal diffusion timestep. The initial distribution of $\Xb_0$ is $P_0$, which is also the clean data distribution, and we denote $P_t$ as the marginal distribution of $\Xb_t$. Since we are corrupting $P_0$ by Gaussian noise, $P_t$ has a density function $p_t$. Thus, $\nabla \log p_t$ in the backward process is recognized as the score function, which is unknown and requires learning. Typically, the score function will be parameterized by a neural network and trained by optimizing a loss function, which we will introduce in Section~\ref{sec:generalization}. When generating new samples, we simulate a discretized version of the backward process using the learned score function.

For Gaussian process data, we interpret $\Xb_t$ as a vector in $\RR^{dN}$ by stacking $N$ observations. We term each observation as a patch. Equivalently, the forward process is to add independent Gaussian noise to each patch simultaneously. However, the backward process cannot be decomposed according to patches, as the score function encodes their correlation; see Section~\ref{sec:score_gd} for a detailed discussion on the structure of $\nabla \log p_t$.

\paragraph{Transformer Architecture}
A transformer comprises a series of blocks and each block encompasses a multi-head attention layer and a feedforward layer. Let $\Yb = [\yb_1, \dots, \yb_N] \in \RR^{D \times N}$ be the (column) stacking matrix of $N$ patches. In a transformer block, multi-head attention computes
\begin{align}\label{eq:attention}
\textstyle \attn(\Yb) = \Yb + \sum_{m=1}^M \Vb^{m} \Yb \cdot \sigma\left((\Qb^{m} \Yb)^\top \Kb^{m} \Yb\right),
\end{align}
where $\Vb^m, \Qb^m$, and $\Kb^m$ are weight matrices of corresponding sizes in the $m$-th attention head, and $\sigma$ is an activation function. The attention layer is followed by a feedforward layer, which computes
\begin{align*}
\ffn(\Yb) = \Yb + \Wb_1 \cdot {\rm ReLU}(\Wb_2 \Yb + \bbb_2 \mathbf{1}^\top) + \bbb_1 \mathbf{1}^\top.
\end{align*}
Here, $\Wb_1, \Wb_2$ are weight matrices, $\bbb_1$ and $\bbb_2$ are offset vectors, $\mathbf{1}$ denotes a vector of ones, and the ReLU activation function is applied entrywise.
For our study, the raw input to a transformer is $N$ patches of $d$-dimensional vectors and diffusion timestep $t$ in the backward process. We refer to $\cT(D, L, M, B,R)$ as a transformer architecture defined by
\begin{align*}
\cT(D, L, M, B, R) = \{f & : f = f_{\rm out}\circ(\ffn_L \circ \attn_L) \circ \dots \circ (\ffn_1 \circ \attn_1)\circ f_{\rm in}, \\
& \quad \text{$\attn_i$ uses entrywise ReLU activation for $i = 1, \dots, L$}, \\
& \quad \text{number of heads in each~\attn~is bounded by $M$}, \\
& \quad \text{the Frobenius norm of each weight matrix is bounded by $B$},\\
& \quad \text{the output range $\norm{f}_2$ is bounded by $R$}\}.
\end{align*}
See Figure~\ref{fig:transformer} for an illustration of $f_{\rm in}$ and $f_{\rm out}$. For attention layers, we consider ReLU activation for technical convenience and postpone a discussion with softmax activation to Appendix~\ref{appendix:softmax}.
\begin{figure}[h]
\centering
\includegraphics[width = 0.97\textwidth]{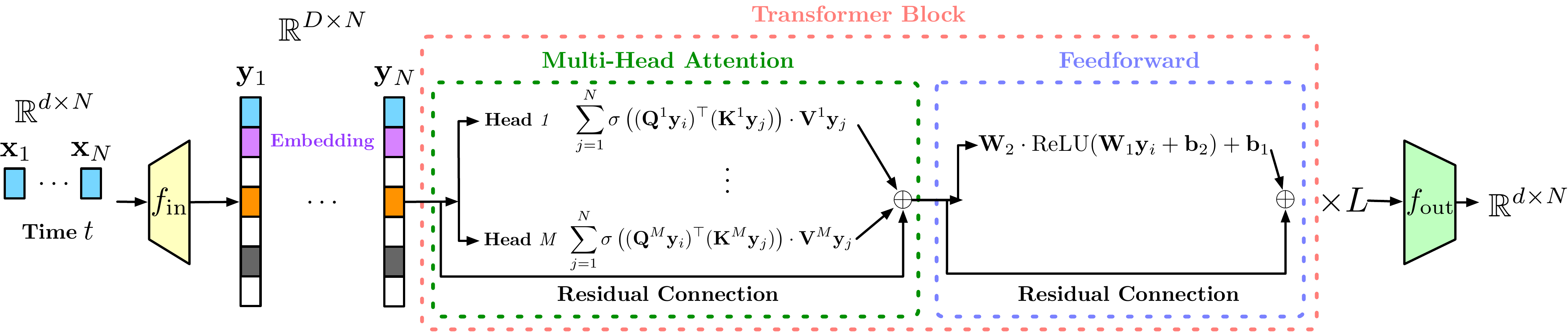}

\caption{\it Transformer architecture. Here $f_{\rm in}$ is a linear layer to lift input patch to $\RR^D$, which appends the input raw data with  time index embedding and other useful information. After passing through $L$ transformer blocks, $f_{\rm out}$ projects each patch into the data original dimension $\RR^d$ and clip the output range by $R$. We allow the output range to be diffusion timestep $t$ dependent (denoted as $R_t$).}
\label{fig:transformer}

\end{figure}

\section{Represent Score Function as the Last Iterate of Gradient Descent}\label{sec:score_gd}

Diffusion transformer learns the sequential data distribution through estimating the score function. In this section, we study how can transformers effectively represent the score function of our Gaussian process data. We are particularly interested in understanding how can transformers capture the spatial-temporal dependencies via multi-head attention.

\subsection{Score Function for Gaussian Process Data}
Recall from Section~\ref{sec:pre} that the joint distribution of our $N$-patch Gaussian process data is still Gaussian. We can show that at diffusion timestep $t \in (0, T]$, given $\vb_t = [\xb_{1, t}^\top, \dots, \xb_{N, t}^\top]^\top$ in the backward process, the score function is
\begin{align}\label{eq:score}
\nabla \log p_t(\vb_t) = -(\alpha_t^2 \bGamma \otimes \bSigma + \sigma_t^2 \Ib)^{-1} (\vb_t - \alpha_t \bmu) \quad \text{for} \quad \alpha_t = e^{-t/2}~\text{and}~ \sigma_t = \sqrt{1 - e^{-t}},
\end{align}
where $\bmu$ is the mean vector, and $\alpha_t$ and $\sigma_t$ are determined by the diffusion forward process. We defer the derivation to Appendix~\ref{pf:patch_score}. Observe that the score function for the $i$-th patch depends on the evolution of all patches, reflecting temporal dependencies in sequential data. Moreover, the influence of each patch is determined by the covariance function $\bGamma_{ij} = \gamma(h_i, h_j)$. In the extreme case of $\bGamma_{ij} = \mathds{1}\{i = j\}$, i.e., there is no correlation between patches, the score function reduces to $(\alpha_t^2 \Ib \otimes \bSigma + \sigma_t^2 \Ib)^{-1} (\vb_t - \alpha_t \bmu)$, which isolates patches and reproduces the score function of the Gaussian distribution ${\sf N}(\bmu_i, \bSigma)$ for each patch.

Although the score function assumes the closed-form expression in \eqref{eq:score}, effectively representing it may suffer from difficulties. In fact, there are dependencies among $N$ patches and the correlation is encoded by the inverse covariance matrix $(\alpha_t^2 \bGamma \otimes \bSigma + \sigma_t^2 \Ib)^{-1}$. Directly representing it using a transformer is subject to high complexity and a large dimension depending on $N$. To overcome the challenge, we resort to an algorithm unrolling perspective and leveraging the attention mechanism in transformers. The key insight is to approximate the score function as the last iterate of a gradient descent algorithm, where the algorithm can be efficiently implemented by a transformer.

\subsection{Score Function Is The Optimizer of A Convex Function And Gradient Descent Finds It}
We consider a fixed diffusion timestep $t \in (0, T]$. It is straightforward to check by the first-order optimality that $\nabla \log p_t(\vb_t)$ is the minimizer of the following quadratic objective function,
\begin{align}\label{equ::quad target}
\nabla \log p_t(\vb_t) = \argmin_{\sbb \in \RR^{dN}} ~\cL_{t}(\sbb):=\frac{1}{2}\sbb^{\top}\left(\alpha_t^2 (\bGamma \otimes \bSigma) + \sigma_t^2 \Ib\right)\sbb + (\vb_t - \alpha_t \bmu)^{\top}\sbb.
\end{align}
Examining \eqref{equ::quad target} reveals that the objective function is strongly convex as long as $t > 0$. Moreover importantly, the formulation \eqref{equ::quad target} is free of matrix inverse and the optimal solution can be found by a gradient descent algorithm. Specifically, in the $k$-th iteration, gradient descent for $\cL_t$ computes
\begin{align}\label{equ::GD}
\sbb^{(k+1)}=\sbb^{(k)}-\eta_t \cdot \left(\left(\alpha_t^2 (\bGamma \otimes \bSigma) + \sigma_t^2 \Ib\right) \sbb^{(k)}+(\vb_t - \alpha_t \bmu)\right),
\end{align}
where $\eta_t$ is a proper step size. Comparing to \eqref{eq:score}, the gradient descent iteration avoids the matrix inverse, and we can further decompose the update \eqref{equ::GD} according to each patch. With well-conditioned covariance matrix $\bGamma \otimes \bSigma$, the gradient descent algorithm converges exponentially fast for approximating the ground-truth score function. Moreover, we could substitute $\bGamma$ in \eqref{equ::GD} by any of its approximation $\bar{\bGamma}$. We quantify the representation error after sufficient gradient descent iterations.

\begin{lemma}[\textbf{Gradient Descent Iterate Approximates the Score Function}]\label{lemma:score_as_gd}
For an arbitrarily fixed $t \in [0, T]$ and $\vb_t$, given an error tolerance $\epsilon > 0$ and any integer $J < N$, if $\bar{\bGamma}$ with $\bar{\bGamma}_{ij} = \bGamma_{ij} \mathds{1}\{|i - j| < J\}$ is positive semidefinite, then running gradient descent in \eqref{equ::GD} with a suitable step size $\eta_t$ for $K =\cO\paren{\kappa_t\log \paren{1/{\epsilon}}}$ iterations gives rise to
\begin{align*}
\norm{\sbb^{(K)}(\vb_t) - \nabla \log p_t(\vb_t)}_{2} \leq \underbrace{\frac{1}{\sigma_t^2} \norm{\vb_t - \alpha_t \bmu}_2 \epsilon}_{\cE_1: \text{GD representation error}} + \underbrace{\frac{\norm{\bSigma}_{\rm F}\norm{\vb_t-\alpha_t\bmu}_2}{\sigma_t^{4}}\sqrt{\sum_{ \abs{i-j}\geq J} \bGamma_{ij}^2}}_{\cE_2: \text{correlation truncation error}},
\end{align*}
where $\kappa_t=\kappa\paren{\alpha_t^2 (\bar\bGamma \otimes \bSigma) + \sigma_t^2 \Ib}$ is the condition number of $\alpha_t^2 (\bar\bGamma \otimes \bSigma) + \sigma_t^2 \Ib$.
\end{lemma}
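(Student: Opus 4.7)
The plan is to introduce the auxiliary minimizer $\bar\sbb^\star$ of the \emph{surrogate} objective
\begin{equation*}
\bar\cL_t(\sbb) = \tfrac{1}{2}\sbb^{\top}\paren{\alpha_t^2(\bar\bGamma\otimes\bSigma)+\sigma_t^2\Ib}\sbb + (\vb_t-\alpha_t\bmu)^{\top}\sbb,
\end{equation*}
which is what the GD recursion in \eqref{equ::GD} (run with $\bar\bGamma$ instead of $\bGamma$) is actually converging to. Then a single triangle inequality,
\begin{equation*}
\norm{\sbb^{(K)}-\nabla\log p_t(\vb_t)}_2 \;\leq\; \underbrace{\norm{\sbb^{(K)}-\bar\sbb^\star}_2}_{\text{(I) optimization error}} \;+\; \underbrace{\norm{\bar\sbb^\star-\nabla\log p_t(\vb_t)}_2}_{\text{(II) surrogate perturbation}},
\end{equation*}
isolates the two contributions: (I) will match $\cE_1$ and (II) will match $\cE_2$.

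For (I), the Hessian of $\bar\cL_t$ is $\alpha_t^2(\bar\bGamma\otimes\bSigma)+\sigma_t^2\Ib$, whose spectrum lies in $[\sigma_t^2,\,L_t]$ thanks to the PSD hypothesis on $\bar\bGamma$, and whose condition number is $\kappa_t$. Since $\bar\cL_t$ is a strongly convex quadratic, the iterates of GD with step $\eta_t=1/L_t$ satisfy the contraction $\sbb^{(k+1)}-\bar\sbb^\star=(\Ib-\eta_t H)(\sbb^{(k)}-\bar\sbb^\star)$ with $\norm{\Ib-\eta_t H}_2 \leq 1-1/\kappa_t$, so
\begin{equation*}
\norm{\sbb^{(K)}-\bar\sbb^\star}_2 \;\leq\; (1-1/\kappa_t)^{K}\,\norm{\sbb^{(0)}-\bar\sbb^\star}_2.
\end{equation*}
Initializing at $\sbb^{(0)}=\bzero$ and using $\norm{(\alpha_t^2(\bar\bGamma\otimes\bSigma)+\sigma_t^2\Ib)^{-1}}_2\leq 1/\sigma_t^2$ gives $\norm{\bar\sbb^\star}_2\leq \norm{\vb_t-\alpha_t\bmu}_2/\sigma_t^2$. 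Choosing $K=\cO(\kappa_t\log(1/\epsilon))$ drives the contraction factor below $\epsilon$, yielding exactly $\cE_1$.

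For (II), I apply the resolvent identity with $A=\alpha_t^2(\bGamma\otimes\bSigma)+\sigma_t^2\Ib$ and $\bar A=\alpha_t^2(\bar\bGamma\otimes\bSigma)+\sigma_t^2\Ib$, so that $\bar\sbb^\star-\nabla\log p_t(\vb_t) = -\bar A^{-1}(A-\bar A)A^{-1}(\vb_t-\alpha_t\bmu)$. Since $\norm{A^{-1}}_2,\norm{\bar A^{-1}}_2\leq 1/\sigma_t^2$ and $A-\bar A=\alpha_t^2(\bGamma-\bar\bGamma)\otimes\bSigma$, I use that the Frobenius norm factorizes on Kronecker products and dominates the operator norm, so
\begin{equation*}
\norm{A-\bar A}_2 \;\leq\; \alpha_t^2\,\norm{\bGamma-\bar\bGamma}_{\rF}\,\norm{\bSigma}_{\rF} \;=\; \alpha_t^2\,\norm{\bSigma}_{\rF}\,\sqrt{\sum_{\abs{i-j}\geq J}\bGamma_{ij}^2},
\end{equation*}
where the last equality uses $(\bGamma-\bar\bGamma)_{ij}=\bGamma_{ij}\mathds{1}\{\abs{i-j}\geq J\}$. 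Multiplying the three factors together and absorbing $\alpha_t^2\leq 1$ delivers exactly $\cE_2$.

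The main subtlety, rather than a genuine obstacle, is recognizing that the GD iteration converges to the wrong point (namely $\bar\sbb^\star$, not the true score), so one must separate optimization error from model mismatch rather than trying to bound $\norm{\sbb^{(K)}-\nabla\log p_t(\vb_t)}_2$ directly; the PSD assumption on $\bar\bGamma$ is what makes both the contraction rate in (I) and the uniform resolvent bound $\norm{\bar A^{-1}}_2\leq 1/\sigma_t^2$ in (II) hold simultaneously. Once that decomposition is in place, the remainder is standard spectral bookkeeping.
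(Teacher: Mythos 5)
Your proposal is correct and follows essentially the same route as the paper: define the minimizer $\bar\sbb^\star$ of the surrogate (truncated) quadratic, split the error by the triangle inequality into a GD optimization term and a truncation/perturbation term, bound the first via linear convergence of GD on a strongly-convex quadratic (the paper cites Bubeck's Theorem 3.12 with $\eta_t = 2/(\alpha+\beta)$ and contraction $(\kappa_t-1)/(\kappa_t+1)$, while you use $\eta_t = 1/L_t$ with contraction $1-1/\kappa_t$ — both give the same $\cO(\kappa_t\log(1/\epsilon))$ iteration count), and bound the second via the resolvent identity $\bar A^{-1}-A^{-1} = -\bar A^{-1}(A-\bar A)A^{-1}$ together with $\|A-\bar A\|_2 \le \|A-\bar A\|_{\rF} = \alpha_t^2\|\bSigma\|_{\rF}\sqrt{\sum_{|i-j|\ge J}\bGamma_{ij}^2}$. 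No gaps.
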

The proof is deferred to Appendix~\ref{pf:score_as_gd}. To interpret the lemma, we first consider $J = N$, which implies $\cE_2 = 0$ and $\cE_1$ recovers the typical convergence guarantee of gradient descent algorithm: for a strongly convex and smooth function, gradient descent algorithm converges exponentially fast.

\paragraph{Controlling the Length of Dependencies} The special treatment in Lemma~\ref{lemma:score_as_gd} concentrates on the truncation length $J$. On the high level, $J$ defines the maximum length of temporal dependencies we aim to model in the score function. In particular, instead of working with \eqref{equ::GD}, we consider a surrogate gradient descent iteration driven by replacing $\bGamma$ by $\bar{\bGamma}$, which neglects correlations beyond length $J$. We discuss sufficient conditions for ensuring $\bar{\bGamma}$ being positive semidefinite after Assumption~\ref{assumption:decay}. Introducing $\bar{\bGamma}$ incurs the truncation error $\cE_2$, whose magnitude depends on the pattern of temporal dependencies. Apparently, with long-horizon dependencies, the truncation error $\cE_2$ tends to be large. However, advantage appears in the presence of decaying dependencies, since we can neglect faint correlation to promote the representation and learning efficiency.

\subsection{Bounding Correlation Truncation Error $\cE_2$}
In many dynamical systems, temporal dependencies decay rather fast as a function of the time gap. The decay pattern may lead to a well controlled truncation error $\cE_2$. As the decay is determined by the covariance function in a Gaussian process, we impose the following assumption. 
\begin{assumption}\label{assumption:decay}
Let $\eb_1, \dots, \eb_N \in \RR^{d_e}$ be $d_e$-dimensional time embedding of $h_1, \dots, h_N$ with $\norm{\eb_i}_2=r$ for each $i$ such that there exists a positive and increasing function $f$ with $\norm{\eb_i - \eb_j}_2 = f(|i - j|)\ge c\abs{i-j}$ for an absolute constant $c>0$. Further, the covariance function $\gamma$ satisfies
\begin{align*}
\gamma(h_i, h_j) = \exp\left(-\norm{\eb_i - \eb_j}_2^\nu / \ell \right) \quad \text{for} \quad \nu \in [1, 2] \quad \text{and} \quad \ell > 0.
\end{align*}
\end{assumption}
Firstly, Assumption~\ref{assumption:decay} says that the time embedding (a.k.a. position embedding) preserves the gap between real times. For example, a common time embedding used in sequential data modeling \citep{vaswani2017attention} is sinusoidal transformations, where $\eb_i=[r\sin (2i\pi/C),  r\cos (2i\pi/C)]^\top \in \RR^2$ for a positive radius $r$ and a large constant $C > 0$. We can check that $\norm{\eb_i-\eb_j}_2= 2r\sin (\abs{i-j}\pi/C)\ge 4r\abs{i-j}/C$ is positive and approximately linearly increasing for a sufficiently large $C$. 

Secondly, the covariance function decays exponentially fast and the speed is controlled by the exponent $\nu$ and the bandwidth $\ell$. Large $\ell$ or small $\nu$ indicates that the correlation between different time indices decays relatively slowly. Thus, the sequential data has some long-horizon dependencies. The range of $\nu$ includes the well-known quadratic-exponential (Gaussian) covariance function ($\nu = 2$). Moreover, when $\nu = 1$, the covariance function coincides with the correlation in a Brownian motion. Varying $\nu \in [1, 2]$ can capture abundant temporal dependency patterns.

Besides, under Assumption~\ref{assumption:decay}, we can provide a sufficient condition for
ensuring $\bar{\bGamma}$ being positive semidefinite for any $J$. Specifically, when $\ell \le c^{\nu}$, $\bGamma$ is symmetric diagonally dominant, i.e., the diagonal entry has a larger magnitude than the sum of the magnitudes of off diagonal entries. This implies that for any truncation length $J$, $\bar{\bGamma}$ is always positive semidefinite. See Remark \ref{rm::1} in Appendix \ref{pf:cor_approx} for a formal justification. Apparently, requiring $\ell \leq c^\nu$ is not necessary and as long as the covariance function decays sufficiently fast, $\bar{\bGamma}$ can be positive semidefinite. We now establish the following corollary to demonstrate a reasonable choice of $J$. 
\begin{corollary}[\textbf{Correlation Truncation with Decay}]\label{cor:approx}
Suppose Assumption~\ref{assumption:decay} holds with $\ell \leq c^\nu$. For any $\epsilon < \lambda_{\min}(\bGamma)$, under the setup of Lemma~\ref{lemma:score_as_gd}, by setting $J =\cO\paren{(\ell\log (N/(\epsilon \sigma_t)))^{1/\nu}} $, it holds that
\begin{align*}
\norm{\sbb^{(K)}(\vb_t) - \nabla \log p_t(\vb_t)}_{2} \leq 2\sigma_t^{-2}\norm{\vb_t-\alpha_t\bmu}_2 \epsilon .
\end{align*}
\end{corollary}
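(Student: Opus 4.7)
The plan is to instantiate Lemma~\ref{lemma:score_as_gd} at a truncation length $J$ to be chosen, bound the correlation truncation error $\cE_2$ using the exponential decay of $\gamma$ from Assumption~\ref{assumption:decay}, and pick $J$ so that $\cE_2$ matches the GD representation error $\cE_1$ up to the factor of $2$ in the conclusion. Because $\ell\le c^\nu$, the diagonal-dominance argument from Remark~\ref{rm::1} guarantees that $\bar\bGamma$ is positive semidefinite for every $J\ge 1$, so Lemma~\ref{lemma:score_as_gd} is applicable and, with $K=\cO(\kappa_t\log(1/\epsilon))$ iterations, directly yields $\cE_1\le \sigma_t^{-2}\norm{\vb_t-\alpha_t\bmu}_2\epsilon$.

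The main step is to control $\cE_2$. I would substitute $\bGamma_{ij}^2=\exp(-2\norm{\eb_i-\eb_j}_2^\nu/\ell)\le \exp(-2c^\nu|i-j|^\nu/\ell)$ using Assumption~\ref{assumption:decay} and exploit the Toeplitz structure to reduce the double sum to $2N\sum_{k\ge J}\exp(-2c^\nu k^\nu/\ell)$. To handle this tail uniformly in $\nu\in[1,2]$, I invoke the tangent inequality $k^\nu\ge J^\nu+\nu J^{\nu-1}(k-J)$ for $k\ge J$, valid by convexity, which factorizes the tail into $\exp(-2c^\nu J^\nu/\ell)$ times a geometric series with ratio $\exp(-2\nu c^\nu J^{\nu-1}/\ell)$; the assumption $\ell\le c^\nu$ keeps that ratio uniformly below an absolute constant less than $1$, leaving $\sum_{k\ge J}\exp(-2c^\nu k^\nu/\ell)\lesssim \exp(-2c^\nu J^\nu/\ell)$. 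Plugging back in gives
\[
\cE_2 \;\lesssim\; \frac{\norm{\bSigma}_\rF\sqrt{N}}{\sigma_t^4}\exp(-c^\nu J^\nu/\ell)\,\norm{\vb_t-\alpha_t\bmu}_2.
\]

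Forcing $\cE_2\le \sigma_t^{-2}\norm{\vb_t-\alpha_t\bmu}_2\epsilon$ now amounts to $\exp(-c^\nu J^\nu/\ell)\lesssim \sigma_t^2\epsilon/(\norm{\bSigma}_\rF\sqrt{N})$, which solves to $J=\cO\bigl((\ell\log(N/(\epsilon\sigma_t)))^{1/\nu}\bigr)$ once the $c$ and $\norm{\bSigma}_\rF$ factors are absorbed into the hidden constant; adding $\cE_1$ yields the claimed $2\sigma_t^{-2}\norm{\vb_t-\alpha_t\bmu}_2\epsilon$. The hypothesis $\epsilon<\lambda_{\min}(\bGamma)$ enters only to guarantee that the truncation perturbation $\norm{\bGamma-\bar\bGamma}$ stays below $\lambda_{\min}(\bGamma)$, so that $\bar\bGamma$ remains strictly positive definite and the condition number $\kappa_t$ appearing in the iteration count is finite. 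The main obstacle I anticipate is the uniform handling of $\nu\in[1,2]$: at $\nu=1$ the tail is a plain geometric sum, but for fractional exponents the tangent/convexity trick is what extracts the clean $\exp(-c^\nu J^\nu/\ell)$ rate and hence the $1/\nu$ scaling in the stated $J$; the rest is routine bookkeeping.
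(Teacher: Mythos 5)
Your proof follows essentially the same path as the paper's: bound $\cE_2$ via the Toeplitz reduction $\sum_{|i-j|\ge J}\bGamma_{ij}^2 \le 2N\sum_{k\ge J}\exp(-2c^\nu k^\nu/\ell)$, control the tail, and solve for $J$; the paper bounds the tail by an integral comparison $\int_{J-1}^\infty \exp(-2c^\nu t^\nu/\ell)\,\rd t$, whereas you use the tangent inequality $k^\nu\ge J^\nu+\nu J^{\nu-1}(k-J)$ together with $\ell\le c^\nu$ to force a geometric ratio $\le e^{-2}$ — both yield the same $\exp(-c^\nu J^\nu/\ell)$ rate. One small misattribution: the hypothesis $\epsilon<\lambda_{\min}(\bGamma)$ is not needed to keep $\kappa_t$ finite, since $\alpha_t^2(\bar\bGamma\otimes\bSigma)+\sigma_t^2\Ib$ already has minimum eigenvalue $\ge\sigma_t^2>0$ whenever $\bar\bGamma$ is PSD (which the diagonal-dominance argument from Remark~\ref{rm::1} already guarantees), so that condition is not doing the work you ascribe to it.
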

The proof is deferred to Appendix~\ref{pf:cor_approx}, where we keep the $\ell \leq c^{\nu}$ condition for technical convenience. This should not be considered restrictive, as our theory holds as long as the truncation of $\bGamma$ at length $J$ is positive semidefinite. Corollary~\ref{cor:approx} shows that the truncation error $\cE_2$ can be controlled at the same order of $\cE_1$. The truncation length $J$ is logarithmically dependent on the full length of the sequence, indicating that we can focus on relatively short-horizon dependencies proportional to the bandwidth $\ell$. This observation is the key to promote the representation and learning efficiency of diffusion transformer.

\section{Approximation Theory of Score Function Using Transformers}\label{sec:transformer_unroll}

This section devotes to establishing a transformer approximation theory of the score function. Different from the existing universal approximation theories \citep{cybenko1989approximation, yarotsky2018optimal}, we construct a transformer to unroll the gradient descent algorithm for representing the score function. We show this perspective leads to an efficient approximation in the following theorem.
\begin{theorem}[\textbf{Score Approximation by Transformers}]\label{thm:approx}
Suppose Assumption~\ref{assumption:decay} holds with $\ell \leq c^{\nu}$. Given any $t_0 \in (0, T]$ and a small $\epsilon < \lambda_{\min}(\bGamma)$, there exists a transformer architecture $\cT(D, L, M, B, R_t)$ such that with proper weight parameters, it yields an approximation $\tilde{\sbb}$ to the score function $\nabla \log p_t$ with
\begin{align*}
\int \norm{\tilde{\sbb}(\vb_t) - \nabla \log p_t(\vb_t)}_2^2 p_t(\vb_t)\diff \vb_t \leq \sigma_t^{-2} \epsilon \quad \text{for all} \quad t\in[t_0,T].
\end{align*}
The transformer architecture satisfies
\begin{align*}
&\hspace{0em} D=\cO(d+d_e),~~ L=\cO\paren{\kappa_{t_0}\log^2(Nd/(\epsilon \sigma_{t_0}))}, 
~~M=\cO\paren{\paren{\ell\log (Nd\norm{\bSigma}_{\rm F}/(\epsilon \sigma_{t_0}))}^{1/\nu}},\\
& \hspace{1em}B=  \cO\paren{\log(Nd/(\epsilon\sigma_{t_0}))\sigma^{-2}_{t_0}Nd(r^2+\norm{\bSigma}_{\infty})}, ~~R_t=\cO\paren{{\log(Nd/(\epsilon\sigma_{t_0}))\sqrt{Nd}}/{\sigma_{t}}}. 
\end{align*}
\end{theorem}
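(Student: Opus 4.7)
The plan is to leverage Lemma~\ref{lemma:score_as_gd} and Corollary~\ref{cor:approx} to reduce approximating $\nabla \log p_t$ to implementing $K = \cO(\kappa_{t_0} \log(Nd/(\epsilon \sigma_{t_0})))$ iterations of the gradient descent recursion \eqref{equ::GD} driven by the truncated matrix $\bar{\bGamma}$ of band width $J = \cO((\ell \log(Nd\norm{\bSigma}_{\rm F}/(\epsilon\sigma_{t_0})))^{1/\nu})$. I will then unroll these $K$ iterates into $L \approx K \cdot \mathrm{polylog}$ transformer blocks, dedicating one block (or a short stack) to each iteration, which explains the $L = \cO(\kappa_{t_0}\log^2(\cdot))$ bound.

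First, I would construct $f_{\rm in}$ so that each patch $\xb_{i,t}$ becomes a token of dimension $D = \cO(d + d_e)$ holding: (i) the raw input $\xb_{i,t}$; (ii) the time embedding $\eb_i$; (iii) scalar encodings of $\alpha_t, \sigma_t^2, \eta_t$ together with the mean $\bmu_i$; and (iv) a workspace slot initialized to $\bzero$ that will store the running iterate $\sbb_i^{(k)}$. Everything in groups (i)--(iii) is preserved through the residual stream of every block, so the linear term $\vb_t - \alpha_t \bmu$ remains available at every iteration.

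Second, I would show that each block can implement one GD update. The crucial observation is that for patch $i$, the truncated Kronecker product decomposes as $[(\bar{\bGamma}\otimes \bSigma)\sbb^{(k)}]_i = \sum_{k'=0}^{J-1}\gamma_{k'} \bSigma(\sbb^{(k)}_{i+k'} + \sbb^{(k)}_{i-k'})$, with fixed constants $\gamma_{k'} = \exp(-f(k')^\nu/\ell)$. I will realize this sum with $M = \cO(J)$ attention heads, one per shift $k'$. Inside head $k'$, the query and key matrices act on the position embeddings $\eb_i,\eb_j$ so that the ReLU activation is active only where $\norm{\eb_i-\eb_j}_2 \approx f(k')$; since Assumption~\ref{assumption:decay} says $f$ is strictly increasing, a short piecewise-linear combination of ReLUs (absorbed into the $\mathrm{polylog}$ overhead in $L$) produces a sufficiently sharp shift-selector at the $J$ known distances $f(0),\dots,f(J-1)$. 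The value matrix multiplies the selected source patch by $\gamma_{k'}\bSigma$ and writes into the workspace. The subsequent \ffn{} subtracts the stored $\vb_t - \alpha_t\bmu$, multiplies by $\eta_t$, and adds the $-\sigma_t^2\Ib$ contribution, yielding $\sbb^{(k+1)}$. The Frobenius norms required are dominated by the spatial matrix $\bSigma$, the embedding radius $r$, and the $Nd/\sigma_{t_0}^2$ factor from the step-size/clipping arithmetic, matching the stated $B$.

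Third, $f_{\rm out}$ reads the final workspace, projects to $\RR^{dN}$, and clips at radius $R_t = \cO(\log(\cdot)\sqrt{Nd}/\sigma_t)$, which matches the Gaussian tail of the exact score $\nabla \log p_t(\vb_t) = -(\alpha_t^2 \bGamma\otimes\bSigma + \sigma_t^2\Ib)^{-1}(\vb_t - \alpha_t\bmu)$. Conversion from the pointwise bound of Corollary~\ref{cor:approx} to the $L^2(p_t)$ guarantee uses $\EE_{p_t}\norm{\vb_t - \alpha_t\bmu}_2^2 = \cO(Nd(r^2 + \norm{\bSigma}_\infty))$ under the forward diffusion; integrating $\sigma_t^{-2}\norm{\vb_t-\alpha_t\bmu}_2 \epsilon$ and then rescaling $\epsilon$ by the corresponding polynomial factor (which only enters logarithmically inside $L,M,B,R_t$) gives the stated $\sigma_t^{-2}\epsilon$ bound, with the clipping bias negligible by a standard Gaussian tail argument. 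The main technical obstacle is the attention step: producing a sharp shift-selector with ReLU (rather than softmax) attention using the sinusoidal-style embeddings of Assumption~\ref{assumption:decay}, while keeping per-iteration error small enough that the aggregate error over $K$ unrolled iterations does not blow up and while keeping all weight norms polynomially bounded. Once that construction is in hand, the error accumulation and verification of the $D, L, M, B, R_t$ bounds follow routine book-keeping.
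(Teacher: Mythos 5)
Your overall strategy — unroll gradient descent via Lemma~\ref{lemma:score_as_gd} and Corollary~\ref{cor:approx} with the $J$-banded $\bar\bGamma$, spend $\cO(\kappa_{t_0}\log)$ iterations each costing $\cO(\log)$ blocks for the multiplication module, use $\cO(J)$ attention heads keyed by inner products of time embeddings, and clip the output at $R_t$ — is the same as the paper's, and that is where the $D,L,M,B,R_t$ bounds come from. But the attention shift-selector step as you describe it would not work. You propose ``one attention head per shift $k'$'' and then say the sharp selector comes from ``a short piecewise-linear combination of ReLUs (absorbed into the polylog overhead in $L$).'' Each ReLU head computes $\Vb^m\Yb\cdot\mathrm{ReLU}\left((\Qb^m\Yb)^\top\Kb^m\Yb\right)$; the thresholding acts on the $N\times N$ similarity scores \emph{before} aggregation, so once a head has already mixed the patches there is no way for downstream feed-forward layers to re-sharpen its attention pattern. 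A single ReLU applied to $\eb_i^\top\eb_j = r^2 - \tfrac12 f^2(|i-j|)$ is a one-sided threshold (low-pass in $|i-j|$), not a band-pass indicator $\mathds{1}\{|i-j|=m\}$. The paper therefore spends \emph{four heads per shift}, building a trapezoid $\psi(\eb_i^\top\eb_j)$ from four ReLUs offset by $\pm\tfrac14\Delta,\pm\tfrac18\Delta$ around $r^2-\tfrac12 f^2(m)$, with $\Delta=\min_i\{f^2(i+1)-f^2(i)\}$; this gives $M=4J$. Your stated $M=\cO(J)$ is numerically consistent, but the mechanism must use extra \emph{heads}, not extra layers.

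Two smaller inaccuracies. The identity $\EE_{p_t}\norm{\vb_t-\alpha_t\bmu}_2^2 = \cO(Nd(r^2+\norm{\bSigma}_\infty))$ is wrong: the second moment is $\tr(\alpha_t^2\bGamma\otimes\bSigma+\sigma_t^2\Ib) = \cO(N\tr(\bSigma)+Nd)$, and $r^2$ only appears in the weight-norm bound $B$, not here. More importantly, because the transformer output is clipped at $R_t$, the paper does not integrate the pointwise GD bound directly; Proposition~\ref{prop::transformers approximate shat} first establishes a uniform bound $\norm{\tilde\sbb-\bar\sbb}_2\le\epsilon$ on the high-probability region $\{\norm{\nabla\log p_t(\vb_t)}_2\le R_0\sigma_t^{-1}\}$ and then controls the tail via Gaussian hypercontractivity (Lemma~\ref{lemma: gaussian hypercontractivity}). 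Also, a linear $f_{\rm in}$ cannot by itself produce a generic position-dependent $\bmu_i$; the paper introduces Assumption~\ref{assump:GP_mean} and a dedicated $f_\mu$ module for this. None of these changes the strategy, but they change where the bookkeeping lands.
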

The proof is deferred to Appendix~\ref{pf:approx}. Here we observe that the approximation error depends on $\sigma_t$, which matches existing results for studying score approximation using neural networks \citep{oko2023diffusion, chen2023score, tang2024adaptivity}. Yet we remark that our algorithm unrolling approach is very different from these existing works. Our results also hold for softmax activated transformer architectures, which is discussed in Appendix~\ref{appendix:softmax}. We remark on other interpretations.
\begin{figure}[ht]
\centering
\includegraphics[width = 0.97\textwidth]{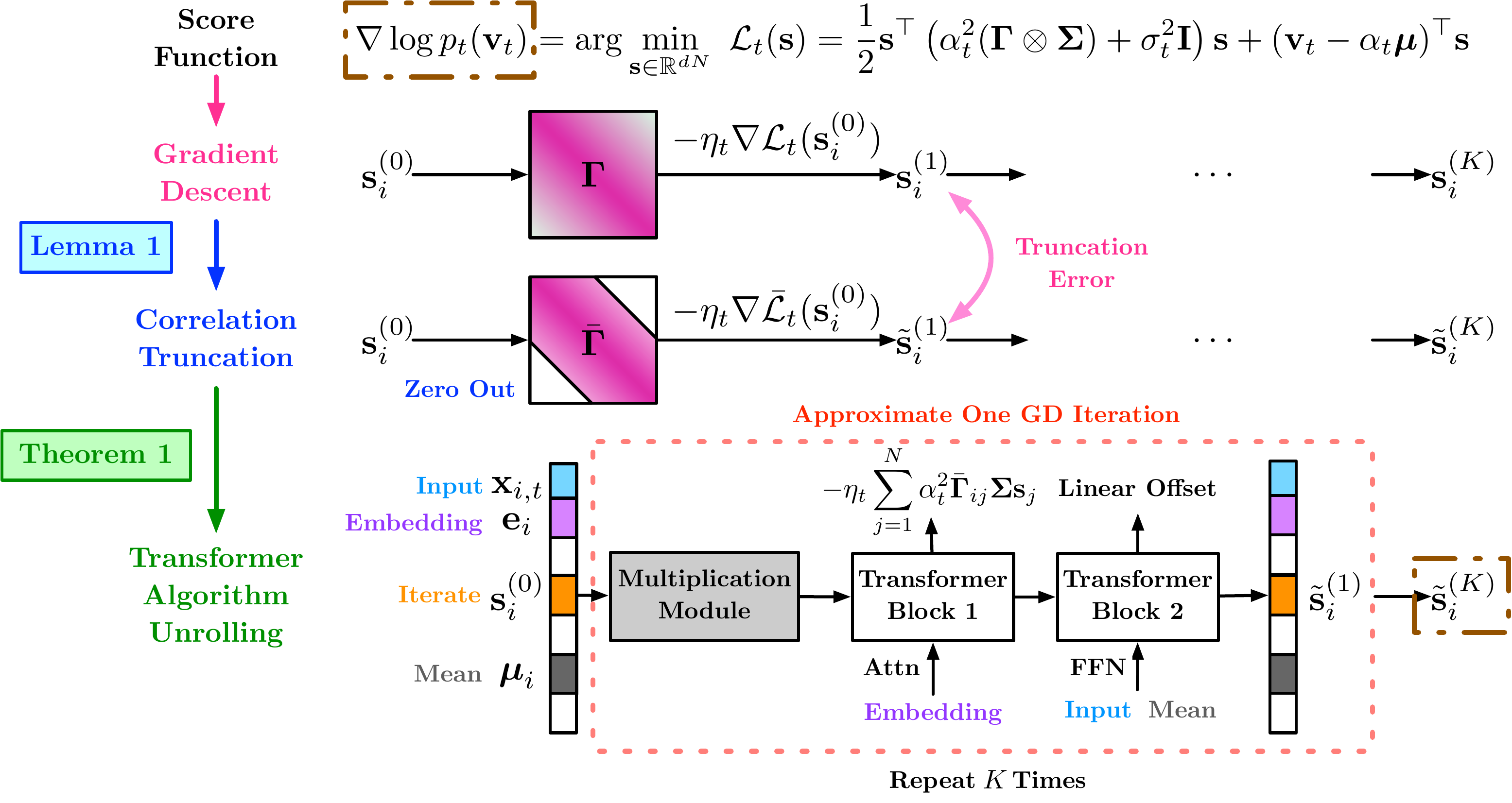}

\caption{\it Construction of score function approximation using a transformer. By rewriting the score function as the optimizer of a quadratic objective function, we use gradient descent algorithm to approximate the optimizer. We allow correlation truncation to manipulate the maximum length of temporal dependencies to model in Lemma~\ref{lemma:score_as_gd}. Then we construct a transformer architecture to unroll the gradient descent algorithm for score approximation in Theorem~\ref{thm:approx}. Each gradient descent iteration is realized by a multiplication module followed by two transformer blocks. In the first transformer block, its attention layer calculates correlation $\bGamma_{ij}$ utilizing time embedding. The second transformer block calculates the linear offset $-\eta_t \sigma_t^2 \sbb - \eta_t (\vb_t - \alpha_t \bmu)$ in \eqref{equ::GD}.}
\label{fig:approx_alg}

\end{figure}

\paragraph{A Glimpse of Transformer Architecture} The constructed transformer architecture is demonstrated in Figure~\ref{fig:approx_alg}. To achieve an approximation, the transformer unrolls the gradient descent algorithm. For realizing a single step, a multiplication module calculates time dependent rescaling parameters, e.g., $\alpha_t$ and $\sigma_t$. Then two transformer blocks implement the iteration in \eqref{equ::GD}. As can be seen, the raw input is lifted into a higher dimensional vector, containing time embedding and other useful information. It is worth mentioning that the total number of transformer blocks is proportional to the condition number $\kappa_{t_0}$. For sharply decaying covariance functions, $\kappa_{t_0}$ is a constant. However, for slowly decaying covariance functions, $\kappa_{t_0}$ can be large, indicating the fundamental difficulty of capturing long-horizon dependencies. While explicitly bounding $\kappa_{t_0}$ for a finite $N$ goes beyond the current technical limit, we discuss its asymptotic behavior in Appendix \ref{appendix::kappa}.

\paragraph{What Is Represented in Self-Attention} We zoom into our constructed transformer architecture in Figure~\ref{fig:approx_alg} to understand the role of multi-head attention layer in Transformer Block 1. Here the attention layer is constructed with proper $\Qb$ and $\Kb$ matrices, so that it finds the correlation between a pair of time embedding $\eb_i$ and $\eb_j$. Specifically, it calculates the inner product $\eb_i^\top \eb_j$ for approximating the correlation coefficient $\gamma(h_i, h_j)$. Interestingly, our numerical results in Section~\ref{sec:experiment} support this construction, providing evidence that a well-trained diffusion transformer puts large weight corresponding to time embedding. We suspect that our construction provides practical insights on how correlation is learned in attention layers.

\section{Sample Complexity of Diffusion Transformer}\label{sec:generalization}
Given a properly transformer architecture, this section studies the sample complexity of diffusion transformer for learning Gaussian process data. As mentioned in Section~\ref{sec:pre}, the training of diffusion transformer is to estimate the score function. Conceptually, we can use a quadratic loss,
\begin{align}\label{eq:score_loss}
\textstyle \argmin_{\sbb \in \cT} \int_{0}^T \EE_{\vb_t \sim P_t} \norm{\sbb(\vb_t, t) - \nabla \log p_t(\vb_t)}_2^2 \diff t,
\end{align}
However, this loss function is not directly implementable due to $\nabla \log p_t$ being unknown and numerical instability when $t$ approaches zero. Therefore, we consider the following loss,
\begin{align}\label{eq:score_loss_practical}
\textstyle \argmin_{\sbb \in \cT} \int_{t_0}^T \EE_{\vb_0} \EE_{\vb_t \sim {\sf N}(\alpha_t \vb_0, \sigma_t^2 \Ib)} \norm{\sbb(\vb_t, t) - \frac{\alpha_t \vb_0 - \vb_t}{\sigma_t^2}}_2^2 \diff t.
\end{align}
Here, $t_0$ is an early-stopping time and $\frac{\alpha_t \vb_0 - \vb_t}{\sigma_t^2}$ substitutes the unknown score function $\nabla \log p_t$. The equivalence of \eqref{eq:score_loss_practical} to \eqref{eq:score_loss} is established in the seminal works \citep{vincent2011connection, hyvarinen2005estimation}. When given the collected data set $\cD = \{\xb_{1, j}, \dots, \xb_{N, j}\}_{j=1}^n$, we replace the population expectation $\EE_{\vb_0}$ in \eqref{eq:score_loss_practical} by a sample empirical distribution. We denote $\vb^i_0 = [\xb_{i, 1}^\top, \dots, \xb_{i, N}^\top]^\top$ as the stacking vector of a data sequence. Then the estimated score function $\hat{\sbb}$ can be written as an empirical risk minimizer,
\begin{align*}
\textstyle \hat{\sbb} \in \argmin_{\sbb \in \cT} \frac{1}{n}\sum_{i=1}^n \int_{t_0}^T \EE_{\vb_t \sim {\sf N}(\alpha_t \vb_0^i, \sigma_t^2 \Ib)} \norm{\sbb(\vb_t, t) - \frac{\alpha_t \vb_0^{i} - \vb_t}{\sigma_t^2}}_2^2 \diff t.
\end{align*}
To generate new samples, diffusion transformer uses $\hat{\sbb}_n$ in the backward process. Correspondingly, we denote the distribution learned by such a diffusion transformer as $\hat{P}$. We bound the divergence of $\hat{P}$ to our ground-truth data distribution $P_0$ in the following theorem.

\begin{theorem}[\textbf{Sample Complexity of Diffusion Transformer}]\label{thm:generalization}
Suppose Assumption~\ref{assumption:decay} holds with $\ell \leq c^{\nu}$. Assume there exists a constant $C>1$ such that $C^{-1}\le \lambda_{\min}(\bSigma) \leq \lambda_{\max}(\bSigma) \le C$ and $\norm{\bmu}_{\infty}\le C$. We choose the transformer architecture $\cT(D, L, M, B, R_t)$ as in Theorem~\ref{thm:approx} with $\epsilon = 1/n$. By setting the terminal diffusion timestep $T =\log(n) $, considering $\hat{P}$ generated by the empirical risk minimizer $\hat{\sbb}$, we have
\begin{align} \label{eq:TV_upperbound}
\EE_{\cD} \left[{\rm TV}(\tilde{P}_0, \hat{P})\right]\lesssim \sqrt{\frac{\ell^{1/\nu}\kappa_{t_0}^2Nd^3}{n}} \cdot \log^{\frac{5\nu+1}{2\nu}}\paren{\kappa_{t_0}ndNt_0^{-1}},
\end{align}
where $\tilde{P}_0$ is a perturbed data distribution satisfying $W_2(P_0, \tilde{P}_0) \lesssim \ell\sqrt{ t_0Nd}$.
\end{theorem}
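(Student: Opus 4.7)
The plan is to follow the standard three-part decomposition for diffusion model generalization and then specialize each piece using the transformer approximation result of Theorem~\ref{thm:approx} together with a covering-number bound for the architecture class $\cT(D,L,M,B,R_t)$. First, I would use a Girsanov-type argument (as in \citet{chen2022sampling} or \citet{chen2023score}) to reduce the TV distance to a score-matching risk: namely, defining $\tilde P_0$ as the law of $\alpha_{t_0}^{-1}\Xb_{t_0}$ under the forward process, one obtains
\begin{align*}
\TV(\tilde P_0, \hat P)^2 \lesssim \int_{t_0}^{T} \EE_{\vb_t \sim P_t}\norm{\hat\sbb(\vb_t,t) - \nabla \log p_t(\vb_t)}_2^2 \rd t + e^{-T}\cdot \text{(moment term)},
\end{align*}
where the exponential term is controlled by the choice $T=\log n$ and the Gaussianity of $P_0$ (giving a moment bound in terms of $\|\bmu\|_2$ and $\|\bGamma \otimes \bSigma\|_\rop$). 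By Jensen's inequality, bounding $\EE_\cD \TV$ reduces to bounding the expected integrated score error.

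Next I would decompose the score error into an approximation term and a generalization term. Writing $\scoreloss_\sbb(\vb_t,t) = \|\sbb(\vb_t,t) - \nabla\log p_t(\vb_t)\|_2^2$, standard empirical process arguments give
\begin{align*}
\EE_\cD\int_{t_0}^T \EE_{P_t}\brac{\scoreloss_{\hat\sbb}}\rd t \;\lesssim\; \underbrace{\inf_{\sbb^\ast \in \cT}\int_{t_0}^T \EE_{P_t}\brac{\scoreloss_{\sbb^\ast}}\rd t}_{\text{approximation}} \;+\; \underbrace{\frac{R_{t_0}^2 \log \cN(\cT,\delta)}{n} + \delta}_{\text{generalization}},
\end{align*}
where $\cN(\cT,\delta)$ is a $\delta$-covering of the transformer class in a suitable sup-norm. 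The approximation term is controlled directly by Theorem~\ref{thm:approx} with $\epsilon = 1/n$, yielding $\sigma_{t_0}^{-2}/n$ integrated over $[t_0,T]$. For the generalization term, the truncation of the score target needs care: I would argue that for Gaussian data the target $(\alpha_t \vb_0 - \vb_t)/\sigma_t^2$ is sub-Gaussian with variance scaling as $Nd/\sigma_t^2$, so clipping at the prescribed $R_t = \tilde\cO(\sqrt{Nd}/\sigma_t)$ is essentially lossless up to polylog factors.

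The Wasserstein control $W_2(P_0,\tilde P_0)\lesssim \ell \sqrt{t_0 Nd}$ is then handled separately by an OU-coupling computation: under the forward OU flow, $\tilde P_0$ is the pushforward of $P_0$ by $\vb_0 \mapsto \vb_0 + (\alpha_{t_0}^{-1}-1)\vb_0 + \alpha_{t_0}^{-1}\sigma_{t_0}\bxi$. Expanding for small $t_0$ and using $\lambda_{\max}(\bGamma\otimes\bSigma) \lesssim \ell$ (which follows from Assumption~\ref{assumption:decay} combined with $\lambda_{\max}(\bSigma)\le C$ and Gershgorin on $\bGamma$) gives the claimed $\ell\sqrt{t_0 Nd}$ scaling.

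The main obstacle will be the covering-number estimate for $\cT(D,L,M,B,R_t)$. Concretely, the Lipschitz constant of a single transformer block with ReLU attention scales like $D \cdot M \cdot B^2 \cdot (\text{input norm})^2$, and composing $L$ blocks amplifies this multiplicatively. I would track these Lipschitz constants layer by layer (using that inputs are bounded by $R_t$ after each clipping), and obtain $\log\cN(\cT,\delta) \lesssim L\cdot M\cdot D^2 \cdot \log(BLM R_{t_0} N d/\delta)$. Plugging in the Theorem~\ref{thm:approx} parameter choices --- $D = \cO(d+d_e)$, $M = \cO((\ell \log)^{1/\nu})$, $L = \cO(\kappa_{t_0}\log^2)$, $R_{t_0}^2 = \tilde\cO(Nd/\sigma_{t_0}^2)$, and $B$ polynomial in $Nd$ --- the dominant term in the variance part becomes
\begin{align*}
\frac{R_{t_0}^2 \cdot LMD^2\log(\cdot)}{n} \;=\; \tilde\cO\paren{\frac{\ell^{1/\nu}\kappa_{t_0} N d^3}{\sigma_{t_0}^2\, n}}.
\end{align*}
Taking square root to pass from squared $\TV$ to $\TV$ (with an additional $\sqrt{\kappa_{t_0}}$ coming from the integral of $1/\sigma_t^2$ over $[t_0,T]$ and absorbing into $\kappa_{t_0}^2$) yields the stated bound \eqref{eq:TV_upperbound}. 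Balancing the $\delta$ term and bookkeeping the polylog factors (in $\kappa_{t_0}, n, d, N, t_0^{-1}$) produces the exponent $\tfrac{5\nu+1}{2\nu}$ on the logarithm.
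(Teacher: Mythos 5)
Your proposal follows essentially the same route as the paper: Girsanov/Pinsker to reduce $\mathrm{TV}$ to an integrated score error, a bias--variance split with the bias controlled by Theorem~\ref{thm:approx} (with $\epsilon=1/n$) and the variance controlled by a covering-number bound for the transformer class, and a direct $W_2$ computation along the OU flow for the early-stopping perturbation. This matches the paper's Lemma~\ref{lemma::W2}, Proposition~\ref{prop::score estimation}, and Lemma~\ref{lemma:Girsanov} in both structure and substance. One small bookkeeping point worth flagging: the paper's covering-number lemma (Lemma~\ref{lemma::covn number}) yields a leading term proportional to $D^2 M L^2\log L_{\mathcal F}A_{\mathcal F}$ rather than $D^2 M L\log(\cdot)$ as you wrote; the extra factor of $L$ arises because the per-layer Lipschitz constants $L_{\mathcal F}$ accumulate multiplicatively over $L$ layers, so the log-covering radius of layer $i$ must be shrunk by $L_{\mathcal F}^{L-i}$, and summing over layers gives the $L^2\log L_{\mathcal F}$. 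Since $L=\cO(\kappa_{t_0}\log^2)$, this is exactly where the $\kappa_{t_0}^2$ in \eqref{eq:TV_upperbound} comes from, rather than from the separate ``$\sqrt{\kappa_{t_0}}$ from the $\int\sigma_t^{-2}$ integral'' heuristic you invoked; the latter integral actually only contributes $T+\log(1/t_0)$, which is a polylog factor.
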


The proof is deferred to Appendix~\ref{pf:generalization}. We remark that the emergence of $\tilde{P}_0$ owes to the early-stopping in score estimation, which is obtained by evolving $P_0$ along the forward process for timestep $t_0$. The optimal choice on $t_0$ depends on the condition number $\kappa_{t_0}$. For fast decay covariance functions, $\kappa_{t_0} = \cO(1)$ and we can choose $t_0 = 1/n$ and the generalization error is in the order of $\tilde{\cO}(\sqrt{\ell N d^3/n})$. In this case, we have a relatively weak dependence on $N$, demonstrating the efficiency of diffusion transformers in sequential data modeling. On the other hand, when $\kappa_{t_0}$ is large, i.e., with the presence of long-horizon dependencies, the learning efficiency suffers from heavier dependence on $N$. We experiment with various decaying patterns in Section~\ref{sec:experiment} and show the corresponding performance.

\section{Numerical Results}\label{sec:experiment}
\subsection{Experiments on Gaussian Process Data}
In this section, we conduct experiments on diffusion transformer for learning synthetic Gaussian process data. We test various aspects of our theory, including learning efficiency and the capture of spatial-temporal dependencies. In in Appendix~\ref{appendix::further exp}, we further trace the capture of spatial-temporal dependencies along with the training process and conduct additional experiments on comparing Diffusion Transformers with Unet-based Diffusion models.

\paragraph{Experiment Setup} We consider synthetic Gaussian process data with $d = 8$ and $N = 128$. We stick to the setting where the covariance matrix is generated by $\bSigma = \Ab^\top \Ab \in \RR^{8 \times 8}$ for a Gaussian random matrix $\Ab \in \RR^{8 \times 8}$. We set the mean vector $\bmu = \bzero$ for simplicity. The covariance function is chosen as in Assumption~\ref{assumption:decay} with $\gamma(h_i, h_j) = \exp\left(-|h_i - h_j|^\nu / \ell\right).$ We set $\ell$ and $\nu$ as hyperparameters and report their influences on the learning. We generate $n \in \{1000,3200,10000,32000,100000\}$ sequences from the Gaussian process as our training data in different settings. The diffusion transformer is implemented based on the DiT \citep{peebles2023scalable} code base. We set the number of transformer blocks to be $12$ throughout all the experiments. We modify the patchify module in DiT to cope with our Gaussian process data. Additional implementation details can be found in Appendix~\ref{appendix:exp_detail}.
\paragraph{Influence of Covariance Function Decay on Capturing Spatial-Temporal Dependencies}~ We study the influence of covariance functions on the performance of diffusion transformers. We vary the exponent $\nu$ and bandwidth $\ell$ as follows: 1) we keep $\ell=64$, while $\nu$ varies in $\{1, 2^{1/4}, 2^{1/2}, 2^{3/4}, 2\}$; 2) we keep $\nu=2$, while $\ell$ varies in $\{$4,16,64,256,1024$\}$. After training, we generate 1000 samples at each time step for performance evaluation. The metric is a relative error $\epsilon$ of the estimated sample covariance matrix to its ground-truth (see a definition in Appendix~\ref{appendix:exp_detail}), which is reported in Figure~\ref{fig:error-n-v}(a).
\begin{figure}[h]

\subfigure[Relative error under different $n,\nu,\ell$]{\includegraphics[width=0.68\linewidth]{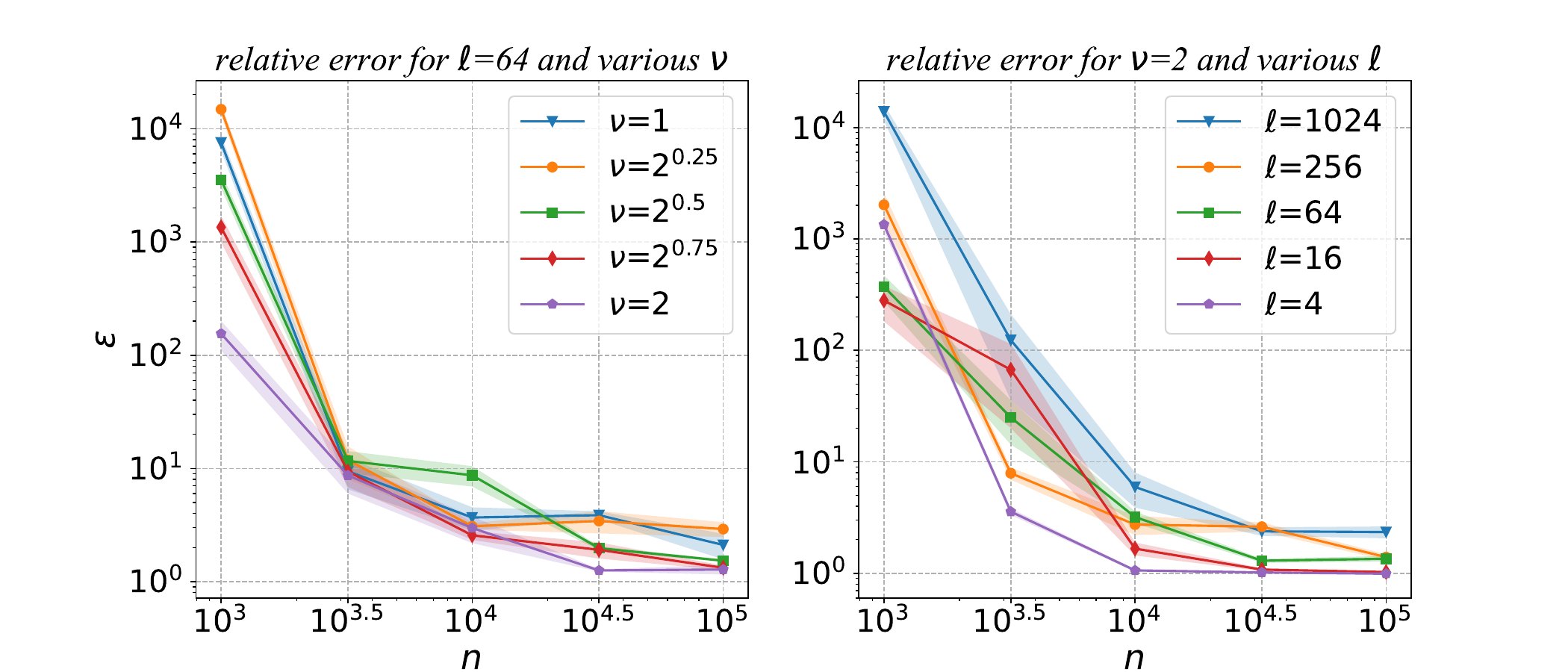}}
\subfigure[Query-Key matrix]{\includegraphics[width=0.29\linewidth]{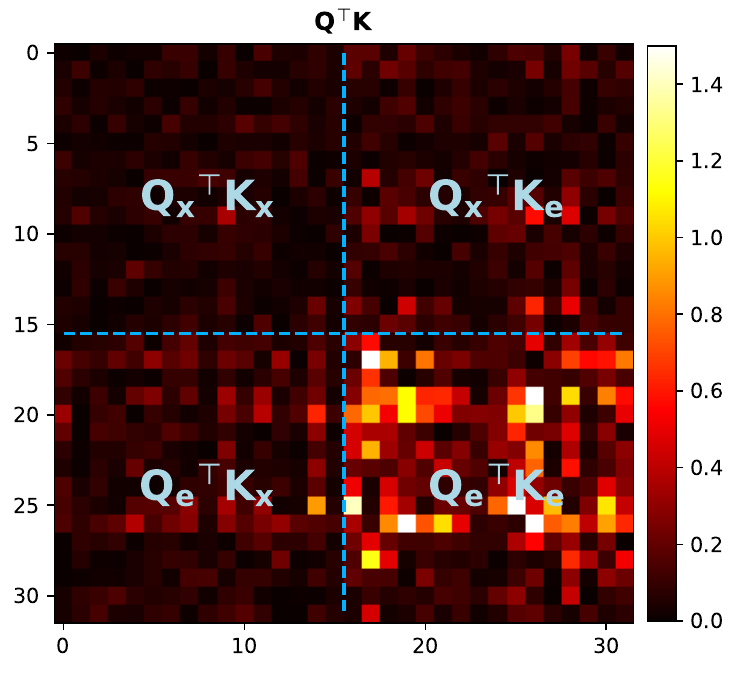}}

\caption{\it In panel (a), we observe that the relative error decreases as the sample size increases. Meanwhile, larger $\nu$ or smaller $\ell$ leads to better performance, supporting our generalization bound in Theorem~\ref{thm:generalization}. In panel (b), we split $\Qb$ to $\Qb=[\Qb_x,\Qb_e]$ where $\Qb_x\in \RR^{16\times32}$ and $\Qb_e\in \RR^{16\times32}$ corresponds to time embedding $\eb_{i}$. We also split the key matrix $\Kb$ as $[\Kb_x,\Kb_e]$. The sub-block $\Qb_{\eb}^\top \Kb_{\eb}$  has dominant weights compared to other sub-blocks.}
\label{fig:error-n-v}

\end{figure}
\paragraph{Transformer's Query-Key Matrices Coincides with Our Approximation Theory} 
We dive into transformer blocks to understand how attention layer captures dependencies. 
Inside a transformer block, the input is a concatenation of a data vector and a corresponding time embedding written as $[\zb_{i, t}^\top,\eb_i^\top]^\top\in\RR^{32}$, where $\eb_i\in\RR^{16}$ is the time embedding and $\zb_{i, t}\in\RR^{16}$ is the output of the patchify module in DiT, obtained by a linear transformation on raw data at diffusion timestep $t$. We plot the heat map of query and key matrices in the $5$th transformer block in Figure~\ref{fig:error-n-v}(b). Plots for other transformer blocks are provided in Appendix~\ref{appendix:exp_detail}. As can be seen, the interaction between time embedding (bottom-right block) is dominant, which aligns with our approximation theory for constructing the transformer architecture.

\paragraph{Backward Diffusion Process Unveils the Kernel Matrix in Attention Scores} To further understand how DiT captures the temporal dependencies, i.e., matrix $\bGamma$ for Gaussian process data, we plot the evolution of score matrices $(\Qb \Yb)^\top \Kb \Yb$ in the attention layers at different steps in the backward diffusion process. Besides, we demonstrate the gradual change of score matrices in different attention layers. Specifically, Figure \ref{fig:denoise_attn} presents that with progressive denoising in the backward process, the attention score matrix becomes more and more similar to the ground truth $\bGamma$. In addition, in the first few attention layers, e.g., the first and the second layers, the dependencies are not well-structured and as predicted by our theory, these layers are still realizing some transformations on their inputs. Starting from the 3rd layer, the attention score matrices gradually exhibit the pattern of the ground truth temporal dependencies. In further subsequent layers, we observe that the learned pattern of temporal dependencies is kept. We refer readers to Figure \ref{fig:attn_score all} in Appendix~\ref{appendix:exp_detail} for a complete plot of score matrices in each attention layer. 

\begin{figure}[!h]

\centering
\includegraphics[width = \textwidth]{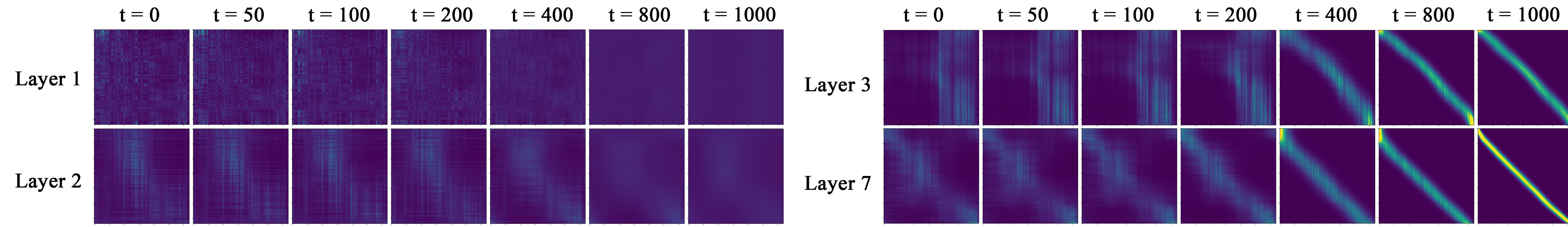}

\caption{\it We demonstrate score matrices in different attention layers and at different backward denoising steps. The learned temporal dependencies gain more and more clarity as the denoising in the backward process proceeds. Meanwhile, we observe that the temporal dependencies are well captured starting from the 3rd layer.}
\label{fig:denoise_attn}

\end{figure}

\subsection{Experiments on Semi-synthetic Video Data}
To further demonstrate the capability of diffusion transformers in capturing spatial-temporal dependencies, we consider learning 2D motions of a ball. The motion is described by a sequence of gray-scale image frames of resolution $64\times 64$, which characterizes the ball that starts moving toward a random direction in a cube and bounces back when hitting a wall. Because of the bouncing-back mechanism, the dynamic of the ball goes beyond the class of Gaussian process and exemplifies more complex dependencies with abrupt changes. We train a latent diffusion model \citep{rombach2022high}, where we first generate $20000$ image frames for training a 2D Variational Autoencoder (VAE). The 2D VAE sets the latent representation in $\RR^2$. Once the VAE is trained, we fix it and generate independently 10000 motion videos, each consisting of 240 image frames. The diffusion transformer is trained on the latent representations of the generated videos. As shown in Figure \ref{fig::ball}, we observe that spatial-temporal dependencies are well captured. 
We further collect a latent sample from the diffusion transformer and map it to the original 2D space through the pretrained decoder of the VAE, forming a $240$-frame video. We present $36$ frames in Figure \ref{fig:2Dball} from the video, which are chosen uniformly out of the $240$ frames. As shown in the figure, the motion of the ball shows great time consistency and accurately captures the bouncing-back mechanism as expected.

\begin{figure}[H]

\centering
\subfigure[Learned spatial-temporal dependencies (left) compared with the ground truth (right).]{\includegraphics[width = 0.65\textwidth]{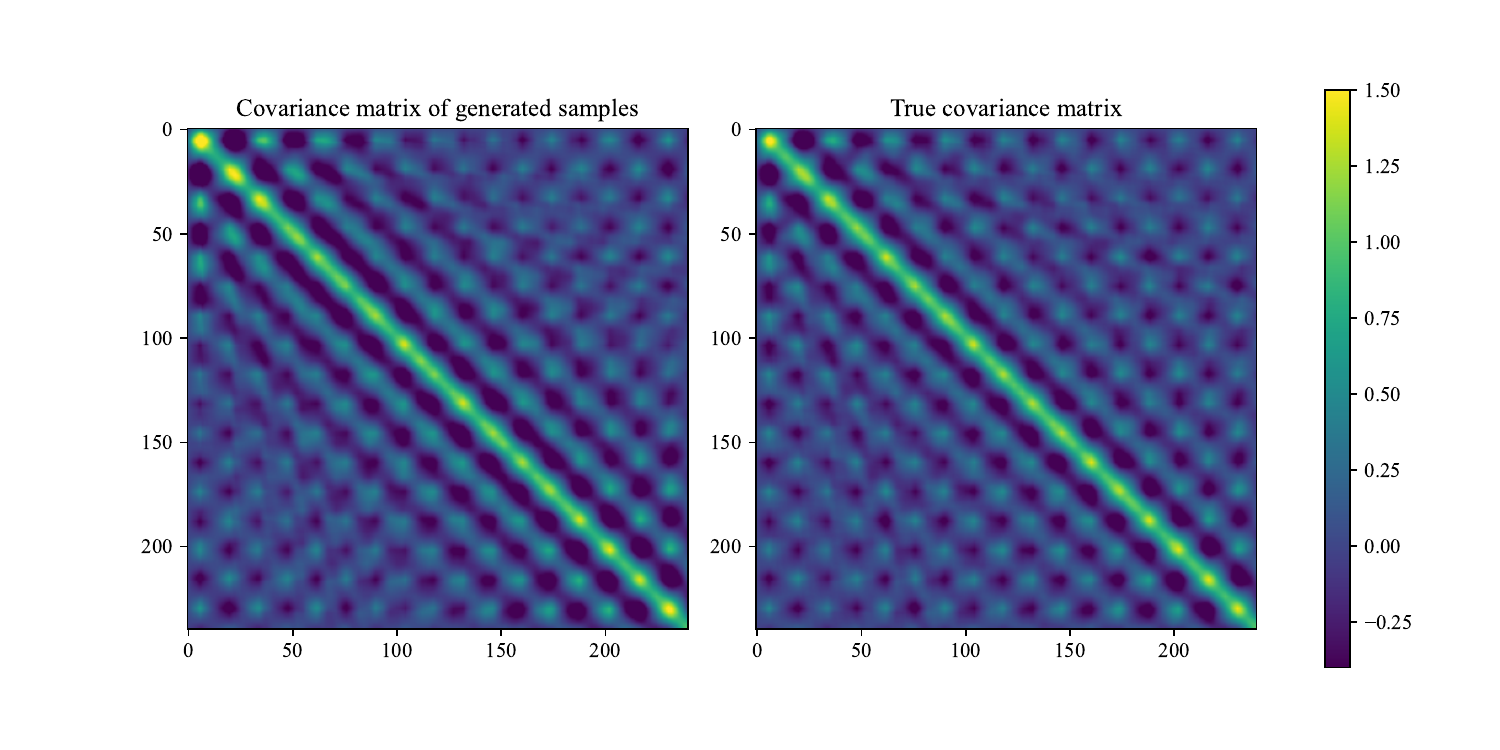}}
~~
\subfigure[The query-key matrix of the first diffusion transformer block.]{ \includegraphics[width=0.31\textwidth]{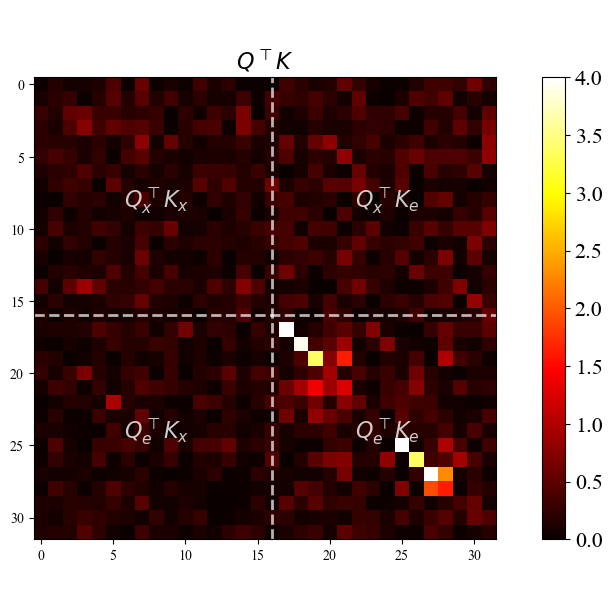}\vspace{0.1in}}

    \caption{In panel (a), we observe that the spatial-temporal dependencies (represented by the covariance matrix) have been well captured by diffusion transformers. We collect $1000$ generated samples to calculate the dependencies. In panel (b), similar to Panel (b) in Figure \ref{fig:error-n-v}, the learned query-key matrix  demonstrates dominant scores between time embedding (bottom-right block).}\label{fig::ball}

\end{figure}

\begin{figure}[H]
    \centering
\includegraphics[width=\linewidth]{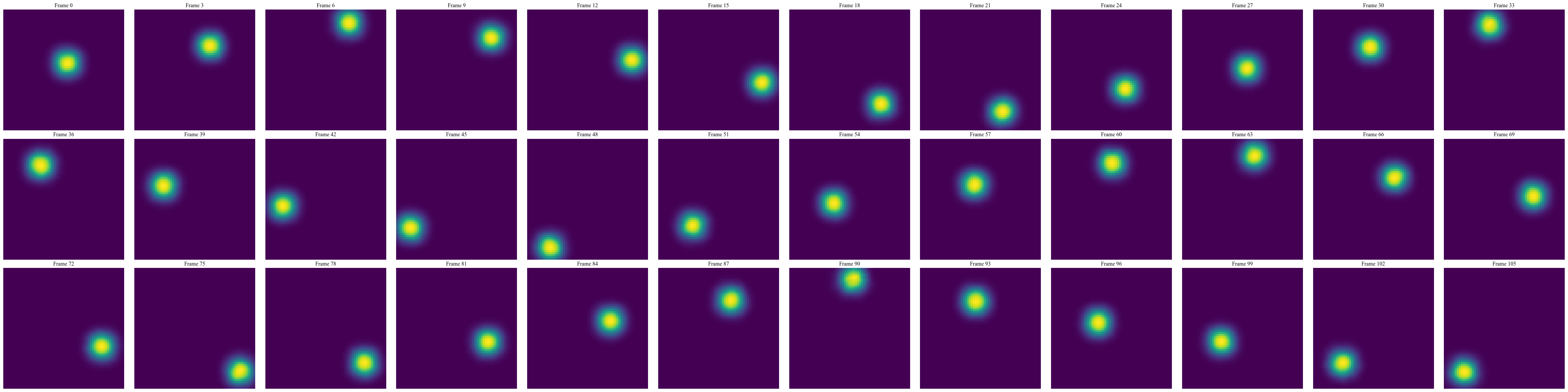}
    \caption{Consecutive frames of a video generated by a trained diffusion transformer with a 2D VAE.}
    \label{fig:2Dball}
\end{figure}

\section{Conclusion and Discussion}\label{sec:conclusion}
We have studied diffusion transformers for learning Gaussian process data. We have developed a score function approximation theory, by leveraging transformers to unroll a gradient descent algorithm. Further, we have established sample complexities of diffusion transformer and discussed the influence of spatial-temporal dependencies on learning efficiency. While Gaussian process data enjoys mathematical simplicity and is relatively preliminary, our theoretical insights and experimental findings can provide invaluable intuition to analyze and design sequential data modeling using diffusion processes. An interesting future direction is to consider generic dynamic models. We expect broad and positive societal impact on advancing diffusion models for sequential data synthesis, forecasting and editing, including video and audio snippets.
\clearpage
\bibliography{arxiv/ref}
\bibliographystyle{plainnat}

\newpage
\appendix

\section{Omitted Proofs in Section~\ref{sec:score_gd}}
{ Before diving into detailed proofs of our results (Theorem~\ref{thm:approx} and Theorem~\ref{thm:generalization}), we list our assumptions for a quick reference.

$\bullet$ {\bf Data assumption}. We consider Gaussian process data in $\RR^d$ within interval $[0, H]$, which is defined in Section~\ref{sec:pre}. The covariance function of the underlying Gaussian process verifies Assumption~\ref{assumption:decay}. For technical convenience, we also assume the mean function $\bmu(\cdot)$ of the Gaussian process can be efficiently represented by neural networks in Assumption~\ref{assump:GP_mean}.

$\bullet$ {\bf Transformer architecture}. We denote a transformer architecture by $\cT(D, L, M, B, R)$, where $D, L, M, B, R$ are hyperparameters defining the size of the transformer. In Theorem~\ref{thm:approx}, we will choose these hyperparameters depending on the desired approximation accuracy. In Theorem~\ref{thm:generalization}, we will further choose these hyperparameters depending on the training sample size $n$.
}

\subsection{Deriving Score Function for Gaussian Process Data}\label{pf:patch_score}
By the definition of Gaussian process, we know that the stacking vector $[\xb_1^\top, \dots, \xb_N^\top]^\top \in \RR^{dN}$ follows a Gaussian distribution ${\sf N}(\bmu, \bGamma \otimes \bSigma)$. Along the forward process, we progressively add Gaussian noise to the initial data distribution and hence, for any $\vb_t$, we have
\begin{align*}
p_t(\vb_t) & = \int \underbrace{\frac{1}{(2\pi \sigma_t^2)^{dN/2}} \exp\left(-\frac{1}{2\sigma_t^2} \norm{\vb_t - \alpha_t \vb_0}_2^2\right)}_{(A)} \\
& \quad \cdot \underbrace{\frac{1}{(2\pi)^{dN/2} \sqrt{\det(\bGamma \otimes \bSigma)}} \exp\left(-\frac{1}{2} (\vb_0 - \bmu)^\top (\bGamma \otimes \bSigma)^{-1} (\vb_0 - \bmu) \right)}_{(B)} \diff \vb_0.
\end{align*}
Note that $(A)$ is the Gaussian transition kernel corresponding to the forward process and $(B)$ is the clean data density function. It is clear that $p_t(\vb_t)$ is again a Gaussian distribution. By completing the squares and some algebraic manipulation, we have
\begin{align*}
p_t(\vb_t) \propto \int \exp\left(-\frac{1}{2} (\vb_t - \alpha_t \bmu)^\top (\sigma_t^2\Ib + \alpha_t^2 \bGamma \otimes \bSigma)^{-1} (\vb_t - \alpha_t \bmu)\right).
\end{align*}
As a sanity check, $p_t$ is now a Gaussian density function of ${\sf N}(\alpha_t \bmu, \sigma_t^2\Ib + \alpha_t^2 \bGamma \otimes \bSigma)$, matching the marginal distribution of the forward process. Therefore, the score function is
\begin{align*}
\nabla \log p_t(\vb_t) = - \left(\alpha_t^2 \bGamma \otimes \bSigma + \sigma_t^2 \Ib \right)^{-1} (\vb_t - \alpha_t \bmu).
\end{align*}

\subsection{Proof of Lemma~\ref{lemma:score_as_gd}}\label{pf:score_as_gd}
To prove the Lemma, we first state a standard result in convex optimization.
\begin{lemma}[Theorem 3.12 in \cite{bubeck2015convex}]\label{lemma::GD convergence}
    Let $f$ be $\beta$-smooth and $\alpha$-strongly convex on $\RR^{d}$ and $\xb^{\star}$ is the global minimizer. Then gradient descent with $\eta= \frac{2}{\alpha+\beta}$ satisfies
    \begin{align*}
        \norm{\xb^{(k+1)}-\xb^{\star}}_2\le \paren{\frac{\kappa-1}{\kappa+1}}\norm{\xb^{(k)}-\xb^{\star}}_2,~~k=0,1,\dots.
    \end{align*}
    Here $\xb^{(k+1)}=\xb^{(k)}-\eta\nabla f(\xb^{(k)})$ is the outcome in $(k+1)-$th iteration of GD and $\kappa=\beta/\alpha$.
\end{lemma}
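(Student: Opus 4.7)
The plan is to give the classical textbook proof of the contraction inequality for gradient descent on a strongly convex and smooth function. The statement is an instance of Theorem 3.12 in Bubeck's monograph, so rather than re-deriving the theory from scratch, I would reproduce the short, self-contained argument that relies on a single key inequality: the joint co-coercivity of $\nabla f$ when $f$ is simultaneously $\alpha$-strongly convex and $\beta$-smooth.

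First, I would establish (or invoke) the co-coercivity inequality: for every $\xb, \yb \in \mathbb{R}^d$,
\begin{align*}
\langle \nabla f(\xb) - \nabla f(\yb),\, \xb - \yb\rangle \;\ge\; \frac{\alpha\beta}{\alpha+\beta}\norm{\xb-\yb}_2^2 \;+\; \frac{1}{\alpha+\beta}\norm{\nabla f(\xb) - \nabla f(\yb)}_2^2.
\end{align*}
This follows by applying the standard co-coercivity for convex $\beta$-smooth functions to the auxiliary function $g(\xb) = f(\xb) - \tfrac{\alpha}{2}\norm{\xb}_2^2$, which is convex and $(\beta-\alpha)$-smooth, then rearranging; this is a one-paragraph calculation and is the only non-trivial ingredient.

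Next, expand the squared distance after one gradient descent step, using that $\nabla f(\xb^\star) = 0$ so that $\nabla f(\xb^{(k)}) = \nabla f(\xb^{(k)}) - \nabla f(\xb^\star)$:
\begin{align*}
\norm{\xb^{(k+1)} - \xb^\star}_2^2
&= \norm{\xb^{(k)} - \xb^\star}_2^2 - 2\eta\langle \nabla f(\xb^{(k)}) - \nabla f(\xb^\star),\, \xb^{(k)}-\xb^\star\rangle + \eta^2\norm{\nabla f(\xb^{(k)})}_2^2.
\end{align*}
Applying co-coercivity to the inner product gives
\begin{align*}
\norm{\xb^{(k+1)} - \xb^\star}_2^2
\;\le\; \paren{1 - \frac{2\eta\alpha\beta}{\alpha+\beta}}\norm{\xb^{(k)} - \xb^\star}_2^2
+ \paren{\eta^2 - \frac{2\eta}{\alpha+\beta}}\norm{\nabla f(\xb^{(k)})}_2^2.
\end{align*}
The step size $\eta = \tfrac{2}{\alpha+\beta}$ is precisely the one that kills the gradient-norm coefficient, and simultaneously collapses the first bracket to $1 - \tfrac{4\alpha\beta}{(\alpha+\beta)^2} = \tfrac{(\beta-\alpha)^2}{(\beta+\alpha)^2} = \paren{\tfrac{\kappa-1}{\kappa+1}}^2$. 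Taking square roots yields the claimed per-iteration contraction.

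The argument has essentially no obstacles; the only mildly delicate step is the co-coercivity bound, and even that is standard. The reason $\eta = 2/(\alpha+\beta)$ is optimal in this analysis is simply that it is the unique choice making the unfavorable gradient-norm term vanish while still leaving a quadratic contraction factor; any other analysis technique (e.g., treating smoothness and strong convexity separately via $\norm{\nabla f(\xb)}^2 \ge 2\alpha(f(\xb) - f(\xb^\star))$) gives a weaker rate $(1 - 2\eta\alpha)$ instead of the tight $(\tfrac{\kappa-1}{\kappa+1})^2$. Since the paper only uses the lemma as a black-box rate estimate inside the proof of Lemma~\ref{lemma:score_as_gd}, the citation to Bubeck suffices and no further work is required.
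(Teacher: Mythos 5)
Your proof is correct. The paper does not prove this lemma at all---it imports it verbatim as Theorem 3.12 of \citet{bubeck2015convex}---and the argument you give (joint co-coercivity of the gradient for an $\alpha$-strongly convex, $\beta$-smooth function, followed by expanding $\norm{\xb^{(k+1)}-\xb^\star}_2^2$ and choosing $\eta=2/(\alpha+\beta)$ to annihilate the gradient-norm term) is exactly the proof in that reference, with all constants checking out: the surviving coefficient is $(\beta-\alpha)^2/(\beta+\alpha)^2=\paren{\frac{\kappa-1}{\kappa+1}}^2$, and taking square roots gives the stated contraction.
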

With the lemma above, the proof of Lemma~\ref{lemma:score_as_gd} is quite straightworward.
\begin{proof}[Proof of Lemma \ref{lemma:score_as_gd}]
    Denote the truncated score function by $$\bar\sbb(\vb_t)=-\paren{\alpha_t^2(\bar\bGamma\otimes\bSigma)+\sigma_t^2\Ib}^{-1}(\vb_t-\alpha_t\bmu).$$ Let's consider the following quadratic target function
    \begin{align*}
        \bar\cL_t(\sbb)=\frac{1}{2}\sbb^{\top}\paren{\alpha_t^2(\bar\bGamma\otimes\bSigma)+\sigma_t^2\Ib}\sbb+(\vb_t-\alpha_t\bmu)^{\top}\sbb,
    \end{align*}
    And we iterate $\sbb^{(k)}$ by gradient descent on this target function with $\sbb^{(0)}=\bzero$. Since $\bar\cL_t$ is $\lambda_{\max}\paren{\alpha_t^2(\bar\bGamma\otimes\bSigma)+\sigma_t^2\Ib}$-smooth and $\lambda_{\min}\paren{\alpha_t^2(\bar\bGamma\otimes\bSigma)+\sigma_t^2\Ib}$-strongly convex, by the approximate GD formula \eqref{equ::GD} and Lemma~\ref{lemma:score_as_gd}, for any $k\ge 0$, we have 
\begin{align*}
    \norm{\sbb^{(k+1)}(\vb_t)-\bar\sbb(\vb_t)}_2
    &\le \paren{\frac{\bar\kappa_t-1}{\bar\kappa_t+1}}\norm{\sbb^{(k)}(\vb_t)-\bar\sbb(\vb_t)}_2\le\exp\paren{-\frac{2(k+1)}{\bar\kappa_t+1}}\norm{\bar\sbb(\vb_t)}_2.
\end{align*}
Here $\bar\kappa_t$ is the condition number of $\alpha_t^2(\bar\bGamma\otimes\bSigma)+\sigma_t^2\Ib$. Moreover, we have \begin{align*}
\norm{\bar\sbb(\vb_t)}_2 
&\le\norm{\paren{\alpha_t^2(\bar\bGamma\otimes\bSigma)+\sigma_t^2\Ib}^{-1}}_{2}\norm{\vb_t-\alpha_t\bmu}_2\\
&=\lambda_{\min}^{-1}\paren{\alpha_t^2(\bar\bGamma\otimes\bSigma)+\sigma_t^2\Ib}\norm{\vb_t-\alpha_t\bmu}_2\\
&\le \frac{\norm{\vb_t-\alpha_t\bmu}_2}{\sigma_t^2}
\end{align*}
Thus, by taking the number of iterations $K=\lceil\frac{\bar \kappa_t+1}{2}\log(1/\epsilon)\rceil$, we have $$ \norm{\sbb^{(K)}(\vb_t)-\bar\sbb(\vb_t)}_2\le \frac{\norm{\vb_t-\alpha_t\bmu}_2\epsilon}{\sigma_t^2}.$$
Besides, denote $\bPhi_t=\alpha_t^2 ({\bGamma} \otimes \bSigma) + \sigma_t^2 \Ib$ and $\bar{\bPhi}_t=\alpha_t^2 (\bar{\bGamma} \otimes \bSigma) + \sigma_t^2 \Ib$. The difference between the truth score function and the truncated score function is 
\begin{align*}
    \norm{\bar\sbb(\vb_t)-\sbb(\vb_t)}_2&=\norm{\paren{\paren{\alpha_t^2(\bar\bGamma\otimes\bSigma)+\sigma_t^2\Ib}^{-1}-\paren{\alpha_t^2(\bGamma\otimes\bSigma)+\sigma_t^2\Ib}^{-1}}(\vb_t-\alpha_t\bmu)}_2\\
    &=\norm{\paren{\bar\bPhi_t^{-1}-\bPhi_t^{-1}}(\vb_t-\alpha_t\bmu)}_2\\
    &\le \norm{\bar\bPhi_t^{-1}-\bPhi_t^{-1}}_{2}\norm{\vb_t-\alpha_t\bmu}_2\\
    &=\norm{\bPhi_t^{-1}\paren{\bPhi_t-\bar\bPhi_t}\bar\bPhi_t^{-1}}_{2}\norm{\vb_t-\alpha_t\bmu}_2\\
    &\le \norm{\bPhi_t^{-1}}_{2}\norm{\bar\bPhi_t^{-1}}_{2}\norm{\bPhi_t-\bar\bPhi_t}_{2}\norm{\vb_t-\alpha_t\bmu}_2\\
    &\le \sigma_t^{-4}\norm{\bPhi_t-\bar\bPhi_t}_{2}\norm{\vb_t-\alpha_t\bmu}_2.
\end{align*}
Let's focus on bounding $\norm{\bPhi_t-\bar\bPhi_t}_{2}$. Without any assumptions on $\bPhi_t$, a natural bound is 
\begin{align}
    \norm{\bPhi_t-\bar\bPhi_t}_{2}&\le\norm{\bPhi_t-\bar\bPhi_t}_{\rm F}
    =\alpha^2_t\norm{\bSigma}_{\rm F}\sqrt{\sum_{\abs{i-j}>J}\bGamma_{ij}^2}. \label{equ::trunc Gamma difference}
\end{align}
Altogether, we have 
\begin{align}
    \norm{\sbb^{(K)}(\vb_t)-\nabla\log p_t(\vb_t)}_2 &\le \frac{\norm{\vb_t-\alpha_t\bmu}_2\epsilon}{\sigma_t^2}+\sigma_t^{-4}\alpha^2_t\norm{\bSigma}_{\rm F}\sqrt{\sum_{\abs{i-j}>J}\bGamma_{ij}^2}\norm{\vb_t-\alpha_t\bmu}_2\nonumber\\
    &\le \paren{\epsilon+\frac{\norm{\bSigma}_{\rm F}}{\sigma_t^{2}}\sqrt{\sum_{ \abs{i-j}> J} \bGamma_{ij}^2}}\sigma_t^{-2}\norm{\vb_t-\alpha_t\bmu}_2. \label{equ::total score GD error}
\end{align}
In the last inequality, we invoke $\alpha_t^2\le 1$ for any $t\ge 0$. The proof is complete.
\end{proof}
\paragraph{GD with Approximation Error} In the process of using transformers to express GD, additional approximation error will be induced, i.e., the update formula becomes
\begin{align*}
    \sbb^{(k+1)}=\sbb^{(k)}-\eta \nabla \cL_t(\sbb^{(k+1)})+\bxi^{(k)}.
\end{align*}
Here $\bxi^{(k)}$ represents the approximation error. Then we have the following convergence analysis:
\begin{lemma}\label{lemma:score_as_gd error}
    Given $\epsilon>0$, if the approximation error in each step satisfies $\norm{\bxi^{(k)}}_2\le \epsilon$, then after $K=\lceil  \frac{\kappa_t+1}{2}\log \left({1}/{\epsilon}\right)\rceil$ steps of GD on minimizing the truncated target function $\bar\cL_t$, we have 
    \begin{align*}
        \norm{\sbb^{(K)}(\vb_t)-\nabla\log p_t(\vb_t)}_2
    &\le \frac{(\kappa_t+1)\epsilon}{2}+ \paren{\epsilon+\frac{\norm{\bSigma}_{\rm F}}{\sigma_t^{2}}\sqrt{\sum_{ \abs{i-j}> J} \bGamma_{ij}^2}}\sigma_t^{-2}\norm{\vb_t-\alpha_t\bmu}_2.
    \end{align*}
\end{lemma}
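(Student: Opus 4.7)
} The plan is to piggyback on the argument already laid out in the proof of Lemma~\ref{lemma:score_as_gd} by simply tracking the extra additive perturbation $\bxi^{(k)}$ across the iterations. I would still compare the noisy iterate $\sbb^{(k)}$ against the \emph{truncated} minimizer $\bar\sbb(\vb_t) := -(\alpha_t^2(\bar\bGamma \otimes \bSigma) + \sigma_t^2 \Ib)^{-1}(\vb_t - \alpha_t \bmu)$, and then add back the bias between $\bar\sbb(\vb_t)$ and the true score $\nabla \log p_t(\vb_t)$ at the end. This decomposition mirrors the structure of the bound to be proved: the first summand $(\kappa_t+1)\epsilon/2$ will come from accumulated per-step errors, the middle summand $\epsilon \sigma_t^{-2}\norm{\vb_t - \alpha_t\bmu}_2$ will come from the contraction of the ideal GD, and the last summand is exactly the truncation error already handled in equation \eqref{equ::trunc Gamma difference}.

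The central step is analysing the noisy recursion. Writing $\sbb^{(k+1)} = (\sbb^{(k)} - \eta_t \nabla \bar\cL_t(\sbb^{(k)})) + \bxi^{(k)}$, the first parenthesized term is a clean GD step on the $\lambda_{\min}(\bar\bPhi_t)$-strongly convex and $\lambda_{\max}(\bar\bPhi_t)$-smooth function $\bar\cL_t$, so Lemma~\ref{lemma::GD convergence} gives contraction toward $\bar\sbb(\vb_t)$ with rate $(\kappa_t - 1)/(\kappa_t + 1)$. Applying the triangle inequality with $\norm{\bxi^{(k)}}_2 \le \epsilon$ produces the affine recursion
\begin{align*}
\norm{\sbb^{(k+1)} - \bar\sbb(\vb_t)}_2 \le \tfrac{\kappa_t - 1}{\kappa_t + 1}\,\norm{\sbb^{(k)} - \bar\sbb(\vb_t)}_2 + \epsilon.
\end{align*}
Unrolling from $\sbb^{(0)} = \bzero$ yields a geometric sum whose partial sums are dominated by $\sum_{k=0}^{\infty}(\tfrac{\kappa_t-1}{\kappa_t+1})^k = (\kappa_t+1)/2$, producing the additive term $(\kappa_t + 1)\epsilon/2$. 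The leading contraction factor $((\kappa_t - 1)/(\kappa_t + 1))^K \norm{\bar\sbb(\vb_t)}_2$ is driven below $\epsilon \norm{\bar\sbb(\vb_t)}_2$ by the choice $K = \lceil \tfrac{\kappa_t+1}{2}\log(1/\epsilon)\rceil$, and $\norm{\bar\sbb(\vb_t)}_2 \le \sigma_t^{-2}\norm{\vb_t - \alpha_t\bmu}_2$ as before.

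For the remaining gap between $\bar\sbb(\vb_t)$ and $\nabla \log p_t(\vb_t)$, I would reuse verbatim the calculation after \eqref{equ::trunc Gamma difference} in the proof of Lemma~\ref{lemma:score_as_gd}: the resolvent identity $\bar\bPhi_t^{-1} - \bPhi_t^{-1} = \bar\bPhi_t^{-1}(\bPhi_t - \bar\bPhi_t)\bPhi_t^{-1}$ combined with $\norm{\bar\bPhi_t^{-1}}_2, \norm{\bPhi_t^{-1}}_2 \le \sigma_t^{-2}$ and the Frobenius bound $\norm{\bPhi_t - \bar\bPhi_t}_2 \le \alpha_t^2\norm{\bSigma}_{\rm F}\sqrt{\sum_{|i-j| > J}\bGamma_{ij}^2}$ produces the truncation term. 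Adding the three pieces by the triangle inequality yields the stated bound. I do not anticipate any serious obstacle here: the only subtle point is bounding the partial geometric sum uniformly in $K$, which is immediate from $\tfrac{\kappa_t-1}{\kappa_t+1} < 1$. Everything else is a direct, linear-in-$K$ accumulation of perturbation errors on top of the analysis already completed for Lemma~\ref{lemma:score_as_gd}.
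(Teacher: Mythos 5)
Your proposal is correct and follows essentially the same route as the paper: the same affine recursion $\norm{\sbb^{(k+1)}-\bar\sbb}_2 \le \tfrac{\kappa_t-1}{\kappa_t+1}\norm{\sbb^{(k)}-\bar\sbb}_2 + \epsilon$ from Lemma~\ref{lemma::GD convergence} and the triangle inequality, the same bound $\norm{\bar\sbb(\vb_t)}_2 \le \sigma_t^{-2}\norm{\vb_t-\alpha_t\bmu}_2$, and the same reuse of the truncation bound from \eqref{equ::trunc Gamma difference}. The only cosmetic difference is that you unroll the recursion directly and bound the geometric partial sum by $\sum_{k\ge 0}\rho^k = (\kappa_t+1)/2$, whereas the paper first subtracts the fixed point $(\kappa_t+1)\epsilon/2$ and runs a short case analysis before unrolling; your version avoids that case split but lands on the identical inequality.
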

\begin{proof}[Proof of Lemma~\ref{lemma:score_as_gd error}]
By a similar deduction in the proof of Lemma \ref{lemma:score_as_gd}, for any $k\ge 0$, we have 
\begin{align*}
    \norm{\sbb^{(k+1)}-\bar\sbb}_2 &=\norm{\sbb^{(k)}-\eta \nabla \cL_t(\sbb^{(k+1)})+\bxi^{(k)}-\bar\sbb}_2 \\
    &\le \norm{\sbb^{(k)}-\eta \nabla \cL_t(\sbb^{(k+1)})-\bar\sbb}_2+\norm{\bxi^{(k)}}_2\\
    &\le \paren{\frac{\kappa_t-1}{\kappa_t+1}}\norm{\sbb^{(k)}-\bar\sbb}_2+\epsilon.
\end{align*}
Thus, we have 
\begin{align*}
    \norm{\sbb^{(k+1)}-\bar\sbb}_2 -\frac{(\kappa_t+1)\epsilon}{2}\le \paren{\frac{\kappa_t-1}{\kappa_t+1}}\paren{\norm{\sbb^{(k)}-\bar\sbb}_2 -\frac{(\kappa_t+1)\epsilon}{2}}.
\end{align*}
Note that if there exists $k\le K$ such that $\norm{\sbb^{(k)}-\bar\sbb}_2\le {(\kappa_t+1)\epsilon}/{2}$, we have for any $k_1\ge k$, $\norm{\sbb^{(k_1)}-\bar\sbb}_2 \le {(\kappa_t+1)\epsilon}/{2}$, so $\norm{\sbb^{(K)}-\bar\sbb}_2 \le {(\kappa_t+1)\epsilon}/{2} $, which finishes the proof. Now we assume for any $0 \le k \le K$, $\norm{\sbb^{(K)}-\bar\sbb}_2 \ge {(\kappa_t+1)\epsilon}/{2}$. Then we have 
\begin{align}
    \norm{\sbb^{(k)}-\bar\sbb}_2 -\frac{(\kappa_t+1)\epsilon}{2}&\le \paren{\frac{\kappa_t-1}{\kappa_t+1}}^{k}\paren{\norm{\sbb^{(0)}-\bar\sbb}_2 -\frac{(\kappa_t+1)\epsilon}{2}} \nonumber\\
    &\le \paren{\frac{\kappa_t-1}{\kappa_t+1}}^{k}\norm{\sbb^{(0)}-\bar\sbb}_2 \nonumber\\
    &\le \exp\left(-\frac{2k}{\kappa_t+1}\right)\norm{\sbb^{(0)}-\bar\sbb}_2.\label{equ::GD with error}
\end{align}
Substituting $k=K=\lceil  \frac{\kappa_t+1}{2}\log \left(1/{\epsilon}\right)\rceil$ into the inequality, we have
\begin{align*}
     \norm{\sbb^{(K)}-\bar\sbb}_2 -\frac{(\kappa_t+1)\epsilon}{2}\le \epsilon\norm{\bar\sbb(\vb_t)}_2 \le \sigma_t^{-2}\norm{\vb_t-\alpha_t\bmu}_2 \epsilon.
\end{align*}
Combining the result with \eqref{equ::trunc Gamma difference}, we complete our proof.
\end{proof}
\subsection{Proof of Corollary~\ref{cor:approx}}\label{pf:cor_approx}
Let's first analyse the approximation error by truncating $\bGamma$. For simplicity, we denote $\Delta \bGamma=\bGamma-\bar\bGamma$.
\begin{lemma}\label{lemma::trunc Gamma}
   Suppose Assumption \ref{assumption:decay} holds. Then for any $\epsilon>0$, by taking $J=\left\lceil \paren{\frac{\ell}{2}\log (N\ell/\epsilon^2)}^{1/\nu}  \right\rceil+1$, we have $\norm{\Delta \bGamma}_{\rm F}\le \epsilon$.
\end{lemma}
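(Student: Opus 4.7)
The plan is to expand the Frobenius norm, exploit the Toeplitz structure of $\bGamma$ induced by the uniform time grid, and then control the resulting one-dimensional tail sum using Assumption~\ref{assumption:decay}. First I would note that $\Delta\bGamma_{ij}=\bGamma_{ij}\mathds{1}\{|i-j|\ge J\}$, so
\begin{align*}
\|\Delta\bGamma\|_{\rm F}^2 \;=\; \sum_{1\le i,j\le N,\,|i-j|\ge J}\bGamma_{ij}^2 \;=\; 2\sum_{k=J}^{N-1}(N-k)\,\gamma_k^{2},
\end{align*}
where $\gamma_k=\exp\!\bigl(-f(k)^{\nu}/\ell\bigr)$ depends only on the lag $k$. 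Using the inequality $f(k)\ge ck$ from Assumption~\ref{assumption:decay} gives $\gamma_k^2 \le \exp(-2c^{\nu}k^{\nu}/\ell)$, and bounding $N-k\le N$ reduces the problem to proving that $\sum_{k\ge J}\exp(-2c^{\nu}k^{\nu}/\ell)\le \epsilon^{2}/(2N)$ for the prescribed choice of $J$.

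Next I would handle the exponential tail. Because $\nu\ge 1$, the map $k\mapsto k^{\nu}$ is convex on $[0,\infty)$, so the tangent-line inequality at $k=J$ yields $k^{\nu}\ge J^{\nu}+\nu J^{\nu-1}(k-J)$ for all $k\ge J$. Substituting this linear lower bound into the exponent factorises the tail into a geometric series:
\begin{align*}
\sum_{k=J}^{\infty}\exp\!\Bigl(-\tfrac{2c^{\nu}k^{\nu}}{\ell}\Bigr)
\;\le\; \exp\!\Bigl(-\tfrac{2c^{\nu}J^{\nu}}{\ell}\Bigr)\cdot\frac{1}{1-\exp\!\bigl(-2c^{\nu}\nu J^{\nu-1}/\ell\bigr)}.
\end{align*}
Provided $J$ is at least a small absolute constant, the denominator is bounded away from zero (say by $1/2$), so the tail is controlled by a constant multiple of $\exp(-2c^{\nu}J^{\nu}/\ell)$. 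This step is where the extra $+1$ in the definition of $J$ is used, guaranteeing that the geometric ratio is safely less than one.

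Finally I would match this against the target $\epsilon^{2}/(2N)$. The choice $J=\lceil(\tfrac{\ell}{2}\log(N\ell/\epsilon^{2}))^{1/\nu}\rceil+1$ guarantees $2J^{\nu}/\ell \ge \log(N\ell/\epsilon^{2})$, hence $\exp(-2J^{\nu}/\ell)\le \epsilon^{2}/(N\ell)$, so after combining constants we obtain $\|\Delta\bGamma\|_{\rm F}^{2}\le \epsilon^{2}$ as desired. The main subtlety is bookkeeping of the constants $c$ and $\nu$ in the exponent: the stated formula implicitly folds $c^{\nu}$ into the $\cO$-notation of the neighbouring results, and one must be careful that the $1-\exp(-2c^{\nu}\nu J^{\nu-1}/\ell)$ factor is $\Theta(1)$ for the stated $J$ so the tail collapses to the single term $\exp(-2c^{\nu}J^{\nu}/\ell)$; this is the only non-routine step in the argument, and is ensured by the explicit $+1$ padding in the definition of $J$.
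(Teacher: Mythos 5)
Your overall strategy is sound and genuinely different from the paper's: where the paper controls $\sum_{k\ge J}\exp(-2c^{\nu}k^{\nu}/\ell)$ by the integral comparison $\sum_{k=J}^{N-1}\le \int_{J-1}^{\infty}$ and then evaluates the tail integral, you linearize the exponent via the convexity tangent $k^{\nu}\ge J^{\nu}+\nu J^{\nu-1}(k-J)$ and sum a geometric series. Both are legitimate ways to bound a sub-Gaussian/sub-exponential tail sum, and the first two steps of your proof (Toeplitz reduction and $f(k)\ge ck$) match the paper exactly.

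The gap is in your treatment of the geometric-series denominator. You assert that $1-\exp(-2c^{\nu}\nu J^{\nu-1}/\ell)$ is $\Theta(1)$ "provided $J$ is at least a small absolute constant," and you attribute this to the $+1$ in the definition of $J$. This is not correct: for $\nu=1$ the exponent $2c\nu J^{\nu-1}/\ell=2c/\ell$ does not depend on $J$ at all, and for $\ell\gg c$ the denominator is $1-e^{-2c/\ell}\approx 2c/\ell$, which is small no matter how you pad $J$. The correct conclusion of your tangent-line step is
\begin{align*}
\sum_{k\ge J}\exp\!\Bigl(-\tfrac{2c^{\nu}k^{\nu}}{\ell}\Bigr)\le \frac{\exp(-2c^{\nu}J^{\nu}/\ell)}{1-\exp(-2c^{\nu}\nu J^{\nu-1}/\ell)}\lesssim \Bigl(1\vee\tfrac{\ell}{c^{\nu}\nu J^{\nu-1}}\Bigr)\exp(-2c^{\nu}J^{\nu}/\ell),
\end{align*}
using $1-e^{-x}\ge \min(x/2,1/2)$; the extra $\ell$-factor is precisely what the $\ell$ inside the logarithm $\log(N\ell/\epsilon^{2})$ is there to absorb, as your own final display $\exp(-2J^{\nu}/\ell)\le\epsilon^{2}/(N\ell)$ already implicitly uses. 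So the arithmetic closes, but the explanation of \emph{why} it closes is wrong: the $\Theta(1)$-denominator claim should be replaced by the estimate above, and the $+1$ in $J$ plays no role in your argument (in the paper it is needed to replace the discrete sum $\sum_{k\ge J}$ by $\int_{J-1}^{\infty}$, a step you do not take). With that correction your alternative proof is complete and is arguably slightly more elementary than the paper's, avoiding the integral comparison at the cost of the case split on the size of $a\nu J^{\nu-1}$.
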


\begin{proof}[Proof of Lemma \ref{lemma::trunc Gamma}]
    By Assumption \ref{assumption:decay}, we have $f(m)\ge cm$. According to the definition of $\Delta \bGamma$, we have 
    \begin{align*}
        \norm{\Delta \bGamma}_{\rm F}^2&=\sum_{\abs{i-j}\ge J}\gamma(h_i,h_j)^2\\
        &=\sum_{k=J}^{N-1}(2N-2k)\exp\paren{-\frac{2f(k)^\nu}{\ell}}\\
        &\le\sum_{k=J}^{N-1}(2N-2k)\exp\paren{-\frac{2(ck)^\nu}{\ell}}\\
        &\le 2N\sum_{k=J}^{N-1}\exp\paren{-\frac{2c^{\nu}k^\nu}{\ell}}\\
        &\le 2N\int_{J-1}^{\infty}\exp\paren{-\frac{2c^{\nu}t^\nu}{\ell}}\diff t\\
        &\le c^{-\nu}N\ell\exp\paren{-\frac{2c^{\nu}(J-1)^\nu}{\ell}}.
    \end{align*}
    by taking $J=\left\lceil \paren{\frac{\ell}{2c^{\nu}}\log \paren{N\ell/\epsilon^2c^{\nu}}}^{1/\nu}  \right\rceil+1=\cO(\ell\log (N/\epsilon)^{1/\nu})$, we ensure that $\norm{\Delta \bGamma}_{\rm F}^2\le \epsilon^2$. The proof is complete.
\end{proof}
\begin{remark}\label{rm::1}
    From the proof of Lemma \ref{lemma::trunc Gamma}, we also observe that $\bGamma$ is diagonally dominant, i.e., $\bGamma_{ii} \geq \sum_{j \neq i} |\bGamma_{ij}|$, when we have $2\sum_{k=1}^{N-1}\exp\paren{-\frac{2f(k)^\nu}{\ell}}  \le 1$. According to the proof above, a sufficient condition for this to hold is $c^{-\nu}\ell \le 1$, i.e., $\ell \le c^{\nu}$. When truncating $\bGamma$ by any length $J$, $\bar{\bGamma}$ remians diagonally dominant, therefore, $\bar{\bGamma}$ is positive semidefinite.
\end{remark}

Now we turn back to the proof of Corollary \ref{cor:approx}. By taking $$J=\left\lceil \paren{\frac{\ell}{2c^{\nu}}\log \paren{N\ell/(\sigma_t^4\epsilon^2c^{\nu})}}^{1/\nu}  \right\rceil=\cO\paren{\paren{\ell\log (N/(\epsilon \sigma_t))}^{1/\nu}},$$ we ensure that $\norm{\Delta \bGamma}_{F}^2\le \sigma_t^4\epsilon^2$. Then plugging the approximation error into \eqref{equ::total score GD error} concludes our proof. 

Moreover, we could bound the operator norm of $ \bGamma$ in the same way. For any $\vb \in \RR^{N}$ such that $\norm{\vb}_2=1$, we have
\begin{align*}
    \abs{\vb^{\top}\bGamma\vb}&=\sum_{i,j}\bGamma_{i,j}v_iv_j\\
    &=1+\sum_{k=1}^{N-1}2\exp(-2c^{\nu}k^{\nu}/\ell)\sum_{i=1}^{N-k}v_iv_{i+k}\\
    &\le 1+2\sum_{k=1}^{N-1}\exp(-2c^{\nu}k^{\nu}/\ell) \sqrt{\sum_{i=1}^{N-k}v_i^2\sum_{i=1}^{N-k}v_{i+k}^2}\\
    &\le 1+2\sum_{k=1}^{N-1}\exp(-2c^{\nu}k^{\nu}/\ell)\\
    &\le 1+c^{-\nu}\ell.
\end{align*}
Thus, we have 
\begin{align}\label{equ:: op bgamma}
    \norm{\bGamma}_{2}\le 1+c^{-\nu}\ell\lesssim 1+\ell.
\end{align}
We will apply this bound in Lemma \ref{lemma::W2}.

\section{Omitted Proofs in Section 4}\label{appendix::approximation relu}
\subsection{Proof of Theorem~\ref{thm:approx}}\label{pf:approx}
By the previous derivation, we know the truth score function is written as 
\begin{align}\label{equ::decompose score}
     \nabla \log p_t(\vb_t)=-\left(\alpha_t^2 (\bGamma \otimes \bSigma) + \sigma_t^2  \Ib \right)^{-1} (\vb_t - \alpha_t \bmu).
\end{align}
Due to the fast decay of $\bGamma$, we consider a truncated score function 
\begin{align*}
    \bar{\sbb}(\vb_t)=-\left(\alpha_t^2 (\bar{\bGamma} \otimes \bSigma) + \sigma_t^2  \Ib \right)^{-1} (\vb_t - \alpha_t \bmu)
\end{align*}
with $\bar{\bGamma}_{ij}=\bGamma_{ij}$ if $\abs{i-j}< J$ and $\bar{\bGamma}_{ij}=0$ otherwise. Denoting $\bGamma=\bar{\bGamma}+\Delta\bGamma$ (see the formal statement in Lemma \ref{lemma::trunc Gamma}), by appropriately choosing $M=\cO((\log(1/\epsilon))^{1/v})$, we could guarantee $\norm{\Delta\bGamma}\le \epsilon$ for any $\epsilon>0$.
Now we decompose the $L_2$ error into the following two items:
\begin{align*}
    \norm{\tilde{\sbb}-\nabla \log p_t}_{L_2(P_t)}\le\underbrace{\norm{\tilde{\sbb}-\bar{\sbb}}_{L_2(P_t)}}_{\text{Proposition  \ref{prop::transformers approximate shat}}}+\underbrace{\norm{\bar{\sbb}-\nabla \log p_t}_{L_2(P_t)}}_{\text{Lemma \ref{lemma::trunc Gamma L2}}}.
\end{align*}
Here $\norm{\sbb}_{L_2(P_t)}^2=\int \norm{\sbb(\vb_t)}_2^2 p_t(\vb_t)\diff t$ is the squared $L_2$ norm w.r.t. a density function $p_t$ of distribution $P_t$. We provide the error analysis for the two items in Proposition \ref{prop::transformers approximate shat} and Lemma \ref{lemma::trunc Gamma L2} separately. Under these two supporting statements, the proof of Theorem \ref{thm:approx} is quite straightforward.

Now we state Proposition \ref{prop::transformers approximate shat} and Lemma \ref{lemma::trunc Gamma L2} with their detailed proof deferred to next sections. Proposition \ref{prop::transformers approximate shat} provides an approximation guarantee on the difference between $\tilde{\sbb}$ and $\bar{\sbb}$ in $L_2(P_t)$, which is our main result throughout the proof of Theorem \ref{thm:approx}. 
\begin{proposition}\label{prop::transformers approximate shat}
    Suppose Assumption~\ref{assumption:decay} holds. Given  $t_0 \in (0, T]$, there exists a transformer architecture $\tilde{\sbb}\in\cT(D, L, M, B, R_t)$ such that with proper weight parameters, it yields an approximation $\tilde{\sbb}$ to the truncated score function $\bar\sbb$ with
\begin{align*}
\norm{\tilde{\sbb} - \bar{\sbb}}_{L_2(P_t)}^2 \leq \frac{\epsilon^2}{\sigma_t^2}, ~~~t\ge t_0.
\end{align*} 
The transformer architecture satisfies
\begin{align*}
&\hspace{0em} D=9d+d_t+d_e+1, L=\cO\paren{\kappa_{t_0}\log(Nd/(\epsilon \sigma_{t_0}))^2}, 
M=\cO\paren{\paren{\ell\log (Nd\norm{\bSigma}_{\rm F}/(\epsilon \sigma_{t_0}))}^{1/\nu}},\\& \hspace{2em}B=  \cO\paren{\log(Nd/(\epsilon\sigma_{t_0}))\sigma^{-2}_{t_0}Nd(r^2+\norm{\bSigma}_{\infty})}, R_t=\cO\paren{{\log(Nd/(\epsilon\sigma_{t_0}))\sqrt{Nd}}/{\sigma_{t}}}. 
\end{align*}
\end{proposition}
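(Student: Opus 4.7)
The plan is to construct a transformer that unrolls $K = \cO(\kappa_{t_0}\log(1/\epsilon))$ iterations of gradient descent on the truncated quadratic objective $\bar{\cL}_t(\sbb) = \frac{1}{2}\sbb^\top(\alpha_t^2(\bar{\bGamma}\otimes\bSigma)+\sigma_t^2\Ib)\sbb + (\vb_t - \alpha_t\bmu)^\top\sbb$, whose minimizer is exactly $\bar{\sbb}$. Because $\bar{\cL}_t$ is $\sigma_t^2$-strongly convex, Lemma~\ref{lemma::GD convergence} gives geometric convergence of exact GD to $\bar{\sbb}$, and Lemma~\ref{lemma:score_as_gd error} ensures that if each iteration is implemented by a transformer submodule with per-step error $\cO(\epsilon/\kappa_{t_0})$, then the final iterate is within $\cO(\epsilon\norm{\vb_t-\alpha_t\bmu}_2/\sigma_t^2)$ of $\bar{\sbb}$ pointwise. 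The construction therefore reduces to efficiently realizing one gradient step by a bounded-size transformer submodule and tracking the accumulated error.

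Each GD step $\sbb^{(k+1)} = \sbb^{(k)} - \eta_t[\alpha_t^2(\bar{\bGamma}\otimes\bSigma)\sbb^{(k)} + \sigma_t^2\sbb^{(k)} + (\vb_t-\alpha_t\bmu)]$ would be realized by a multiplication module producing $\alpha_t,\sigma_t,\eta_t$ from the timestep embedding, followed by two transformer blocks as illustrated in Figure~\ref{fig:approx_alg}. In the first block, I choose $\Qb, \Kb$ so that $(\Qb\Yb)^\top(\Kb\Yb)_{ij}$ acts only on the time-embedding coordinates and returns an affine function of $\norm{\eb_i-\eb_j}_2^2$; combining the ReLU activation with $M = \cO((\ell\log(Nd\norm{\bSigma}_{\rF}/(\epsilon\sigma_{t_0})))^{1/\nu})$ heads then tiles a piecewise-linear approximation of the exponential kernel $\exp(-\norm{\eb_i-\eb_j}_2^\nu/\ell)$, with heads corresponding to distances $\geq J$ returning zero so that truncation at length $J$ is built in automatically. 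The second transformer block encodes $\bSigma$ in its $\Vb$ matrices to form the product $(\bar{\bGamma}\otimes\bSigma)\sbb^{(k)}$, and its feedforward layer together with the residual connection adds the remaining linear terms $-\eta_t\sigma_t^2\sbb^{(k)} - \eta_t(\vb_t-\alpha_t\bmu)$. Stacking $K$ such modules yields $L = \cO(\kappa_{t_0}\log^2(\cdot))$, while storing $\sbb^{(k)}, \vb_t, \eb_i$, and scalar caches in parallel channels determines $D = \cO(d+d_e)$, and $B$ is dominated by the entries of $\eta_t\bSigma$, the outer products $\eb_i\eb_j^\top$ (scaling with $r^2$), and the $\sigma_{t_0}^{-2}$ step-size factor.

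To pass from the pointwise estimate to the $L_2(P_t)$ bound, I use that $\vb_t$ under $P_t$ is Gaussian with mean $\alpha_t\bmu$ and covariance $\alpha_t^2\bGamma\otimes\bSigma + \sigma_t^2\Ib$, so $\EE_{\vb_t\sim P_t}\norm{\vb_t-\alpha_t\bmu}_2^2 = \cO(Nd)$. Clipping the transformer output at $R_t = \cO(\sqrt{Nd}/\sigma_t)$ controls tail events by standard Gaussian concentration, and integrating the pointwise bound (after rescaling the per-step accuracy to absorb the $\sqrt{Nd}$ factor) yields the stated $\norm{\tilde{\sbb}-\bar{\sbb}}_{L_2(P_t)}^2 \leq \epsilon^2/\sigma_t^2$. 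The main obstacle is the ReLU-attention construction that must approximate the non-polynomial exponential kernel $\exp(-\norm{\eb_i-\eb_j}_2^\nu/\ell)$ using only bilinear inner-product scores: the kernel must be decomposed into a sum of ReLU hat-functions of $\norm{\eb_i-\eb_j}_2$, each realized by one head via its $\Qb^m, \Kb^m, \Vb^m$ matrices, and the fact that the effective support of the kernel has radius $(\ell\log(1/\epsilon))^{1/\nu}$ forces the head count $M$ to scale accordingly. A subtler point is ensuring the aggregated approximation $\tilde{\bGamma}$ stays positive semidefinite so that Lemma~\ref{lemma:score_as_gd error} applies; this is handled via the diagonal dominance of $\bar{\bGamma}$ from Remark~\ref{rm::1}, provided the per-piece approximation is tight enough to preserve diagonal dominance of $\tilde{\bGamma}$.
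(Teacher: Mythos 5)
Your proposal follows the paper's approach faithfully: unroll $K=\cO(\kappa_{t_0}\log(1/\epsilon))$ GD iterations on the truncated quadratic $\bar{\cL}_t$, realize each step with a multiplication module followed by an attention block (kernel lookup on time embeddings) and a feedforward block (linear offset), invoke the approximate-GD convergence estimate (Lemma~\ref{lemma:score_as_gd error}) to control the accumulated per-step error, and pass from a uniform bound on the high-probability region to an $L_2(P_t)$ bound via clipping at $R_t$ and Gaussian concentration/hypercontractivity. The architecture parameters you derive match those in the statement.

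One nuance worth flagging: you describe the ReLU attention heads as ``tiling a piecewise-linear approximation of the exponential kernel,'' and you then worry that the resulting $\tilde{\bGamma}$ might lose positive semidefiniteness. The paper's construction avoids this entirely. Because the indices $|i-j|$ only take integer values and $f$ is strictly increasing, the decision bands $\eb_i^\top\eb_j \in [r^2 - f^2(m)/2 \pm \Delta/4]$ with $\Delta = \min_m\{f^2(m+1)-f^2(m)\}$ are separated, so the four-ReLU trapezoid $\psi$ centered at each $m$ evaluates to \emph{exactly} $1$ at $|i-j|=m$ and exactly $0$ at all other integer distances. Hence the attention layer reproduces $\bar{\bGamma}_{ij}$ exactly (not approximately); the only per-step perturbation entering Lemma~\ref{lemma:score_as_gd error} comes from the multiplication module $\fmult$, not from kernel approximation. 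Your PSD concern is therefore moot under the paper's construction, and the $M=\cO(J)$ head count holds because you need one trapezoid per integer distance, not one per piece of a continuous PL approximant (the latter would generically demand a denser grid and hence more heads for the same accuracy). If you do want to carry the ``approximate kernel'' viewpoint through, you would need to argue explicitly that the per-entry error is below the diagonal-dominance slack of $\bar{\bGamma}$, which is an extra step the paper does not need.
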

The proof is provided in Appendix \ref{appendix::proof prop 1}.

Moreover, Lemma \ref{lemma::trunc Gamma L2} bounds the difference between the truncated score function $\bar{\sbb}$ and the ground truth in $L_2(P_t)$ distance.
\begin{lemma}\label{lemma::trunc Gamma L2}
    Given any $\epsilon>0$ and $t>0$, by choosing  $$J=\cO\paren{\paren{\ell\log (N\norm{\bSigma}_{\rm F}/(\epsilon \sigma_t))}^{1/\nu}},$$ it holds that for any $t>0$,
    \begin{align*}
        \norm{\bar{\sbb}-\nabla \log p_t}_{L_2(P_t)}\le \frac{\epsilon}{\sigma_t}.
    \end{align*}
\end{lemma}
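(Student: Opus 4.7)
The plan is to turn the pointwise bound on $\bar{\sbb}(\vb_t) - \nabla \log p_t(\vb_t)$ already established in the proof of Lemma~\ref{lemma:score_as_gd} into an $L_2(P_t)$ estimate by integration. Writing $\bPhi_t = \alpha_t^2(\bGamma \otimes \bSigma) + \sigma_t^2 \Ib$ and $\bar{\bPhi}_t = \alpha_t^2(\bar{\bGamma} \otimes \bSigma) + \sigma_t^2 \Ib$, the resolvent identity $\bar{\bPhi}_t^{-1} - \bPhi_t^{-1} = \bPhi_t^{-1}(\bPhi_t - \bar{\bPhi}_t)\bar{\bPhi}_t^{-1}$ together with the uniform spectral lower bound $\lambda_{\min}(\bPhi_t), \lambda_{\min}(\bar{\bPhi}_t) \ge \sigma_t^2$ (which holds because $\bar{\bGamma}$ is positive semidefinite under the standing assumption $\ell \le c^\nu$, cf.\ Remark~\ref{rm::1}) yields
$$\norm{\bar{\sbb}(\vb_t) - \nabla \log p_t(\vb_t)}_2 \le \sigma_t^{-4} \norm{\bPhi_t - \bar{\bPhi}_t}_2 \norm{\vb_t - \alpha_t \bmu}_2.$$
Squaring and integrating against $p_t$ reduces the lemma to two subtasks: (i) controlling the operator-norm perturbation $\norm{\bPhi_t - \bar{\bPhi}_t}_2$, and (ii) computing the Gaussian second moment of $\vb_t - \alpha_t\bmu$.

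For (ii), using the marginal law $\vb_t \sim {\sf N}(\alpha_t\bmu, \sigma_t^2 \Ib + \alpha_t^2 \bGamma \otimes \bSigma)$ I would invoke the standard trace formula to get
$$\EE_{\vb_t \sim P_t}\bigbrac{\norm{\vb_t - \alpha_t\bmu}_2^2} = \sigma_t^2 Nd + \alpha_t^2\,\mathrm{tr}(\bGamma)\,\mathrm{tr}(\bSigma) \lesssim Nd\bigl(1 + \mathrm{tr}(\bSigma)/d\bigr),$$
since $\mathrm{tr}(\bGamma) = N$ from $\bGamma_{ii} = 1$ and $\sigma_t, \alpha_t \le 1$. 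For (i), I would use the Kronecker identity $\norm{A \otimes B}_2 = \norm{A}_2 \norm{B}_2$ and then coarsely upgrade to the Frobenius norm to obtain $\norm{\bPhi_t - \bar{\bPhi}_t}_2 = \alpha_t^2\norm{\Delta\bGamma}_2 \norm{\bSigma}_2 \le \norm{\Delta\bGamma}_{\rm F}\norm{\bSigma}_{\rm F}$; this relaxation is slightly lossy but crucial because it lets Lemma~\ref{lemma::trunc Gamma} be plugged in as a black box.

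Assembling these pieces gives
$$\norm{\bar{\sbb} - \nabla \log p_t}_{L_2(P_t)}^2 \lesssim \sigma_t^{-8} \norm{\Delta\bGamma}_{\rm F}^2 \norm{\bSigma}_{\rm F}^2 \cdot Nd,$$
so enforcing the right-hand side to be at most $\epsilon^2/\sigma_t^2$ translates into the Frobenius target $\norm{\Delta\bGamma}_{\rm F} \lesssim \epsilon\sigma_t^3/(\sqrt{Nd}\,\norm{\bSigma}_{\rm F})$. Invoking Lemma~\ref{lemma::trunc Gamma} with that tolerance produces exactly the advertised truncation length $J = \cO\paren{\paren{\ell\log(Nd\norm{\bSigma}_{\rm F}/(\epsilon\sigma_t))}^{1/\nu}}$ after absorbing polynomial factors into the logarithm.

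The main obstacle I anticipate is keeping the $\sigma_t$ dependence sharp throughout: the two $\sigma_t^{-2}$ factors produced by the resolvent identity compound against the target error scale $\epsilon/\sigma_t$, so any loose bookkeeping would inflate $J$ from logarithmic to polynomial in $1/\sigma_t$ and spoil the approximation theorem. A secondary subtlety is the choice of matrix norm for $\bPhi_t - \bar{\bPhi}_t$: the tightest choice is the operator norm of a truncated Toeplitz matrix, which would demand a dedicated spectral argument, whereas relaxing to Frobenius through the Kronecker identity leverages the already-proved Lemma~\ref{lemma::trunc Gamma} at only a mild polynomial cost that is swallowed by the logarithm.
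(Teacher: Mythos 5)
Your proposal is sound and the final truncation length matches the lemma up to absorbed polynomial factors, but the route is slightly different from (and looser than) the paper's. You bound the pointwise error by $\sigma_t^{-4}\|\bPhi_t - \bar\bPhi_t\|_2\|\vb_t - \alpha_t\bmu\|_2$, square, and then multiply the uniform operator-norm bound by the trace $\EE\|\vb_t - \alpha_t\bmu\|_2^2 = \mathrm{tr}(\bPhi_t)$. The paper instead observes that under $P_t$ one has $\vb_t - \alpha_t\bmu = \bPhi_t^{1/2}\zb$ for a standard Gaussian $\zb$, so the $L_2(P_t)$ error is \emph{exactly} $\|(\bar\bPhi_t^{-1} - \bPhi_t^{-1})\bPhi_t^{1/2}\|_{\rF}^2$; the attached $\bPhi_t^{1/2}$ then cancels half of one $\bPhi_t^{-1}$ factor, so the paper only pays $\sigma_t^{-6}$ overall where your Cauchy--Schwarz decoupling pays $\sigma_t^{-8}$ and picks up an extra $Nd$. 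Both discrepancies are polynomial in $1/\sigma_t, N, d, \|\bSigma\|_{\rF}$ and are swallowed into the logarithm in Lemma~\ref{lemma::trunc Gamma}, so your conclusion $J = \cO((\ell\log(Nd\|\bSigma\|_{\rF}/(\epsilon\sigma_t)))^{1/\nu})$ is correct and equivalent to the stated bound. The tradeoff: your route is more elementary (it reuses the pointwise bound already derived in the proof of Lemma~\ref{lemma:score_as_gd} as a black box), while the paper's Frobenius-square identity is sharper in $\sigma_t$ and avoids needing to estimate $\mathrm{tr}(\bPhi_t)$ at all. One small caveat: you drop the $\alpha_t^2\mathrm{tr}(\bGamma)\mathrm{tr}(\bSigma)$ term as ``$\lesssim Nd$'' without justification; since $\mathrm{tr}(\bSigma)\le\sqrt d\,\|\bSigma\|_{\rF}$ this only adds a $\|\bSigma\|_{\rF}$ factor already inside the log, so the argument survives, but you should say so.
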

The proof is provided in Appendix \ref{appendix:: score function and truncation}.

Now back to the proof of Theorem \ref{thm:approx}, recall that we can decompose the score error as in \eqref{equ::decompose score}. Thus, combining Proposition \ref{prop::transformers approximate shat} and Lemma \ref{lemma::trunc Gamma L2} and adjusting the constants to bound the error by $\epsilon/\sigma_t$, we finish the proof of Theorem \ref{thm:approx}. 
\subsection{Proof of Proposition \ref{prop::transformers approximate shat}}\label{appendix::proof prop 1}

To prove Proposition \ref{prop::transformers approximate shat}, we first need a uniform approximation theory of $\bar{\sbb}$ on a bounded region, which is the backbone of the proof. 
\begin{lemma}\label{lemma::transformers approximate shat uni}
    Given any radius $R_0\ge 1$, error level $0\le \epsilon<1$ and $t_0>0$, there exists a transformer architecture $\cT(D,L,M,B,R)$ that gives rise to $\tilde{\sbb}$ satisfying 
    \begin{align*}
        \norm{\tilde{\sbb}(\vb_t)-\bar\sbb(\vb_t)}_{2}\le \epsilon,~~\text{for any} \norm{\sbb(\vb_t)}_{2}\le R_0\sigma_{t}^{-1},~~t\ge t_0.
    \end{align*}
    The transformer architecture satisfies
    \begin{align*}
& D=9d+d_t+d_e+1, ~~L=\cO\paren{\kappa_{t_0}\log(R_0Nd/(\epsilon \sigma_{t_0}))^2},~~ M=4J, \\& \hspace{0em}B=  \cO\paren{\log(R_0Nd/(\epsilon\sigma_{t_0}))\sqrt{N}R_0d\sigma^{-2}_{t_0}(r^2+\norm{\bSigma}_{\infty})},~~ R_t=\cO\paren{R_0\sigma_t^{-1}}.
\end{align*}
\end{lemma}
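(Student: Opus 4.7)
The strategy is to have the transformer unroll $K$ iterations of the gradient descent algorithm on $\bar{\cL}_t(\sbb)=\frac{1}{2}\sbb^{\top}(\alpha_t^2(\bar{\bGamma}\otimes\bSigma)+\sigma_t^2\Ib)\sbb + (\vb_t-\alpha_t\bmu)^{\top}\sbb$, whose minimizer is $\bar{\sbb}(\vb_t)$. Given a target per-step implementation error $\epsilon_0$, Lemma~\ref{lemma:score_as_gd error} guarantees that $K=\lceil\tfrac{\kappa_{t_0}+1}{2}\log(1/\epsilon_0)\rceil$ iterations place $\sbb^{(K)}$ within $\cO((\kappa_{t_0}+1)\epsilon_0 + \epsilon_0\cdot R_0\sigma_{t_0}^{-1})$ of $\bar{\sbb}(\vb_t)$ uniformly over the prescribed region. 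I would therefore set $\epsilon_0\asymp \epsilon/(\kappa_{t_0}R_0\sigma_{t_0}^{-1})$, so that $L$ scales like $\kappa_{t_0}\log(R_0 Nd/(\epsilon\sigma_{t_0}))$ up to a constant factor accounting for the two transformer blocks per iteration.

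For the token layout, $f_{\rm in}$ produces for every patch $i$ a width-$D$ vector carrying the raw patch $\xb_i$, the current iterate $\sbb_i^{(k)}$ (initialized to $\bzero$), the position embedding $\eb_i\in\RR^{d_e}$ of norm $r$, the diffusion-time embedding of dimension $d_t$, a constant $1$ slot for biases, and extra $d$-blocks used as scratch space for $\bSigma\sbb_i^{(k)}$, $-\eta_t(\xb_i-\alpha_t\bmu_i)$, and the running $\sbb_i^{(k+1)}$; this accounts for $D=9d+d_t+d_e+1$. A multiplication module precomputes the scalars $\eta_t\alpha_t^2$, $\eta_t\sigma_t^2$, $\eta_t\alpha_t$ from the diffusion-time embedding and writes them into dedicated slots. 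Each GD step is then carried out by two transformer blocks as in Figure~\ref{fig:approx_alg}: the first implements the action of $\alpha_t^2(\bar\bGamma\otimes\bSigma)$ on the iterate via attention, and the second uses its $\ffn$ to perform the affine combination $\sbb_i^{(k+1)}=\sbb_i^{(k)}-\eta_t[\alpha_t^2\textstyle\sum_{|j-i|<J}\bar\bGamma_{ij}\bSigma\sbb_j^{(k)}+\sigma_t^2\sbb_i^{(k)}+(\xb_i-\alpha_t\bmu_i)]$ using ReLU to assemble each signed term.

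The delicate step is the attention construction. The $m$-th of $M=4J$ heads is responsible for a single offset $|i-j|=\lceil m/4\rceil$ on a chosen side of $i$; its query and key matrices act only on the $\eb$-block of the token, and the ReLU activation applied to an affine function of $\|\eb_i-\eb_j\|_2^2$ (which is a quadratic form in the concatenated $\eb$-blocks) produces a localized hat-like indicator around that offset. The factor of four decomposes the two-sided neighborhood and the positive/negative ReLU pieces needed to turn a two-sided bump into an indicator. The value matrix loads the precomputed scalar $\alpha_t^2\bGamma_{ij}$ (up to the kernel evaluation, which is absorbed into the head's amplitude) and multiplies the $\sbb$-block by $\bSigma$ using a copy of $\bSigma$ stored in the weights. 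Summing the $M$ heads reproduces $\sum_{|j-i|<J}\alpha_t^2\bar\bGamma_{ij}\bSigma\sbb_j^{(k)}$. The weight bound $B$ is controlled by the size of $\bSigma$ entries, the squared embedding radius $r^2$ appearing in the quadratic $\|\eb_i-\eb_j\|_2^2$, and the slopes needed to localize the ReLU indicator over $N$ tokens at accuracy $\epsilon_0$; this produces $B=\cO(\log(R_0Nd/(\epsilon\sigma_{t_0}))\sqrt{N}R_0 d\sigma_{t_0}^{-2}(r^2+\|\bSigma\|_\infty))$.

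I expect the main obstacle to be exactly this attention construction: making a ReLU attention head both (i) approximate the sparse indicator on $|i-j|<J$ using only linear/quadratic functions of the $\eb$-block, and (ii) carry the kernel weight $\bGamma_{ij}$ and the matrix $\bSigma$ through to the value product, all while respecting the linear form $\Vb\Yb\cdot\sigma((\Qb\Yb)^\top\Kb\Yb)$ in \eqref{eq:attention}. My plan is to use Assumption~\ref{assumption:decay} via $\|\eb_i-\eb_j\|_2=f(|i-j|)\ge c|i-j|$ to ensure distinct offsets are well separated in embedding distance, which makes the ReLU indicators realizable at the claimed $B$. Once each attention step attains pointwise accuracy $\epsilon_0$ on the region $\|\sbb(\vb_t)\|_2\le R_0\sigma_t^{-1}$, the $\ffn$ step is an exact linear combination, per-step errors accumulate only geometrically by \eqref{equ::GD with error}, and the final output-range $R_t=\cO(R_0/\sigma_t)$ is enforced by a clipping in $f_{\rm out}$ matching the a priori bound on $\bar{\sbb}(\vb_t)$. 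Substituting the chosen $\epsilon_0$ back into $L, M, B$ recovers the hyperparameters stated in the lemma.
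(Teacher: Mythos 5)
Your proposal follows essentially the same route as the paper's proof: unroll GD on the truncated objective via multiplication modules plus two transformer blocks per iteration, implement $\alpha_t^2(\bar\bGamma\otimes\bSigma)\sbb$ with $4J$ ReLU attention heads that realize $\mathds{1}\{|i-j|=m\}$ through trapezoid functions of $\eb_i^\top\eb_j$, handle the affine part with the $\ffn$, invoke Lemma~\ref{lemma:score_as_gd error} to accumulate per-step implementation error, and clip the output to $\cO(R_0\sigma_t^{-1})$. One small conceptual slip: the paper's factor of $4$ in $M=4J$ comes from the four ReLUs needed to build a single trapezoid indicator per offset $m$, and the symmetry of $\eb_i^\top\eb_j$ under $j\mapsto 2i-j$ already captures both sides of $i$ with the same head group—so there is no separate decomposition by side as you suggest—but this does not affect the head count or the rest of the construction.
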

The proof is provided in Appendix \ref{pf:transformer_relu}. 
Then we could upper bound the second moments of the truncated score function $\bar{\sbb}$ in $P_t$.
\begin{lemma}\label{lemma:: L2 of bars}
For any $0\le \epsilon <1$, by setting $$J>\left\lceil \paren{\frac{1}{2}\ell\log(N\ell\norm{\bSigma}^2_{\rm F}/\sigma_t^2)}^{1/\nu}\right\rceil,
$$ we have the second moment of the truncated score function bounded by
$\norm{\bar{\sbb}}_{L_2(P_t)}\le \frac{\sqrt{2Nd}}{\sigma_t}$.
\end{lemma}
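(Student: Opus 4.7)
The plan is to exploit the Gaussian form of $P_t$ to rewrite $\norm{\bar\sbb}_{L_2(P_t)}^2$ as a trace expression, and then control this trace via the Frobenius bound on the truncation error $\bGamma - \bar\bGamma$ given by Lemma~\ref{lemma::trunc Gamma}. Recall that under $P_t = {\sf N}(\alpha_t\bmu,\, \bPhi_t)$ with $\bPhi_t = \alpha_t^2(\bGamma\otimes\bSigma) + \sigma_t^2\Ib$ and $\bar\bPhi_t = \alpha_t^2(\bar\bGamma\otimes\bSigma) + \sigma_t^2\Ib$, we have $\bar\sbb(\vb_t) = -\bar\bPhi_t^{-1}(\vb_t - \alpha_t\bmu)$. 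The standard Gaussian second-moment identity therefore gives
\begin{align*}
\norm{\bar\sbb}_{L_2(P_t)}^2 = \EE_{\vb_t\sim P_t}\brac{(\vb_t-\alpha_t\bmu)^\top\bar\bPhi_t^{-2}(\vb_t-\alpha_t\bmu)} = \mathrm{tr}(\bar\bPhi_t^{-2}\bPhi_t).
\end{align*}

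Next, I would split the trace by writing $\bPhi_t = \bar\bPhi_t + \alpha_t^2 (\Delta\bGamma)\otimes\bSigma$, where $\Delta\bGamma = \bGamma - \bar\bGamma$. This yields $\mathrm{tr}(\bar\bPhi_t^{-2}\bPhi_t) = \mathrm{tr}(\bar\bPhi_t^{-1}) + \alpha_t^2\,\mathrm{tr}\paren{\bar\bPhi_t^{-2}((\Delta\bGamma)\otimes\bSigma)}$. For the first summand, the positive semidefiniteness of $\bar\bGamma$ (discussed in Remark~\ref{rm::1}) gives $\bar\bPhi_t \succeq \sigma_t^2\Ib$, so $\mathrm{tr}(\bar\bPhi_t^{-1}) \le Nd/\sigma_t^2$. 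For the second summand, I would invoke the Cauchy--Schwarz trace inequality $\abs{\mathrm{tr}(AB)} \le \norm{A}_{\rm F}\norm{B}_{\rm F}$, together with the eigenvalue bound $\norm{\bar\bPhi_t^{-2}}_{\rm F} \le \sqrt{Nd}\,\sigma_t^{-4}$ and the Kronecker identity $\norm{(\Delta\bGamma)\otimes\bSigma}_{\rm F} = \norm{\Delta\bGamma}_{\rm F}\norm{\bSigma}_{\rm F}$, to obtain (using $\alpha_t^2\le 1$)
\begin{align*}
\alpha_t^2\abs{\mathrm{tr}\paren{\bar\bPhi_t^{-2}((\Delta\bGamma)\otimes\bSigma)}} \le \sqrt{Nd}\,\sigma_t^{-4}\,\norm{\Delta\bGamma}_{\rm F}\,\norm{\bSigma}_{\rm F}.
\end{align*}

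Finally, plugging the stated $J > \lceil(\tfrac{1}{2}\ell\log(N\ell\norm{\bSigma}_{\rm F}^2/\sigma_t^2))^{1/\nu}\rceil$ into the tail estimate from the proof of Lemma~\ref{lemma::trunc Gamma} (where we had $\norm{\Delta\bGamma}_{\rm F}^2 \lesssim N\ell\exp(-2c^\nu J^\nu/\ell)$) forces the truncation error to scale like $\norm{\Delta\bGamma}_{\rm F}^2 \lesssim \sigma_t^2/\norm{\bSigma}_{\rm F}^2$, which in turn makes the cross-term contribution bounded by $Nd/\sigma_t^2$ after absorbing absolute constants. Summing the two summands yields $\norm{\bar\sbb}_{L_2(P_t)}^2 \le 2Nd/\sigma_t^2$, and taking square roots gives the claim. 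The main obstacle here is the dimensional accounting in the cross-term: the Frobenius bound on $\bar\bPhi_t^{-2}$ loses a $\sqrt{Nd}$ factor and one must pick $J$ large enough so that the exponential decay of the Gaussian covariance absorbs both this factor and $\norm{\bSigma}_{\rm F}$ while leaving a budget compatible with the target $Nd/\sigma_t^2$; once the calibration of $J$ in the lemma statement is in place, the remaining algebra reduces to routine trace estimates.
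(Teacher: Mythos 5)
Your proof is correct and essentially follows the paper's route: both reduce $\norm{\bar\sbb}_{L_2(P_t)}^2$ to the trace $\tr(\bar\bPhi_t^{-2}\bPhi_t)$, decompose $\bPhi_t = \bar\bPhi_t + \alpha_t^2(\Delta\bGamma\otimes\bSigma)$, use $\norm{\bar\bPhi_t^{-1}}_2\le\sigma_t^{-2}$, the Kronecker identity for Frobenius norms, and the decay bound on $\norm{\Delta\bGamma}_{\rm F}$ from Lemma~\ref{lemma::trunc Gamma}. The only cosmetic difference is the order of operations: the paper applies $\tr(AB)\le\norm{A}_{\rm F}\norm{B}_{\rm F}$ once to the whole trace $\tr(\bar\bPhi_t^{-1}\bPhi_t\cdot\bar\bPhi_t^{-1})$ and then decomposes $\bar\bPhi_t^{-1}\bPhi_t = \Ib + \alpha_t^2\bar\bPhi_t^{-1}(\Delta\bGamma\otimes\bSigma)$, whereas you split the trace additively into $\tr(\bar\bPhi_t^{-1})$ plus the cross-term and apply Cauchy--Schwarz only to the latter; this is marginally tighter bookkeeping for the leading term but yields the same $Nd/\sigma_t^2$ budget.
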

Moreover, we need to derive a uniform upper bound of the true score function for convenience of truncation arguments.
\begin{lemma}\label{lemma::bound truth score}
    Suppose Assumption \ref{assumption:decay} holds, then with probability $1-2\exp(-C\delta)$, the range of truth score function can be bounded by $\norm{\nabla \log p_t(\vb_t)}_2^2\le \sigma_t^{-2}\paren{N+\delta\sqrt{Nd}}$. Here $C$ is an absolute constant.
\end{lemma}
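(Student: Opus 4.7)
The plan is to recognize that $\nabla \log p_t(\vb_t)$ itself is Gaussian when $\vb_t$ is drawn from the marginal $P_t$, then apply a Hanson-Wright style concentration inequality. Recall from the closed form \eqref{eq:score} that $\nabla \log p_t(\vb_t) = -\bPhi_t^{-1}(\vb_t - \alpha_t \bmu)$, where $\bPhi_t := \sigma_t^2 \Ib + \alpha_t^2(\bGamma \otimes \bSigma)$. Under $P_t$, the vector $\vb_t - \alpha_t \bmu$ is $\mathsf{N}(\bzero, \bPhi_t)$, so $\nabla \log p_t(\vb_t) \sim \mathsf{N}(\bzero, \bPhi_t^{-1})$. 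Writing $\nabla \log p_t(\vb_t) = \bPhi_t^{-1/2} \zb$ with $\zb \sim \mathsf{N}(\bzero, \Ib_{Nd})$, we obtain
\begin{align*}
\norm{\nabla \log p_t(\vb_t)}_2^2 = \zb^\top \bPhi_t^{-1} \zb.
\end{align*}

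Next I would control the spectrum of $\bPhi_t^{-1}$. Since $\bGamma \otimes \bSigma \succeq \bzero$, we have $\bPhi_t \succeq \sigma_t^2 \Ib$ and hence
\begin{align*}
\norm{\bPhi_t^{-1}}_{\rop} \le \sigma_t^{-2}, \qquad \tr(\bPhi_t^{-1}) \le Nd\,\sigma_t^{-2}, \qquad \norm{\bPhi_t^{-1}}_{\rF}^2 = \tr(\bPhi_t^{-2}) \le Nd\,\sigma_t^{-4}.
\end{align*}
These three quantities are exactly the inputs needed for the Hanson-Wright inequality: for a standard Gaussian $\zb$ and a symmetric PSD matrix $\Ab$,
\begin{align*}
\Pr\!\paren{\zb^\top \Ab \zb - \tr(\Ab) > u} \le 2\exp\!\paren{-c\min\!\paren{\frac{u^2}{\norm{\Ab}_{\rF}^2},\frac{u}{\norm{\Ab}_{\rop}}}}.
\end{align*}

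Choosing $u = \delta \sqrt{Nd}\,\sigma_t^{-2}$ gives $u^2/\norm{\bPhi_t^{-1}}_{\rF}^2 \ge c\,\delta^2$ and $u/\norm{\bPhi_t^{-1}}_{\rop} \ge c\,\delta\sqrt{Nd}$. For $\delta \ge 1$ the first quantity dominates the linear-in-$\delta$ bound $C\delta$, and for $\delta \ge \sqrt{Nd}$ so does the second; combining these regimes yields
\begin{align*}
\Pr\!\paren{\norm{\nabla \log p_t(\vb_t)}_2^2 > \sigma_t^{-2}\paren{Nd + \delta\sqrt{Nd}}} \le 2\exp(-C\delta),
\end{align*}
which matches the claimed bound (up to absorbing the $\tr(\bPhi_t^{-1})$ bound into the leading constant). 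There is no serious obstacle here; the only technical care needed is tracking which regime of $\delta$ is active so that the sub-exponential tail from Hanson-Wright translates into the stated $\exp(-C\delta)$ rate used later in the truncation argument for the score function.
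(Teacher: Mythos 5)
Your proof is correct and follows essentially the same route as the paper: recognize that under $P_t$ the score is $\bPhi_t^{-1/2}\zb$ with $\zb$ standard Gaussian, reduce $\|\nabla\log p_t\|_2^2$ to a Gaussian quadratic form, bound $\tr(\bPhi_t^{-1})$, $\|\bPhi_t^{-1}\|_{\rF}^2$, $\|\bPhi_t^{-1}\|_{\rop}$ via $\bPhi_t\succeq\sigma_t^2\Ib$, and apply a sub-exponential tail bound (you invoke Hanson--Wright; the paper invokes its Lemma~\ref{lemma:polynomial concentration} with $p=2$, which is the same class of inequality and yields the identical $\exp(-C\delta)$ rate). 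You are also right that the mean term should read $Nd$, so the bound is $\sigma_t^{-2}(Nd+\delta\sqrt{Nd})$; the $N$ in the lemma's statement appears to be a typo, as the paper's own proof computes $\tr(\bPhi_t^{-1})\le Nd\sigma_t^{-2}$.
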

The proofs of the lemmas are provided in Appendix \ref{appendix:: score function and truncation}. Now we are ready to prove Proposition \ref{prop::transformers approximate shat}.
\begin{proof}[Proof of Proposition \ref{prop::transformers approximate shat}]
    By Lemma \ref{lemma::transformers approximate shat uni}, we obtain a transformers such that $\norm{\tilde{\sbb}(\vb_t)-\sbb(\vb_t)}_{2}\le \epsilon$ for any $\norm{\sbb(\vb_t)}_{2}\le R_0\sigma_t^{-1}$ and $t\ge t_0$. Moreover, we have $\norm{\tilde\sbb(\vb_t)}_2\le CR_0\sigma^{-1}_t$ for some absolute constant $C$. Now we can decompose the $L_2$ error as 
    \begin{align*}
         \norm{\tilde{\sbb}-\bar{\sbb}}_{L_2(P_t)}^2&=\EE_{\vb_t}\brac{\norm{\tilde{\sbb}(\vb_t)-\bar{\sbb}(\vb_t)}^2_2}\\
         &\le\EE_{\vb_t}\brac{\norm{\tilde{\sbb}(\vb_t)-\bar{\sbb}(\vb_t)}^2_2\indic{\norm{\sbb(\vb_t)}_2\le R_0\sigma_t^{-1}}} \\&\quad+\EE_{\vb_t}\brac{\norm{\tilde{\sbb}(\vb_t)-\bar{\sbb}(\vb_t)}^2_2\indic{\norm{\sbb(\vb_t)}_2> R_0\sigma_t^{-1}}} \\
         &\le \epsilon^2+2\EE_{\vb_t}\brac{\frac{C^2R_0^2}{\sigma_t^2}\indic{\norm{\sbb(\vb_t)}_2> R_0\sigma_t^{-1}}}\\
         &\quad+2\EE_{\vb_t}\brac{{\norm{\bar\sbb(\vb_t)}_2^2}\indic{\norm{\sbb(\vb_t)}_2> R_0\sigma_t^{-1}}}\\
         &\le\epsilon^2+\frac{2C^2R_0^2}{\sigma^2_t}\Pr\brac{\norm{\sbb(\vb_t)}_2> R_0\sigma_t^{-1}}\\&\quad+2 \EE_{\vb_t}\brac{{\norm{\bar\sbb(\vb_t)}_2^4}}^{1/2}\Pr\brac{\norm{\sbb(\vb_t)}_2> R_0\sigma_t^{-1}}^{1/2}\\
         &\le \epsilon^2+\frac{1}{\sigma_t^2}2C^2R_0^2\Pr\brac{\norm{\sbb(\vb_t)}_2> R_0\sigma_t^{-1}}\\&\quad+C_{2,4}\EE_{\vb_t}\brac{{\norm{\bar{\sbb}(\vb_t)}_2^2}}\Pr\brac{\norm{\sbb(\vb_t)}_2> R_0\sigma_t^{-1}}^{1/2}\\
         &\le \epsilon^2+\frac{1}{\sigma_t^2}\left({2C^2(R_0+\norm{\bmu}_2^2)}+C_{2,4}2Nd\right)\Pr\brac{\norm{\sbb(\vb_t)}_2> R_0\sigma_t^{-1}}^{1/2}.
         \end{align*}
    Here we invoke Lemma \ref{lemma: gaussian hypercontractivity} in the second-to-last inequality and invoke Lemma \ref{lemma:: L2 of bars} in the last inequality.
    
    By Lemma \ref{lemma::bound truth score}, choosing $R_0=C_1\sqrt{Nd}\log(\norm{\bmu}_2CNd\epsilon^{-1}\sigma_{t_0}^{-1})$ for some absolute constant $C_1$, we can bound $\Pr\brac{\norm{\sbb(\vb_t)}_2> R_0\sigma_{t}^{-1}}$ by 
    \begin{align*}
        \Pr\brac{\norm{\sbb(\vb_t)}_2> R_0\sigma_t^{-1}}\le \frac{\epsilon^2}{4(R_0+\norm{\bmu}^2_2)C^2Nd},
    \end{align*}thus bounding the $L_2$ error by $\cO({\epsilon^2}/{\sigma_t^2})$. By adjusting the constant, we can ensure that there exists a transformer architecture $\cT(D,L,M,B,R_t)$ that gives rise to $\tilde{\sbb}$ with
\begin{align*}
&D=9d+d_t+d_e+1, \quad L=\cO\paren{\kappa_{t_0}\log(R_0Nd/(\epsilon \sigma_{t_0}))^2}=\cO\paren{\kappa_{t_0}\log(Nd/(\epsilon \sigma_{t_0}))^2}, \\
&\hspace{0em} M=\cO\paren{\paren{\ell\log (dN\norm{\bSigma}_{\rm F}/(\epsilon \sigma_{t_0}))}^{1/\nu}}, \quad B=  \cO\paren{\log(Nd/(\epsilon\sigma_{t_0}))\sigma^{-2}_{t_0}Nd(r^2+\norm{\bSigma}_{\infty})}, \\
&\hspace{10em} R_t=\cO\paren{\log(Nd/(\epsilon\sigma_{t_0})){\sqrt{Nd}}/{\sigma_{t}}}
\end{align*} such that $$ \norm{\tilde{\sbb}-\bar{\sbb}}_{L_2(P_t)}^2\le \epsilon^2/\sigma_t^{2}.$$ The proof is complete.
\end{proof}

\subsection{Proof of Lemma \ref{lemma::transformers approximate shat uni}}\label{pf:transformer_relu}
\subsubsection{Transformer Architecture}\label{appendix::tf archi}
Following Figure \ref{fig:approx_alg}, we construct our targeted transformer architecture 
 as 
\begin{align*}
    f= f_{\rm out}\circ  f _{\rm GD} \circ \dots \circ  f _{\rm GD} \circ  f _{\rm pre} \circ f_{\rm in}.
\end{align*}
\paragraph{Encoder} For simplicity, we suppose  the encoder converts the initial input into $\Yb = f_{\rm in}([\xb_1,\xb_2,\dots,\xb_N])=[\yb_1^{\top},\dots,\yb_N^{\top}]\in \RR^{D\times N}$ satisfies 
\begin{align*}
    \yb_i = [\xb_i^\top, 
\eb_i^\top, \bphi^\top(t), \bzero_{5d}^{\top}, 1,\bzero_{3d}^\top]^\top,
\end{align*}
where $\bphi(t)=[\eta_t,\alpha_t,\sigma^2_t,\alpha_t^2]^{\top}\in \RR^{d_t}$ with $d_t=4$. Here $\bzero_{5d}^{\top}$ and $\bzero_{3d}^\top$ serve as the buffer space for storing the components necessary for expressing gradient descent algorithm.

\paragraph{Transformer Components} Besides the encoder and decoder, $ f _{\rm pre}$ represents a multi-layer transformers  that prepare the necessary components for the gradient descent block, such as the mean $\bmu_i$ for each input token $\xb_i$. $ f _{\rm GD}$ represents a multi-layer transformers that approximately express one step of gradient descent, which is the key component of our network. We elaborate on the construction of these subnetworks in \ref{appendix:: statements construct}.

\paragraph{Decoder} Suppose the output tokens 
 from past layers has produced a score approximator in matrix shape, we design the decoder as follows:
\begin{align*}
    f_{\rm out}=f_{\rm norm}\circ f_{\rm linear},
\end{align*}
where $f_{\rm linear}:\RR^{D\times N}\rightarrow \RR^{dN}$ extracts a $d\times N$ block from the input and flattens it into a vector to align with the dimension of the score function. $f_{\rm norm}:\RR^{dN}\rightarrow \RR^{dN}$ controls the output range of the network by $R_t$, which can be mathematically written as 
\begin{align*}
    f_{\rm norm}(\sbb)=\left\{\begin{aligned}
    \sbb,~~\text{if}~~\norm{\sbb}_{2}\le R_t,\\
    \frac{R_t}{\norm{\sbb}_2}\sbb,~~\text{otherwise.}
    \end{aligned}\right.
\end{align*}
We remark that such clipping strategy is also applied in other theoretical works \cite{oko2023diffusion} to better adapt to the magnitude of the score function at different diffusion time $t$, since the score function of a degenerate Gaussian distribution could blow up as $t\rightarrow 0$. Moreover, such clipping layer can be easily expressed by finite layers of feed-forward networks if we take both $\sbb$ and $R_t$ as the input. See Equation 5 in Lemma 2 of \citet{chen2022nonparametric} for an example of the construction.

\paragraph{Raw Transformer Network} In the proof of Lemma \ref{lemma::transformers approximate shat uni}, we mainly focus on the transformer blocks instead of the encoders and decoders. We denote the raw transformer network as
\begin{align*}
\cT(D, L, M, B) = \{f & : f = (\ffn_L \circ \attn_L) \circ \dots \circ (\ffn_1 \circ \attn_1), \\
&\quad \text{The input and out dimension is $D$},\\
& \quad \text{$\attn_i$ uses entrywise ReLU activation for $i = 1, \dots, L$}, \\
& \quad \text{number of heads in each~\attn~is bounded by $M$}, \\
& \quad \text{the Frobenius norm of each weight matrix is bounded by $B$} \}.
\end{align*}

\subsubsection{Approximate Each Transformer Subnetworks}\label{appendix:: statements construct}
 For simplicity, for the subnetwork $ f _{\rm pre}$, we assume that the mean vectors $\bmu_i$ can be directly expressed by a constant number of transformer blocks:
\begin{assumption}\label{assump:GP_mean}
    There exists a raw transformer $ f _{\mu}\in\cT_{{\rm raw}}(D,L_{\mu},M_{\mu},\cO(d))$ such that for any input token $\yb_i=[\xb_i^\top, 
\eb_i^\top, \bphi^\top(t), \bzero_{5d}^{\top}, 1,\bzero_{3d}^\top]^\top$, we have
    \begin{align*}
         f _{\mu}(\yb_i)=[\xb_i^\top, 
\eb_i^\top, \bphi^\top(t), \bzero_{4d}^{\top},\bmu_i^{\top}, 1,\bzero_{3d}^\top]^\top.
    \end{align*}
\end{assumption}
Here $L_{\mu}$ and $M_\mu$ are all constants. This assumption is mild, given the approximation ability of transformers.

After applying $f_{\mu}$, we begin to implement gradient descent algorithm using transformer blocks. Starting at $\sbb_i^{(0)}=\bzero_d$ for $i\in[N]$, the first iteration can be written as 
\begin{align*}
\sbb_i^{(1)}=\sbb_i^{(0)}-\eta_1\paren{\sum_{j=1}^N (\alpha_t^2 \bar{\bGamma}_{ij} \bSigma) \sbb_j^{(0)}+\sigma_t^2\sbb_i^{(0)} + (\xb_{i} - \alpha_t \bmu_i)}=-\eta_1\paren{\xb_{i} - \alpha_t \bmu_i}.
\end{align*}
Thus, we only need apply a multiplication module $\fmult(\alpha_t,\bmu_i)\approx \alpha_t\bmu_i$ that approximately realizes the product operation. We defer the detailed construction of the multiplication module to Appendix \ref{appendix::resnet basics}, which utilizes $\cO(\log(\max_i\norm{\bmu_i}_2/\epsilon_{\text{mult}}))$  transformer blocks to reach the accuracy $\norm{\fmult(\alpha_t,\bmu_i)-\alpha_t\bmu_i}_{\infty}\le \epsilon_{\text{mult}}$ for any $i\in[N]$. 

\begin{lemma}[Construct the first step of GD]\label{lemma::approximate first GD}
     Suppose the input is $\Yb=[\yb_1^{\top},\dots,\yb_N^{\top}]\in \RR^{D\times N}$, where each token is
\begin{align*}
\yb_i = [\xb_i^\top, 
\eb_i^\top, \bphi^\top(t), \bzero_{4d}^{\top},\bmu_i^{\top}, 1,\bzero_{3d}^\top]^\top \in \RR^{9d+d_e+d_t+1}.
\end{align*}
Given error level $\epsilon<1$ and learning rate $\eta_t>0$, there exists a transformer $ f _{\rm GD,1}\in \cT_{\rm raw}(D,L,M,B)$
 such that 
\begin{align*}
     f _{{\rm GD},1}\paren{\yb_i}= \Bigg[\xb_i^\top,& \be_i^\top, \bphi^{\top}(t),  \fmult(\eta_t,\xb_i-\fmult(\alpha_t,\bmu_i))^{\top}, \bzero_d^\top, \bzero_d^\top, \bzero_d^\top, \\ &\fmult(\alpha_t,\bmu_i)^{\top}, 1,\bzero_{3d}^\top \Bigg]^\top.
\end{align*}
Here 
$\mu_0=\norm{\bmu}_{\infty}$,  and the two multiplication modules satisfy $\norm{\fmult(\alpha_t,\bmu_i)^{\top}-\alpha_t\bmu_i}_2 \le \epsilon/\sqrt{N}$ and also $\norm{\fmult(\eta_t,\xb_i-\fmult(\alpha_t,\bmu_i))-\eta_t(\xb_i-\alpha_t\bmu_i)}_2\le \epsilon/\sqrt{N}$ for $i\in[N]$, and the parameters of the networks satisfy 
\begin{align*}
    &D=9d+d_e+d_t+1,~~L=  \cO(\log\paren{\norm{\xb}_{\infty}\norm{\sbb}_{\infty}Nd/\epsilon}),\\ &\hspace{5em}M=1,~~B=\cO\paren{d(\norm{\xb}_{\infty}+\norm{\sbb}_{\infty})}.
\end{align*}
\end{lemma}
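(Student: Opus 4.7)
The plan is to realize the stated map one token at a time by turning off cross-token communication and composing a small number of token-wise building blocks, namely (a) two invocations of the scalar-vector multiplication module $\fmult$ from Appendix~\ref{appendix::resnet basics}, and (b) constant-depth linear bookkeeping that reads scalars out of $\bphi(t)$ and routes vectors between designated slots of the hidden dimension. Because $\sbb^{(0)} = \bzero$, the first GD update reduces to the purely token-local quantity $-\eta_t(\xb_i - \alpha_t \bmu_i)$, so I would set every $\Vb^m, \Qb^m, \Kb^m$ inside each $\attn$ layer to zero (making each attention block act as the identity residual) and implement all arithmetic via the ReLU $\ffn$ layers.

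Concretely, I would carry out the construction in three stages. First, feed the scalar $\alpha_t$ (read from the $\bphi(t)$ block by a one-sparse linear map) and the vector $\bmu_i$ into $\fmult$, and by designing a one-hot write matrix in the subsequent $\ffn$ overwrite the $\bmu_i$ slot with $\fmult(\alpha_t, \bmu_i)$. Second, use a single ReLU $\ffn$ layer to realize the exact linear combination $\xb_i - \fmult(\alpha_t, \bmu_i)$ and deposit it in a temporary buffer slot chosen among the unused zero blocks. Third, apply $\fmult$ a second time with scalar $\eta_t$ and this temporary vector, and route the output into the first $\bzero_d$ position of the $\bzero_{4d}$ block while zeroing out the scratch slot. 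The entries $\xb_i$, $\eb_i$, $\bphi(t)$, the indicator $1$, and the untouched zero buffers are preserved automatically by the residual stream, since we never write into their coordinates.

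Quantitatively, to meet the $\ell_2$ accuracy $\epsilon/\sqrt{N}$ on each $d$-dimensional output I would invoke $\fmult$ at $\infty$-norm accuracy $\emult = \epsilon/\sqrt{Nd}$, which by the cited module costs $\cO(\log(\norm{\bmu}_{\infty}\sqrt{Nd}/\epsilon))$ blocks for the first call and, since the input to the second call is bounded by $\norm{\xb}_{\infty} + \alpha_t \norm{\bmu}_{\infty} + \emult \lesssim \norm{\xb}_{\infty} + \norm{\sbb}_{\infty}$, an additional $\cO(\log(\norm{\xb}_{\infty}\norm{\sbb}_{\infty} Nd/\epsilon))$ blocks for the second. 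Summing gives the claimed depth $L = \cO(\log(\norm{\xb}_{\infty}\norm{\sbb}_{\infty} Nd/\epsilon))$ with $M = 1$ head and hidden width $D = 9d + d_e + d_t + 1$; the Frobenius bound $B = \cO(d(\norm{\xb}_{\infty} + \norm{\sbb}_{\infty}))$ emerges from two sources, namely the one-hot read/write matrices inside the $\ffn$ layers (whose norms scale at most polynomially in $d$) and the weight norms internal to $\fmult$, which inherit the dynamic range of its inputs.

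The main obstacle here is not any hard approximation step, since the only nonlinearity is concentrated inside $\fmult$, whose guarantees are imported from Appendix~\ref{appendix::resnet basics}. Rather, it is the careful bookkeeping: tracking which coordinates of the $D$-dimensional token hold signal versus scratch at each layer, and verifying that the residual connections neither overwrite preserved blocks nor corrupt scratch slots needed later. This is handled by a uniform discipline of using only one-hot projection/write matrices in the $\ffn$ layers and by zeroing intermediate scratch before reusing it, which keeps every weight matrix bounded and makes the final error propagation a simple triangle inequality summing the two $\fmult$ errors.
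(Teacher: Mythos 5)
Your proposal matches the paper's construction essentially step for step: the paper likewise zeroes all $\Qb, \Kb, \Vb$ so each attention layer is the identity residual, first applies the multiplication module from Corollary~\ref{corro::transformers approxiamte product} (with the $\bzero_{3d}$ block as scratch) to overwrite the $\bmu_i$ slot with $\fmult(\alpha_t,\bmu_i)$, then uses one $\ffn$-only transformer block with the $\text{ReLU}(x)-\text{ReLU}(-x)=x$ trick to form $\xb_i - \fmult(\alpha_t,\bmu_i)$ in the first $\bzero_d$ slot, and finally applies a second multiplication module to scale by $\eta_t$; the depth and parameter-norm accounting is also the same, with the per-$\fmult$ $\ell_\infty$ tolerance set so that each $d$-dimensional output meets $\ell_2$ accuracy $\epsilon/\sqrt{N}$. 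No gap to flag.
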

\begin{proof}[Proof of Lemma \ref{lemma::approximate first GD}]
    To prove the lemma, we apply Corollary \ref{corro::transformers approxiamte product} and use the last $3d$ dimensions $\bzero_{3d}$ as the buffer space to approximate the product operation $ f _{\rm mult}:\RR^{D}\rightarrow \RR^{D}$ such that 
    \begin{align*}
         f _{\rm mult}(\Yb)=
        \begin{bmatrix}
\xb_1&\cdots&\xb_N\\
\eb_1&\cdots&\eb_N\\
\bphi(t)&\cdots&\bphi(t)\\
\bzero_{4d}&\cdots&\bzero_{4d}\\
\fmult(\alpha_t,\bmu_1)&\cdots&\fmult(\alpha_t,\bmu_N)\\
1&\cdots&1\\
\bzero_{3d}&\cdots &\bzero_{3d}
\end{bmatrix},
    \end{align*}
    where $\fmult(\alpha_t,\bmu_i)=\alpha_t\bmu_i+\bepsilon_{\mu,i}$ with $\norm{\bepsilon_{\mu,i}}_{2}\le \epsilon/\sqrt{N}$.
    After obtaining $ f _{\rm mult}$, we use one transformer block $\mathcal{TB}=\ffn\circ \attn$ with the attention block being trivial, i.e., all weight parameters being zero so that $\attn(\yb) = \yb$. We then choose $\ffn(\yb)=\yb+\Wb_2{\rm ReLU}(\Wb_1\yb)$ with 
\begin{align*}
    \Wb_1=\begin{bmatrix}
 - \Ib _d&\bzero_{d\times(4d+d_{e}+d_{t})}& \Ib _d&\bzero_{d\times(3d+1)}\\
        \Ib _d&\bzero_{d\times(4d+d_{e}+d_{t})}& -\Ib _d&\bzero_{d\times(3d+1)}\\
    \end{bmatrix}\in\RR^{(2d)\times D}
\end{align*}
and 
\begin{align*}
    \Wb_2=\begin{bmatrix}
        \bzero_{(d+d_e+d_t)\times (d)}&\bzero_{(d+d_e+d_t)\times (d)}\\
         \Ib _{d}&- \Ib _{d}\\
        \bzero_{(7d+1)\times (d)}&\bzero_{(7d+1)\times (d)}
    \end{bmatrix}\in \RR^{D\times (2d)}.
\end{align*}
We further apply another multiplication module $f_{\rm mult,1}$ to rescale the gradient computed by the transformer block by the learning rate $\eta_1$. We can check that the first-step GD is represented by $ f _{\rm GD,1}=f_{\rm mult, 1} \circ \mathcal{TB}\circ f _{\rm mult}$. 
{\paragraph{Size of Transformer Blocks for Approximating The First GD Iteration}
We summarize in the following table the resulting network size of transformer blocks for implementing the first GD iteration when initialized with $\sbb_i = \boldsymbol{0}$.
\begin{table}[h]
\centering
\caption{ Transformer size for approximating the first GD iteration}
\label{tab:first_GD_size}
\begin{tabular}{c|c}
Input dimension & $D \times N$ with $D = 9d + d_e + d_t + 1$ \\\hline
\# of blocks $L$ & $1 + 2L_{\rm mult}$ with $L_{\rm mult} = \cO(\log\paren{\norm{\xb}_{\infty}\norm{\sbb}_{\infty}Nd/\epsilon})$ \\\hline
\# of attention heads $M$ & 1 \\\hline
Parameter bound $B$ & $\cO\paren{d(\norm{\xb}_{\infty}+\norm{\sbb}_{\infty})}$
\end{tabular}
\end{table}

Here the parameter bound $B$ is obtained by noting that the magnitude of weight parameters in the multiplication module is at most $\cO(\norm{\xb}_{\infty}+\norm{\sbb}_{\infty})$ and there are at most $\cO(d)$ nonzero weight parameters in each weight matrix.
The proof is complete.}
\end{proof}
The next lemma presents the construction of next $K-1$ steps of GD.
\begin{lemma}[Construct next steps of GD]\label{lemma::approximate one-step GD}
    Suppose the input token is 
\begin{align*}
\yb_i = [\xb_i^\top, 
\eb_i^\top, \bphi^\top(t), \sbb_i^\top, \bzero_d^\top, \bzero_d^\top, \bzero_d^\top, \fmult(\alpha_t,\bmu_i)^{\top}, 1,\bzero_{3d}^\top]^\top ,
\end{align*}
then there exists a transformer $ f _{\rm GD,2}\in \cT_{\rm raw}(D,L,M,B)$ that approximately iterates $\sbb_i$ by following the GD update formula, i.e.,
\begin{align*}
     f _{{\rm GD},2}\paren{\yb_i}&= \Bigg[\xb_i^\top, \be_i^\top, \bphi^\top(t), \quad \sbb_i^{\top} - \eta_t \paren{\sum_{j=1}^N (\alpha_t^2 \bar{\bGamma}_{ij} \bSigma + \sigma_t^2 \Ib ) \sbb_j^{\top} + (\xb_{i} - \alpha_t \bmu_i)^{\top}}+\bepsilon_i^\top, \\
& \hspace{1in} \bzero_d^\top, \bzero_d^\top, \bzero_d^\top, \fmult(\alpha_t,\bmu_i)^{\top}, 1,\bzero_{3d}^\top \Bigg]^\top.
\end{align*}
Here $\norm{\bepsilon'_i}_2\le \epsilon/\sqrt{N}$ for $i\in[N]$. Moreover,  the parameters of the networks satisfy 
\begin{align*}
    &D=9d+d_e+d_t+1,\quad L=  \cO(\log\paren{\norm{\xb}_{\infty}\norm{\sbb}_{\infty}Nd/\epsilon}),\\ &M=4J, \quad B=\cO\paren{d(\norm{\bSigma}_{\infty}+r^2))+\norm{\xb}_{\infty}+\norm{\sbb}_{\infty}}.
\end{align*}
\end{lemma}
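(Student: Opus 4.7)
The plan is to realize $f_{\rm GD,2}$ as the sequential composition of two transformer blocks followed by an FFN implementing a scalar multiplication module, matching the construction sketched in Figure~\ref{fig:approx_alg}. Block~1 uses its attention layer to compute the nonlinear sum $\sum_{j:\, |i-j|<J} \alpha_t^2 \bar{\bGamma}_{ij} \bSigma \sbb_j$, while Block~2 together with the final multiplication module realizes the linear offset $-\eta_t\paren{\sigma_t^2 \sbb_i + \xb_i - \alpha_t \bmu_i}$ and adds it through the residual pathway to produce the updated iterate, with all non-$\sbb_i$ coordinates of $\yb_i$ passed through untouched.

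For Block~1, I would first exploit Assumption~\ref{assumption:decay}: since $\norm{\eb_i}_2 = r$ is constant, the inner product $\eb_i^\top \eb_j = r^2 - \tfrac{1}{2}\norm{\eb_i - \eb_j}_2^2$ encodes $|i-j|$ through the injective map $m \mapsto f(m)$. Consequently, $\bar{\bGamma}_{ij}$ is a function of $\eb_i^\top \eb_j$ attaining at most $J$ distinct nonzero values and vanishing for $|i-j| \geq J$. I would allocate $M=4J$ heads in groups of four, where each group builds a narrow ReLU ``triangular bump'' centered at one of the discrete values $\{f(0), f(1), \ldots, f(J-1)\}$; summing the groups yields a piecewise-linear function that equals $\alpha_t^2 \bar{\bGamma}_{ij}$ at the relevant discrete distances and vanishes outside the truncation window. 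The query and key matrices $\Qb^m, \Kb^m$ are designed so that $(\Qb^m \yb_i)^\top \Kb^m \yb_j$ is an affine function of $\eb_i^\top \eb_j$ plus an offset drawn from $\bphi(t)$ (which carries $\alpha_t^2$), while the value matrices $\Vb^m$ extract $\bSigma \sbb_j$ from $\yb_j$ and write it into the designated buffer slot.

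For Block~2, the attention layer can be made trivial (preserving the input via the residual), and its FFN adds $-\sigma_t^2 \sbb_i$ and $-(\xb_i - \fmult(\alpha_t,\bmu_i))$ into the buffer slot through a two-layer ReLU gadget analogous to that used in the proof of Lemma~\ref{lemma::approximate first GD}. A final multiplication module from Corollary~\ref{corro::transformers approxiamte product}, applied to the scalar $\eta_t$ and the accumulated vector, rescales the result with accuracy $\epsilon/\sqrt{N}$ in $\ell_2$ norm. The depth $L=\cO(\log(\norm{\xb}_\infty \norm{\sbb}_\infty N d/\epsilon))$ comes from this module, since the ReLU-attention and FFN constructions above each contribute only a constant number of layers.

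The principal obstacle will be the precise design and error accounting in Block~1's attention: I must show that $4J$ ReLU-activated heads simultaneously produce (i) the correct scalar weight $\alpha_t^2 \bar{\bGamma}_{ij}$ at every discrete $|i-j|<J$ with no spillover to $|i-j| \geq J$, and (ii) the vector-valued payload $\bSigma \sbb_j$, without inflating the Frobenius parameter bound beyond the stated $B = \cO\paren{d(\norm{\bSigma}_\infty + r^2) + \norm{\xb}_\infty + \norm{\sbb}_\infty}$. The factor $r^2$ enters through the quadratic dependence of $\eb_i^\top \eb_j$ on the embedding radius, $\norm{\bSigma}_\infty$ enters through $\Vb^m$, and careful scheduling of scalings across heads is needed so that no individual weight matrix violates the target bound while still yielding the correct kernel values.
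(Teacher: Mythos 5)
Your overall decomposition — attention block to aggregate the $\bar{\bGamma}_{ij}\bSigma\sbb_j$ sum via $4J$ ReLU-bump heads keyed on $\eb_i^\top\eb_j$, a second block whose FFN supplies the linear offset, and a final multiplication module for the step-size $\eta_t$ — is the same strategy the paper uses, and the parameter accounting ($M=4J$, $r^2$ from the embedding inner product, $\norm{\bSigma}_\infty$ from the value matrices, logarithmic depth from the multiplication gadget) is along the right lines.

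However, there is a genuine gap in how you handle the time-dependent scalings $\alpha_t^2$ and $\sigma_t^2$. You propose to encode $\alpha_t^2$ as an additive offset (drawn from $\bphi(t)$) inside the bilinear attention score $(\Qb^m\yb_i)^\top\Kb^m\yb_j$. But an additive shift to the argument of the four ReLUs merely translates the trapezoid/bump bands; it does not multiply the resulting indicator by $\alpha_t^2$. There is no way to extract the product $\alpha_t^2\,\bar{\bGamma}_{ij}$ from a single bilinear form in $(\yb_i,\yb_j)$ followed by ReLU, because $\alpha_t^2$ and $\eb_i^\top\eb_j$ must enter multiplicatively and $\alpha_t^2$ is a shared time coordinate, not a per-token quantity with a useful cross term. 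Similarly, the linear FFN in your Block~2 cannot produce the product $\sigma_t^2\sbb_i$ from two coordinates of the same token. The paper's resolution is a step you omit: a multiplication module (of depth $\cO(\log(\norm{\sbb}_\infty/\emult))$) is inserted \emph{before} Block~1, which writes $\fmult(\alpha^2,\sbb_i)$ and $\fmult(\sigma^2,\sbb_i)$ into designated buffer slots of every token. The attention heads then carry the constant payload $\frac{8}{\Delta}\gamma_m\bSigma$ in $\Vb^m$ and read out $\fmult(\alpha^2,\sbb_j)$, so the $\alpha_t^2$-factor rides on the value side, not the score side; and Block~2's FFN reads $\fmult(\sigma^2,\sbb_i)$ from the buffer instead of attempting the multiplication itself. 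Without this precomputation step your construction produces $\sum_j\bar{\bGamma}_{ij}\bSigma\sbb_j$ rather than $\alpha_t^2\sum_j\bar{\bGamma}_{ij}\bSigma\sbb_j$ and cannot produce the $\sigma_t^2\sbb_i$ term at all, so the stated GD update is not actually realized.
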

\begin{proof}[Proof of Lemma \ref{lemma::approximate one-step GD}]
It suffices to construct several transformer blocks to represent one gradient descent iteration with truncated matrix $\bar{\bGamma}$, which takes the form
\begin{align}\label{eq:gd_repeat}
\sbb_i^{(k+1)} & = \sbb_i^{(k)} - \eta_t \left[\sum_{j=1}^N (\alpha_t^2 \bar{\bGamma}_{ij} \bSigma) \sbb_j^{(k)}+ \sigma^2_t \sbb_i^{(k)} + (\xb_{i, t} - \alpha_t \bmu_i)\right] \nonumber \\
& = \sbb_i^{(k)} - \underbrace{\eta_t \sum_{j=1}^N \alpha_t^2 \bar{\bGamma}_{ij} \bSigma \sbb_j^{(k)}}_{(A)} - \underbrace{\left(\eta_t \sigma_t^2 \sbb_i^{(k)} + \eta_t (\xb_{i, t} - \alpha_t \bmu_i)\right)}_{(B)}.
\end{align}
To ease the presentation, we consider a fixed time $t$ and drop the subscript $t$. We also drop the superscript $(k)$. In the following, we will use two transformer blocks to approximate $(A)$ and $(B)$ separately. The input to those transformer blocks is $\Yb = [\yb_1, \dots, \yb_N] \in \RR^{D \times N}$, where $D$ is a larger dimension and will be specified shortly. For each column vector $\yb_i$, it stores copies of $\sbb_i$ and other relevant information. Recall each token $\yb_i$ is
\begin{align*}
\yb_i = [\xb_i^\top, 
\eb_i^\top, \bphi^\top(t), \sbb_i^\top, \bzero_d^\top, \bzero_d^\top, \bzero_d^\top, \fmult(\alpha_t,\bmu_i)^{\top}, 1,\bzero_{3d}^\top]^\top \in \RR^{9d+d_e+d_t+1}.
\end{align*}
Here we reserve $\bzero_{3d}^\top$ as the buffer space for the multiplication module (the additional hidden width). 

\paragraph{Multiplication Module}
Before diving into approximating terms $(A)$ and $ (B)$, we use a multiplication module consisting of a series of transformer blocks to transform each column vector $\yb_i$ into
\begin{align*}
\yb_i = \left[\xb_i^\top, 
\eb_i^\top, \bphi^\top(t), \sbb_i^\top, \fmult(\alpha^2, \sbb_i^\top), \fmult(\sigma^2, \sbb_i^\top), \bzero_d^\top, \fmult(\alpha_t,\bmu_i)^{\top}, 1, \bzero_{3d}^\top\right]^\top,
\end{align*}
where $\fmult$ denotes an approximation to the entrywise multiplication realized by the multiplication module. We defer the detailed construction of the multiplication module to Appendix \ref{appendix::resnet basics}, which utilizes $\cO(\log(\norm{\sbb}_{\infty}/\emult))$  transformer blocks to reach the accuracy $\norm{\fmult(\alpha^2,\sbb)-\alpha^2\sbb}_{\infty}\le \emult$.

Compared to the raw input before the multiplication module, we have an easy access to the useful quantities $\alpha^2 \sbb_i$, $\sigma^2 \sbb_i$ and $\alpha\bmu_i$, which simplifies our next step.

\paragraph{The First Attention Block for Approximating $(A)$}
Here we will construct a transformer block $\cT\cB_1 = \ffn_1 \circ \attn_1$ for approximating $(A)$. Despite the huge dimension of matrix $\bar{\bGamma}$, it is Toeplitz due to the uniform time grid $\{h_1, \dots, h_N\}$. There are at most $N$ different entries in $\bar{\bGamma}$ and the $(i, j)$-th entry only depends on the gap $|i - j|$. As a result, we denote $\gamma_m = \bar{\bGamma}_{ij} \mathds{1}\{|i - j| = m\}$ and rewrite term $(A)$ as
\begin{align*}
(A) = \eta \sum_{m = 0}^{N-1} \sum_{j = 1}^N \alpha^2 \gamma_m \mathds{1}\{|i - j| = m\} \bSigma \sbb_j.
\end{align*}
The display above suggests a construction of a multi-head attention layer. Formally, for an arbitrary value of $m$, we construct four attention heads with ReLU activation. By Assumption~\ref{assumption:decay}, the indicator function $\mathds{1}\{|i - j| = m\}$ can be realized by calculating the inner product $\eb_i^\top \eb_j$ of time embedding. To see this, we observe
\begin{align*}
\eb_i^\top \eb_j = \frac{1}{2} (2r^2 - \norm{\eb_i - \eb_j}_2^2) = \frac{1}{2} (2r^2 - f^2(|i - j|)).
\end{align*}
Therefore, it holds that
\begin{align*}
\mathds{1}\{|i - j| = m\} = \mathds{1}\left\{\eb_i^\top \eb_j = r^2 - \frac{1}{2} f^2(m) \right\},
\end{align*}
since $f$ is strictly increasing. Directly approximating an indicator function using ReLU network can be difficult. Yet we note that $|i - j|$ can only take integer values. Therefore, we can slightly widen the decision band for the indicator function. Specifically, we denote a minimum gap $\Delta = \min_{i = 1, \dots, N-1} \{f^2(i+1) - f^2(i)\}$. Thus, we deduce
\begin{align*}
\mathds{1}\{|i - j| = m\} = \mathds{1}\left\{\eb_i^\top \eb_j \in \left[r^2 - \frac{1}{2} f^2(m) - \frac{1}{4}\Delta, r^2 - \frac{1}{2} f^2(m) + \frac{1}{4}\Delta \right]\right\}.
\end{align*}
We can use four ReLU functions to approximate the right-hand side of the last display. In specific, we construct a trapezoid function as follows,
\begin{align*}
\psi(\eb_i^\top \eb_j) & = \frac{8}{\Delta} {\rm ReLU} \left(\eb_i^\top \eb_j - r^2 + \frac{1}{2} f^2(m) +  \frac{1}{4}\Delta \right) - \frac{8}{\Delta} {\rm ReLU} \left(\eb_i^\top \eb_j - r^2 + \frac{1}{2} f^2(m) + \frac{1}{8}\Delta \right) \\
& \quad - \frac{8}{\Delta} {\rm ReLU} \left(\eb_i^\top \eb_j - r^2 + \frac{1}{2} f^2(m) -  \frac{1}{8}\Delta \right) + \frac{8}{\Delta} {\rm ReLU} \left(\eb_i^\top \eb_j - r^2 + \frac{1}{2} f^2(m) - \frac{1}{4}\Delta \right).
\end{align*}
It is straightforward to check that $\psi = 1$ in a $\frac{4}{\Delta}$-width interval centered at $r^2 - \frac{1}{2}f^2(m)$. To this end, we can use four attention heads to realize the function $\psi$. In particular, for the first attention head, we choose
\begin{align*}
(\Qb^1)^\top \Kb^1 & = 
{\rm diag}\left(\left[\bzero_{d \times d}, \Ib_{d_e}, \bzero_{d_t \times d_t}, \bzero_{(5d) \times (5d)}, - r^2 + \frac{1}{2}f^2(m) + \frac{1}{4}\Delta,\bzero_{(3d) \times (3d)} \right]\right)~\text{and} \\
\Vb^1 & = \begin{bmatrix}
\bzero_{(4d+d_e+d_t) \times (2d+d_e+d_t)} & \bzero_{(4d+d_e+d_t) \times d} & \bzero_{(4d+d_e+d_t) \times (6d+1)} \\
\bzero_{d \times (2d+d_e+d_t)} & \frac{8}{\Delta}  \gamma_m \bSigma & \bzero_{d \times (6d+1)} \\
\bzero_{(4d+1) \times (2d+d_e+d_t)} & \bzero_{(4d+1) \times d} & \bzero_{(4d+1) \times (6d+1)}
\end{bmatrix}.
\end{align*}
It is not difficult to check that this attention head calculates the first ReLU function in $\psi$. Analogously, for the second attention head, we choose
\begin{align*}
(\Qb^2)^\top \Kb^2 & = 
{\rm diag}\left(\left[\bzero_{d \times d}, \Ib_{d_e}, \bzero_{d_t \times d_t}, \bzero_{(5d) \times (5d)}, - r^2 + \frac{1}{2}f^2(m) + \frac{1}{8}\Delta ,\bzero_{(3d)\times (3d)}\right]\right)~\text{and} \\
\Vb^2 & = -\Vb^1
\end{align*}
for realizing the second ReLU function. The third and fourth attention heads have the following parameters,
\begin{align*}
(\Qb^3)^\top \Kb^3 & = 
{\rm diag}\left(\left[\bzero_{d \times d}, \Ib_{d_e}, \bzero_{d_t \times d_t}, \bzero_{(5d) \times (5d)}, - r^2 + \frac{1}{2}f^2(m) - \frac{1}{8}\Delta,\bzero_{(3d)\times (3d)} \right]\right), \\
\Vb^3 & = -\Vb^1 \\
(\Qb^4)^\top \Kb^4 & = 
{\rm diag}\left(\left[\bzero_{d \times d}, \Ib_{d_e}, \bzero_{d_t \times d_t}, \bzero_{(5d) \times (5d)}, - r^2 + \frac{1}{2}f^2(m) - \frac{1}{4}\Delta ,\bzero_{(3d)\times(3d)}\right]\right),~\text{and} \\
\Vb^4 & = \Vb^1.
\end{align*}
By summing up the output of the four attention heads, we derive the output of such an attention layer. For the $i$-th patch, the output is
\begin{align*}
& \bigg[\xb_i^\top, \eb_i^\top, \bphi^\top(t), \sbb_i, \fmult(\alpha^2, \sbb_i^\top), \fmult(\sigma^2, \sbb_i^\top), \\
& \qquad\qquad  \sum_{j=1}^N \gamma_m \psi(\eb_i^\top \eb_j) \bSigma \cdot \fmult(\alpha^2, \sbb_j^\top), \fmult(\alpha,\bmu_i)^{\top}, 1,\bzero_{3d}^{\top}\bigg]^\top.
\end{align*}
Note that this output calculates for a fixed value of $m$. In order to summing over different values of $m$, we utilize $4J$ attention heads. Here $4J$ attention heads are enough, since $\bar{\bGamma}_{ij} = 0$ if $|i-j|\geq J$. We have
\begin{align*}
\attn_1(\yb_i) & = \bigg[\xb_i^\top, \eb_i^\top, \bphi^\top(t), \sbb_i, \fmult(\alpha^2, \sbb_i^\top), \fmult(\sigma^2, \sbb_i^\top), \\
& \qquad\qquad  \sum_{m = 0}^{N-1} \sum_{j=1}^N \gamma_m \psi(\eb_i^\top \eb_j) \bSigma \cdot \fmult(\alpha^2, \sbb_j^\top), \fmult(\alpha,\bmu_i)^{\top}, 1,\bzero_{3d}^{\top}\bigg]^\top.
\end{align*}

For the feedforward layer $\ffn_1$, we set $\Wb_1 = \bzero$, $\Wb_2 = \bzero$, $\bbb_1 = \bzero$ and $\bbb_2 = \bzero$ such that the output of the first attention block is
\begin{align*}
\cT\cB_1(\yb_i) & = \bigg[\xb_i^\top, \eb_i^\top, \bphi^\top(t), \sbb_i, \fmult(\alpha^2, \sbb_i^\top), \fmult(\sigma^2, \sbb_i^\top), \\
& \qquad\qquad  \sum_{m=0}^{N-1}\sum_{j=1}^N \gamma_m \psi(\eb_i^\top \eb_j) \bSigma \cdot \fmult(\alpha^2, \sbb_j^\top), \fmult(\alpha,\bmu_i)^{\top}, 1,\bzero_{3d}^{\top}\bigg]^\top.
\end{align*}

\paragraph{The Second Transformer Block for Approximating $(B)$} Similar to the first block, the goal here is to construct $\cT\cB_2 = \ffn_2 \circ \attn_2$ for realizing $(B)$. This is much easier than approximating $(A)$, we only need the feed forward layer while set the attention layer trivial. In particular, we choose $\Qb, \Kb$ and $\Vb$ being all zero matrices so as to maintain the input to the attention layer. For the feedforward layer $\ffn_2$, we choose $\Wb_1$ as
\begin{align*}
\Wb_1 = \begin{bmatrix}
\Ib_d & \bzero_{d_e \times d_e} & \bzero_{d_t \times d_t} & \Ib_d & \bzero_{d \times d} &  \Ib_d & \Ib_d &  - \Ib_d & \bzero_{d\times(3d+1)} \\
- \Ib_d & \bzero_{d_e \times d_e} & \bzero_{d_t \times d_t} & -\Ib_d & \bzero_{d \times d}  & - \Ib_d & - \Ib_d & \Ib_d & \bzero_{d\times(3d+1)} \\
&&& -\Ib_d & \Ib_d &&&& \\
&&&\Ib_d & -\Ib_d &&&& \\
&&&&& \Ib_d &&& \\
&&&&& -\Ib_d &&& \\
&&&&&& \Ib_d && \\
&&&&&& -\Ib_d &&
\end{bmatrix} \in \RR^{(8d) \times D},
\end{align*}
where the missing values are all zero. Then we have $\Wb_1 \cdot \attn_2 \circ \cT\cB_1 (\yb_i)$ as
\begin{align*}
\begin{bmatrix}
\sbb_i + \sum_{m=0}^{N-1}\sum_{j=1}^N \gamma_m \psi(\eb_i^\top \eb_j) \bSigma \cdot \fmult(\alpha^2, \sbb_j) + \fmult(\sigma^2, \sbb_i) + (\xb_i - \fmult(\alpha, \bmu_i)) \\
-\sbb_i - \sum_{m=0}^{N-1}\sum_{j=1}^N \gamma_m \psi(\eb_i^\top \eb_j) \bSigma \cdot \fmult(\alpha^2, \sbb_j) - \fmult(\sigma^2_t, \sbb_i) - (\xb_i - \fmult(\alpha, \bmu_i)) \\
-\sbb_i + \fmult(\alpha^2, \sbb_i) \\
\sbb_i -\fmult(\alpha^2, \sbb_i) \\
\fmult(\sigma^2, \sbb_i) \\
-\fmult(\sigma^2, \sbb_i) \\
 \sum_{m=0}^{N-1}\sum_{j=1}^N \gamma_m \psi(\eb_i^\top \eb_j) \bSigma \cdot \fmult(\alpha^2, \sbb_j) \\
-  \sum_{m=0}^{N-1}\sum_{j=1}^N \gamma_m \psi(\eb_i^\top \eb_j) \bSigma \cdot \fmult(\alpha^2, \sbb_j)
\end{bmatrix}.
\end{align*}
It suffices to choose $\bbb_1 = \bbb_2 = \bzero$ and $\Wb_2$ equal to
\begin{align*}
\Wb_2 =
\left[ \begin{array}{c}
\bzero_{(d+d_e+d_t) \times (8d)} \\\hline
\begin{array}{cccccccc}
- \Ib_d & \Ib_d &&&&&& \\
&& -\Ib_d & \Ib_d &&&& \\
&&&& -\Ib_d & \Ib_d && \\
&&&&&& -\Ib_d & \Ib_d
\end{array} \\\hline
\bzero_{(4d+1) \times (8d)}
\end{array}
\right] \in \RR^{D \times (8d)},
\end{align*}
where missing values are all zero. Using the fact that ${\rm ReLU}(x) - {\rm ReLU}(-x) = x$, we have
\begin{align*}
& \quad \Wb_2 \cdot {\rm ReLU}(\Wb_1 \cdot \attn_2 \circ \cT\cB_1(\yb_i)) \\
& = \Bigg[\bzero_d^\top, \bzero_{d_e}^\top, \bzero_{d_t}^\top, -\sbb_i^\top - \sum_{m=0}^{N-1}\sum_{j=1}^N \gamma_m \psi(\eb_i^\top \eb_j) \bSigma \cdot \fmult(\alpha^2, \sbb_j^\top) -  \fmult(\sigma^2, \sbb_i^\top) -  (\xb_i^\top - \fmult(\alpha, \bmu_i^\top)), \\
& \qquad \sbb_i^\top -  \sum_{m=0}^{N-1}\sum_{j=1}^N \gamma_m \psi(\eb_i^\top \eb_j) \bSigma \cdot \fmult(\alpha^2, \sbb_j^\top), -\fmult(\alpha^2, \sbb_i^\top), -\fmult(\sigma^2, \sbb_i^\top), \bzero_d^\top,0, \bzero_{3d}^\top \Bigg]^\top.
\end{align*}
Therefore, by concatenating the two transformer blocks, we have
\begin{align*}
& \cT\cB_2 \circ \cT\cB_1(\yb_i) = \Bigg[\xb_i^\top, \be_i^\top, \bphi^\top(t), \\
& \quad - \sum_{m=0}^{N-1}\sum_{j=1}^N \gamma_m \psi(\eb_i^\top \eb_j) \bSigma \cdot \fmult(\alpha^2, \sbb_j^\top) - \fmult(\sigma^2, \sbb_i^\top) -  (\xb_i^\top - \fmult(\alpha, \bmu_i^\top)), \\
& \hspace{2in} \sbb_i^\top, \bzero_d^\top, \bzero_d^\top, \fmult(\alpha,\bmu_i)^{\top}, 1,\bzero_{3d}^\top \Bigg]^\top.
\end{align*}
Lastly, by implementing another multiplication module $f_{\rm mult}$ that scales the gradient by the learning rate $\eta_t$, we obtain 
\begin{align*}
& f _{\rm mult}\circ\cT\cB_2 \circ \cT\cB_1(\yb_i) = \Bigg[\xb_i^\top, \be_i^\top, \bphi^\top(t), \\
& \qquad \fmult\bigg(\eta_t, -  \sum_{m=0}^{N-1}\sum_{j=1}^N \gamma_m \psi(\eb_i^\top \eb_j) \bSigma \cdot \fmult(\alpha^2, \sbb_j^\top) -  \fmult(\sigma^2, \sbb_i^\top) +  (\xb_i^\top - \fmult(\alpha, \bmu_i^\top))\bigg), \\
& \hspace{2.3in} \sbb_i^\top, \bzero_d^\top, \bzero_d^\top, \fmult(\alpha_t,\bmu_i)^{\top}, 1,\bzero_{3d}^\top \Bigg]^\top.
\end{align*}
It is straightforward to implement a feedforward layer to sum up $\sbb_i$ with the gradient increment in $f _{\rm mult}\circ\cT\cB_2 \circ \cT\cB_1(\yb_i)$. Clearly, the feedforward layer can be realized by a transformer block with a trivial attention layer --- similar to $\cT \cB_2$ with much simpler weight matrices. As a result, we have implemented one gradient descent iteration, where the output replaces $\sbb_i$ in the initial input vector. We denote $f_{\rm GD, 2}$ as our network implementation of one GD iteration.

We can repeat the multiplication module followed by the two transformer blocks to approximate the gradient descent for $K$ iterations as needed. Note that, we don't need to calculate $\fmult(\alpha_t,\bmu_i)^{\top}$ after the first step of GD, which leads to the construction of $ f_{\rm GD,2}$.

\paragraph{Bounding Approximation Error}
Examining the output of our approximation for one gradient descent iteration, it is not difficult to observe that the approximation error is determined by the error in the multiplication module. For the $k$-th iteration, we denote the approximation realized by the transformer as
\begin{align*}
\hat{\sbb}_i^{(k+1)} = \hat{\sbb}_i^{(k)} + \fmult\bigg(\eta_t, -\sum_{m=0}^{N-1}&\sum_{j=1}^N \gamma_m \psi(\eb_i^\top \eb_j) \bSigma \cdot \fmult(\alpha^2, \hat{\sbb}_j^{(k)})\\&-  \fmult(\sigma^2, \hat{\sbb}_i^{(k)}) -  (\xb_i^\top - \fmult(\alpha, \bmu_i))\bigg).
\end{align*}
We compare the output with the exact GD update with last iteration at $\hat{\sbb}_i^{(k)}$, which is
\begin{align*}
\hat{\sbb}_{i,\star}^{(k+1)} = \hat{\sbb}_i^{(k)} -  \eta_t\sum_{m=0}^{N-1}\sum_{j=1}^N \gamma_m \psi(\eb_i^\top \eb_j) \bSigma \cdot \alpha^2\hat{\sbb}_j^{(k)} -  \sigma^2\hat{\sbb}_i^{(k)} - (\xb_i^\top - \alpha\bmu_i).
\end{align*}
For any index $i \in \{1, \dots, N\}$, we bound the difference
\begin{align*}
\norm{\hat{\sbb}_{i,\star}^{(k+1)} - \hat{\sbb}_i^{(k+1)}}_2 & \overset{(i)}{\leq} \emult+\eta_t \sum_{m=0}^{N-1}\sum_{j=1}^N \norm{\bSigma (\alpha^2 \sbb_j^{(k)} - \fmult(\alpha^2, \hat{\sbb}_j^{(k)})}_2 \\
& \quad + \eta_t \norm{\sigma^2 \sbb_i^{(k)} - \fmult(\sigma^2, \hat{\sbb}_i^{(k)})}_2 + \eta_t \norm{\alpha \bmu_i - \fmult(\alpha, \bmu_i)}_2 \\
& \overset{(ii)}{\leq}  \emult+\eta_t \norm{\bSigma}_2 N \sqrt{d} \emult + 2\eta_t \sqrt{d} \emult,
\end{align*}
 where in inequality $(i)$, we use the fact that $\gamma_m \psi(\eb_i^\top \eb_j) \leq 1$, and in inequality $(ii)$, we plug in the approximation error of multiplication module. By setting $\emult=\frac{1}{6}\eta_t^{-1}\norm{\bSigma}_{\rm F}^{-1}N^{-3/2}d^{-1/2}\epsilon$, we ensure that $\norm{\hat{\sbb}_{i,\star}^{(k+1)} - \hat{\sbb}_i^{(k+1)}}_2\le \epsilon/\sqrt{N}$.
 
{\paragraph{Size of Transformer Blocks for Approximating One GD Iteration}
Similar to the proof of Lemma~\ref{lemma::approximate first GD}, we summarize the network size for implementing one GD iteration (except the first iteration).
\begin{table}[h]
\centering
\caption{ Transformer size for approximating one GD iteration (except the first iteration)}
\label{tab:GD_iteration_size}
\begin{tabular}{c|c}
Input dimension & $D \times N$ with $D = 9d + d_e + d_t + 1$ \\\hline
\# of blocks $L$ & $2 + L_{\rm mult}$ with $L_{\rm mult} = \cO(\log\paren{\norm{\xb}_{\infty}\norm{\sbb}_{\infty}Nd/\epsilon})$ \\\hline
\# of attention heads $M$ & $4J$ with $J$ the covariance truncation length \\\hline
Parameter bound $B$ & $\cO\paren{d(\norm{\bSigma}_{\infty}+r^2))+\norm{\xb}_{\infty}+\norm{\sbb}_{\infty}}$
\end{tabular}
\end{table}

To see the parameter bound $B$, we observe that the parameters in the constructed transformer are bounded by $\max\{1, \frac{8}{\Delta}\eta \norm{\bSigma}_{\infty}, r^2\}$. Here, $\norm{\bSigma}_{\infty} = \max_{ij} |\bSigma_{ij}|$ denotes the maximum magnitude of entries. Since there are at most $\cO(d)$ nonzero weights in each weight matrix, we have the norm of the parameters bounded by $$\cO\paren{d(\Delta^{-1}\norm{\bSigma}_{\infty}+r^2))+\norm{\xb}_{\infty}+\norm{\sbb}_{\infty}}=\cO\paren{d(\norm{\bSigma}_{\infty}+r^2))+\norm{\xb}_{\infty}+\norm{\sbb}_{\infty}}.$$
The proof is complete.}
\end{proof}

\subsubsection{Formal Proof of Lemma \ref{lemma::transformers approximate shat uni}} 
Recall that the score function is written as 
\begin{align*}
    \sbb(\vb_t)=-(\alpha_t^2(\bGamma\otimes\bSigma)+\sigma_t^{2} \Ib )^{-1}(\vb_t-\alpha_t\bmu).
\end{align*}
Thus, if $\norm{\sbb(\vb_t)}_{2}\le R_0\sigma_t^{-1}$, we have $\norm{\vb_t-\alpha_t\bmu}_2\le \norm{\alpha_t^2(\bGamma\otimes\bSigma)+\sigma_t^{2} \Ib }_{2}\norm{\sbb(\vb_t)}_{2}\le CR_0\sigma_t^{-1}$, where 
 \begin{align*}    C&=\norm{(\alpha_t^2(\bGamma\otimes\bSigma)+\sigma_t^{2} \Ib )}_{2}\\&\le \norm{\alpha_t^2(\bGamma\otimes\bSigma)}_{2}+\norm{\sigma_t^{2} \Ib }_{2}\\
 &\le \norm{\bGamma}_{\rm F}\norm{\bSigma}_{2}+1\\
 &\le \sqrt{N(\ell+1)}\norm{\bSigma}_{2}+1.
    \end{align*}
The last inequality follows from Lemma \ref{lemma::trunc Gamma}. Thus, the infinity norm of $\vb_t$ can be bounded by $\norm{\vb_t}_{\infty}\le \norm{\vb_t-\alpha_t\bmu}_{\infty}+\norm{\alpha_t\bmu}_{\infty}\le CR_0\sigma_t^{-1}+\mu_0$. Moreover, by the proof of lemma \ref{lemma::trunc Gamma L2}, we can also bound the norm of the truncated score function by 
\begin{align*}
    \norm{\bar\sbb(\vb_t)}&\le \norm{\bar\sbb(\vb_t)-\sbb(\vb_t)}_2+\norm{\sbb(\vb_t)}_2\\&\le \sigma_t^{-4}\norm{\Delta \bGamma}_{\rm F}\norm{\vb_t-\alpha_t\bmu}_{2}+\norm{\sbb(\vb_t)}_2\\
    &\le \sigma_t^{-4}\norm{\Delta \bGamma}_{\rm F}(CR_0\sigma_t^{-1}+\mu_0)+R_0\sigma_t^{-1}.
\end{align*}
Thus, according to Lemma \ref{lemma::trunc Gamma}, by choosing $J=\cO(\paren{\ell\log(N\ell\norm{\bSigma}_{\rm F}/\sigma_t)}^{1/\nu})$, we can ensure that $\norm{\bar\sbb(\vb_t)}\le 2R_0\sigma_t^{-1}$.

Now we construct the transformers as follows:
\begin{align*}
    \tilde\sbb(\xb)=f_{\rm norm}\circ f_{\rm linear}\circ \underbrace{ f _{\rm GD,2}\circ  f _{\rm GD,2}\circ\cdots\circ f _{\rm GD,2}}_{(K-1)\times  f _{\rm GD,2}}\circ f _{\rm GD,1}\circ  f _{\mu}\circ f_{\rm in}.
\end{align*}

 Here $f_{\rm out}$ extracts the $(d+d_e+d_t+1)$-th to $(2d+d_e+d_t)$-th rows of the output as the score approximator, and we choose the clipping range as $R=2R_0\sigma_t^{-1}$. 
 
Moreover, let $$\tilde{\sbb}_0=f_{\rm linear}\circ  f _{\rm GD,2}\circ  f _{\rm GD,2}\circ\cdots\circ f _{\rm GD,2}\circ f _{\rm GD,1}\circ  f _{\mu}\circ f_{\rm in}$$ to be the score approximator without clipping, then under the condition that $\norm{\sbb(\vb_t)}_{2}\le R_0\sigma_t^{-1}$, we have $\norm{\tilde{\sbb}(\vb_t)-\bar\sbb(\vb_t)}_{2}\le \norm{\tilde{\sbb}(\vb_t)_0-\bar\sbb(\vb_t)}_{2}$. 

Now let's bound the difference between $\tilde{\sbb}_0(\vb_t)$ and $\bar\sbb$. By Lemma \ref{lemma::approximate first GD} and Lemma \ref{lemma::approximate one-step GD}, $\tilde{\sbb}_0(\vb_t)$ expresses the output after $K$ steps of GD with error level $\sqrt{N}\cdot \epsilon/\sqrt{N}=\epsilon$. Here we multiply $\sqrt{N}$ because we are considering the entire score function instead of each patch. Then by Lemma \ref{lemma::GD convergence} and following the proof of Lemma \ref{lemma:score_as_gd error}, we have 
\begin{align*}
    \norm{\tilde{\sbb}_0(\vb_t)-\bar\sbb(\vb_t)}_2 &\le\frac{(\kappa_t+1)\epsilon}{2}+ \exp\paren{-\frac{2K}{\kappa_t+1}}\norm{\bar\sbb(\vb_t)}_2\\
    &= \frac{(\kappa_t+1)\epsilon}{2}+\exp\paren{-\frac{2K}{\kappa_t+1}}2R_0\sigma_t^{-1}.
\end{align*}

Thus, by replacing the error level $\epsilon$ by $2\epsilon/(C\sigma_{t_0}^{-2}+1)\le 2\epsilon/(\kappa_{t}+1)$ for sufficiently large constant $C$ and $K=\lceil \frac{\kappa_{t_0}+1}{2}\log(4R_0/(\sigma_t\epsilon)) \rceil$, we have $\norm{\tilde{\sbb}_0(\vb_t)-\bar\sbb(\vb_t)}_2 \le \epsilon$ for any $t\ge t_0$. Moreover, we remark that each patch of the score approximator lies in the euclidean ball with radius $$\norm{\tilde{\sbb}^{(k)}_{i}(\vb_t)}_2 \le\epsilon+2\norm{\bar\sbb(\vb_t)}_2\le 1+4R_0\sigma_t^{-1}$$ throughout the network. Here $\tilde{\sbb}^{(k)}_{i}(\vb_t)$ represents the $i$-th patch of the score approximator after $k$ GD blocks.

{\paragraph{The Overall Size of Transformer Architecture for Approximating The Score Function}
We combine the network sizes in Table~\ref{tab:first_GD_size} and Table~\ref{tab:GD_iteration_size}, which gives rise to a characterization of the overall network size for approximating the score function of Gaussian process data. The result is summarized in the following table.
\begin{table}[h]
\centering
\caption{ Overall transformer network size for approximating the score function}
\label{tab:overall_size}
\begin{tabular}{c|c}
Input dimension & $D \times N$ with $D = 9d + d_e + d_t + 1$ \\\hline
\# of blocks $L$ & $L_\mu + (2 + L_{\rm mult})K = \cO(\kappa_{t_0}\log(R_0Nd/\epsilon)^2)$ \\\hline
\# of attention heads $M$ & $4J = \cO\paren{\paren{\ell\log(N\ell\norm{\bSigma}_{\rm F}/\sigma_{t_0})}^{1/v}}$ \\\hline
Parameter bound $B$ & $\cO\paren{\log(R_0Nd/(\epsilon\sigma_{t_0}))\sqrt{N}R_0\sigma^{-2}_{t_0}(r^2+\norm{\bSigma}_{\infty})}$ \\\hline
Output range $R_t$ & $2R_0 \sigma_t^{-1}$
\end{tabular}
\end{table}

Note that to obtain the bound on $L$ and $M$, we substitute the choice of $J$ and $K$ into the network sizes in Tables~\ref{tab:first_GD_size} and \ref{tab:GD_iteration_size}. The parameter bound is obtained by explicitly evaluating the magnitude $\norm{\sbb}_{\infty}$ of the score function in Table~\ref{tab:GD_iteration_size}. The proof is complete.}

\subsection{Omitted Proofs of Lemmas about the Score Function }\label{appendix:: score function and truncation}
\begin{proof}[Proof of Lemma \ref{lemma::trunc Gamma L2}]
Suppose $\vb_t\sim P_t=\cN(\alpha_t\bmu,\alpha_t^2 ({\bGamma} \otimes \bSigma) + \sigma_t^2  \Ib ))$. Denote $\bPhi_t=\alpha_t^2 ({\bGamma} \otimes \bSigma) + \sigma_t^2  \Ib $ and $\bar{\bPhi}_t=\alpha_t^2 (\bar{\bGamma} \otimes \bSigma) + \sigma_t^2  \Ib $. Then we have ${\bGamma}_t=\bar{\bPhi}_t+\alpha_t^2(\Delta \bGamma \otimes \bSigma)$. Moreover, by Lemma \ref{lemma::trunc Gamma}, choosing $J=\cO((\log(N\norm{\bSigma}_{\rm F}/(\epsilon \sigma_t)))$ ensures that $\norm{\Delta \bGamma}_{\rm F} \le \epsilon\norm{\bSigma}^{-2}_{\rm F}\sigma_t^2$. Thus, we have
\begin{align*}
    \norm{\bar{\sbb}-\nabla \log p_t}_{L_2(P_t)}^2&=\EE_{\vb_t}\brac{\norm{\bar{\sbb}(\vb_t)-\nabla \log p_t(\vb_t)}_2^2}\\
&=\EE_{\vb_t}\brac{\norm{\paren{\bar{\bPhi}_t^{-1} -{\bGamma}_t^{-1} }(\vb_t - \alpha_t \bmu)}_2^2}\\
&=\norm{\paren{\bar{\bPhi}_t^{-1} -{\bGamma}_t^{-1} }{\bGamma}_t^{1/2}}_{\rm F}^2\\
&=\tr\paren{\paren{\bar{\bPhi}_t^{-1} -{\bGamma}_t^{-1} }{\bGamma}_t\paren{\bar{\bPhi}_t^{-1} -{\bGamma}_t^{-1} }}\\
&\le \norm{\paren{\bar{\bPhi}_t^{-1} -{\bGamma}_t^{-1} }{\bGamma}_t}_{\rm F}\norm{\bar{\bPhi}_t^{-1} -{\bGamma}_t^{-1}}_{\rm F}\\
&\le \norm{\paren{\bar{\bPhi}_t^{-1} -{\bGamma}_t^{-1} }{\bGamma}_t}_{\rm F}\norm{\paren{\bar{\bPhi}_t^{-1} -{\bGamma}_t^{-1} }{\bGamma}_t}_{\rm F}\norm{{\bGamma}^{-1}_t}_{2}\\
&= \norm{\alpha_t^2\bar{\bPhi}_t^{-1}(\Delta \bGamma \otimes \bSigma)}^2_{\rm F}\norm{(\alpha_t^2 ({\bGamma} \otimes \bSigma) + \sigma_t^2  \Ib )^{-1}}_{2}\\
&\le \norm{(\alpha_t^2 (\bar{\bGamma} \otimes \bSigma) + \sigma_t^2  \Ib )^{-1}}_{2}^2\norm{\alpha_t^2(\Delta \bGamma \otimes \bSigma)}^2_{\rm F}\norm{(\alpha_t^2 ({\bGamma} \otimes \bSigma) + \sigma_t^2  \Ib )^{-1}}_{2}\\
&\le \alpha_t^4\norm{\Delta \bGamma}^2_{\rm F}\norm{\bSigma}^2_{\rm F}\norm{\sigma_t^2  \Ib }^{-3}_{2}\\
&\le \frac{\epsilon^2}{\sigma_t^2}.
\end{align*}
In the last inequality, we invoke $\norm{\Delta \bGamma}_{\rm F} \le \epsilon\norm{\bSigma}^{-2}_{\rm F}\sigma_t^2$. The proof is complete.
\end{proof}
\begin{proof}[Proof of Lemma \ref{lemma:: L2 of bars}]
    Suppose $\vb_t\sim P_t=\cN(\alpha_t\bmu,\alpha_t^2 ({\bGamma} \otimes \bSigma) + \sigma_t^2  \Ib )$. Denote $\bPhi_t=\alpha_t^2 ({\bGamma} \otimes \bSigma) + \sigma_t^2  \Ib $ and $\bar{\bPhi}_t=\alpha_t^2 (\bar{\bGamma} \otimes \bSigma) + \sigma_t^2  \Ib $. Then we have 
\begin{align*}
    \norm{\bar{\sbb}}^2_{L_2(P_t)}&=\EE_{\vb_t}\brac{\norm{\bar{\sbb}(\vb_t)-\nabla \log p_t(\vb_t)}_2^2}\\
&=\EE_{\vb_t}\brac{\norm{\bar{\bPhi}_t^{-1}}_2^2}\\
&=\norm{\bar{\bPhi}_t^{-1} {\bGamma}_t^{1/2}}_{\rm F}^2\\
&=\tr\paren{\bar{\bPhi}_t^{-1} {\bGamma}_t{\bar{\bPhi}_t^{-1} }}\\
&\le \norm{{\bar{\bPhi}_t^{-1} }{\bGamma}_t}_{\rm F}\norm{\bar{\bPhi}_t^{-1}}_{\rm F}\\
&\le\norm{{\bar{\bPhi}_t^{-1} }\alpha_t^2(\Delta \bGamma\otimes \bSigma)+ \Ib }_{\rm F}\norm{\bar{\bPhi}_t^{-1}}_{\rm F}\\
&\le \paren{\norm{\bar{\bPhi}_t^{-1} }_{2}\norm{{}\alpha_t^2(\Delta \bGamma\otimes \bSigma)}_{\rm F}+\sqrt{Nd}}\sqrt{Nd}\norm{\bar{\bPhi}_t^{-1}}_{2}\\
&\le \paren{\frac{\alpha_t^2 \norm{\Delta \bGamma}_{\rm F}\norm{\bSigma}_{\rm F}}{\sigma_t^2}+\sqrt{Nd}}\frac{\sqrt{Nd}}{\sigma_t^2}\\
&\le \paren{\epsilon+\sqrt{Nd}}\frac{\sqrt{Nd}}{\sigma_t^2}\\
&\le \frac{2Nd}{\sigma_t^2}.
\end{align*}
Here we invoke $\norm{\Delta \bGamma}_{\rm F} \le \norm{\bSigma}_{\rm F}^{-1}\sigma_t^2$ by Lemma \ref{lemma::trunc Gamma} in the second last inequality.
\end{proof}

\begin{proof}[Proof of Lemma \ref{lemma::bound truth score}]
    Note that if $\vb_t\sim P_t$ we can write $\vb_t=\alpha_t\bmu-\bPhi_t^{1/2}\zb$, where $\bPhi_t=\alpha_t^2\bGamma\otimes \bSigma+\sigma_t^2 \Ib $ and $\zb \sim \cN(
    \bzero, \Ib)$. Thus, the score function can be written as $$\nabla \log p_t(\vb_t)=-\bPhi_t^{-1}(\vb_t-\alpha_t\bmu)=\bPhi_t^{-1/2}\zb.$$
    Thus, we have $\norm{\nabla \log p_t(\vb_t)}_2^2=\zb^{\top}\bPhi_t^{-1}\zb$, which is a quadratic form of the standard Gaussian. By taking $g(\zb)=\norm{\nabla \log p_t(\vb_t)}_2^2$ in Lemma \ref{lemma:polynomial concentration}, we have
    \begin{align*}
\mathbb{P}\left[|\norm{\nabla \log p_t(\vb_t)}_2^2-\mathbb{E}[\norm{\nabla \log p_t(\vb_t)}_2^2]| \geq \delta \sqrt{\operatorname{Var}(\norm{\nabla \log p_t(\vb_t)}_2^2)}\right] \leq 2 \exp \left(-C_2 \delta\right).
    \end{align*}
    Since we have $$\mathbb{E}[\norm{\nabla \log p_t(\vb_t)}_2^2]=\tr\paren{\bPhi_t^{-1}}\le Nd\sigma_t^{-2} ~\text{ and }~\operatorname{Var}(\norm{\nabla \log p_t(\vb_t)}_2^2)\le \norm{\bPhi_t^{-1}}^2_{\rm F} \le Nd\sigma_t^{-4},$$
    we have $\norm{\nabla \log p_t(\vb_t)}_2^2\le \sigma_t^{-2}\paren{N+\delta\sqrt{Nd}} $ with probability $1-2 \exp \left(-C_2 \delta\right)$.
\end{proof}

\subsection{Results on Transformers with Softmax Activation}\label{appendix:softmax}

In this section, we showcase the capability of softmax transformers in unrolling gradient descent algorithm. In this section, we consider the quadratic kernel function, i.e.,
\begin{align*}
\gamma(h_i, h_j) = \exp\left(-\norm{\eb_i - \eb_j}_2^2 / \ell \right).
\end{align*}
In the following lemma, we reproduce our results in Lemma \ref{lemma::approximate one-step GD} with softmax activation. Moreover, due to the exponential scaling strategy of softmax, we show that one head of attention layer is enough to express the GD with the target function without truncation on $\bGamma$.

\begin{lemma}[Unroll GD in softmax transformers]\label{lemma::approximate one-step GD softmax}
    Suppose the input token is 
\begin{align*}
\yb_i = [\xb_i^\top, 
\eb_i^\top, \bphi^\top(t), \sbb_i^\top, \bzero_d^\top, \bzero_d^\top, \bzero_d^\top, \fmult(\alpha_t,\bmu_i)^{\top}, 1,\bzero_{3d-1}^\top,i]^\top ,
\end{align*}
then there exists a softmax transformer $ f_{{\rm GD,softmax}}\in \cT_{\rm raw}(D,\cO(\log\paren{\norm{\xb}_{\infty}\norm{\sbb}_{\infty}Nd/\epsilon }),$ $1,\cO(N^2+d+\norm{\bSigma}_{F}+r^2);{\rm softmax})$ that approximately iterates $\sbb_i$ by following the GD update formula, i.e.,
\begin{align*}
     f _{{\rm GD,softmax}}\paren{\yb_i}&= \Bigg[\xb_i^\top, \be_i^\top, \bphi^\top(t), \quad \sbb_i^{\top} - \eta_t \paren{\sum_{j=1}^N (\alpha_t^2 {\bGamma}_{ij} \bSigma + \sigma_t^2 \Ib ) \sbb_j^{\top} - (\xb_{i} - \alpha_t \bmu_i)^{\top}}+\bepsilon_i, \\
& \hspace{1in} \bzero_d^\top, \bzero_d^\top, \bzero_d^\top, \fmult(\alpha_t,\bmu_i)^{\top}, 1,\bzero_{3d-1}^\top,i \Bigg]^\top.
\end{align*}
Here $\norm{\bepsilon'_i}_2\le \epsilon/\sqrt{N}$ for $i\in[N]$. Compared with Lemma \ref{lemma::approximate one-step GD}, we add an additional time embedding $i$ at the end of the input for technical convenience. 
\end{lemma}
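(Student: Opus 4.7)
The softmax activation can directly realize the Gaussian kernel weights $\bGamma_{ij}=\exp(-\|\eb_i-\eb_j\|_2^2/\ell)$ inside a single attention head, bypassing the $4J$-head trapezoid construction used for the ReLU proof of Lemma~\ref{lemma::approximate one-step GD}. The plan is to build one softmax head whose attention weights equal $\bGamma_{ij}/Z(i)$, where $Z(i)=\sum_{j'}\bGamma_{ij'}$, then undo the unwanted $1/Z(i)$ normalization by multiplying back $Z(i)$, which we precompute as a function of the index $i$ stored in each token. After this ``cross-patch'' aggregation is realized, the rest of the gradient-descent iteration (adding $\sigma_t^2\sbb_i$ and $-(\xb_i-\alpha_t\bmu_i)$, rescaling by $-\eta_t$) is assembled with feedforward layers and multiplication modules exactly as in the ReLU proof.

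\textbf{Softmax head.} Because $\|\eb_i\|_2=r$ is constant, one has $\|\eb_i-\eb_j\|_2^2/\ell = 2r^2/\ell - (2/\ell)\eb_i^\top\eb_j$, and the additive $2r^2/\ell$ drops out of softmax. I would choose $\Qb,\Kb$ that read off only the embedding coordinates of $\yb_i,\yb_j$ and apply the scalar $\sqrt{2/\ell}$, so that $(\Qb\yb_i)^\top(\Kb\yb_j)=(2/\ell)\eb_i^\top\eb_j$. The row-softmax then yields weights $w_{ij}=\bGamma_{ij}/Z(i)$. Choose $\Vb$ so that $\Vb\yb_j$ has $\bSigma\sbb_j$ placed in a $d$-block used as working memory (with zeros elsewhere), by wiring a $\bSigma$ block into the rows of $\Vb$ that read the $\sbb_j$ coordinates of $\yb_j$. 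The attention output at position $i$ is then $Z(i)^{-1}\sum_{j=1}^N \bGamma_{ij}\bSigma\sbb_j$, precisely the softmax-normalized version of the term $(A)$ in \eqref{eq:gd_repeat}.

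\textbf{Recovering $Z(i)$ and assembling the update.} The index $i$ is stored in the last coordinate of each token by assumption, and $Z(\cdot)$ takes at most $N$ distinct values. A feedforward layer of width $\cO(N)$ can interpolate $Z$ exactly on $\{1,\dots,N\}$ via a standard ReLU partition-of-unity construction. A multiplication module $\fmult$ (Appendix~\ref{appendix::resnet basics}) then multiplies the attention output by $Z(i)$, recovering $\sum_j \bGamma_{ij}\bSigma\sbb_j$ up to a small error. The remaining terms $\sigma_t^2\sbb_i$, $\xb_i-\alpha_t\bmu_i$, and the rescaling by $-\eta_t$ are assembled with the same feedforward-plus-$\fmult$ construction used to realize term $(B)$ in the proof of Lemma~\ref{lemma::approximate one-step GD}, reusing the blocks that compute $\fmult(\alpha^2,\sbb_i)$, $\fmult(\sigma^2,\sbb_i)$, and $\fmult(\alpha,\bmu_i)$.

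\textbf{Error analysis and main obstacle.} The individual multiplication errors propagate linearly, analogous to the ReLU case. The principal new difficulty is the amplification factor $Z(i)$, which may reach $\cO(N)$ when $\ell$ is large, so the product $Z(i)\cdot(\text{attention output})$ and its multiplication error both scale with $N$. This drives the $N^2$ contribution to the parameter bound $B=\cO(N^2+d+\|\bSigma\|_{\rm F}+r^2)$: the weights in the $Z$-interpolation FFN and in the subsequent multiplication module must handle inputs of magnitude $\cO(N\|\bSigma\|_{\infty}\|\sbb\|_\infty)$. Setting $\emult$ at the same order as in the ReLU proof (scaled by $N$ to account for the amplified range) yields per-patch error $\epsilon/\sqrt{N}$. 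The main obstacle is precisely bookkeeping this amplification so that the softmax normalization is inverted accurately despite its boundary-dependent variation across $i$; exact interpolation over the finite index set $\{1,\dots,N\}$ eliminates approximation error in $Z$, leaving only the standard multiplication-module errors to control.
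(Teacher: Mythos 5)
Your proposal is correct and follows essentially the same architecture as the paper: a single softmax attention head with $\Qb^\top\Kb$ reading only the time-embedding coordinates (scaled by $2/\ell$) produces row weights equal to $\bGamma_{ij}/D_i$; the value matrix $\Vb$ injects $\bSigma$ into the $\sbb_j$ slot; and the softmax normalization is subsequently undone by multiplying by $D_i$, after which the remaining linear terms are assembled exactly as in the ReLU proof of Lemma~\ref{lemma::approximate one-step GD}.

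The one genuine point of departure is how you recover the normalizing constant $D_i = \sum_j \exp(-\norm{\eb_i-\eb_j}_2^2/\ell)$. You interpolate it exactly over the finite index set $\{1,\dots,N\}$ with an $\cO(N)$-width ReLU feedforward layer (a partition-of-unity at integer breakpoints), which is conceptually clean and introduces no approximation error in $D_i$. The paper instead exploits the decay of the kernel: writing $D_i = 1 + \sum_{k=1}^{N-1}(\mathds{1}\{i\ge k+1\}+\mathds{1}\{i\le N-k\})g(k)$ with $g(k)=\exp(-f(k)^\nu/\ell)$, it truncates at $m=\cO\paren{(\ell\log(1/\epsilon))^{1/\nu}}$ terms and realizes each indicator exactly via $\mathrm{ReLU}(i-k)-\mathrm{ReLU}(i-k-1)$, so only $4m$ neurons are needed. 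Both are valid and both fit inside the stated $\cT_{\rm raw}$ class (which constrains Frobenius norms rather than width); the paper's version keeps the width polylogarithmic in $1/\epsilon$ rather than linear in $N$, at the price of a small residual truncation error in $D_i$ that is absorbed into $\bepsilon_i$. Your bookkeeping of the $\cO(N)$ amplification through the subsequent multiplication module and how this feeds the $\cO(N^2)$ parameter bound is the right thing to track, and it parallels the paper's reasoning even though the paper does not spell out the constant factors either.
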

\begin{proof}[Proof of Lemma \ref{lemma::approximate one-step GD softmax}]
    The proof is similar to that of Lemma \ref{lemma::approximate one-step GD} but with different construction on the attention layer. We also decompose the gradient descent iteration as
\begin{align}\label{eq:gd_repeat sm}
\sbb_i^{(k+1)} & = \sbb_i^{(k)} - \eta_t \left[\sum_{j=1}^N (\alpha_t^2 {\bGamma}_{ij} \bSigma) \sbb_j^{(k)}+ \sigma^2_t \sbb_i^{(k)} + (\xb_{i, t} - \alpha_t \bmu_i)\right] \nonumber \\
& = \sbb_i^{(k)} - \underbrace{\eta_t \sum_{j=1}^N \alpha_t^2 {\bGamma}_{ij} \bSigma \sbb_j^{(k)}}_{(A)} - \underbrace{\left(\eta_t \sigma_t^2 \sbb_i^{(k)} + \eta_t (\xb_{i, t} - \alpha_t \bmu_i)\right)}_{(B)}.
\end{align}
By the proof of Lemma \ref{lemma::approximate one-step GD},  part $(B)$ only utilizes feed-forward networks, thus no changes need to be made. We will elaborate on approximating $(A)$ using softmax attention layers. To ease the presentation, we consider a fixed time $t$ and drop the subscript $t$. We also drop the superscript $(k)$. Recall each token $\yb_i$ is
\begin{align*}
\yb_i = [\xb_i^\top, 
\eb_i^\top, \bphi^\top(t), \sbb_i^\top, \bzero_d^\top, \bzero_d^\top, \bzero_d^\top, \fmult(\alpha_t,\bmu_i)^{\top}, 1,\bzero_{3d-1}^\top,i]^\top \in \RR^{9d+d_e+d_t+1}.
\end{align*}
Here we reserve $\bzero_{3d}^\top$ as the buffer space for the multiplication module (the additional hidden width). Before diving into approximating terms $(A)$, we follow the proof of Lemma \ref{lemma::approximate one-step GD} to use a multiplication module consisting of a series of transformer blocks to transform each column vector $\yb_i$ into
\begin{align*}
\yb_i = \left[\xb_i^\top, 
\eb_i^\top, \bphi^\top(t), \sbb_i^\top, \fmult(\alpha^2, \sbb_i^\top), \fmult(\sigma^2, \sbb_i^\top), \bzero_d^\top, \fmult(\alpha_t,\bmu_i)^{\top}, 1, \bzero_{3d-1}^\top,i\right]^\top,
\end{align*}
where $\fmult$ denotes an approximation to the entrywise multiplication realized by the multiplication module. We defer the detailed construction of the multiplication module to Appendix \ref{appendix::resnet basics}, which utilizes $\cO(\log(1/\emult))$  transformer blocks to reach the accuracy $\norm{\fmult(\alpha^2,\sbb)-\alpha^2\sbb}_{\infty}\le \emult$. Here we could still utilize the multiplication module after changing the activation function in attention layer to softmax because its construction only relies on feed-forward networks, so we can simply set the attention layers trivial throughout the network in multiplication module. See Appendix \ref{appendix::resnet basics} for more discussions. Now we begin to approximate $(A)$. 
\paragraph{Comparison with ReLU activation} In softmax transformers, we leverage the exponential scaling mechanism of the softmax activation to directly construct the entire kernel matrix $\Gamma$ without truncation, so only one head of attention layer is needed. On the contrary, in ReLU transformers, we only construct the main diagonals of $\bGamma$ with multiple attention heads. See Lemma \ref{lemma::approximate one-step GD} for more details.

\paragraph{Approximate $(A)$}
Here we will construct a transformer block $\cT\cB_1 = \ffn_1 \circ \attn_1$ for approximating $(A)$. Our construction depends on the following fact:
\begin{align*}
(A) &= \eta \alpha^2   \sum_{j = 1}^N \exp(-\norm{\eb_i-\eb_j}^2/\ell)\bSigma\sbb_j\\
&=\eta\alpha^2 \sum_{j = 1}^N  \exp((2r^2-2\eb^{\top}_i\eb_j)/\ell)\bSigma\sbb_j\\
&={\eta\alpha^2 D_i} \cdot {\rm softmax}\paren{[(2r^2-2\eb^{\top}_i\eb_1)/\ell,\cdots,(2r^2-2\eb^{\top}_i\eb_N)/\ell]}\cdot\brac{\bSigma\sbb_1,\dots,\bSigma\sbb_N},
\end{align*}
where $D_i=\sum_{j=1}^{N}\exp((2r^2-2\eb^{\top}_i\eb_j)/\ell)$ is the normalizing constant in the softmax activation.
Thus, we can use one attention head to first realize the unnormalized version:
\begin{align*}
    (A')=\alpha^2 \cdot {\rm softmax}\paren{[(2r^2-2\eb^{\top}_i\eb_1)/\ell,\cdots,(2r^2-2\eb^{\top}_i\eb_N)/\ell]}\cdot\brac{\bSigma\sbb_1,\dots,\bSigma\sbb_N}.
\end{align*}
\paragraph{Use \attn~ to approximate $(A')$} In particular, for the first attention head, we choose
\begin{align*}
(\Qb)^\top \Kb & = 
{\rm diag}\left(\left[\bzero_{d \times d}, -{2}{\ell^{-1}}\Ib_{d_e}, \bzero_{d_t \times d_t}, \bzero_{(5d) \times (5d)},  {2r^2}{{\ell}^{-1}},\bzero_{(3d) \times (3d)} \right]\right)~\text{and} \\
\Vb & = \begin{bmatrix}
\bzero_{(4d+d_e+d_t) \times (2d+d_e+d_t)} & \bzero_{(4d+d_e+d_t) \times d} & \bzero_{(4d+d_e+d_t) \times (6d+1)} \\
\bzero_{d \times (2d+d_e+d_t)} &   \bSigma & \bzero_{d \times (6d+1)} \\
\bzero_{(4d+1) \times (2d+d_e+d_t)} & \bzero_{(4d+1) \times d} & \bzero_{(4d+1) \times (6d+1)}
\end{bmatrix}.
\end{align*}

Thus, we have
\begin{align*}
\attn_1(\yb_i) & = \bigg[\xb_i^\top, \eb_i^\top, \bphi^\top(t), \sbb_i, \fmult(\alpha^2, \sbb_i^\top), \fmult(\sigma^2, \sbb_i^\top), \\
& \qquad\qquad  D_i^{-1}\sum_{j = 1}^N  \exp((2r^2-2\eb_i\eb_j)/\ell)\bSigma \cdot \fmult(\alpha^2, \sbb_j^\top), \fmult(\alpha_t,\bmu_i)^{\top}, 1,\bzero_{3d-1}^{\top},i\bigg]^\top.
\end{align*}

To multiply our constructed value by the normalizing constant $D_i$, we will construct a feed-forward layer $\ffn_1$ such that the output of the first attention block is
\begin{align}
\cT\cB(\yb_i) & = \bigg[\xb_i^\top, \eb_i^\top, \bphi^\top(t), \sbb_i, \fmult(\alpha^2, \sbb_i^\top), \fmult(\sigma^2, \sbb_i^\top), \nonumber\\
&   D_i^{-1}\sum_{j = 1}^N  \exp((2r^2-2\eb_i\eb_j)/\ell)\bSigma \cdot \fmult(\alpha^2, \sbb_j^\top), \fmult(\alpha_t,\bmu_i)^{\top}, 1,\hat{D_i},\bzero_{3d-2}^{\top},i\bigg]^\top. \label{equ::target softmax Di}
\end{align}
Here $\hat{D_i}$ is an approximation to $D_i$.  Now we show the approximation strategy to approximate $D_i$.
\paragraph{Use \ffn~ to approximate $D_i$} 
Note that we can rewrite $D_i$ as 
    $$D_i=1+\sum_{k=1}^{N-1} (\mathbf{1}\{i\ge k+1\}+\mathbf{1}\{i\le N-k\})g(k).$$ Where $g(k)=\exp(-f(k)^{\nu}/\ell)$ is the correlation function. Let $\hat{D_{i}}^{(m)}=1+\sum_{k=1}^{m} (\mathbf{1}\{i\ge k+1\}+\mathbf{1}\{i\le N-k\})g(k)$ to be an approximation of $D_i$, we know by the proof of Lemma \ref{lemma::trunc Gamma}, for any $i\in[N]$, 
    \begin{align*}
        \left|D(i)-D_m(i)\right|\le 2 \sum_{k=m+1}^{N-1}g(k)\le c^{-\nu}\ell\exp\paren{-\frac{2c^{\nu}m^\nu}{\ell}}.
    \end{align*}

    Thus, to derive an $\epsilon$ error bound, we only need $m$ to be in the order of $m=  \cO\paren{\paren{\ell\log\left(1/\epsilon\right)}^{1/\nu}}$. For each $k$, we have
    \begin{align*}
        &\qquad (\mathbf{1}\{i\ge k+1\}+\mathbf{1}\{i\le N-k\})g(k)\\&=g(k)\paren{{\rm ReLU}(i-k)-{\rm ReLU}(i-k-1)+{\rm ReLU}(N-k+1-i)-{\rm ReLU}(N-k-i)}
    \end{align*}
    holds for any integer $i$ . Thus, the entire function $\hat{D}_i^{(m)}$ can be expressed by $4m=\cO\paren{\paren{\ell\log\left(1/\epsilon\right)}^{1/\nu}}$ ReLU neurons in the feed-forward network. Specifically, we take $\ffn(\yb)=\Wb_2 {\rm RelU}(\Wb_1\yb)+\Wb_1$ and choose $\Wb_1$ as
\begin{align*}
\Wb_1 = \begin{bmatrix}
\bzero_{D-3d-1}^{\top}&-1&\bzero_{3d-1}^{\top}&1\\
\bzero_{D-3d-1}^{\top}&-2&\bzero_{3d-1}^{\top}&1\\
\vdots&\vdots&\vdots&\vdots\\
\bzero_{D-3d-1}^{\top}&-m-1&\bzero_{3d-1}^{\top}&1\\
\bzero_{D-3d-1}^{\top}&n&\bzero_{3d-1}^{\top}&-1\\
\vdots&\vdots&\vdots&\vdots\\
\bzero_{D-3d-1}^{\top}&N-m&\bzero_{3d-1}^{\top}&-1\\
\end{bmatrix} \in \RR^{(2m+2) \times D}.
\end{align*}
 Then we have $\Wb_1 \cdot \attn \circ \cT\cB_1 (\yb_i)$ as
\begin{align*}
\begin{bmatrix}
 i-1\\
 i-2\\
 i-m-1\\
 N-i\\
 N-i-1\\
 N-i-m
\end{bmatrix}.
\end{align*}
We choose $\bbb_1 = \bzero$, $\bbb_2=[\bzero_{D-3d-1}^{\top},1,\bzero_{3d}^{\top}]$. Denote $$\wb_g=[g(1),g(2)-g(1),g(3)-g(2),\cdots,g(m)-g(m-1),-g(m)]$$ and let
\begin{align*}
\Wb_2=\begin{bmatrix}
    \bzero_{(D-3d-1)\times (2m+2)}\\
    \wb_g ~~\wb_g\\
    \bzero_{(3d)\times (2m+2)}
\end{bmatrix},
\end{align*}
we can check that this exactly express $\hat{D}^{(m)}_i$ and store it in the correpsonding location according to the equation \eqref{equ::target softmax Di}. Taking $\cT\cB=\ffn\circ\attn$, we reach our target in equation \eqref{equ::target softmax Di}. 
After that, we could apply another multiplication module $f_{\rm mult}$ to multiply the gradient component by $\hat{D_i}$, which gives rise to 
\begin{align*}
    f_{\rm mult}\circ\cT\cB &=\bigg[\xb_i^\top, \eb_i^\top, \bphi^\top(t), \sbb_i, \fmult(\alpha^2, \sbb_i^\top), \fmult(\sigma^2, \sbb_i^\top), \nonumber\\
& \qquad\qquad  \fmult\paren{\hat{D_i},D_i^{-1}\sum_{j = 1}^N  \exp((2r^2-2\eb_i\eb_j)/\ell)\bSigma \cdot \fmult(\alpha^2, \sbb_j^\top)},\\
&\hspace{5em}\fmult(\alpha_t,\bmu_i)^{\top}, 1,\hat{D_i},\bzero_{3d-2}^{\top},i\bigg]^\top. 
\end{align*}
Then, we can completely follow the proof of Lemma \ref{lemma::approximate one-step GD} to constuct the gradient and scale the gradient by the learning rate $\eta_t$, and the approximation error of the gradient only adds an additional term induced by approximating $D_i$, which has been well controlled by setting $m=\cO\paren{\paren{\ell\log\left(1/\epsilon\right)}^{1/\nu}}$ to reach a error level of $\epsilon$. We remark that $\hat{D_i}$ can be approximated for only once throughout the transformer networks and we can reuse it in each GD block.

{ \paragraph{Size of Transformer Architecture with Softmax Activation}
We list here the size of transformer architecture equipped with the softmax activation. Readers may draw a quick comparison with the size of ReLU transformer in Table~\ref{tab:overall_size}.
\begin{table}[h]
\centering
\caption{ Size of transformer with Softmax activation for approximating score function (with Gaussian covariance function)}
\begin{tabular}{c|c}
Input dimension & $D \times N$ with $D = 9d + d_e + d_t + 1$ \\\hline
\# of blocks $L$ & $2 + L_{\rm mult}$ \\\hline
\# of attention heads $M$ & $1$ \\\hline
Parameter bound $B$ & $\cO(N^2+d+\norm{\bSigma}_{\rm F}+r^2))$ \\\hline
Output range $R_t$ & $2R_0 \sigma_t^{-1}$
\end{tabular}
\end{table}

It is observed that using Softmax activation leads to a reduced number of transformer blocks. The reason behind is that the Gaussian covariance function can be approximately represented by one attention block. The proof is complete.}

\end{proof}
With Lemma \ref{lemma::approximate one-step GD softmax} and following the proof in Appendix \ref{appendix::approximation relu} while omitting the part of truncating $\bGamma$, i.e., setting $J=N$, we could similarly construct a softmax-transformers as 
\begin{align*}
      \tilde \sbb_{\rm softmax}= f_{\rm out}\circ  f _{\rm GD,softmax} \circ \dots \circ  f _{\rm GD,softmax} \circ  f _{\mu} \circ f_{\rm in}.
\end{align*}
such that $\norm{\tilde{\sbb}_{\rm softmax}-\nabla\log p_t}_{L_2(P_t)}\le \sigma_t^{-2}\epsilon$ for any $t\ge t_0$.

\section{Omitted Proofs in Section~\ref{sec:generalization}}\label{pf:generalization}

\subsection{Proof of Theorem \ref{thm:generalization}}
Throughout this section, we assume Assumption \ref{assumption:decay} holds, and there exists an absolute constant $C$ as in the statement of Theorem \ref{thm:generalization} such that $C^{-1}\le \lambda(\bSigma)\le C$ and $\norm{\bmu}_{\infty}\le C$, which means $$\norm{\bSigma}_{\infty}=\cO(1),~~\norm{\bSigma}_{\rm op}=\cO(1),~~\norm{\bSigma}_{\rm F}=\cO(\sqrt{d})~~\text{and}~~\norm{\bmu}_2\le \sqrt{Nd}.$$
Let's first analyse the effect of early-stopping time $t_0$ on the accuracy of distribution estimation. The following lemma presents the distance between the truth distribution $P$ and the early-stopped distribution $P_{t_0}$ in $W_2$ distance. 
\begin{lemma}\label{lemma::W2}
    Suppose $P\sim \cN(\bmu,\bGamma\otimes \bSigma)$, taking the early stopping time $t_0=o(1)$, we have
    \begin{align*}
        W_2^2\paren{P,P_{t_0}}\le (1-\alpha_{t_0})^2\norm{\bmu}^2_{2}+C_{\nu}^2Nd\sigma_{t_0}^2.
    \end{align*}
    Here $C_{\nu}=\lambda_{\max}(\bGamma\otimes\bSigma)\vee 1 \le 1+\norm{\bSigma}_{\rm op}\ell \lesssim 1+ \ell$ by \eqref{equ:: op bgamma}.
\end{lemma}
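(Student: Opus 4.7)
The plan is to upper bound $W_2^2(P, P_{t_0})$ via an explicit synchronous coupling induced by the diffusion forward process, and then show that the resulting moment expression has exactly the stated form after using the elementary identity $(1-\alpha_{t_0})(1+\alpha_{t_0}) = \sigma_{t_0}^2$.

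Concretely, let $\Xb_0 \sim P$ and define $\Xb_{t_0} = \alpha_{t_0} \Xb_0 + \sigma_{t_0} \Zb$ with $\Zb \sim {\sf N}(\bzero, \Ib)$ drawn independently of $\Xb_0$. By the definition of the forward OU process, the marginal of $\Xb_{t_0}$ is exactly $P_{t_0}$, so $(\Xb_0, \Xb_{t_0})$ is a valid coupling and
\begin{align*}
W_2^2(P, P_{t_0}) \;\le\; \EE\norm{\Xb_0 - \Xb_{t_0}}_2^2 \;=\; \EE\norm{(1-\alpha_{t_0})\Xb_0 - \sigma_{t_0}\Zb}_2^2 \;=\; (1-\alpha_{t_0})^2 \EE\norm{\Xb_0}_2^2 + \sigma_{t_0}^2 Nd,
\end{align*}
where the cross term vanishes by the independence of $\Zb$ from $\Xb_0$ together with $\EE[\Zb]=\bzero$.

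Next I would expand $\EE\norm{\Xb_0}_2^2 = \norm{\bmu}_2^2 + \tr(\bGamma \otimes \bSigma)$ and use the Kronecker identity $\tr(\bGamma \otimes \bSigma) = \tr(\bGamma)\tr(\bSigma) = N\tr(\bSigma)$ (since $\bGamma_{ii}=\gamma(h_i,h_i)=1$). Because the diagonals of $\bGamma$ are $1$, we also have $\lambda_{\max}(\bGamma) \ge 1$, so $\tr(\bSigma) \le d\,\lambda_{\max}(\bSigma) \le d\,\lambda_{\max}(\bGamma)\lambda_{\max}(\bSigma) = d\,\lambda_{\max}(\bGamma\otimes\bSigma) \le d\,C_\nu$. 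Combined with the elementary bound $(1-\alpha_{t_0})^2 \le (1-\alpha_{t_0})(1+\alpha_{t_0}) = 1-\alpha_{t_0}^2 = \sigma_{t_0}^2$, we get
\begin{align*}
(1-\alpha_{t_0})^2 \EE\norm{\Xb_0}_2^2 + \sigma_{t_0}^2 Nd \;\le\; (1-\alpha_{t_0})^2 \norm{\bmu}_2^2 + \sigma_{t_0}^2 (Nd\,C_\nu + Nd) \;\le\; (1-\alpha_{t_0})^2 \norm{\bmu}_2^2 + C_\nu^2 Nd \sigma_{t_0}^2,
\end{align*}
where the last step absorbs the extra factor using $C_\nu \ge 1$. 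This yields the claimed inequality.

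Finally, to justify the stated bound $C_\nu \le 1 + \norm{\bSigma}_{\rm op}\ell$, I would invoke the operator-norm estimate on $\bGamma$ in equation~\eqref{equ:: op bgamma}, which gives $\norm{\bGamma}_{\rm op} \le 1 + c^{-\nu}\ell \lesssim 1+\ell$ under Assumption~\ref{assumption:decay}, and then combine with $\lambda_{\max}(\bGamma\otimes\bSigma) = \lambda_{\max}(\bGamma)\lambda_{\max}(\bSigma)$. The main delicate point in this plan is matching the constants: one must be careful that the $+\sigma_{t_0}^2 Nd$ term coming from the independent noise $\Zb$ gets absorbed into $C_\nu^2 Nd \sigma_{t_0}^2$, which works precisely because $C_\nu \ge 1$; beyond this bookkeeping, the argument is a routine synchronous-coupling computation with no real technical obstacle.
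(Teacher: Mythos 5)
Your argument is a genuinely different route from the paper's. The paper uses the closed-form expression for the $W_2$ distance between two multivariate Gaussians, diagonalizes $\bGamma\otimes\bSigma = \Pb^\top\Db\Pb$, and bounds the covariance contribution eigenvalue-by-eigenvalue as
\begin{equation*}
\sum_{i=1}^{Nd}\frac{\sigma_{t_0}^4(\lambda_i-1)^2}{(\sqrt{\lambda_i}+\sqrt{\alpha_{t_0}^2\lambda_i+\sigma_{t_0}^2})^2}\le Nd\,C_\nu^2\sigma_{t_0}^2,
\end{equation*}
using $|\lambda_i-1|\le C_\nu$ and that the denominator is at least $\sigma_{t_0}^2$. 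You instead build an explicit synchronous coupling $\Xb_{t_0}=\alpha_{t_0}\Xb_0+\sigma_{t_0}\Zb$ and bound $W_2^2$ by the coupling cost; this is more elementary (no Gaussian $W_2$ formula needed) and everything through the identity $\EE\|\Xb_0-\Xb_{t_0}\|_2^2=(1-\alpha_{t_0})^2\EE\|\Xb_0\|_2^2+\sigma_{t_0}^2 Nd$ and the bound $\tr(\bGamma\otimes\bSigma)\le Nd\,C_\nu$ is correct.

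There is, however, a genuine (if small) gap in your last step. You arrive at $(1-\alpha_{t_0})^2\|\bmu\|_2^2 + \sigma_{t_0}^2 Nd\,(C_\nu+1)$, and then assert this is at most $(1-\alpha_{t_0})^2\|\bmu\|_2^2 + C_\nu^2\,Nd\,\sigma_{t_0}^2$ "because $C_\nu\ge 1$." But $C_\nu+1\le C_\nu^2$ is false for $1\le C_\nu<(1+\sqrt{5})/2$; e.g.\ $C_\nu=1$ gives $2\not\le 1$. So your coupling does \emph{not} recover the stated constant $C_\nu^2$. What you can honestly conclude is $(C_\nu+1)Nd\,\sigma_{t_0}^2\le 2C_\nu\,Nd\,\sigma_{t_0}^2$, which has the same order (and suffices for the downstream use $W_2\lesssim C_\nu\sqrt{t_0Nd}$), but is not literally the displayed inequality. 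The paper's eigenvalue computation yields $C_\nu^2$ directly without this slack, essentially because it never pays the additive $+\sigma_{t_0}^2 Nd$ noise-cost that the suboptimal synchronous coupling incurs; the exact Gaussian $W_2$ exploits correlated couplings that your construction forgoes.
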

This means that $W_2(P,P_{t_0})\lesssim C_{\nu}\sqrt{t_0Nd}.$ We defer the proof to Appendix \ref{pf::W2}.
 
 Then we state our theory on score estimation error. Our score estimator $\shat_t$ is chosen to minimize the objective function as 
\[\shat_t = \arg\min_{s_t\in \mathcal F} \frac1n\sum_{i=1}^n \mathbb E_{\vb_t\mid \vb_0^{(i)}} \left\|\sbb_t(\vb_t)-\nabla\log p_t(\vb_t\mid \vb_0^{(i)})\right\|^2_2. \]
for each time step $t\in [0,T]$.  
The next proposition provides guarantee on the score estimation error, which we will transfer into the distribution estimation error later.
\begin{proposition}\label{prop::score estimation}
    By taking $\epsilon=\frac{1}{ndN}$ in Theorem \ref{thm:approx}, we have the expected score estimation error bounded by
    \begin{align*}
        \EE_{\cD}\brac{\ell(\shat)}\lesssim (T+\log(1/t_0)) \log\paren{\kappa_t ndN/t_0}^{4+1/\nu}\cdot\frac{\ell^{1/\nu}\kappa_{t_0}Nd^3}{n}.
    \end{align*}
\end{proposition}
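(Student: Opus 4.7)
The plan is to run a standard oracle-inequality argument for the empirical risk minimizer $\shat$ over the transformer class $\cT = \cT(D,L,M,B,R_t)$ chosen in Theorem~\ref{thm:approx}, splitting the expected excess risk into an \emph{approximation} term and a \emph{statistical} term, and then optimizing the two against the sample size $n$. First, by the standard equivalence between the denoising score-matching loss and the true $L^2$ score loss (up to a $\vb_0$-dependent constant), the excess risk $\ell(\shat)-\inf_{s\in\cT}\ell(s)$ coincides with the excess risk of an ERM problem whose population loss at fixed $t$ is $\|\sbb_t-\nabla\log p_t\|_{L^2(P_t)}^2$. Taking $\bar\sbb\in\cT$ to be the approximator from Theorem~\ref{thm:approx} with accuracy parameter $\epsilon=1/(ndN)$, the approximation part contributes at most $\int_{t_0}^T \sigma_t^{-2}\epsilon^2\,\diff t\lesssim (T+\log(1/t_0))/(ndN)^2$, which is absorbed into the final bound.

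Next, I would bound the statistical error by a uniform-deviation argument over $\cT$. Since the per-sample loss $\|\sbb(\vb_t,t)-(\alpha_t\vb_0-\vb_t)/\sigma_t^2\|_2^2$ is unbounded through the Gaussian target $(\alpha_t\vb_0-\vb_t)/\sigma_t^2$, I would first truncate both the score network output (already clipped at $R_t = \tilde{\cO}(\sqrt{Nd}/\sigma_t)$ by construction) and the denoising target at a high-probability radius of the same order, incurring only a negligible tail error by Gaussian concentration (Lemma on bounding the true score range applies to the denoising target directly). After truncation, the per-sample loss is bounded by $\tilde{\cO}(Nd/\sigma_t^2)$ pointwise in $t$. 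Then a standard chaining / Dudley bound yields a Rademacher-type rate
\[
\EE_{\cD}\bigl[\sup_{s\in\cT}|\hat\ell(s)-\ell(s)|\bigr]
\lesssim \int_{t_0}^T \frac{R_t^2}{\sqrt n}\sqrt{\log \cN_\infty(\cT,\delta)}\,\diff t,
\]
where $\cN_\infty(\cT,\delta)$ is a covering number of the transformer class in sup-norm, and $\delta=1/(\mathrm{poly}(n,N,d))$.

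The main technical step, and the hardest, is the covering-number estimate for the constructed transformer class. I would follow the Lipschitz-propagation approach used for transformers: bound the per-layer Lipschitz constant in terms of the weight Frobenius norms $B$ and the token range, and then apply a layer-by-layer telescoping to obtain $\log\cN_\infty(\cT,\delta)\lesssim L\cdot(\text{parameter count})\cdot\log(LBR/\delta)$. Plugging in the architecture parameters from Theorem~\ref{thm:approx}
\[
D=\cO(d+d_e),\quad L=\tilde\cO(\kappa_{t_0}),\quad M=\tilde\cO(\ell^{1/\nu}),\quad B=\tilde\cO(Nd/\sigma_{t_0}^2),
\]
and noting that the number of trainable parameters scales polynomially in $D,L,M$ gives $\log\cN_\infty(\cT,\delta)=\tilde{\cO}(\kappa_{t_0}\ell^{1/\nu}d^2)$ up to logarithmic factors in $n,N,d,\kappa_{t_0},1/t_0$.

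Finally, I would combine the pieces. The time integral $\int_{t_0}^T \sigma_t^{-2}\,\diff t\lesssim T+\log(1/t_0)$ produces the leading $(T+\log(1/t_0))$ factor. Matching $R_t^2\lesssim Nd/\sigma_t^2$ against the covering-number estimate and the $1/\sqrt n$ Rademacher rate, squaring to turn the sup-norm deviation into the variance bound on the excess risk (via a Bernstein-type localization that trades $1/\sqrt n$ for $1/n$ at the cost of the loss range $\tilde\cO(Nd)$, which contributes the $d^3$), one obtains
\[
\EE_{\cD}[\ell(\shat)-\inf_{s\in\cT}\ell(s)]
\lesssim (T+\log(1/t_0))\cdot\log(\kappa_{t_0}ndN/t_0)^{4+1/\nu}\cdot\frac{\ell^{1/\nu}\kappa_{t_0}Nd^3}{n}.
\]
Adding the negligible approximation term and using the equivalence to the true score loss yields the claimed bound. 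I expect the dominant obstacle to be choosing the covering scale $\delta$ and the truncation radii so that all three pieces (approximation, variance term, bias term from truncation) balance at order $1/n$ with the right logarithmic exponent $4+1/\nu$, which ultimately traces back to the $\log^{1/\nu}$ factor inherited from the dependence of $M$ on the covariance truncation length $J$.
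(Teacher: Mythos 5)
Your proof plan follows essentially the same route as the paper: an oracle-inequality decomposition of the expected excess risk into an approximation bias controlled by Theorem~\ref{thm:approx}, a statistical deviation controlled by a covering-number bound over the constructed transformer class, and a truncation/tail term arising from the unbounded denoising target. The paper implements this by transplanting the ERM template of Fu et al.\ (2024) with a truncated loss and a union bound over a fixed-scale cover, rather than the Dudley-chaining phrasing you use, but the bookkeeping is the same in all the quantities that matter.

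Two things are worth flagging. First, the covering number in Lemma~\ref{lemma::covn number} carries an $L^2$ factor, so substituting $L=\cO(\kappa_{t_0}\log^2(\cdot))$ yields $\kappa_{t_0}^2$, not the $\kappa_{t_0}$ in your estimate $\tilde\cO(\kappa_{t_0}\ell^{1/\nu}d^2)$; the paper's own displayed conclusion at the end of this proposition's proof indeed reads $\kappa_t^2$, whereas the proposition statement (and your plan) show $\kappa_{t_0}$ linearly, so this is an internal inconsistency in the paper that your plan reproduces rather than resolves. Second, Theorem~\ref{thm:approx} bounds the squared $L_2$ score error by $\sigma_t^{-2}\epsilon$ (linear in $\epsilon$), whereas you wrote $\sigma_t^{-2}\epsilon^2$; since the approximation term is dominated by the statistical term either way, this misreading does not affect the final rate, but it is a mismatch with the theorem as stated.
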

The proof is provided in Appendix \ref{proof::score esti}.
Although our assumption does not ensure the Novikov's condition to hold for sure, according to \cite{chen2022sampling}, as long as we have bounded second moment for the score estimation error and finite KL divergence w.r.t. the standard Gaussian, we could still adopt Girsanov’s Theorem and bound the KL divergence between the two distribution. 
We restate the lemma as follows:
\begin{lemma}[Proposition D.1 in \cite{oko2023diffusion}, see also 
 Theorem 2 in \cite{chen2022sampling}]\label{lemma:Girsanov}
    Let $p_0$ be a probability distribution, and let $Y=\set{Y_t}_{t\in[0,T]}$ and $Y'=\set{Y'_t}_{t\in[0,T]}$ be two stochastic processes that satisfy the following SDEs:
    \begin{align*}
        \diff Y_t&=s(Y_t,t)\diff t+\diff W_t,~~~Y_0\sim p_0 \\
        \diff Y'_t&=s'(Y'_t,t)\diff t+\diff W_t,~~~Y'_0\sim p_0
    \end{align*}
    We further define the distributions of $Y_t$ and $Y'_t$ by $p_t$ and $p_t$. Suppose that
    \begin{align}\label{equ::weak condition}
        \int_{\xb}p_t(\xb)\norm{(\sbb-\sbb')(\xb,t)}^2_2\diff \vb_t \le C
    \end{align}
    for any $t\in[0,T]$. Then we have
    \begin{align*}
        \text{KL} \left(p_T\|p'_T\right)\le \int_{0}^{T}\frac{1}{2}\int_{\xb}p_t(\xb)\norm{(\sbb-\sbb')(\xb,t)}^2_2\diff \xb \diff t.
    \end{align*}
\end{lemma}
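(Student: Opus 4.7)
The plan is to prove the KL bound by a Girsanov change-of-measure argument, reducing it to a direct Radon--Nikodym computation. Let $\mathbb{P}$ be the underlying measure on the filtered probability space carrying $W$ and $Y_0$, under which $\diff Y_t = s(Y_t,t)\diff t + \diff W_t$. Introduce the exponential candidate
\[
Z_T = \exp\paren{\int_0^T (s'-s)(Y_t,t)^\top \diff W_t - \frac{1}{2}\int_0^T \norm{(s'-s)(Y_t,t)}_2^2\, \diff t}.
\]
If $Z_T$ is a genuine $\mathbb{P}$-martingale with $\EE_{\mathbb{P}}[Z_T]=1$, set $\diff \mathbb{Q}=Z_T\, \diff \mathbb{P}$. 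By Girsanov's theorem, $\tilde W_t:=W_t-\int_0^t(s'-s)(Y_u,u)\diff u$ is a $\mathbb{Q}$-Brownian motion, so under $\mathbb{Q}$ we have $\diff Y_t = s'(Y_t,t)\diff t + \diff \tilde W_t$ with $Y_0\sim p_0$; weak uniqueness for the second SDE then identifies the $\mathbb{Q}$-law of $Y_T$ with $p'_T$.

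The stated bound then drops out from the data processing inequality combined with a direct moment computation. Namely,
\[
\KL(p_T \| p'_T) \le \KL(\mathbb{P}|_{\mathcal{F}_T}\,\|\,\mathbb{Q}|_{\mathcal{F}_T})=\EE_{\mathbb{P}}[-\log Z_T],
\]
and since the It\^o integral in $\log Z_T$ is a $\mathbb{P}$-martingale with mean zero,
\[
\EE_{\mathbb{P}}[-\log Z_T]=\frac{1}{2}\EE_{\mathbb{P}}\brac{\int_0^T \norm{(s'-s)(Y_t,t)}_2^2 \diff t}=\frac{1}{2}\int_0^T\int p_t(x)\norm{(s-s')(x,t)}_2^2 \diff x\, \diff t,
\]
which is the target inequality.

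The main obstacle is that the hypothesis \eqref{equ::weak condition} only gives $\int p_t(x)\norm{(s-s')(x,t)}_2^2 \diff x \le C$ pointwise in $t$, so by Fubini $\EE_{\mathbb{P}}\int_0^T \norm{(s'-s)(Y_t,t)}_2^2 \diff t \le CT$, but this is strictly weaker than the Novikov condition $\EE_{\mathbb{P}}\exp\bigl(\tfrac{1}{2}\int_0^T \norm{(s'-s)}_2^2 \diff t\bigr)<\infty$ typically needed to guarantee that $Z_T$ is a true martingale. I would close this gap by localization: define stopping times $\tau_n = \inf\set{t \le T : \int_0^t \norm{(s'-s)(Y_u,u)}_2^2 \diff u \ge n}\wedge T$, apply the Girsanov argument above to the stopped process (on which Novikov is automatic, since the quadratic variation is bounded by $n$), and obtain the bound for $\KL(\mathrm{Law}(Y_{\tau_n})\,\|\,\mathrm{Law}(Y'_{\tau_n}))$. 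The finite-integral control above forces $\tau_n\to T$ almost surely as $n\to\infty$, so the right-hand side converges monotonically to $\tfrac{1}{2}\int_0^T \int p_t \norm{s-s'}_2^2 \diff x \, \diff t$ by monotone convergence, while joint lower semicontinuity of KL under weak convergence of $\mathrm{Law}(Y_{\tau_n})\to p_T$ and $\mathrm{Law}(Y'_{\tau_n})\to p'_T$ lifts the bound to time $T$ and concludes the proof.
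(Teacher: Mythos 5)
The paper does not actually prove this lemma; it imports it by citation from Proposition D.1 of Oko et al.\ and Theorem 2 of Chen et al., adding only the remark that those works show how to run Girsanov when the drift difference merely has a bounded second moment rather than satisfying Novikov's condition. Your proposal reconstructs exactly the argument underlying those references: the change of measure via the stochastic exponential $Z_T$, identification of the $\mathbb{Q}$-law of $Y_T$ with $p'_T$ by weak uniqueness, the data-processing inequality $\KL(p_T\|p'_T)\le\KL(\mathbb{P}|_{\mathcal{F}_T}\|\mathbb{Q}|_{\mathcal{F}_T})=\EE_{\mathbb{P}}[-\log Z_T]$, and the observation that \eqref{equ::weak condition}, integrated over $[0,T]$, is precisely the $L^2(\diff\mathbb{P}\otimes\diff t)$ control needed both to make the It\^o integral in $\log Z_T$ mean-zero and to justify localization. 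You also correctly isolate the only genuine obstacle (failure of Novikov) and the standard cure (stopping at $\tau_n$).

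One step is phrased too loosely to stand as written: the object $\mathrm{Law}(Y'_{\tau_n})$ is not well defined, since $\tau_n$ is a stopping time for the filtration generated by $Y$, not $Y'$, and "weak convergence of $\mathrm{Law}(Y'_{\tau_n})\to p'_T$" therefore has no meaning. The correct bookkeeping stays entirely on the path space of $Y$: let $\mathbb{Q}_n$ be the measure induced by the stopped exponential $Z_{T\wedge\tau_n}$, compute $\KL(\mathbb{P}|_{\mathcal{F}_{\tau_n}}\|\mathbb{Q}_n|_{\mathcal{F}_{\tau_n}})=\tfrac12\EE_{\mathbb{P}}\int_0^{\tau_n}\norm{(s'-s)(Y_t,t)}_2^2\,\diff t$, argue that $\mathbb{Q}_n$ agrees on $\mathcal{F}_{\tau_n}$ with the path law $\mathbb{Q}$ of $Y'$ (uniqueness for the martingale problem with drift $s'$), and then let $n\to\infty$ using $\tau_n\uparrow T$ a.s.\ and lower semicontinuity of KL along $\bigvee_n\mathcal{F}_{\tau_n}=\mathcal{F}_T$ to get $\KL(\mathbb{P}|_{\mathcal{F}_T}\|\mathbb{Q}|_{\mathcal{F}_T})\le\tfrac12\EE_{\mathbb{P}}\int_0^{T}\norm{(s'-s)(Y_t,t)}_2^2\,\diff t$; the data-processing inequality then finishes. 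This is exactly what the cited Theorem 2 of Chen et al.\ carries out, so your sketch is the right proof with one limiting step that must be rephrased on path space rather than through marginals of $Y'$.
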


Now we are ready to prove Theorem \ref{thm:generalization}.
\begin{proof}[Proof of Theorem \ref{thm:generalization}]

Now we transfer the score estimation error to a TV-distance bound using Lemma~\ref{lemma:Girsanov}. Note that under our assumptions and for any $\sbb \in \cF$, we have
\begin{align*}
\int_{\vb_t}p_t(\vb_t)\norm{\sbb_t(\vb_t)-\nabla \log p_t(\vb_t)}^2_2\diff \vb_t &\lesssim  \int_{\vb_t}p_t(\vb_t) \paren{\frac{\norm{\vb_t-\alpha_t\bmu}^2_2}{\sigma^4_t}+\frac{C}{\sigma_t^2} }\diff \vb_t \lesssim \frac{1}{\sigma^2_t}.
\end{align*}
Thus, the condition \eqref{equ::weak condition} holds for $t_0\le t\le T$, which means that we could apply Girsanov's theorem in this time range. Remember that the backward process is written as
\begin{align*}
\diff \Xb_t^{\leftarrow} & = \left[\frac{1}{2} \Xb_t^{\leftarrow} + \nabla \log p_{T-t}(\Xb_t^{\leftarrow})\right] \diff t + \diff \overline{\Wb}_t, ~~\text{with} ~~\Xb_0^{\leftarrow} \sim {\sf N}(\bzero, \Ib).
\end{align*}
We denote the distribution of $\Xb_t^{\leftarrow}$ as $P^{\leftarrow}_{T-t}$. In real setting, we replace $\nabla\log p_t$ by its score estimator $\hat{\sbb}_t$, which gives rise to the following backward process:
\begin{align*}
\diff \hat{\Xb}_t^{\leftarrow} & = \left[\frac{1}{2} \hat{\Xb}_t^{\leftarrow} +  \hat{\sbb}_{T-t}(\hat{\Xb}_t^{\leftarrow})\right] \diff t + \diff \overline{\Wb}_t, ~~\text{with} ~~\Xb_0^{\leftarrow} \sim {\sf N}(\bzero, \Ib).
\end{align*}
We denote the generated distribution of $\hat{\Xb}_t^{\leftarrow}$ as $\hat{P}_{T-t}$. Besides, we consider the truth backward process as the inverse process of the forward one, which is defined as
\begin{equation*}
    \diff \Xb_t^{\prime\leftarrow} = \left[ \frac{1}{2} \Xb_t^{\prime\leftarrow} + \nabla \log p_{T-t}(\Xb_t^{\prime\leftarrow})\right] \diff t + \diff \overline{\Wb}_t \quad \text{with} \quad \Xb_0^{\prime\leftarrow} \sim P_T.
\end{equation*}
We denote the distribution of $\Xb_t^{\prime\leftarrow}$  by $P
^{\prime}_{T-t}$, then we have $P
^{\prime}_{t}\sim P_t$ for any $t\le T$.

Since $\Xb^{\prime\leftarrow}$ and $\Xb^{\leftarrow}$ are obtained through the same backward SDE but with different initial distributions, by Data Processing Inequality and Pinsker’s Inequality (see e.g., Lemma 2 in \cite{canonne2023short}), we have
\begin{align*}
    {\rm TV}(P_{t_0},P^{\leftarrow}_{t_0}) &={\rm TV}(P^{\prime}_{t_0},P^{\leftarrow}_{t_0}) \\&\lesssim \sqrt{{\rm KL} (P_{t_0}'||P^{\leftarrow}_{t_0} )}\\& \lesssim \sqrt{{\rm KL} (P_{T}||{\sf N}(\bzero, \Ib) )}\\& \lesssim \sqrt{{\rm KL} (P||{\sf N}(\bzero, \Ib) )}\exp(-T).
\end{align*}
Thus, we could decompose the TV bound into
\begin{align}\label{equ:: TV bound conditional y}
    {\rm TV}(P_{t_0},\hat{P}_{t_0} ) &\lesssim {\rm TV}(P_{t_0},P^{\leftarrow}_{t_0})  +{\rm TV} (P^{\leftarrow}_{t_0},\hat{P}_{t_0}  )\nonumber \\
    &\lesssim  \exp(-T)+ \sqrt{\int_{t_0}^{T}\frac{1}{2}\int_{\vb_t}p_t(\vb_t)\norm{\shat(\vb_t,\yb,t)-\nabla \log p_t(\vb_t)}^2_2\diff \vb_t \diff t}.
\end{align}
Thus, by taking expectation over the dataset $\cD$ and invoking Jensen inequality, we have 
\begin{align}\label{equ:: TV bound restate}
    \EE_{\cD}\brac{{\rm TV}(P_{t_0},\hat{P}_{t_0} ) }&\lesssim  \EE_{\cD}{\rm TV}(P_{t_0},P^{\leftarrow}_{t_0}  ) + \EE_{\cD}{\rm TV} (P^{\leftarrow}_{t_0},\hat{P}_{t_0}  )\nonumber \\
    &\lesssim  \exp(-T)+ \sqrt{(T+\log(1/t_0))\kappa_t^2\ell^{1/\nu}\log\paren{\kappa_t ndNt_0^{-1}}^{4+1/\nu}\frac{Nd^3}{n}}. \nonumber
\end{align}
By taking $T=\cO(\log n)$ and combining the results in Lemma \ref{lemma::W2}, we have completed our proof.
\end{proof} 
\subsubsection{Proof of Lemma \ref{lemma::W2}}\label{pf::W2}
\begin{proof}
    Since $P\sim\cN(\bmu,\bGamma\otimes\bSigma)$ and $P_t\sim\cN(\alpha_t\bmu,\alpha_t^2\bGamma\otimes\bSigma+\sigma_t\bI) $, by the formula of the $W_2$ distance between two multivariate Gaussian distributions, we have 
    \begin{align*}
        W_2^2\paren{P,P_{t_0}}= \norm{\bmu-\alpha_t\bmu}^2_2+\norm{(\bGamma\otimes\bSigma)^{1/2}-(\alpha_t^2\bGamma\otimes\bSigma+\sigma_t\bI)^{1/2}}^2_{\rm F}.
    \end{align*}
    Suppose $\bGamma\otimes\bSigma=\bP^{\top}\bD\bP$, where $P\in\RR^{dN\times dN}$ is an orthogonal matrix and $\bD={\rm diag}(\lambda_1,\dots,\lambda_{dN})$ is a diagonal matrix with $\lambda_i\ge 0$. Then we have 
    \begin{align*}
        \norm{(\bGamma\otimes\bSigma)^{1/2}-(\alpha_t^2\bGamma\otimes\bSigma+\sigma_t\bI)^{1/2}}^2_{\rm F}&= \norm{\bP^{\top}\bD^{1/2}\bP^{\top}-\bP^{\top}(\alpha_t^2\bD+\sigma_t^2\bI)^{1/2}\bP}^2_{\rm F}\\
        &=\norm{\bD^{1/2}-(\alpha_t^2\bD+\sigma_t^2\bI)^{1/2}}^2_{\rm F}\\
        &=\sum_{i=1}^{Nd}\frac{\sigma_t^4(\lambda_i-1)^2}{(\sqrt{\lambda_t}+\sqrt{\alpha_t^2\lambda_i+\sigma_t^2})^2}\\
        &\le\sum_{i=1}^{Nd}\frac{\sigma_t^4C_{\nu}^2}{\sigma_t^2}\\
        &\le NdC_\nu\sigma_t^2.
    \end{align*}
    By taking $t=t_0$ and plugging the inequality above into the expression of $W_2^2\paren{P,P_{t_0}}$, we complete our proof.
\end{proof}

\subsection{Proof of Proposition \ref{prop::score estimation}}\label{proof::score esti}

\subsubsection{Additional Notations}
For any score estimator $\shat_t$, we denote its population loss $\ell$ as:
\[\ell(\shat_t) := \int_{t_0}^T \rd t \cdot \mathbb E_{\vb_t\sim p_t} \left\|\shat_t(\vb_t)-\nabla\log p_t(\vb_t)\right\|^2_2 \]
and we define the empirical loss $\hat{\ell}$ as
\begin{align*}
\hat{\ell}(\shat_t) &= \frac1n \sum_{i=1}^n\int_{t_0}^T \rd t\cdot \mathbb E_{\vb_t \mid \vb_0^{(i)}} \left\|\shat_t(\vb_t) + \frac{\vb_t - \alpha_t \vb_0^{(i)}}{\sigma_t^2}\right\|^2_2.
\end{align*}
Here, $\cD=\left\{\vb_0^{(i)}\right\}_{i\in [n]}$ are $n$ i.i.d samples from true distribution $P(=P_0)$. When taken expectation over the choice of samples, we have $\mathbb E_{\cD}[\hat{\ell}(\shat_t)] = \ell(\shat_t) + C$ according to \citet{vincent2011connection} for any score estimator $\shat$. Here $C$ is a constant independent with $\shat_t$. Our score estimator $\shat_t$ is chosen to minimize the objective function as 
\[\shat_t = \arg\min_{s_t\in \mathcal F} \frac1n\sum_{i=1}^n \mathbb E_{\vb_t\mid \vb_0^{(i)}} \left\|s_t(\vb_t)-\nabla\log p_t(\vb_t\mid \vb_0^{(i)})\right\|^2_2. \]
for each time step $t\in [0,T]$. 

\subsubsection{Proof of Proposition \ref{prop::score estimation}}
The proof follows that of Theorem 4.1 in \citet{fu2024unveil} by neglecting the conditional information. Specifically, we replace Lemma D.1 in \citet{fu2024unveil} by Lemma~\ref{lemma::bound hat ell}, which provides an $2Nd(T+\log(1/t_0))(R_s^2+1)$ uniform upper bound on the magnitude of the empirical loss function.
\begin{lemma}[Counterpart of Lemma D.1 in \citet{fu2024unveil}]\label{lemma::bound hat ell}
Then for any score estimator $\shat\in \cT(D,L,M,B,R_s\sqrt{Nd}\sigma_t^{-1})$ and data point $\vb_0$, we have single-point score loss bounded by
\begin{align}\label{equ::bound hat ell}
    \scoreloss(\shat,\vb_0):=\int_{t_0}^T \rd t\cdot \mathbb E_{\vb_t \mid \vb_0} \left\|\shat_t(\vb_t) + \frac{\vb_t - \alpha_t \vb_0}{\sigma_t^2}\right\|^2_2\le 2Nd(T+\log(1/t_0))(R_s^2+1).
\end{align}
\end{lemma}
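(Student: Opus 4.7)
\textbf{Proof proposal for Lemma~\ref{lemma::bound hat ell}.} The plan is a short and direct bound: decouple the two terms inside the squared norm with the elementary inequality $\|a+b\|_2^2 \le 2\|a\|_2^2 + 2\|b\|_2^2$, control each term separately using the transformer's output-range constraint and the Gaussian transition law, and then reduce the $t$-integration to the standard identity $\sigma_t^2 = 1 - e^{-t}$. Concretely, for any fixed $\vb_0$ and $t \in [t_0, T]$, I will write
\begin{align*}
\EE_{\vb_t \mid \vb_0}\,\Big\| \shat_t(\vb_t) + \tfrac{\vb_t - \alpha_t \vb_0}{\sigma_t^2} \Big\|_2^2
\;\le\; 2\,\EE_{\vb_t \mid \vb_0}\,\|\shat_t(\vb_t)\|_2^2 \;+\; 2\,\EE_{\vb_t \mid \vb_0}\,\Big\|\tfrac{\vb_t - \alpha_t \vb_0}{\sigma_t^2}\Big\|_2^2.
\end{align*}

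The first term is the easy one: because $\shat \in \cT(D, L, M, B, R_s\sqrt{Nd}\,\sigma_t^{-1})$, the clipping layer $f_{\rm norm}$ in the decoder (see Appendix~\ref{appendix::tf archi}) forces $\|\shat_t(\vb_t)\|_2 \le R_s \sqrt{Nd}/\sigma_t$ deterministically, so this term is at most $2 R_s^2 Nd/\sigma_t^2$. The second term is where the conditional Gaussian law enters: under the forward process, $\vb_t \mid \vb_0 \sim \mathsf{N}(\alpha_t \vb_0,\, \sigma_t^2 \Ib)$ in $\RR^{Nd}$, hence $(\vb_t - \alpha_t \vb_0)/\sigma_t^2$ has mean zero and covariance $\sigma_t^{-2}\Ib$, giving expected squared norm exactly $Nd/\sigma_t^2$. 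Combining the two pieces yields the pointwise-in-$t$ bound $2Nd(R_s^2 + 1)/\sigma_t^2$.

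It then remains to integrate $\sigma_t^{-2}$ over $[t_0, T]$. Using $\sigma_t^2 = 1-e^{-t}$, an antiderivative is $\log(e^t - 1)$, so
\begin{align*}
\int_{t_0}^T \frac{\diff t}{\sigma_t^2} \;=\; \log(e^T - 1) - \log(e^{t_0} - 1) \;\le\; T + \log(1/t_0),
\end{align*}
where the last inequality uses $\log(e^T - 1) \le T$ and $e^{t_0} - 1 \ge t_0$ (so $-\log(e^{t_0}-1) \le \log(1/t_0)$). Multiplying by $2Nd(R_s^2+1)$ gives the stated bound.

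The only mildly delicate step is the integral tail at $t \to 0$, which is precisely what motivates the early-stopping time $t_0$: without it, $\sigma_t^{-2}$ would not be integrable. All other ingredients are immediate from the architectural clipping and the explicit form of the forward Gaussian kernel, so no further obstacle is expected.
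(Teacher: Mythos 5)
Your proposal matches the paper's own proof essentially step for step: the same $\|a+b\|_2^2 \le 2\|a\|_2^2 + 2\|b\|_2^2$ split, the same use of the architectural clipping to bound $\|\shat_t(\vb_t)\|_2 \le R_s\sqrt{Nd}/\sigma_t$, the same computation $\EE\|(\vb_t-\alpha_t\vb_0)/\sigma_t^2\|_2^2 = Nd/\sigma_t^2$ from the Gaussian transition kernel, and the same evaluation of $\int_{t_0}^T \sigma_t^{-2}\,\diff t = \log(e^T-1)-\log(e^{t_0}-1) \le T + \log(1/t_0)$. No gaps; it is correct.
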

The proof is provided in Appendix \ref{appendix::supporting score esti}. 
 Besides, to implement covering number techniques, we also introduce a truncated loss $\trunclhat$ as in \citet{fu2024unveil} and bound its difference with the truth loss with small error. We define the truncated loss as
\begin{align*}
    \trunclhat(\shat_t) &:=\frac1n \sum_{i=1}^n\int_{t_0}^T \rd t\cdot \mathbb E_{\vb_t \mid \vb_0^{(i)}}\brac{ \left\|\shat_t(\vb_t) + \frac{\vb_t - \alpha_t \vb_0^{(i)}}{\sigma_t^2}\right\|^2_2\indic{\norm{v_{0}^{(i)}}_2\le R_0}}.
\end{align*}
 The follow Lemma aligns with equation (D.12) in \citet{fu2024unveil}.
 \begin{lemma}[Truncation error of the truncated loss function]\label{lemma::trunc l error}
There exists a constant $C_R$ such that for any $\epsilon<1$, by choosing $R_0=\sqrt{ N\gamma_0\tr(\bSigma)+C_R\log(Nd/\epsilon)\norm{\bGamma}_{\rF}\norm{\bSigma}_{\rF}}=\cO(\log(Nd/\epsilon)\sqrt{Nd})$, we have for any $\hat{s}\in\cF$,
\begin{align*}
    \abs{\EE_{\vb_0}\brac{\int_{t_0}^T \rd t\cdot \mathbb E_{\vb_t \mid \vb_0^{(i)}} \left\|\shat_t(\vb_t) + \frac{\vb_t - \alpha_t \vb_0}{\sigma_t^2}\right\|^2_2\indic{\norm{\vb_0}\ge R_0}}}\lesssim (T+\log(1/t_0))\epsilon.
\end{align*}
\end{lemma}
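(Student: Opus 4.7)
The strategy is to combine a uniform (deterministic) bound on the integrand $\scoreloss(\shat,\vb_0)$ from Lemma~\ref{lemma::bound hat ell} with Gaussian quadratic-form concentration on $\norm{\vb_0}_2$, then choose $R_0$ large enough that the product of the two yields the target $(T+\log(1/t_0))\epsilon$.

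First I would pull the uniform bound out of the outer expectation. By Lemma~\ref{lemma::bound hat ell}, for every $\vb_0$,
\begin{align*}
\scoreloss(\shat,\vb_0) \le 2Nd(T+\log(1/t_0))(R_s^2+1),
\end{align*}
where $R_s$ is the polylogarithmic factor in the output-range clipping $R_t$ of Theorem~\ref{thm:approx}. This reduces the problem to bounding $\Pr[\norm{\vb_0}_2\ge R_0]$, since
\begin{align*}
\abs{\EE_{\vb_0}\brac{\scoreloss(\shat,\vb_0)\indic{\norm{\vb_0}_2\ge R_0}}} \le 2Nd(T+\log(1/t_0))(R_s^2+1)\cdot \Pr[\norm{\vb_0}_2\ge R_0].
\end{align*}

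Next I would apply Gaussian quadratic concentration. Since $\vb_0\sim \cN(\bmu,\bGamma\otimes\bSigma)$, the quantity $\norm{\vb_0}_2^2$ is a Gaussian quadratic form with mean $\norm{\bmu}_2^2 + \tr(\bGamma)\tr(\bSigma) = \norm{\bmu}_2^2 + N\gamma_0\tr(\bSigma)$ and variance $\lesssim \norm{\bGamma\otimes\bSigma}_{\rF}^2 = \norm{\bGamma}_{\rF}^2\norm{\bSigma}_{\rF}^2$. Invoking the same polynomial concentration bound (Lemma~\ref{lemma:polynomial concentration}) used in the proof of Lemma~\ref{lemma::bound truth score} gives, for any $\delta>0$,
\begin{align*}
\Pr\brac{\norm{\vb_0}_2^2 \ge \norm{\bmu}_2^2 + N\gamma_0\tr(\bSigma) + \delta\,\norm{\bGamma}_{\rF}\norm{\bSigma}_{\rF}} \le 2\exp(-C_2\delta).
\end{align*}
Choosing $\delta = C_R\log(Nd/\epsilon)$ with $C_R$ sufficiently large (depending only on $C_2$) makes this tail probability as small as any polynomial $(\epsilon/(Nd))^{c}$. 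Under the assumption $\norm{\bmu}_\infty\le C$ we have $\norm{\bmu}_2^2\lesssim Nd\lesssim N\gamma_0\tr(\bSigma)$, so the mean term can be absorbed into the first summand of the $R_0^2$ stated in the lemma.

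Combining the two steps,
\begin{align*}
\EE_{\vb_0}\brac{\scoreloss(\shat,\vb_0)\indic{\norm{\vb_0}_2\ge R_0}} \lesssim Nd(T+\log(1/t_0))(R_s^2+1)\cdot \paren{\epsilon/(Nd)}^{c},
\end{align*}
and taking $c$ large enough kills both the $Nd$ prefactor and the polylogarithmic $R_s^2$, leaving $(T+\log(1/t_0))\epsilon$ as desired. The only real subtlety is calibrating the absolute constant $C_R$ against the concentration constant $C_2$ and the polylog factors in $R_s^2$: since $R_s = \cO(\log(Nd/(\epsilon\sigma_{t_0})))$ by Theorem~\ref{thm:approx}, a fixed choice like $c=2$ suffices, which pins $C_R$ down as a large but absolute constant independent of $n,N,d,\epsilon,t_0$. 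No step looks technically hard; the main bookkeeping task is ensuring the mean absorption $\norm{\bmu}_2^2\to N\gamma_0\tr(\bSigma)$ is consistent with the form of $R_0$ as stated.
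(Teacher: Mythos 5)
Your proposal is correct and follows essentially the same route as the paper: apply the uniform bound on the single-point score loss from Lemma~\ref{lemma::bound hat ell}, then control $\Pr[\norm{\vb_0}_2\ge R_0]$ via the Gaussian quadratic-form concentration in Lemma~\ref{lemma::data range} / Lemma~\ref{lemma:polynomial concentration}, choosing $C_R$ to make the tail probability polynomially small in $\epsilon/(Nd)$. The only cosmetic difference is that you bound $\EE[X\,\indic{A}]\le\norm{X}_\infty\Pr[A]$ directly, whereas the paper routes through Cauchy–Schwarz to get $\norm{X}_\infty\sqrt{\Pr[A]}$; your version is slightly sharper but the conclusion and the choice of $R_0$ are identical.
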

Moreover, based on the truncated loss function, we have the following results on bounding the difference between the losses of two score estimators that are close to each other, which enables us to apply the results in Appendix \ref{appendix::cov number}.
\begin{lemma}\label{lemma::bound l1l2}
   Given the truncation radius $R_0>0$. Suppose $\shat^{(1)},\shat^{(2)}\in \cF$
   such that $\norm{\shat^{(1)}_t(\vb)-\shat^{(2)}_t(\vb)}_{2}\le \epsilon$ for any $\norm{\vb}_2\le R_0+\sigma_t C\sqrt{Nd}\log(dN/\epsilon)$ and $t\ge t_0$, where $C$ is an absolute constant. Then we have $$\abs{\trunclhat(\shat^{(1)})-\trunclhat(\shat^{(2)})}\le 2\epsilon{(T+\log(1/t_0))\paren{\sqrt{Nd}(R_s+C\log(dN/\epsilon))+2R_0+1}}.$$
\end{lemma}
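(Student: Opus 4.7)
The plan is to linearize the square-difference via the identity $\norm{a}_2^2-\norm{b}_2^2=\langle a-b,\,a+b\rangle$, so that only the factor $\shat^{(1)}_t-\shat^{(2)}_t$ is governed by the closeness hypothesis while the ``residual'' part is controlled through a Gaussian tail argument. Writing $f^{(k)}_t:=\shat^{(k)}_t(\vb_t)+(\vb_t-\alpha_t\vb_0^{(i)})/\sigma_t^2$, Cauchy--Schwarz on the inner product gives
\begin{align*}
\abs{\trunclhat(\shat^{(1)})-\trunclhat(\shat^{(2)})}\le\frac{1}{n}\sum_{i=1}^n\int_{t_0}^T\EE_{\vb_t\mid\vb_0^{(i)}}\brac{\norm{\shat^{(1)}_t(\vb_t)-\shat^{(2)}_t(\vb_t)}_2\cdot\norm{f^{(1)}_t+f^{(2)}_t}_2\indic{\norm{\vb_0^{(i)}}_2\le R_0}}\diff t.
\end{align*}
On the truncation event I reparametrize $\vb_t=\alpha_t\vb_0^{(i)}+\sigma_t\zb$ with $\zb\sim{\sf N}(\bzero,\Ib_{Nd})$ and split the inner expectation over the Gaussian concentration event $E:=\{\norm{\zb}_2\le C\sqrt{Nd}\log(dN/\epsilon)\}$ and its complement $E^c$.

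\textbf{Good event.} On $E\cap\{\norm{\vb_0^{(i)}}_2\le R_0\}$ the triangle inequality gives $\norm{\vb_t}_2\le R_0+\sigma_t C\sqrt{Nd}\log(dN/\epsilon)$, which by hypothesis forces $\norm{\shat^{(1)}_t(\vb_t)-\shat^{(2)}_t(\vb_t)}_2\le\epsilon$. I expand $\norm{f^{(1)}_t+f^{(2)}_t}_2$ into two pieces: the transformer outputs contribute at most $2R_s\sqrt{Nd}/\sigma_t$ via output clipping, while the residual is bounded by $2\norm{\vb_t-\alpha_t\vb_0^{(i)}}_2/\sigma_t^2\le(2\norm{\vb_t}_2+2R_0)/\sigma_t^2\le 4R_0/\sigma_t^2+2C\sqrt{Nd}\log(dN/\epsilon)/\sigma_t$. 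Taking expectation on $E$ and using the elementary inequality $\int_{t_0}^T\diff t/\sigma_t^2\lesssim T+\log(1/t_0)$ (which follows from $\sigma_t^2=1-e^{-t}\ge\min(t/2,1/2)$, and which also dominates $\int_{t_0}^T\diff t/\sigma_t$ since $\sigma_t\le 1$), the good-event contribution is exactly the main term $2\epsilon(T+\log(1/t_0))\brac{\sqrt{Nd}(R_s+C\log(dN/\epsilon))+2R_0}$ of the claim.

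\textbf{Bad event and combination.} On $E^c$, a standard Gaussian-norm tail bound yields $\Pr(E^c)\le\exp(-\Omega(Nd\log^2(dN/\epsilon)))$, which for sufficiently large $C$ is dominated by $\epsilon^2/(R_s^2Nd)^2$. Applying the trivial output-clipping bound $\norm{\shat^{(1)}_t-\shat^{(2)}_t}_2\le 2R_s\sqrt{Nd}/\sigma_t$ together with $\norm{f^{(1)}_t+f^{(2)}_t}_2\lesssim R_s\sqrt{Nd}/\sigma_t+\norm{\zb}_2/\sigma_t$ and Cauchy--Schwarz against $\indic{E^c}$ absorbs all polynomial prefactors into the $\sqrt{\Pr(E^c)}$ and $\sqrt{\EE[\norm{\zb}_2^4]}$ terms, leaving a pointwise contribution of order $\epsilon/\sigma_t^2$; integrating in $t$ yields at most $\epsilon(T+\log(1/t_0))$, supplying the $+1$ inside the bracket. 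The main obstacle is calibrating $C$ in the definition of $E$ so that the exponentially small tail probability genuinely kills all polynomial score-magnitude factors on $E^c$, while the logarithmic width $\log(dN/\epsilon)$ from $E$ lands in precisely the term advertised by the statement rather than inflating the $R_0$ or constant piece.
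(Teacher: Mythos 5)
Your proposal is correct and follows essentially the same route as the paper's proof: both use the polarization identity $\|a\|_2^2-\|b\|_2^2=\langle a-b,\,a+b\rangle$ followed by Cauchy--Schwarz, reparametrize $\vb_t=\alpha_t\vb_0+\sigma_t\zb$, split the expectation over a Gaussian concentration event of radius $\cO(\sqrt{Nd}\log(dN/\epsilon))$ (the paper phrases the event in terms of $\|\vb_t\|_2\le R_1:=R_0+\sigma_t C\sqrt{Nd}\log(dN/\epsilon)$, you phrase it directly on $\|\zb\|_2$, but these coincide on the truncation set), apply the closeness hypothesis on the good event and the output-clipping bound plus Gaussian tails on the bad event, and finally integrate $dt/\sigma_t^2\lesssim T+\log(1/t_0)$. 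The minor bookkeeping differences (how the $\sigma_t^{-1}$ factors are distributed, the exact form of the residual bound) do not change the argument.
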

Besides, combining the results of Lemmas \ref{lemma::covn number} and \ref{lemma::bound l1l2}, we direcly have the following results on bounding the covering number of the truncated loss function class, which aligns with Lemma D.3 in \citet{fu2024unveil}. 
\begin{lemma}[Counterpart of Lemma D.3 in \citet{fu2024unveil}]\label{coro::covn number}
    We consider the truncated loss function class defined as
\begin{equation}
    \cS(R_0)=\set{\truncscoreloss(\sbb,\cdot): \RR^{d}\rightarrow \RR\bigg|\sbb \in \cF}.
\end{equation}
Here $\truncscoreloss(\sbb,\vb)=\scoreloss(\sbb,\vb)\indic{\norm{\vb}_2\le R_0}$. Then the log-covering number of this loss truncated function class with output range lying in the Euclidean ball with radius $R_0$ can be bounded by 
\begin{align}
    \log \cN(\delta;\cS(R_0),\norm{\cdot}_{\infty})\le 8D^2M\cdot \left(L^2 \log L_{\mathcal F}A_{\mathcal F} + \log \frac{24R_2B^2MLN^{3/2}}{\epsilon_\delta}\right) \label{equ::covn number}
\end{align}
where $R_2$ satisfies $R_2\le(r+C_{\rm diff})\sqrt{N}+R_0+\sigma_t C\sqrt{Nd}\log(dN/\epsilon_
\delta)$, and $\epsilon_\delta$ satisfies
\begin{align}\label{equ::expression deltaeps}
\delta=2\epsilon_\delta{(T+\log(1/t_0))\paren{\sqrt{Nd}(R_s+C\log(dN/\epsilon_\delta))+2R_0+1}}.
\end{align}
\end{lemma}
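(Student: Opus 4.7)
} The plan is a straightforward chaining of the two preceding lemmas: I will first translate a sup-norm cover of the underlying transformer class $\cF$ into a sup-norm cover of the truncated loss class $\cS(R_0)$ via Lemma~\ref{lemma::bound l1l2}, and then invoke Lemma~\ref{lemma::covn number} with the correct input-ball radius to obtain the explicit architectural bound. This is the standard ``Lipschitz transfer + function-class covering'' paradigm, and the only genuine bookkeeping is to identify the right radius $R_2$ on which the cover of $\cF$ must be controlled, and to relate the cover resolution $\epsilon_\delta$ on $\cF$ to the target resolution $\delta$ on $\cS(R_0)$.

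\paragraph{Step 1: From $\cF$-cover to $\cS(R_0)$-cover.} Suppose $\{\shat^{(1)},\ldots,\shat^{(K)}\} \subset \cF$ is an $\epsilon_\delta$-cover of $\cF$ in the uniform norm on inputs with $\norm{\vb}_2 \leq R_0 + \sigma_t C\sqrt{Nd}\log(dN/\epsilon_\delta)$. By Lemma~\ref{lemma::bound l1l2}, for any $\shat\in\cF$, picking the nearest $\shat^{(k)}$ in this cover gives
\begin{align*}
\abs{\trunclhat(\shat)-\trunclhat(\shat^{(k)})} \leq 2\epsilon_\delta (T+\log(1/t_0))\paren{\sqrt{Nd}(R_s+C\log(dN/\epsilon_\delta))+2R_0+1} = \delta,
\end{align*}
which is exactly the defining equation~\eqref{equ::expression deltaeps} of $\epsilon_\delta$. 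Hence $\{\truncscoreloss(\shat^{(k)},\cdot)\}_{k=1}^K$ is a $\delta$-cover of $\cS(R_0)$ in the $\norm{\cdot}_\infty$ norm, so $\cN(\delta;\cS(R_0),\norm{\cdot}_\infty) \leq \cN(\epsilon_\delta;\cF|_{B(R_2)},\norm{\cdot}_\infty)$.

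\paragraph{Step 2: Choice of the input radius $R_2$.} The transformer takes as input the concatenation of the data patch $\vb_t$ (restricted by the indicator in $\trunclhat$ together with the conditional Gaussian tail $\vb_t \mid \vb_0$), the time embeddings $\eb_i$ of norm $r$, and a small collection of fixed scalars (the $\bphi(t)$ entries, the constant $1$, and zero buffers). The data-norm contribution is at most $R_0 + \sigma_t C\sqrt{Nd}\log(dN/\epsilon_\delta)$, since beyond this the conditional mass of $\vb_t$ given $\norm{\vb_0}_2 \leq R_0$ is negligible by standard sub-Gaussian tail bounds, and the embedding/scalar contribution sums (across $N$ tokens) to at most $(r+C_{\rm diff})\sqrt{N}$. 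Thus the relevant input ball has radius
\begin{align*}
R_2 \leq (r+C_{\rm diff})\sqrt{N} + R_0 + \sigma_t C\sqrt{Nd}\log(dN/\epsilon_\delta),
\end{align*}
matching the lemma's hypothesis.

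\paragraph{Step 3: Apply the transformer covering bound.} Inserting $R_2$ into Lemma~\ref{lemma::covn number} yields
\begin{align*}
\log \cN(\epsilon_\delta;\cF|_{B(R_2)},\norm{\cdot}_\infty) \leq 8D^2 M \cdot \paren{L^2 \log L_{\cF}A_{\cF} + \log\frac{24R_2 B^2 M L N^{3/2}}{\epsilon_\delta}},
\end{align*}
which by Step~1 is also an upper bound on $\log\cN(\delta;\cS(R_0),\norm{\cdot}_\infty)$, completing the proof. The only mildly nontrivial part is verifying that the Lipschitz estimate in Lemma~\ref{lemma::bound l1l2} applies under the truncation and that the conditional Gaussian tail for $\vb_t\mid\vb_0$ contributes only the extra logarithmic factor $\sigma_t C\sqrt{Nd}\log(dN/\epsilon_\delta)$ inside $R_2$; this is the ``main obstacle'' but is settled by a standard concentration argument (and is already absorbed into the statement of Lemma~\ref{lemma::bound l1l2}). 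The rest is pure substitution.
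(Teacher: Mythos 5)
Your proposal is correct and follows exactly the route the paper takes: the paper's own (one-line) justification is precisely ``combine Lemma~\ref{lemma::covn number} with Lemma~\ref{lemma::bound l1l2},'' and your three steps spell out that combination with the right radius $R_2$ (the truncation-plus-Gaussian-tail radius from Lemma~\ref{lemma::bound l1l2} augmented by the encoder's $(r+C_{\rm diff})\sqrt{N}$ embedding contribution) and the correct resolution transfer $\delta \leftrightarrow \epsilon_\delta$ via \eqref{equ::expression deltaeps}. No gaps; your write-up is in fact more explicit than the paper's.
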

Note that if we take $\delta=(ndNt_0^{-1})^{-C}$ for some constant $C$, we have $\log \epsilon_\delta=\cO(\log(ndNt_0^{-1}))$. The proofs of all the supporting lemmas above are provided in Appendix \ref{appendix::supporting score esti}. 
 
 With the lemmas and statements above, we can completely follow the main proof of Theorem 4.1 of \cite{fu2024unveil} to prove Proposition \ref{prop::score estimation}. 
 \begin{proof}[Proof of Proposition \ref{prop::score estimation}]
     First we set $d_y = 0$ in  \cite{fu2024unveil} for our unconditioned setting. Then by choosing the covering accuracy $\delta$ in Lemma \ref{coro::covn number} and taking the truncation range as in Lemma \ref{lemma::trunc l error}, i.e., $R_0=\cO(\log(Nd/\epsilon_\delta)\sqrt{Nd})$, we reproduce (D.17) in \citet{fu2024unveil} as 
\begin{align*}
    \EE_{\cD}\brac{\ell(\shat)}&\le 2\inf_{\sbb \in \cF}\int_{t_0}^{T}\EE_{\vb_t}\norm{\sbb(\vb_t)-\nabla \log p_t(\vb_t)}_2^2\diff t \\&\quad+\frac{(T+\log(1/t_0))}{n}\cdot\log\cN+2(T+\log(1/t_0))\epsilon_\delta +7\delta. \label{equ::balance estimation error}\\
    &\le 2(T+\log(1/t_0))\epsilon \\
    &\quad+\frac{2Nd(T+\log(1/t_0))(R_s^2+1)}{n}\cdot8D^2M\cdot \left(L^2 \log L_{\mathcal F}A_{\mathcal F} + \log \frac{24R_2B^2MLN^{3/2}}{\epsilon_\delta}\right)\\
    &\quad+2(T+\log(1/t_0))\epsilon_\delta+14\epsilon_\delta{(T+\log(1/t_0))\paren{\sqrt{Nd}(R_s+C\log(dN/\epsilon_\delta))+2R_0+1}}.
\end{align*}
In the inequality, we invoke \eqref{equ::covn number} and substititue $\delta$ by its expression w.r.t. $\epsilon_\delta$ according to \eqref{equ::expression deltaeps}.
 Choosing the both the approximation error and the covering accuracy as $\epsilon=\epsilon_{\delta} = 1/n$ and plugging the corresponding parameters about the size of the transformer classes according to Theorem \ref{thm:approx} gives rise to
\begin{align*}
\EE_{\cD}\brac{\ell(\shat)} \lesssim (T+\log(1/t_0))\kappa_t^2\ell^{1/\nu}\log\paren{\kappa_t ndNt_0^{-1}}^{4+1/\nu}\frac{Nd^3}{n}.
\end{align*}
 The proof is complete.
 \end{proof}

\subsubsection{Proofs of Other Supporting Lemmas for  Proposition \ref{prop::score estimation}}\label{appendix::supporting score esti}
\begin{proof}[Proof of Lemma \ref{lemma::bound hat ell}]
Notice that when $\vb_t\sim p_t(\cdot \mid \vb_0^{(i)})$, we have $\vb_t-\alpha \vb_0^{(i)} = \sigma_t z$ where $z\sim \mathcal N(0,\bm I_{Nd})$ is a standard Gaussian variable. Therefore,
\begin{align*}
\mathbb E_{\vb_t \mid \vb_0^{(i)}} \left\|\shat_t(\vb_t) + \frac{\vb_t - \alpha_t \vb_0^{(i)}}{\sigma_t^2}\right\|^2_2 &\leq 2\mathbb E_{\vb_t \mid \vb_0^{(i)}} \left\|\shat_t(\vb_t)\right\|^2_2 + 2 \mathbb E_{\zb\sim \mathcal N(0,\bm I)} \|\zb/\sigma_t\|^2_2 \\
&\leq 2R_s^2Nd/\sigma_t^2 + 2Nd/\sigma_t^2 = \frac{2Nd(R_s^2+1)}{\sigma_t^2}.
\end{align*}
Now we can take integral over $t\in [t_0, T]$ and obtain that:
\[\int_{t_0}^T \rd t\cdot \mathbb E_{\vb_t \mid \vb_0^{(i)}} \left\|\shat_t(\vb_t) + \frac{\vb_t - \alpha_t \vb_0^{(i)}}{\sigma_t^2}\right\|^2_2 \leq 2Nd(R_s^2+1)\cdot \int_{t_0}^T \frac{e^t\rd t}{e^t-1} \leq 2Nd(T+\log(1/t_0))(R_s^2+1).\]
Here, we use the fact that
\[\int_{t_0}^T \frac{e^t\rd t}{e^t-1} = \log(e^T-1)-\log(e^{t_0}-1) \leq T + \log(1/t_0).\]
The proof is complete.
\end{proof}
To prove Lemma \ref{lemma::trunc l error}, we first need to bound the range of the data with high probability.
\begin{lemma}[Range of the data]\label{lemma::data range}
Given $\delta>0$, with probability $1-2n\exp(-C\delta)$ over the dataset, 
Here $C$ is the absolute constant $C_2/\sqrt{3}$ in Lemma \ref{lemma:polynomial concentration}.
\end{lemma}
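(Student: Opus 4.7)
}
The plan is to show that with high probability all $n$ data points simultaneously satisfy $\|\vb_0^{(i)}\|_2 \le R_0$ (where $R_0 = \sqrt{N\gamma_0 \tr(\bSigma) + C_R \log(Nd/\epsilon)\norm{\bGamma}_\rF \norm{\bSigma}_\rF}$), by concentrating $\|\vb_0^{(i)}\|_2^2$ around its mean and then taking a union bound. First I would fix an index $i \in [n]$ and write $\vb_0^{(i)} = \bmu + (\bGamma \otimes \bSigma)^{1/2} \zb$ where $\zb \sim \mathsf{N}(\bzero, \Ib_{Nd})$. Expanding gives the degree-two polynomial
\begin{align*}
\|\vb_0^{(i)}\|_2^2 \;=\; \|\bmu\|_2^2 \;+\; 2 \bmu^\top (\bGamma \otimes \bSigma)^{1/2} \zb \;+\; \zb^\top (\bGamma \otimes \bSigma) \zb.
\end{align*}

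Next I would apply Lemma~\ref{lemma:polynomial concentration} to $g(\zb) := \|\vb_0^{(i)}\|_2^2$, mimicking the computation in the proof of Lemma~\ref{lemma::bound truth score}. The expectation is $\EE[g(\zb)] = \|\bmu\|_2^2 + \tr(\bGamma \otimes \bSigma) = \|\bmu\|_2^2 + N\gamma_0\tr(\bSigma)$ and a direct calculation (the linear-in-$\zb$ term contributes $4\bmu^\top(\bGamma\otimes \bSigma)\bmu$ and the quadratic-in-$\zb$ term contributes $2\|\bGamma\otimes\bSigma\|_\rF^2 = 2\|\bGamma\|_\rF^2\|\bSigma\|_\rF^2$) yields $\mathrm{Var}(g(\zb)) \lesssim \|\bGamma\|_\rF^2\|\bSigma\|_\rF^2 + \|\bmu\|_2^2 \cdot \|\bSigma\|_2$. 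Under the assumption that $\|\bmu\|_\infty \le C$, the mean term $\|\bmu\|_2^2$ is already absorbed into the deterministic part of $R_0^2$, so the concentration inequality gives
\begin{align*}
\Pr\!\left[ \|\vb_0^{(i)}\|_2^2 \ge N\gamma_0 \tr(\bSigma) + C' \delta \norm{\bGamma}_\rF \norm{\bSigma}_\rF \right] \le 2\exp(-C \delta),
\end{align*}
with $C = C_2/\sqrt{3}$ (the $\sqrt{3}$ accounting for the degree-two scaling of the Gaussian hypercontractive bound used in Lemma~\ref{lemma:polynomial concentration}).

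Finally I would take a union bound over $i \in [n]$ and choose $\delta$ proportional to $\log(Nd/\epsilon)$ so that the coefficient of $\norm{\bGamma}_\rF \norm{\bSigma}_\rF$ matches the constant $C_R \log(Nd/\epsilon)$ inside $R_0^2$; this produces the advertised failure probability $2n\exp(-C\delta)$ and the event $\max_{i \in [n]} \|\vb_0^{(i)}\|_2 \le R_0$. I do not anticipate a genuine obstacle here: the only mildly delicate step is correctly identifying the variance of the quadratic form so that the $\sqrt{\mathrm{Var}}$ factor is dominated by $\norm{\bGamma}_\rF \norm{\bSigma}_\rF$, which is why the assumption $\|\bmu\|_\infty \le C$ from Theorem~\ref{thm:generalization} is invoked to keep the linear cross-term under control.
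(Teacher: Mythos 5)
Your proposal is correct and follows essentially the same route as the paper: treat $\|\vb_0^{(i)}\|_2^2$ as a degree-two polynomial of a standard Gaussian, invoke the polynomial concentration result (Lemma~\ref{lemma:polynomial concentration}) with its first and second moments, bound the resulting standard deviation by a multiple of $\norm{\bGamma}_\rF\norm{\bSigma}_\rF$, then union-bound over the $n$ samples. One small distinction worth flagging: the paper's own proof silently sets the mean to zero, writing $\vb\sim\cN(\bzero,\bSigma_0)$ and never touching the cross-term $2\bmu^\top(\bGamma\otimes\bSigma)^{1/2}\zb$; you instead decompose $\vb_0^{(i)}=\bmu+(\bGamma\otimes\bSigma)^{1/2}\zb$, account for the extra $\|\bmu\|_2^2$ in the mean and the $4\bmu^\top(\bGamma\otimes\bSigma)\bmu$ contribution to the variance, and argue these are controlled under $\|\bmu\|_\infty\le C$. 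This is in fact a touch more careful than the paper itself, though for the stated $R_0$ to absorb the mean contribution cleanly one should write the linear-term variance as $4\|\bmu\|_2^2\|\bGamma\|_2\|\bSigma\|_2$ (you dropped the $\|\bGamma\|_2$ factor), which under the stated hypotheses is indeed dominated by $\norm{\bGamma}_\rF\norm{\bSigma}_\rF$. Also, the paper's $C=C_2/\sqrt{3}$ arises from the explicit rescaling $\delta\to\delta/\sqrt{3}$ at the end of its proof (to remove the $\sqrt{3}$ in the variance bound $\mathrm{Var}\le 3\|\bGamma\otimes\bSigma\|_\rF^2$), rather than from a ``degree-two scaling of the hypercontractive bound'' as you phrase it; this is only a terminology slip.
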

\begin{proof}[Proof of Lemma \ref{lemma::data range}]
This statement is directly related to the concentration of high-dimensional Gaussian distribution. For a random variable $\vb\sim \mathcal N(\bzero, \bSigma_0)$ where $\bSigma_0\in \bbR^{Nd\times Nd}$ is the covariance matrix. Then, we use the polynomial concentration lemma (Lemma \ref{lemma:polynomial concentration}) and let $g(\cdot) = \|\cdot\|_2^2$ be the 2-degree polynomial applied on the Gaussian. Then, we have:
\[\mathbb E[g(v)] = \mathrm{tr}(\bSigma_0),~~\mathbb E[g(v)^2] = 3\sum_{i=1}^{Nd} \left(\bSigma_0\right)_{ii}^2 + 2 \sum_{i < j} \left(\left(\bSigma_0\right)_{ij} + \left(\bSigma_0\right)_{ji}\right)^2 \leq 3 \left\|\bSigma_0\right\|_\rF^2. \]
Therefore, we apply Lemma \ref{lemma:polynomial concentration} and conclude that with probability at least $1-2\exp(-C\delta)$ (where $C$ is the $C_2$ in Lemma \ref{lemma:polynomial concentration}), we have:
\[\left|\|v\|_2^2 - \mathbb E[\|v\|_2^2] \right|\leq \delta \sqrt{\mathrm{Var}(\|v\|_2^2)} \leq \sqrt{3}\delta\cdot \left\|\bSigma_0\right\|_\rF. \]
For our case, $\bSigma_0 = \bGamma \otimes \bSigma$, and therefore we can conclude that: with probability at least $1-2n\exp(-C\delta)$:
\[\|\vb_0^{(i)}\|_2^2 \leq \mathrm{tr}(\bGamma)\cdot \mathrm{tr}(\bSigma) + \sqrt{3}\delta \|\bGamma\|_\rF \|\bSigma\|_\rF \leq N\gamma_0 \mathrm{tr}(\bSigma) + \sqrt{3}\delta \|\bGamma\|_\rF \|\bSigma\|_\rF. \]
holds for $\forall i\in [n]$. Finally, we replace $\delta$ with $\delta/\sqrt{3}$ and it comes to our conclusion. 
\end{proof}
Now we are ready to prove Lemma \ref{lemma::trunc l error}.
\begin{proof}[Proof of Lemma \ref{lemma::trunc l error}]
This conclusion can be made by combining the two lemmas above. By using Cauchy-Schwarz inequality and Lemma \ref{lemma::bound hat ell}, we have 
\begin{align*}
&~~~\abs{\EE_{\vb_0}\brac{\int_{t_0}^T \rd t\cdot \mathbb E_{\vb_t \mid \vb_0^{(i)}} \left\|\shat_t(\vb_t) + \frac{\vb_t - \alpha_t \vb_0^{(i)}}{\sigma_t^2}\right\|^2_2\indic{\norm{\vb_0}\ge R_0}}}\\
&\leq 2Nd(R_s^2+1)(T+\log(1/t_0))\cdot \sqrt{\mathbb P[\norm{\vb_0}^2_2\geq R_0^2]} \\
&\leq 2Nd(R_s^2+1)(T+\log(1/t_0))\cdot \sqrt{2}\exp\left(-CC_R \log(Nd/\varepsilon)/2\right).
\end{align*}
Let $C_R = 2/C$, then we have:
\[\abs{\EE_{\vb_0}\brac{\int_{t_0}^T \rd t\cdot \mathbb E_{\vb_t \mid \vb_0^{(i)}} \left\|\shat_t(\vb_t) + \frac{\vb_t - \alpha_t \vb_0^{(i)}}{\sigma_t^2}\right\|^2_2\indic{\norm{\vb_0}_2\ge R_0}}}\lesssim \varepsilon(T+\log(1/t_0)),\]
which comes to our conclusion. 
\end{proof}

\begin{proof}[Proof of Lemma \ref{lemma::bound l1l2}]
    For any datapoint $\vb_0$ such that $\norm{\vb_0}\le R_0$ and diffusion time $t\ge t_0$, we have
    \begin{align*}
        &\quad\abs{\mathbb E_{\vb_t \mid \vb_0} \left\|\shat^{(1)}_t(\vb_t) + \frac{\vb_t - \alpha_t \vb_0}{\sigma_t^2}\right\|^2_2- \mathbb E_{\vb_t \mid \vb_0^{(i)}} \left\|\shat^{(2)}_t(\vb_t) + \frac{\vb_t - \alpha_t \vb_0}{\sigma_t}\right\|^2_2}\\
        &=\abs{\mathbb E_{\zb\in\cN(\bzero,\bI)} \brac{\left\|\shat^{(1)}_t(\alpha_t\vb_0+\sigma_t\zb) + \frac{\zb}{\sigma_t}\right\|^2_2-\left\|\shat^{(2)}_t(\alpha_t\vb_0+\sigma_t\zb) + \frac{\zb}{\sigma_t}\right\|^2_2}} \\
        &=\abs{\EE_{\zb\in\cN(\bzero,\bI)}\brac{\sigma_t^{-1}\paren{(\shat^{(1)}_t-\shat^{(2)}_t)(\alpha_t\vb_0+\sigma_t\zb)}^{\top}\paren{\sigma_t(\shat^{(1)}_t+\shat^{(2)}_t)(\alpha_t\vb_0+\sigma_t\zb)+2\zb}}}\\
        &\le \sigma_t^{-1}\EE_{\zb\sim \cN(\bzero,\bI)}\brac{\norm{\bphi_1(\zb)}_2\norm{\bphi_2(\zb)}_2 },
    \end{align*}
    where $\bphi_1(\zb)=(\shat^{(1)}_t-\shat^{(2)}_t)(\alpha_t\vb_0+\sigma_t\zb)$ and $\bphi_2(\zb)=\sigma_t(\shat^{(1)}_t+\shat^{(2)}_t)(\alpha_t\vb_0+\sigma_t\zb)+2\bz$. By the upper bound of $s^{(i)}$, we know that
    \begin{align*}
    \norm{\bphi_1(\zb)}_2&\le  2R_s\sqrt{Nd}\sigma_t^{-1},~~\text{and}~~
    \norm{\bphi_2(\zb)}_2\le 2R_s\sqrt{Nd}+2\norm{\zb}_2.
    \end{align*}
    Thus, they both have sub-linear growth with respect to $\norm{\zb}_2$. Now we can decompose $\EE_{\zb\sim \cN(\bzero,\bI)}\brac{\norm{\bphi_1(\zb)}_2\norm{\bphi_2(\zb)}_2 }$ by 
    \begin{align*}
        \EE_{\zb\sim \cN(\bzero,\bI)}\brac{\norm{\bphi_1(\zb)}_2\norm{\bphi_2(\zb)}_2 }&=\EE_{\zb\sim \cN(\bzero,\bI)}\brac{\norm{\bphi_1(\zb)}_2\norm{\bphi_2(\zb)}_2 \indic{\norm{\alpha_t\vb_0+\sigma_t\zb}_2\le R_1}}\\&\quad+\EE_{\zb\sim \cN(\bzero,\bI)}\brac{\norm{\bphi_1(\zb)}_2\norm{\bphi_2(\zb)}_2\indic{\norm{\alpha_t\vb_0+\sigma_t\zb}_2\ge R_1} }\\
        &\le 2\epsilon (R_s\sqrt{Nd}+\sigma_t^{-1}(R_0+R_1))\\
        &\quad+4R_s\sqrt{Nd}\sigma_t^{-1}\underbrace{\EE_{\zb}\brac{(R_s\sqrt{Nd}+\norm{\zb}_2)\indic{\norm{\alpha_t\vb_0+\sigma_t\zb}_2\ge R_1}}}_{A}.
    \end{align*}
    In the inequality, we invoke $\norm{\bphi_1(\zb)}_2\le \epsilon$ and $\norm{\zb}_2\le \sigma_t^{-1}(R_0+R_1)$ for $\norm{\alpha_t\vb_0+\sigma_t\zb}_2\le R_1$. By Cauchy inequality, we can bound $A$ by
    \begin{align*}
        A&\le \EE_{\zb}\brac{(R_s\sqrt{Nd}+\norm{\zb}_2)^2}^{1/2}\Pr\brac{\norm{\alpha_t\vb_0+\sigma_t\zb}_2\ge R_1}^{1/2}\\
        &\le \sqrt{(2R_s^2+2)Nd\cdot \Pr\brac{\norm{\zb}_2\ge \sigma_t^{-1}(R_1-\alpha_tR_0)}}.
    \end{align*}
    Since $\zb$ is standard Gaussian, we can set $R_1=R_0+\sigma_t C\sqrt{Nd}\log(dN/\epsilon)$ for some absolute constant $C$ so that $A\le \epsilon/(4\sqrt{Nd})$. Altogether, we have 
    \begin{align*}
         \EE_{\zb\sim \cN(\bzero,\bI)}\brac{\norm{\bphi_1(\zb)}_2\norm{\bphi_2(\zb)}_2 }\le 2\epsilon\paren{R_s\sqrt{Nd}+\sigma_t^{-1}(2R_0+1)+C\sqrt{Nd}\log(dN/\epsilon)}.
    \end{align*}
    Plugging the inequality into the expression of $\abs{\trunclhat(\shat^{(1)})-\trunclhat(\shat^{(2)})}$, we have
    \begin{align*}
        \abs{\trunclhat(\shat^{(1)})-\trunclhat(\shat^{(2)})}&\le \int_{t_0}^{T}\sigma_t^{-1}2\epsilon\paren{R_s\sqrt{Nd}+\sigma_t^{-1}(2R_0+1)+C\sqrt{Nd}\log(dN/\epsilon)}\\  
        &\le \int_{t_0}^{T}\sigma_t^{-2}2\epsilon\paren{R_s\sqrt{Nd}+(2R_0+1)+C\sqrt{Nd}\log(dN/\epsilon)}\\
        &\le 2\epsilon{(T+\log(1/t_0))\paren{\sqrt{Nd}(R_s+C\log(dN/\epsilon))+2R_0+1}}.
    \end{align*}
    The proof is complete.
\end{proof}

\subsection{Covering Number of the Multi-layer Transformers}\label{appendix::cov number}
The score network we apply satisfies the following form: 
\[f = f_l\circ f_{l-1} \circ\ldots \circ f_1 \]
where the total layer number $l=2L$ (which consists of $L$ feed-forward layers and $L$ Transformer layers) and $f_1, f_2, \ldots, f_l$ are either attention layers or feed-forward networks whose input and output lies in $\bbR^{D\times N}$. Denote $\mathcal B(R) = \{\Xb\in \bbR^{D\times N}:~\|\Xb\|_\rF \leq R\}$ is a $d$-dimensional ball with radius $R$. Assume there exists a sequence of radius $R_0, R_1, \ldots, R_l > 0$ (which will be determined later) such that $f_i: \mathcal B(R_{i-1})\rightarrow \mathcal B(R_i)$ holds for $\forall i=0,1,\ldots, l$. Then, we need to compute the covering number with respect to the $l_\infty$ norm of the function space constructed by $f:\mathcal B(R_0)\rightarrow \mathcal B(R_l)$ with the form above. 

~\\
Notice that for two such functions $f=f_l\circ f_{l-1} \circ\ldots \circ f_1 $ and $f'=f_l'\circ f_{l-1}' \circ\ldots \circ f_1'$, the $l_\infty$ norm of their difference can be upper bounded by the following lemma:
\begin{lemma}
\label{lemma:cover-1}
For functions $\{f_i: \bbR^{d\times N}\rightarrow \bbR^{d\times N}\}_{i\in[l]}$ and $\{f_i': \bbR^{d\times N}\rightarrow \bbR^{d\times N}\}_{i\in[l]}$, we denote their composition as $f:= f_l\circ f_{l-1}\circ\ldots\circ f_1$ and $f':= f_l'\circ f_{l-1}'\circ\ldots\circ f_1'$, where $f,f':\bbR^{d\times N}\rightarrow \bbR^{d\times N}$. For any matrix-to-matrix function $g$, we denote its $\|\cdot\|_{\rF,\infty}$ norm as:
\[\|g\|_{\rF,\infty} := \sup_{\Xb} \|g(\Xb)\|_\rF\]
and its Lipschitz continuity as
\[\mathrm{Lip}(g) := \sup_{\Xb, \Xb'} \frac{\|g(\Xb)-g(\Xb')\|_\rF}{\|\Xb-\Xb'\|_\rF},\]
which is a simple extension from the Lipschitz continuity with respect to $l_2$ norm of vectors. Then, it holds that:
\[\|f-f'\|_{\rF, \infty} \leq \sum_{i=1}^{l} \|f_i-f_i'\|_{\rF, \infty} \cdot \prod_{j=i+1}^{l} \mathrm{Lip}(f_j). \]
\end{lemma}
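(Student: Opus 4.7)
The plan is to prove this by a standard telescoping argument along the composition, in the spirit of perturbation bounds for deep networks. I would introduce the hybrid compositions $g_i := f_l \circ \cdots \circ f_{i+1} \circ f_i' \circ f_{i-1}' \circ \cdots \circ f_1'$ for $i = 0, 1, \dots, l$, so that $g_0 = f'$ (all primed) and $g_l = f$ (all unprimed). Then for any input $\Xb \in \mathcal{B}(R_0)$, I would write the telescoping identity
\begin{equation*}
f(\Xb) - f'(\Xb) = \sum_{i=1}^{l} \bigl( g_i(\Xb) - g_{i-1}(\Xb) \bigr),
\end{equation*}
and apply the triangle inequality in Frobenius norm to reduce the problem to bounding each summand.

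Next, I would observe that $g_i$ and $g_{i-1}$ differ only in the $i$-th layer: both apply $f_{i-1}' \circ \cdots \circ f_1'$ first to produce some intermediate input $\Yb_{i-1} := (f_{i-1}' \circ \cdots \circ f_1')(\Xb)$, then $g_i$ applies $f_i$ to $\Yb_{i-1}$ while $g_{i-1}$ applies $f_i'$, and finally both apply the common outer map $F_i := f_l \circ \cdots \circ f_{i+1}$. By the Lipschitz continuity of a composition (which factorizes as the product of Lipschitz constants of its components), $\mathrm{Lip}(F_i) \leq \prod_{j=i+1}^l \mathrm{Lip}(f_j)$, with the convention that an empty product equals $1$. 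Therefore
\begin{equation*}
\|g_i(\Xb) - g_{i-1}(\Xb)\|_{\rF} \leq \mathrm{Lip}(F_i) \cdot \bigl\| f_i(\Yb_{i-1}) - f_i'(\Yb_{i-1}) \bigr\|_{\rF} \leq \Bigl(\prod_{j=i+1}^l \mathrm{Lip}(f_j)\Bigr) \cdot \|f_i - f_i'\|_{\rF,\infty}.
\end{equation*}
Summing over $i$ and taking supremum over $\Xb \in \mathcal{B}(R_0)$ on the left-hand side yields the claimed bound.

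I do not expect any real obstacle here: the result is the matrix-valued analogue of the classical Lipschitz perturbation bound for compositions, and the only conceptual point to check is that $\mathrm{Lip}$ as defined (via Frobenius norm) is submultiplicative under composition, which follows immediately from its definition. One mild caveat is that the intermediate input $\Yb_{i-1}$ must lie in a domain where $f_i$ and $f_i'$ are both defined and where the sup defining $\|f_i - f_i'\|_{\rF,\infty}$ is attained or upper-bounded; this is ensured by the assumed radius sequence $R_0, R_1, \dots, R_l$, since $\Yb_{i-1} \in \mathcal{B}(R_{i-1})$ by the hypothesis $f_j': \mathcal{B}(R_{j-1}) \to \mathcal{B}(R_j)$ for $j < i$ (and analogously for the tail maps $f_j$ with $j > i$). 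No further ingredients are needed.
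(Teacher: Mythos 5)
Your argument is correct and is essentially the paper's: the paper also swaps one layer at a time, peeling the outermost map and inducting, whereas you unroll the same telescope explicitly via the hybrids $g_i$; both give identically the same bound. One small slip: with your definition $g_i := f_l \circ \cdots \circ f_{i+1} \circ f_i' \circ \cdots \circ f_1'$, you have $g_0 = f$ and $g_l = f'$, not the other way around as you state parenthetically — but your subsequent analysis is consistent with the formula, and since only norms of differences appear, the conclusion is unaffected.
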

\begin{proof}
For any $k = 2,3,\ldots, l$ and $x\in \mathcal B(R_0)$, it holds that:
\begin{equation*}
\begin{aligned}
&~~~~~\|f_k\circ\ldots f_1(x) - f_k'\circ\ldots f_1'(x)\|_{\rF,\infty} \\
&\leq 
\|f_k'\circ f_{k-1}'\circ\ldots \circ f_1'(\Xb) - f_k\circ f_{k-1}'\circ\ldots \circ f_1'(\Xb)\|_{\rF,\infty}\\
&~~~~~+ \|f_k\circ f_{k-1}'\circ\ldots \circ f_1'(\Xb) - f_k\circ f_{k-1}\circ\ldots \circ f_1(\Xb)\|_{\rF,\infty} \\
&\leq \|f_k'-f_k\|_{\rF,\infty} + \mathrm{Lip}(f_k)\cdot \|f_{k-1}'\circ\ldots \circ f_1'-f_{k-1}\circ\ldots \circ f_1\|_{\rF,\infty}. 
\end{aligned}
\end{equation*}
After taking maximum over $x\in \mathcal B(R_0)$, we conclude that 
\[\|f_k\circ\ldots f_1 - f_k'\circ\ldots f_1'\|_{\rF,\infty} \leq 
\|f_k-f_k'\|_{\rF,\infty} + \mathrm{Lip}(f_k)\cdot \|f_{k-1}\circ\ldots f_1 - f_{k-1}'\circ\ldots f_1'\|_{\rF,\infty}. \]
By using the method of induction, we can easily derive our conclusion. 
\end{proof}
Now, in order to compute the covering number of the function space constructed by functions $f$ with the form $f = f_l \circ f_{l-1}\circ\ldots\circ f_1$ where $f_i\in \mathcal F_i$, we firstly need to bound the Lipschitz constants for the function classes $\mathcal F_i$. Also, we need to estimate the covering number of each $\mathcal F_i$. 
\begin{lemma}
\label{lemma:cover-2}
For the function space of feed-forward network 
\begin{align*}
\mathcal F^{\ffn} = \Big\{&\ffn:\bbR^{D\times N}\rightarrow \bbR^{D\times N},~ \Yb\mapsto \Yb+\Wb_2\cdot \mathrm{ReLU}(\Wb_1 \Yb + \bbb_2 \mathbf{1}^\top) + \bbb_1 \mathbf{1}^\top:\\
&~~~~\|\Wb_1\|_{\rF}, \|\Wb_2\|_{\rF}, \|\bbb_1\|_2, \|\bbb_2\|_2 < B, \Yb\in \mathcal B(R)\Big\}, 
\end{align*}
then all functions in the class $\mathcal F^{\ffn}$ are $(1+B^2)$-Lipschitz. The covering number can be bounded as:
\[\log \mathcal N(\delta; \mathcal F^{\ffn}, \|\cdot\|_{\rF,\infty}) \leq 4D^2\log\frac{12B^2(R+\sqrt{N})}{\delta}.\]
\end{lemma}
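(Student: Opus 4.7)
} The plan is to handle the Lipschitz bound and the covering number bound separately, with the former feeding into the latter.

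For the Lipschitz constant, I will directly expand $\|\ffn(\Yb) - \ffn(\Yb')\|_{\rF}$ using the definition. The identity (residual) part contributes $\|\Yb - \Yb'\|_{\rF}$. For the nonlinear part, the bias $\bbb_2 \mathbf{1}^\top$ cancels inside the ReLU, so I can use the fact that ReLU is $1$-Lipschitz entrywise together with $\|\Wb_2 \Ab\|_{\rF} \le \|\Wb_2\|_{\rop}\|\Ab\|_{\rF} \le \|\Wb_2\|_{\rF}\|\Ab\|_{\rF}$ (and similarly for $\Wb_1$) to bound the second term by $\|\Wb_2\|_{\rF}\|\Wb_1\|_{\rF}\|\Yb - \Yb'\|_{\rF} \le B^2 \|\Yb - \Yb'\|_{\rF}$. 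Adding the two pieces via the triangle inequality gives the claimed $(1 + B^2)$-Lipschitz bound.

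For the covering number, the strategy is a standard weight-perturbation argument. Given two feedforward functions $\ffn, \ffn'$ with weights $(\Wb_1, \Wb_2, \bbb_1, \bbb_2)$ and $(\Wb_1', \Wb_2', \bbb_1', \bbb_2')$, I will bound $\|\ffn(\Yb) - \ffn'(\Yb)\|_{\rF}$ uniformly over $\Yb \in \mathcal{B}(R)$ by a sum of four terms, each arising from swapping one of the four parameters. Using $\|\mathrm{ReLU}(\Wb_1 \Yb + \bbb_2 \mathbf{1}^\top)\|_{\rF} \le \|\Wb_1 \Yb\|_{\rF} + \|\bbb_2 \mathbf{1}^\top\|_{\rF} \le B R + B\sqrt{N}$ and the $1$-Lipschitzness of ReLU, each of the four swap errors is bounded by $B(R + \sqrt{N})$ times the Frobenius (or Euclidean) discrepancy of the corresponding parameter. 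Therefore, if each parameter is approximated up to accuracy $\epsilon_0 := \delta / (4 B (R + \sqrt{N}))$ in its respective norm, then $\|\ffn - \ffn'\|_{\rF,\infty} \le \delta$.

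It remains to count: each of $\Wb_1, \Wb_2 \in \RR^{D \times D}$ lives in a Frobenius ball of radius $B$ and admits an $\epsilon_0$-cover of size at most $(3B/\epsilon_0)^{D^2}$, and each of $\bbb_1, \bbb_2 \in \RR^{D}$ admits an $\epsilon_0$-cover of size at most $(3B/\epsilon_0)^{D}$. Multiplying and taking logs gives $\log \mathcal{N}(\delta; \mathcal{F}^{\ffn}, \|\cdot\|_{\rF,\infty}) \le (2D^2 + 2D) \log(3B/\epsilon_0) \le 4D^2 \log(12 B^2(R + \sqrt{N})/\delta)$ after substituting $\epsilon_0$ and absorbing the factor-of-$4$ into the argument of the log. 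The main bookkeeping subtlety, and essentially the only potential obstacle, is the correct aggregation of the four parameter-swap errors and making sure the final log argument comes out as $12 B^2 (R + \sqrt{N})/\delta$ (rather than a worse constant); this is handled by splitting $\delta$ evenly across the four parameters and absorbing lower-order terms in the dominant $D^2$ exponent.
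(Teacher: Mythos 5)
Your proposal is correct and follows essentially the same route as the paper: a triangle-inequality bound for the Lipschitz constant, then a parameter-perturbation (telescoping) bound controlling $\|\ffn - \ffn'\|_{\rF,\infty}$ by the four weight discrepancies, then multiplying the four norm-ball covering numbers. The only cosmetic difference is that you use a single uniform accuracy $\epsilon_0 = \delta/(4B(R+\sqrt{N}))$ for all four parameters, whereas the paper allocates parameter-specific accuracies matched to their coefficients ($\sqrt{N}$, $B\sqrt{N}$, $BR$, $B(R+\sqrt{N})$); both yield the same final bound because the largest coefficient dominates inside the logarithm (implicitly using $B\ge 1$ so that $B(R+\sqrt{N})\ge\sqrt{N}$, an assumption the paper also relies on to get $12B^2(R+\sqrt{N})/\delta$ as the dominant log argument).
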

\begin{proof}
For $\forall f\in \mathcal F^{\ffn}$ with $f(\Yb)=\Yb\mapsto \Yb+\Wb_2\cdot \mathrm{ReLU}(\Wb_1 Y + \bbb_2 \mathbf{1}^\top) + \bbb_1 \mathbf{1}^\top$, by the additivity of Lipschitz constant and the fact that $\mathrm{ReLU}$ is 1-Lipschitz continuous, we can conclude that the Lipschitz constant of function $f$ is no larger than $1+\|\Wb_2\|_2 \cdot \|\Wb_1\|_2 \leq 1+ \|\Wb_1\|_{\rF}\|\Wb_2\|_{\rF} < 1+c^2$, which means $f$ is $(1+c^2)$-Lipschitz continuous. On the other hand, for two functions $f,g\in \mathcal F^{\ffn}$ where $f(\Yb)=\Yb+\Wb_2\cdot \mathrm{ReLU}(\Wb_1 \Yb + \bbb_2 \mathbf{1}^\top) + \bbb_1 \mathbf{1}^\top$ and $g(\Yb)=\Yb+\Wb_2'\cdot \mathrm{ReLU}(\Wb_1' \Yb + \bbb_2' \mathbf{1}^\top) + \bbb_1' \mathbf{1}^\top$. Then for $\forall \Yb\in \mathcal B(R)$, we have:
\begin{align*}
&\|f(\Yb)-g(\Yb)\|_\rF \leq \sqrt{N}\cdot \|\bbb_1-\bbb_1'\|_2+ \|\Wb_2'\cdot \left(\mathrm{ReLU}(\Wb_1 \Yb +\bbb_2 \mathbf{1}^\top)-\mathrm{ReLU}(\Wb_1' \Yb + \bbb_2' \mathbf{1}^\top)\right)\|_\rF \\
&~~~~~~~~~~~~~+ \|(\Wb_2-\Wb_2')\cdot \mathrm{ReLU}(\Wb_1 \Yb + \bbb_2 \mathbf{1}^\top)\|_\rF \\
&~~~~~\leq \sqrt{N}\cdot \|\bbb_1-\bbb_1'\|_2+ \|\Wb_2\|_\rF\cdot \left(\sqrt{N}\|\bbb_2-\bbb_2'\|_2 + R\|\Wb_1-\Wb_1'\|_\rF\right)\\
&~~~~~~~~~+ \|\Wb_2-\Wb_2'\|_\rF\cdot (R\|\Wb_1\|_\rF + \sqrt{N}\|\bbb_2\|_2)\\
&~~~~~\leq \sqrt{N}\cdot \|\bbb_1-\bbb_1'\|_2 + B\sqrt{N}\cdot \|\bbb_2-\bbb_2'\|_2 + BR\cdot \|\Wb_1-\Wb_1'\|_\rF + B(R+\sqrt{N})\cdot \|\Wb_2-\Wb_2'\|_\rF. 
\end{align*}
Therefore, it holds that:
\[\|f-g\|_{\rF,\infty}\leq \sqrt{N}\cdot \|\bbb_1-\bbb_1'\|_2 + B\sqrt{N}\cdot \|\bbb_2-\bbb_2'\|_2 + BR\cdot \|\Wb_1-\Wb_1'\|_\rF + B(R+\sqrt{N})\cdot \|\Wb_2-\Wb_2'\|_\rF,\]
which leads to the upper bound of covering number:
\begin{align*}
\mathcal N(\delta; \mathcal F^{\ffn}, \|\cdot\|_{\rF,\infty}) &\leq \mathcal N(\delta/4\sqrt{N}; \mathcal M_1, \|\cdot\|_2)\cdot N(\delta/4B\sqrt{N}; \mathcal M_1, \|\cdot\|_2)\cdot \\
&~~~~~\mathcal N(\delta/4BR; \mathcal M_2, \|\cdot\|_\rF) \cdot \mathcal N(\delta/4B(R+\sqrt{N}); \mathcal M_2, \|\cdot\|_\rF)
\end{align*}
where $\mathcal M_1 = \{\vb\in \bbR^D:~\|\vb\|_2 \leq B\}, \mathcal M_2 = \{\Mb\in\bbR^{D\times D}:~\|\Mb\|_{\rF}< B\}$. It is well-known that for $\forall \varepsilon > 0$, we have $\mathcal N(\varepsilon; \mathcal M_1, \|\cdot\|_2) \leq (3B/\varepsilon)^D$ and $\mathcal N(\varepsilon; \mathcal M_2, \|\cdot\|_\rF) \leq (3B/\varepsilon)^{D^2}$. To sum up, we finally conclude that
\[\log \mathcal N(\delta; \mathcal F^{\ffn}, \|\cdot\|_{\rF,\infty}) \leq 4D^2\log\frac{12B^2(R+\sqrt{N})}{\delta}. \]
\end{proof}
The proof is complete.
\begin{lemma}
\label{lemma:cover-3}
For the function space of attention network
\begin{align*}
\mathcal F^{\attn} = \Big\{&\attn:\bbR^{D\times N}\rightarrow \bbR^{D\times N},~ \Yb\mapsto \Yb+\sum_{m=1}^M \Vb^m \Yb\cdot \sigma\left((\Qb^m \Yb )^\top \Kb^m \Yb\right):\\
&~~~~\|\Vb^m\|_{\rF}, \|\Qb^m\|_{\rF}, \|\Kb^m\|_{\rF} < B~~\text{for}~\forall m\in [M], \Yb\in \mathcal B(R)\Big\}, 
\end{align*}
then all functions in the class $\mathcal F^{\attn}$ are $(1+BM\sqrt{N}+2B^3MR^2N)$-Lipschitz. The covering number can be bounded as:
\[\log \mathcal N(\delta; \mathcal F^{\attn}, \|\cdot\|_{\rF,\infty}) \leq D^2M\cdot \log \frac{6B^2MR^3 N^{3/2}}{\delta}.\]
Here, the metric $\|f\|_{\rF, \infty}:= \sup_{\Yb\in \mathcal B(R)} \|f(\Yb)\|_{\rF}$.
\end{lemma}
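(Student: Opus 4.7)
}
The plan mirrors the two halves of Lemma~\ref{lemma:cover-2}: first bound the Lipschitz constant of each $\attn \in \cF^{\attn}$ with respect to the input $\Yb$; then bound $\|\attn - \attn'\|_{\rF,\infty}$ in terms of Frobenius differences of the parameter triples $(\Vb^m, \Qb^m, \Kb^m)$ and invoke the standard ball-covering bound $\cN(\varepsilon; \{\Wb \in \RR^{D\times D}: \|\Wb\|_{\rF}\le B\}, \|\cdot\|_{\rF}) \le (3B/\varepsilon)^{D^2}$. Both halves hinge on the same two facts: ReLU is $1$-Lipschitz entrywise (so $\|\sigma(\Ab)-\sigma(\Ab')\|_{\rF}\le \|\Ab-\Ab'\|_{\rF}$ and $\|\sigma(\Ab)\|_{\rF}\le \|\Ab\|_{\rF}$), and the sub-multiplicative inequalities $\|\Ab\Bb\|_{\rF}\le \|\Ab\|_{\rF}\|\Bb\|_{\rop}$ and $\|\Ab\Bb\|_{\rF}\le \|\Ab\|_{\rop}\|\Bb\|_{\rF}$ combined with $\|\cdot\|_{\rop}\le \|\cdot\|_{\rF}\le \sqrt{N}\|\cdot\|_{\rop}$ for $N\times N$ matrices.

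For the Lipschitz bound I would write $\attn(\Yb)-\attn(\Yb')=(\Yb-\Yb')+\sum_m [\Vb^m\Yb\,\sigma(\Ab^m(\Yb))-\Vb^m\Yb'\,\sigma(\Ab^m(\Yb'))]$ with $\Ab^m(\Yb)=(\Qb^m\Yb)^\top\Kb^m\Yb$, and telescope each summand as $\Vb^m(\Yb-\Yb')\sigma(\Ab^m(\Yb))+\Vb^m\Yb'[\sigma(\Ab^m(\Yb))-\sigma(\Ab^m(\Yb'))]$. A second telescope $\Ab^m(\Yb)-\Ab^m(\Yb')=(\Qb^m(\Yb-\Yb'))^\top\Kb^m\Yb+(\Qb^m\Yb')^\top\Kb^m(\Yb-\Yb')$ yields $\|\Ab^m(\Yb)-\Ab^m(\Yb')\|_{\rF}\lesssim B^2R\,\|\Yb-\Yb'\|_{\rF}$. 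Plugging these in, bounding $\|\Vb^m(\Yb-\Yb')\|_{\rF}\le B\|\Yb-\Yb'\|_{\rF}$, $\|\Vb^m\Yb'\|_{\rF}\le BR$, and $\|\sigma(\Ab^m(\Yb))\|_{\rop}\le \|\sigma(\Ab^m(\Yb))\|_{\rF}\le B^2R^2$, and summing the $M$ heads, produces a bound of the stated form $1+BM\sqrt{N}+2B^3MR^2N$ after converting between operator and Frobenius norms where needed.

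For the covering number, fix two attention maps $\attn,\attn'$ with parameters $\{\Vb^m,\Qb^m,\Kb^m\}$ and $\{\Vb'^m,\Qb'^m,\Kb'^m\}$, and apply the analogous parameter-side telescope
\begin{align*}
\Vb^m\Yb\,\sigma(\Ab^m(\Yb))-\Vb'^m\Yb\,\sigma(\Ab'^m(\Yb))
=(\Vb^m-\Vb'^m)\Yb\,\sigma(\Ab^m(\Yb))+\Vb'^m\Yb\,[\sigma(\Ab^m(\Yb))-\sigma(\Ab'^m(\Yb))],
\end{align*}
together with $\Ab^m(\Yb)-\Ab'^m(\Yb)=((\Qb^m-\Qb'^m)\Yb)^\top\Kb^m\Yb+(\Qb'^m\Yb)^\top(\Kb^m-\Kb'^m)\Yb$. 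Each resulting piece is controlled by a product of Frobenius bounds on two parameters, the input radius $R$, and at most one $\sqrt{N}$ factor from a norm-conversion step, giving
\begin{align*}
\|\attn-\attn'\|_{\rF,\infty}\le C(B,R,N)\sum_{m=1}^M\paren{\|\Vb^m-\Vb'^m\|_{\rF}+\|\Qb^m-\Qb'^m\|_{\rF}+\|\Kb^m-\Kb'^m\|_{\rF}}
\end{align*}
for an explicit $C(B,R,N)$ polynomial in $B,R,\sqrt{N}$. Discretizing the $3M$ parameter balls at radius $\delta/(3MC(B,R,N))$ and applying the ball covering bound then gives $\log\cN(\delta;\cF^{\attn},\|\cdot\|_{\rF,\infty})\le 3MD^2\log(9MBC(B,R,N)/\delta)$, which matches the stated $D^2M\log(6B^2MR^3N^{3/2}/\delta)$ up to absorbing constants.

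The main obstacle is not the overall telescoping but the bookkeeping: at every matrix product one must choose between $\|\Ab\Bb\|_{\rF}\le \|\Ab\|_{\rF}\|\Bb\|_{\rop}$ and $\|\Ab\|_{\rop}\|\Bb\|_{\rF}$, and decide when to pay a $\sqrt{N}$ slack to convert $\|\cdot\|_{\rop}$ into $\|\cdot\|_{\rF}$. Unlike softmax, whose attention weights are automatically bounded by $1$ entrywise, a ReLU attention score must be bounded through $\|\sigma(\Ab^m(\Yb))\|_{\rF}\le \|\Ab^m(\Yb)\|_{\rF}\le B^2R^2$, which is responsible for the $B^3R^2$ and $R^3$ factors. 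Carefully pairing the ``$\|\cdot\|_{\rF}/\|\cdot\|_{\rop}$'' choices across the two nested telescopes so that no spurious $R$, $B$ or $\sqrt{N}$ factor accumulates is the only delicate point of the argument.
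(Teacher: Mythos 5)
Your decomposition — telescoping the product $\Vb^m\Yb\,\sigma(\Ab^m(\Yb))$ first in $\Yb$ for the Lipschitz bound, then in the parameters for the covering bound, with $\Ab^m$ telescoped one level deeper, followed by the ball-covering estimate $(3B/\varepsilon)^{D^2}$ on each parameter set — is exactly the paper's proof. The one place you diverge is the bound on $\|\sigma(\Ab^m(\Yb))\|_{\rF}$: the paper's proof actually invokes the softmax/probability-matrix bound $\|\sigma(\Ab)\|_{\rF}\le\sqrt N$ (it even cites the softmax appendix), which is how the stated Lipschitz constant acquires its $BM\sqrt N$ term; you instead use the ReLU bound $\|\sigma(\Ab)\|_{\rF}\le\|\Ab\|_{\rF}\le B^2R^2$, which is the right choice given that $\cF^{\attn}$ is declared ReLU-activated, but it would produce a term of order $B^3MR^2$ in the Lipschitz constant rather than $BM\sqrt N$, so your claim that the two "match after converting between operator and Frobenius norms" is not quite accurate — you have in fact spotted a small inconsistency in the paper between the ReLU class it defines and the softmax bound it uses, and that inconsistency changes the constant rather than being absorbed by it.
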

\begin{proof}
Notice that for any two functions $f_1, f_2$ and $\Yb, \Yb'$, we have:
\[\|f_1(\Yb)f_2(\Yb)-f_1(\Yb')f_2(\Yb')\|_\rF \leq \|f_1(\Yb')\|_\rF\cdot \|f_2(\Yb)-f_2(\Yb')\|_\rF + \|f_2(\Yb)\|_\rF\cdot \|f_1(\Yb)-f_1(\Yb')\|_\rF.\]
It means that the Lipschitz constant of $\Vb^m \Yb\cdot \sigma\left((\Qb^m \Yb )^\top \Kb^m \Yb\right)$ is no larger than $\|\Vb^m\|_2\cdot \sqrt{N} + R\sqrt{N}\|\Vb^m\|_2 \cdot 2\|\Qb^{m\top} \Kb^m\|_{\rF}\cdot R\sqrt{N}$. Here, we use the fact that both ReLU and softmax activation function over vectors is 1-Lipschitz under $l_2$ norm and that over matrices is 1-Lipschitz under Frobenius norm. See Appendix \ref{appendix::softmax}. Also, $(\Qb^m\Yb)^{\top}\Kb^m\Yb$ is $2\|\Qb^{m\top}\Kb^m\|_{\rF}\cdot R\sqrt{N}$-Lipschitz under Frobenius norm when $\Yb\in \mathcal B(R)$. Therefore, in the function class $\mathcal F^{\attn}$, all functions included are $(1+BM\sqrt{N}+2B^3MR^2N)$-Lipschitz continuous, which comes to our conclusion. For the covering number, given any two functions $f, g\in \mathcal F^{\attn}$ with their corresponding parameter sets $\{\Vb_1^m, \Qb_1^m, \Kb_1^m:~m\in [M]\}$ and $\{\Vb_2^m, \Qb_2^m, \Kb_2^m:~m\in [M]\}$, then:
\begin{align*}
\|f(\Yb)-g(\Yb)\|_\rF &\leq \sum_{m=1}^M \left\|(\Vb_1^m-\Vb_2^m)\Yb\cdot \sigma\left(\Yb^\top \Qb_1^{m\top}\Kb_1^m \Yb\right)\right\|_\rF \\
&~~~~+ \sum_{m=1}^M\left\|\Vb_2^m \Yb\cdot \left(\sigma\left(\Yb^\top \Qb_1^{m\top}\Kb_1^m \Yb\right)-\sigma\left(\Yb^\top \Qb_2^{m\top}\Kb_2^m \Yb\right)\right)\right\|_\rF \\
&\leq \sum_{m=1}^M \|\Vb_1^m-\Vb_2^m\|_\rF \cdot R\sqrt{N} \cdot \sqrt{N} + \sum_{m=1}^M BR\sqrt{N} \cdot \|\Yb^\top (\Bb_2^m - \Bb_1^m) \Yb\|_\rF \\
&\leq \sum_{m=1}^M \|\Vb_1^m-\Vb_2^m\|_\rF \cdot RN + \sum_{m=1}^M B R^3 N^{3/2}\cdot \|\Bb_2^m-\Bb_1^m\|_\rF. 
\end{align*}
Here, $\Bb_1^m := \Qb_1^{m\top} \Kb_1^m$ and $\Bb_2^m := \Qb_2^{m\top} \Kb_2^m$. Then, $\|\Bb_1^m\|_\rF, \|\Bb_2^m\|_\rF \leq B^2$ holds for $\forall m\in [M]$. It leads to the following upper bound of the covering number of $\mathcal F^{\attn}$. 
\begin{align*}
\mathcal N(\delta; \mathcal F^{\attn}, \|\cdot\|_{\rF,\infty}) \leq \prod_{m=1}^M \mathcal N(\delta/2MRN; \mathcal M_1, \|\cdot\|_\rF) \cdot \prod_{m=1}^M \mathcal N(\delta/2BMR^3N^{3/2}; \mathcal M_2, \|\cdot\|_\rF)
\end{align*}
where matrix set $\mathcal M_1 = \{V\in \bbR^{D\times D}:~\|\Vb\|_\rF \leq B\}$ and $\mathcal M_2 = \{\Bb\in \bbR^{D\times D}:~\|\Bb\|_\rF \leq B^2\}$. To sum up, we conclude that:
\[\log \mathcal N(\delta; \mathcal F^{\attn}, \|\cdot\|_{\rF,\infty}) \leq D^2M\cdot \log \frac{6BMRN}{\delta} + D^2M\cdot \log \frac{6B^2MR^3 N^{3/2}}{\delta},\]
which comes to our conclusion. 
\end{proof}
Now, we are ready to combine these results together.
\begin{lemma}
    Consider the multi-layer transformers class $\cT_{\rm raw}(D,L, M, B)$ without encoders and decoders. 
    Then the log-covering number with input range bounded by $R_0$ ($\norm{\Yb}_{\rm F}\le R_0$) can be bounded by
    \begin{align*}
\log \mathcal N(\delta; \cT_{\rm raw}, R_0, \|\cdot\|_{\rF,\infty}) 
&\leq 4D^2M\cdot \left(L^2 \log L_{\mathcal F}A_{\mathcal F} + \log \frac{12R_0B^2MLN^{3/2}}{\delta}\right). 
\end{align*}
Here $L_{\mathcal F} = 1+c^2 \vee (BM\sqrt{N}+2B^3MR^2N)$ and $A_{\mathcal F}= (1+B^2+2B\sqrt{N}) \vee (1+MB\sqrt{N})$.
\end{lemma}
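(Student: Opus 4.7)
The plan is to write the transformer as an alternating composition $f = g_{2L}\circ g_{2L-1}\circ\cdots\circ g_1$ where odd-indexed layers $g_{2i-1}$ are attention blocks and even-indexed $g_{2i}$ are feedforward blocks (per the block convention $\ffn_i\circ\attn_i$). The proof then reduces to (i) tracking the operating radius through layers, (ii) tracking the Lipschitz constants per layer, (iii) applying Lemma~\ref{lemma:cover-1} to translate these into a Cartesian-product cover, and (iv) summing the per-layer log-covering numbers from Lemmas~\ref{lemma:cover-2} and~\ref{lemma:cover-3}.

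First, I would show by induction that the Frobenius-norm radius visited by layer $i$ satisfies $R_i\le A_{\mathcal F}^{\,i}R_0$. For attention, $\|\attn(\Yb)\|_{\rF}\le \|\Yb\|_{\rF}(1+MB\sqrt{N})$; for $\ffn$, $\|\ffn(\Yb)\|_{\rF}\le \|\Yb\|_{\rF}(1+B^2)+2B\sqrt{N}$. Both are bounded by multiplication by $A_{\mathcal F}=(1+B^2+2B\sqrt{N})\vee(1+MB\sqrt{N})$ (the additive term is absorbed by assuming $R_0\ge 1$). Simultaneously, the layer-wise Lipschitz constants satisfy $\mathrm{Lip}(g_i)\le L_{\mathcal F}$ evaluated at $R=R_{i-1}$, with the worst case being at the last layer where $R_{2L-1}\le A_{\mathcal F}^{2L}R_0$. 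I absorb this worst case into the constant $L_{\mathcal F}$ as stated.

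Next, by Lemma~\ref{lemma:cover-1}, $\|f-f'\|_{\rF,\infty}\le\sum_{i=1}^{2L}\|g_i-g_i'\|_{\rF,\infty}\cdot L_{\mathcal F}^{2L-i}$. To obtain a $\delta$-cover of the composition, it suffices to choose a $\delta_i$-cover of each layer class with $\delta_i=\delta/(2L\cdot L_{\mathcal F}^{2L-i})$ and form the Cartesian product. By Lemmas~\ref{lemma:cover-2} and~\ref{lemma:cover-3}, each $\log\mathcal N(\delta_i;\mathcal F_i,\|\cdot\|_{\rF,\infty})$ is bounded by $4D^2M\log\paren{C\cdot R_{i-1}B^2MN^{3/2}/\delta_i}$ for an absolute constant $C$ (the $4D^2$ factor from FFN and the $D^2M$ factor from attention are both dominated by $4D^2M$). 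Substituting $R_{i-1}\le A_{\mathcal F}^{\,i-1}R_0$ and $\delta_i^{-1}\le 2L\,L_{\mathcal F}^{2L}/\delta$ gives a per-layer contribution of
\begin{align*}
4D^2M\brac{\log\paren{\frac{C R_0 B^2 M L N^{3/2}}{\delta}}+(i-1)\log A_{\mathcal F}+(2L-i)\log L_{\mathcal F}}.
\end{align*}
Summing across $i=1,\dots,2L$ and bounding $\sum_i(i-1)$ and $\sum_i(2L-i)$ by $2L^2$ yields the stated bound $4D^2M\paren{L^2\log L_{\mathcal F}A_{\mathcal F}+\log(12R_0B^2MLN^{3/2}/\delta)}$ after absorbing absolute constants.

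The main obstacle is the bookkeeping tying together three moving parts: the growth of the operating radius $R_i$ through layers, the choice of per-layer cover accuracy $\delta_i$ that must shrink geometrically in $L_{\mathcal F}$ to cancel the accumulated Lipschitz factors from Lemma~\ref{lemma:cover-1}, and the resulting logarithm $\log(1/\delta_i)$ in each per-layer bound, which reintroduces the factors $L_{\mathcal F}^{2L-i}$ and $A_{\mathcal F}^{\,i-1}$. It is this reintroduction, summed over layers, that produces the characteristic $L^2\log(L_{\mathcal F}A_{\mathcal F})$ term rather than a merely linear-in-$L$ bound. Everything else is routine combination of Lemmas~\ref{lemma:cover-1}--\ref{lemma:cover-3}.
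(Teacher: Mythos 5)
Your proposal follows essentially the same route as the paper: you decompose the transformer into $2L$ alternating attention/feedforward layers, track the operating radius with $R_i = A_{\mathcal F}^i R_0$, bound each layer's Lipschitz constant by $L_{\mathcal F}$, apply Lemma~\ref{lemma:cover-1} with the geometric split $\delta_i = \delta/(2L\,L_{\mathcal F}^{2L-i})$, and then sum the per-layer covering bounds from Lemmas~\ref{lemma:cover-2} and~\ref{lemma:cover-3}. Your one genuine refinement is noticing that the attention Lipschitz constant in Lemma~\ref{lemma:cover-3} depends on the domain radius $R$, which grows layer by layer, so the uniform $L_{\mathcal F}$ must be evaluated at the worst-case radius $R_{2L-1}$; the paper's proof leaves the $R$ in $L_{\mathcal F}$ unspecified, so your bookkeeping is a bit more honest on this point, though it matches the intended reading. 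Both your derivation and the paper's also rely on the same final absorption: the sum of the constant-in-$i$ logarithm over $2L$ layers formally produces an extra factor of $2L$ in front of $\log(R_0 B^2 M L N^{3/2}/\delta)$, which is then folded into the $L^2 \log L_{\mathcal F} A_{\mathcal F}$ term without comment --- the paper itself does this and even ends its displayed chain at $8D^2 M$ and $24 R_0$ rather than the stated $4D^2 M$ and $12 R_0$, so the constant discrepancy you inherit is present in the source as well.
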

\begin{proof}
    For the score network class $\mathcal F = \cT_{\rm raw}(D, L, M, B)$, we have:
\[\mathcal F = \{f_{2L}\circ f_{2L-1}\circ\ldots \circ f_1:~f_i\in \mathcal F_i~\text{for}~\forall i\in [2L], ~\text{each }\mathcal F_i\text{ is either }\mathcal F^{\ffn}\text{ or }\mathcal F^{\attn}\}.\]
Denote $L_{\mathcal F} := 1+B^2 \vee (BM\sqrt{N}+2B^3MR^2N)$ as the upper bound of Lipschitz constant for both $\mathcal F^{\ffn}$ and $\mathcal F^{\attn}$. Notice that for any input $Y\in \bbR^{D\times N}$ such that $\|Y\|_\rF \leq R_0$, then the output of feed-forward network $f\in \mathcal F^{\ffn}$ holds
\[\|f(Y)\|_\rF \leq \|Y\|_\rF + B(B\|Y\|_\rF + \sqrt{N}) + B\sqrt{N} = (1+B^2+2B\sqrt{N})\cdot (R_0\vee 1).  \]
The output of attention network $f\in \mathcal F^{\attn}$ holds that 
\[\|f(Y)\|_\rF \leq \|Y\|_\rF + MB\|Y\|_\rF\cdot \sqrt{N} \leq (1+MB\sqrt{N})R_0. \]
Here we use the fact that $\|P\|_\rF \leq \sqrt{N}$ holds for all probability matrix $P\in \bbR^{D\times N}$. Denote $A_{\mathcal F}:= (1+B^2+2B\sqrt{N}) \vee (1+MB\sqrt{N}) > 1$ as the signal amplifier of each layer, then we can set the sequence of radius as $R_i = \left(A_{\mathcal F}\right)^i\cdot R_0$ with $R_0 > 1$ and $f_i: \mathcal B(R_{i-1})\rightarrow \mathcal B(R_i)$. According to Lemma \ref{lemma:cover-1}, we have:
\[\mathcal N(\delta; \cF, \|\cdot\|_{\rF,\infty}) \leq \prod_{i=1}^{2L} \mathcal N(\delta / 2LL_{\rF}^{2L-i}; \mathcal F_i, \|\cdot\|_{\rF,\infty}),\]
which leads to
\begin{align*}
& \quad \log \mathcal N(\delta; \mathcal F, \|\cdot\|_{\rF,\infty}) \\
&\leq \sum_{i=1}^{2L} \log \mathcal N(\delta / 2LL_{\rF}^{l-i}; \mathcal F_i, \|\cdot\|_{\rF,\infty})\\
&\leq \sum_{i=1}^{2L} \left(4D^2\log\frac{12B^2(R_{i-1}+\sqrt{N})\cdot 2LL_{\rF}^{2L-i}}{\delta} ~ \vee ~ D^2M\cdot \log \frac{6B^2MR_{i-1}^3 N^{3/2}\cdot 2LL_{\rF}^{2L-i}}{\delta}\right)\\
&\leq 8D^2M\cdot \left(L^2 \log L_{\mathcal F}A_{\mathcal F} + \log \frac{24R_0B^2MLN^{3/2}}{\delta}\right). 
\end{align*}
The proof is complete.
\end{proof}
Moreover, if we consider the entire transformers architecture with the encoder and decoder layers, we directly have the following results:
\begin{lemma}[Covering number of transformers]\label{lemma::covn number}
    Consider the entire transformer architecture $\cF=\cT(D,L,M,B,R)$ in which the encoder that takes $\vb_t\in \RR^{Nd}$ as the input and embeds it with the time embedding $\eb$ and the diffusion time-step embedding $\bphi(t)$, where $\norm{\eb}_2=r$ and $\norm{\bphi(t)}_2 \le C_{\rm diff}$ for some absolute constant $C_{\rm diff}>0$. Then the log-covering number with initial input range $\norm{\vb_t}_2\le R_0$ is bounded by
    \begin{align*}
        \log \mathcal N(\delta; \mathcal F, R_0, \|\cdot\|_{\rF,\infty}) 
&\leq 8D^2M\cdot \left(L^2 \log L_{\mathcal F}A_{\mathcal F} + \log \frac{24R_1B^2MLN^{3/2}}{\delta}\right). 
    \end{align*}
\end{lemma}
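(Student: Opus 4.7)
The strategy is to reduce the statement to the already-established covering bound for the raw transformer class $\cT_{\rm raw}(D,L,M,B)$ by peeling off the encoder $f_{\rm in}$ and decoder $f_{\rm out}$, neither of which carries trainable weights beyond those already accounted for. Any $f \in \cT(D,L,M,B,R)$ can be written as $f = f_{\rm out} \circ f_{\rm raw} \circ f_{\rm in}$ with $f_{\rm raw} \in \cT_{\rm raw}(D,L,M,B)$, $f_{\rm in}$ a fixed affine embedding lifting $\vb_t \in \RR^{Nd}$ together with time embedding $\eb_i$ (norm $r$) and diffusion timestep embedding $\bphi(t)$ (norm bounded by $C_{\rm diff}$) into a token matrix $\Yb \in \RR^{D \times N}$, and $f_{\rm out} = f_{\rm norm} \circ f_{\rm linear}$ the fixed projection followed by clipping described in Appendix~\ref{appendix::tf archi}.

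\textbf{Step 1: bound the effective input range $R_1$ entering the raw transformer.} For $\norm{\vb_t}_2 \le R_0$, each token $\yb_i$ produced by $f_{\rm in}$ concatenates one patch $\xb_i$, the embedding $\eb_i$, the vector $\bphi(t)$, and fixed constants (the ``1'' slot) padded with zeros. A direct Frobenius-norm bound yields
\begin{align*}
\norm{f_{\rm in}(\vb_t)}_{\rF} \;\le\; \norm{\vb_t}_2 + r\sqrt{N} + C_{\rm diff}\sqrt{N} + \sqrt{N} \;\;=:\;\; R_1,
\end{align*}
so that $f_{\rm raw}$ is only ever evaluated on the ball $\cB(R_1) \subset \RR^{D\times N}$.

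\textbf{Step 2: show $f_{\rm in}$ and $f_{\rm out}$ do not enlarge the covering number.} Because $f_{\rm in}$ is the same fixed map for every member of $\cT(D,L,M,B,R)$, for two candidates $f = f_{\rm out} \circ f_{\rm raw} \circ f_{\rm in}$ and $f' = f_{\rm out} \circ f_{\rm raw}' \circ f_{\rm in}$ Lemma~\ref{lemma:cover-1} gives
\begin{align*}
\norm{f-f'}_{\rF,\infty} \;\le\; \mathrm{Lip}(f_{\rm out}) \cdot \sup_{\Yb \in \cB(R_1)} \norm{f_{\rm raw}(\Yb) - f_{\rm raw}'(\Yb)}_{\rF}.
\end{align*}
Here $f_{\rm linear}$ is a coordinate extraction (hence $1$-Lipschitz) and $f_{\rm norm}$ is Euclidean projection onto a ball of radius $R_t$, which is also $1$-Lipschitz. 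Thus $\mathrm{Lip}(f_{\rm out}) \le 1$, and a $\delta$-cover of $\cT_{\rm raw}$ on inputs of Frobenius norm at most $R_1$ induces a $\delta$-cover of $\cT(D,L,M,B,R)$ on inputs of norm at most $R_0$.

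\textbf{Step 3: invoke the raw covering bound and conclude.} Applying the previous covering number lemma for $\cT_{\rm raw}$ with input radius $R_1$ delivers
\begin{align*}
\log \cN(\delta;\cF,R_0,\norm{\cdot}_{\rF,\infty}) \;\le\; \log \cN(\delta;\cT_{\rm raw},R_1,\norm{\cdot}_{\rF,\infty}) \;\le\; 8D^2 M\!\left(L^2 \log L_{\cF} A_{\cF} + \log \frac{24 R_1 B^2 M L N^{3/2}}{\delta}\right),
\end{align*}
which is exactly the claimed inequality. The only mildly delicate point—and the single place worth care—is verifying that the clipping layer $f_{\rm norm}$ is $1$-Lipschitz (this is the standard nonexpansive property of projections onto convex sets, applied here to a Euclidean ball), so that composition with $f_{\rm out}$ does not inflate the covering radius and no extra factor enters the bound.
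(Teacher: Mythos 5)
Your proposal is correct and follows essentially the same route as the paper: bound the Frobenius norm of the encoder's output by $R_1$, observe that the fixed encoder and the $1$-Lipschitz decoder (coordinate extraction plus nonexpansive ball projection) cannot inflate the covering number, and then invoke the raw-transformer covering bound at radius $R_1$. The only difference is a harmless extra additive $\sqrt{N}$ in your $R_1$ from the constant-``1'' slot, which does not affect the stated bound.
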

Here $R_1=R_0+\sqrt{N}(r+C_{\rm diff})$. This is because the encoder maps $\vb_t$ to $\Yb\in\RR^{D\times N}$ with $\norm{\Yb}_{\rm F}\le \norm{\vb_t}_2+r\sqrt{N}+C_{\rm diff}\sqrt{N}$, and our decoder (defined in Appendix \ref{appendix::tf archi}) only extracts part of the entries from the output and impose a clipping function with range being $R$ on it, both of which lead to no increase in the covering number.

\section{Experimental Details in Section~\ref{sec:experiment} and Additional Experiments}
\subsection{Experimental Details in Section~\ref{sec:experiment}}\label{appendix:exp_detail}
\paragraph {Patch Embedding Modification}~
To implement our simulation experiments on Gaussian Process data, we slightly adapt the original DiT designed for image generation \citep{peebles2023scalable}. In the original DiT, a pre-trained VAE encoder is deployed to convert image in the training dataset to feature representations and patch embedding is applied by splitting the image into subblocks(patches) and flattening and projecting each subblock to a feature. But in our numeric experiments, we dropped the VAE encoder and split the data at time dimension, with each time step being a patch, and project the patch into feature of higher dimension.

\paragraph {Kernel Estimation Method}~In our experiment, each sample generated by DiT in is denoted as $\Sbb =[\sbb_1,...,\sbb_N] \in\RR^{N\times d}$, where $\sbb_i=\Sbb[i]\in \RR^d$ denotes the data patch at $i$-th time index. In our experiment we take $N=128$ and $d=8$ and with $n\in \{10^3,3200,10^{4},32000,10^{5}\}$ samples in total. To evaluate the quality of generated data, we calculate the mean of data patch at $i$-th time step and covariance matrix between data patch at $i$-th and $j$-th time step as follows and compare them with the theoretical ones.

\begin{align*}
      \hat{\bmu}_i=\frac{1}{n}\sum_{k=1}^{n}\Sbb_k[i], ~~~~\hat{\bSigma}_{i,j}=\frac{1}{n}\sum_{k=1}^{n}(\Sbb_k[i]-\hat{\bmu}_i)(\Sbb_k[j]-\hat{\bmu}_j)^\top.
\end{align*}
Here $\hat{\bmu}_i \in \RR^{d}$ is the empirical mean of the $i$-th patch and $\hat{\bSigma}_{i,j}$ is the empirical covariance matrix between the $i$-th and the $j$-th patch. With $\hat{\bSigma}_{i,j}$, we could estimate the empirical kernel value $\hat{\gamma}(i,j)$ as 
\begin{align*}
     \hat{\gamma}(i,j)=\argmin_\alpha \norm{\hat{\bSigma}_{i,j}-\alpha\bSigma}_{\rm F}.
\end{align*}
where $\bSigma\in\RR^{d\times d}$ is the true covariance matrix. 
After running on each pair of $(i,j)\in\{1,2,...,N\}^{2}$, we can get the empirical kernel $\hat{\bGamma}=[\hat{\gamma}(i,j)]_{ij}\in\RR^{N\times N}$.

\paragraph {Relative Error}~
In the experiments where we compare sample efficiency across different training data size $n$ and kernel setting $\nu,\ell$, the metric is a relative error of the estimated sample covariance matrix to its ground-truth: 
\begin{align*}
     \epsilon=\frac{\left\|\hat{\bGamma}\otimes\hat{\bSigma}-\bGamma\otimes\bSigma\right \|_{\rm F}^2  }{\left\|(\hat{\bGamma}\otimes\hat{\bSigma})_{\rm truth}-\bGamma\otimes\bSigma\right\|_{\rm F}^2}.
\end{align*}
 Here $(\hat{\bGamma}\otimes\hat{\bSigma})_{\rm truth}$ is the estimated covariance matrix through the same method but under an equal amount of training data, instead of generated data. This relative error eliminates the influence of different scaling of $\bGamma\otimes\bSigma$ and the concentration error caused by finite samples.

\paragraph {Query-Key Matrices and Value Matrices}~
From Figure ~\ref{fig:qk_all}, we can see that the weights of the query-key matrix $\Qb^{\top}\Kb$ emphasize the time embedding part of the input across different layers. Interestingly, Figure \ref{fig:v_all} shows that the weight emphasis on the value matrix $\Vb$ is on the data patch part instead. This observation aligns with our theoretical construction of the score function using the DiT structure. Specifically, the score matrix (determined by the query-key matrix) captures the kernel $\bGamma$, representing temporal dependency, while the value matrix determines the covariance $\bSigma$, representing spatial dependency.
\begin{figure}[!ht]
\centering
\includegraphics[width = 0.9\textwidth]{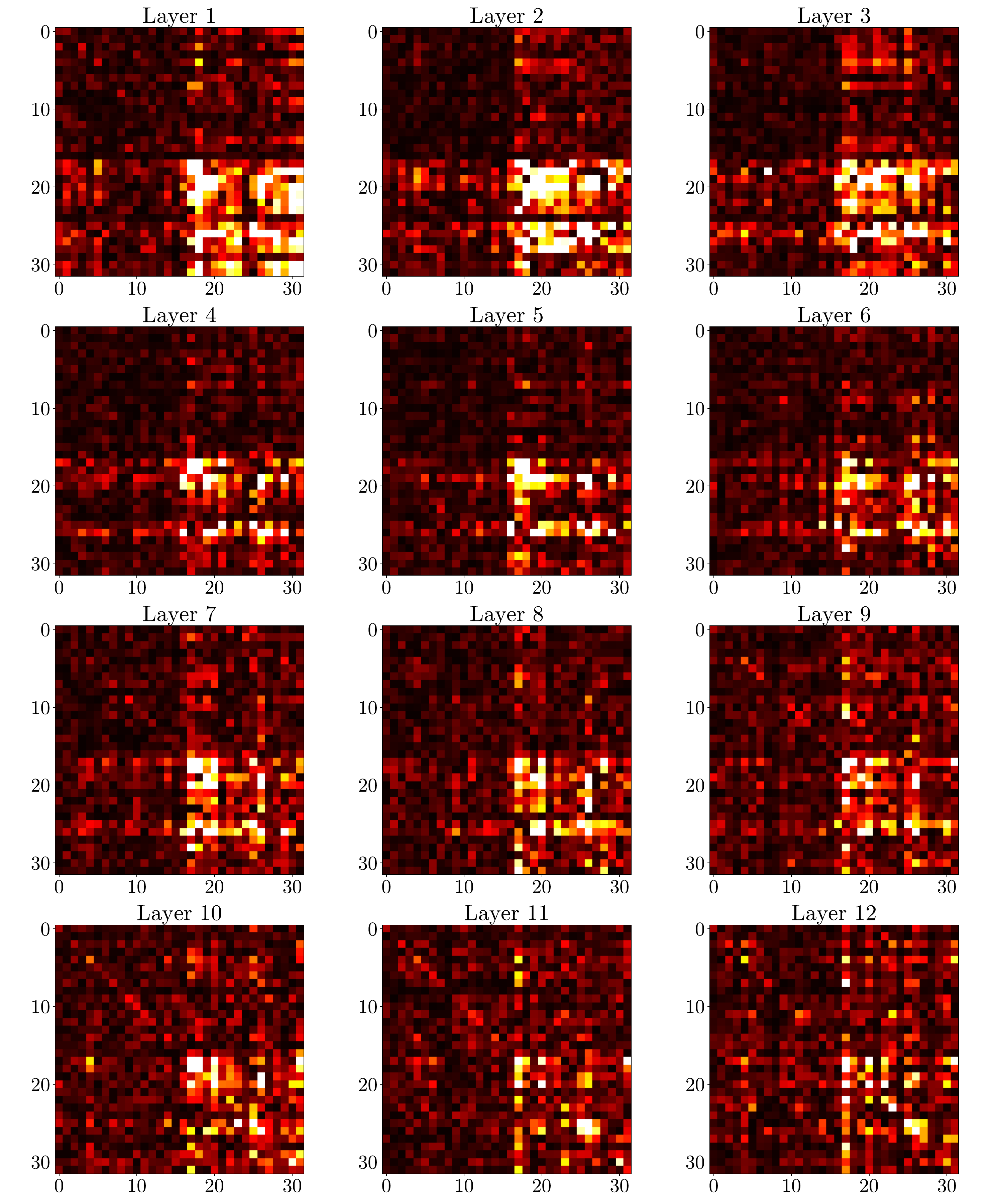}
\caption{\it Query-Key matrices in different transformer attention layers(1\textasciitilde12).}
\label{fig:qk_all}
\end{figure}

\begin{figure}[!ht]
\centering
\includegraphics[width = 0.9\textwidth]{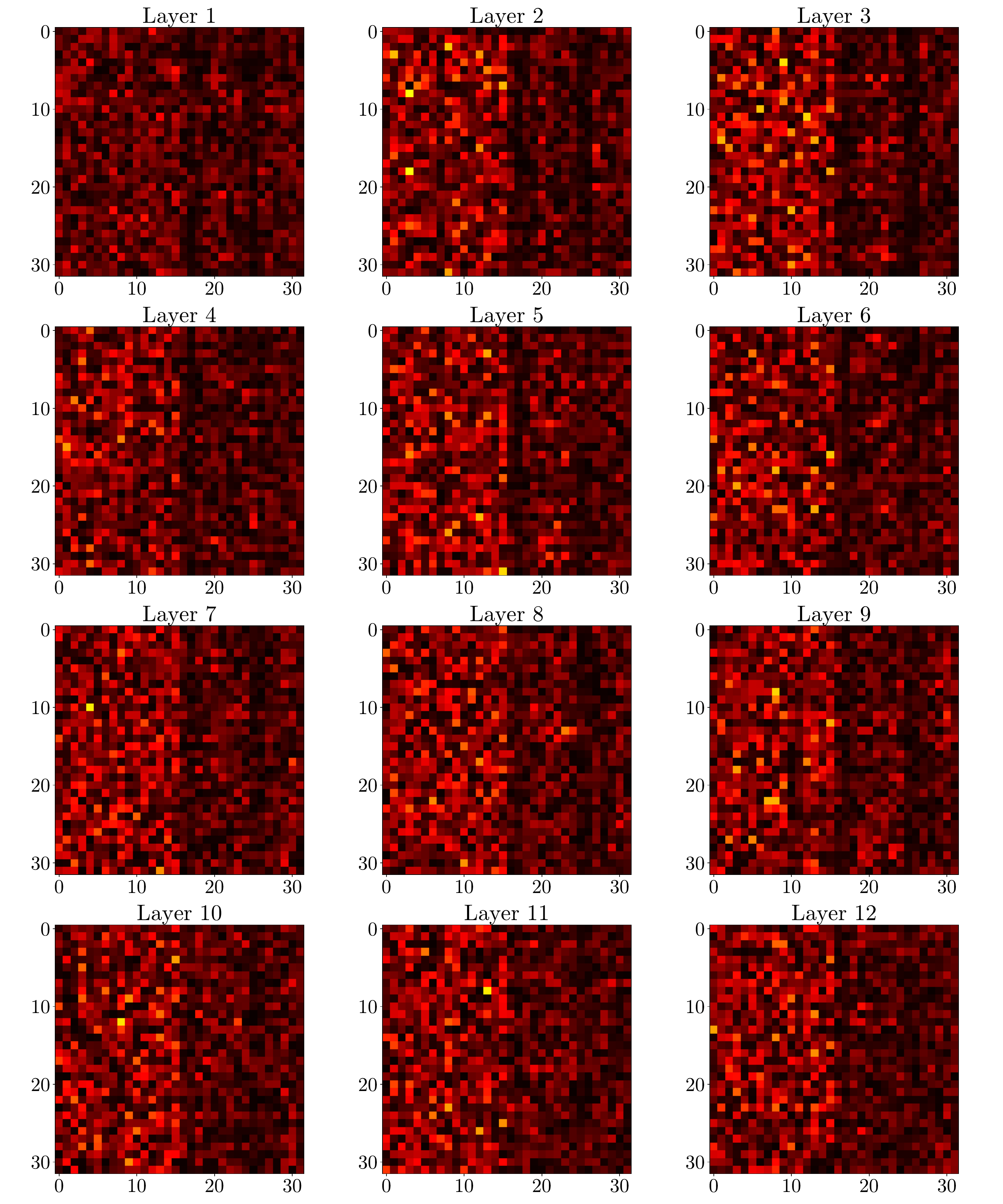}
\caption{\it Value weight matrices $\Vb$ in different transformer attention layers(1\textasciitilde12).}
\label{fig:v_all}
\end{figure}

 \paragraph{Attention Score in Different Attention Layers} To further demonstrate our theory, we visualize the attention score $\Yb_t^{\top}\Qb^{\top}\Kb\Yb_t \in \RR^{N\times N}$ averaged over $n=10^{5}$ data points at different backward diffusion times $t\in \{0,50,100,200,400,800,1000\}$ to observe if it gradually unveils the kernel $\bGamma$ as the backward process progresses. According to Figure \ref{fig:attn_score all}, in the initial few layers (1\textasciitilde2), we observe the early stages of kernel construction. In the subsequent layers (3\textasciitilde12), the attention score matrix increasingly resembles the kernel, becoming clearer as the backward diffusion process advances.

\begin{figure}[!ht]
\centering
\includegraphics[width = 0.7\textwidth]{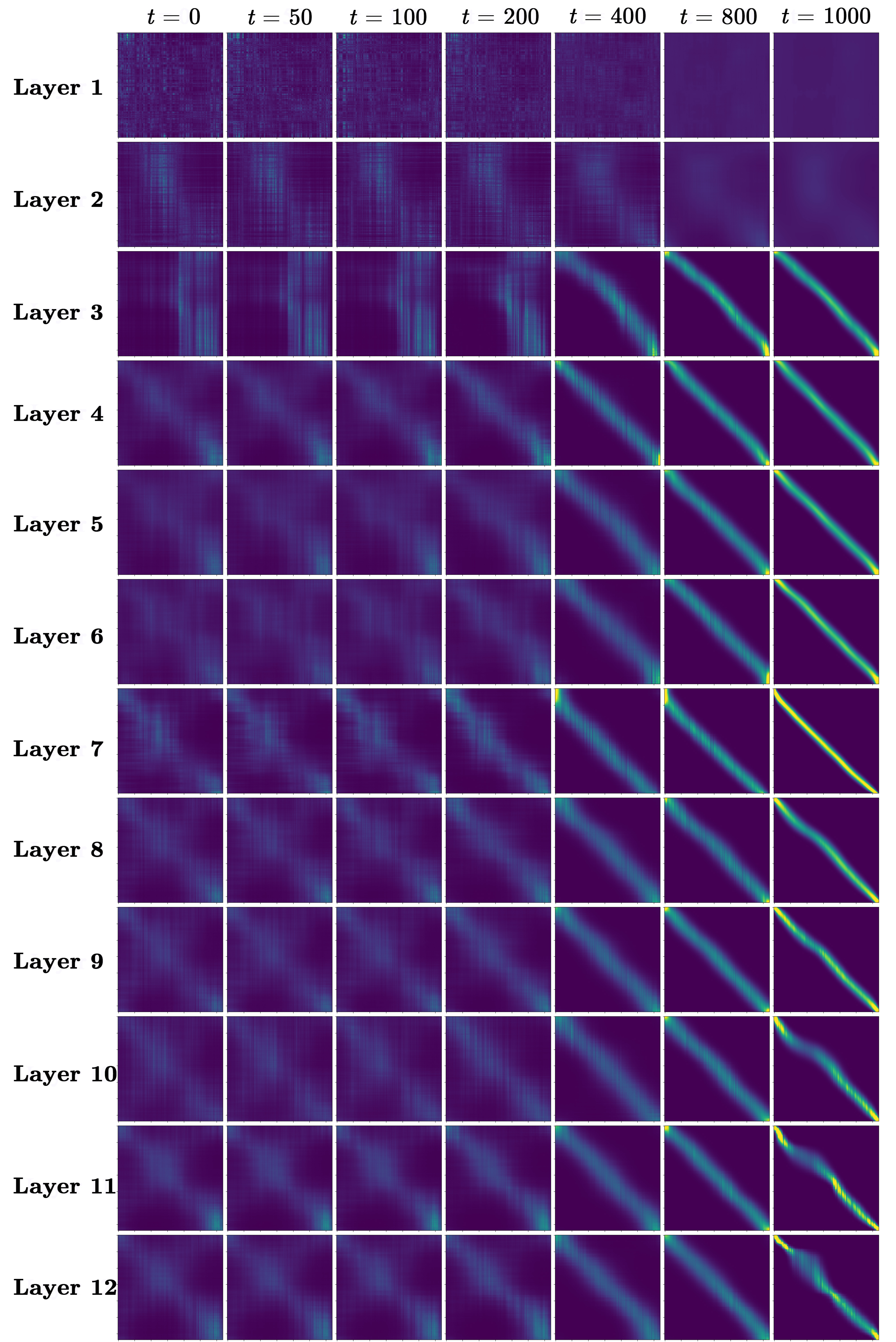}

\caption{\it Attention score matrices in different attention layers and at different steps of the backward process.}
\label{fig:attn_score all}

\end{figure}

{ \subsection{Additional Experiments}\label{appendix::further exp}
We provide additional numerical results to support and validate our theory: 1) We present comparisons between diffusion transformers and vanilla diffusion models with UNet, and 2) We vary the size of the diffusion transformers and demonstrate the training dynamic of gradually capturing spatial-temporal dependencies.

\paragraph{Comparison of DiT to diffusion with UNet} To conduct a comparison between DiT and diffusion with UNet, we choose one instance of our Gaussian process data with $d=8$ and $N=128$. The covariance function is Gaussian kernel with $\nu=2$ and $\ell=64$. We collect $n=10000$ sequences for training a DiT with $12$ transformer blocks and a UNet-based diffusion model with $4$ down/up sampling procedures. Each of the down/up sampling in UNet consists of $3$ residual and convolution layers so that the DiT and UNet have approximately the same model size.

Each model is sufficiently trained for $400$ epochs, when the training error has converged. In the testing stage, we collect $10000$ samples generated separately from each model and estimate the spatial-temporal dependencies for comparison. As shown in the first column of Figure \ref{fig::rebuttal_comparison}, both DiT and UNet-based diffusion model capture the decay pattern in the temporal correlation. However, DiT exhibits a much better learning result, matching the ground truth. The temporal correlation of the samples generated by UNet-based diffusion model presents a ``piecewise'' pattern, not as smooth as the ground truth. We conjecture that it is caused by the size of the filter in convolution layers that prevents UNets from learning complete temporal correlation.

Moreover, DiT exhibits significant strength in learning the spatial correlation. As shown in the remaining columns of Figure~\ref{fig::rebuttal_comparison}, DiT successfully captures the spatial correlation between two tokens even sufficiently separated, i.e., the correlation is rather weak. In contrast, UNet-based diffusion struggles in learning spatial correlation. We find clear inconsistent patterns of the estimated spatial correlation in the third row, not to mention that the pattern deviates from the ground truth. This result not only demonstrates the surprising learning and generalization capabilities of DiT in sequential data, but also indicates some advantage of DiT over UNet-based diffusion models.

\paragraph{Performance of DiT with varying network size and sequence length} We study the influence of the sequence length $N$ and the number of transformer blocks $L$ on the performance. We choose the sequence length $N$ and the number of transformer blocks $L$ as follows:

1) we fix $L=16$, while change $N$ in $\{2^{4},2^{5}, 2^{6}, 2^{7}, 2^{8}\}$;

2) we fix $N=128$, while change $L$ in $\{1,2,4,8,16\}$.

We adopt a uniform Gaussian process setting of $d=8$, $\nu=2$ and $\ell=64$ as considered in the experiments shown in Figure \ref{fig::rebuttal_comparison}. After training, we independently generate $10000$ sequences for performance evaluation. The metric is the relative error $\epsilon$ of the estimated sample covariance matrix to its ground truth (see a definition in Appendix~\ref{appendix:exp_detail}), which is plotted in Figure~\ref{fig::error-N-D}. As shown in the left panel of the figure, the relative error increases mildly as the sequence length increases, which aligns with our $\sqrt{N}$-dependence in Theorem \ref{thm:generalization}, demonstrating DiT's strength in handling long sequences. The right panel of the figure shows that the performance of DiT improves as the number of transformer blocks increases, yet at a marginally diminishing speed when $L$ is sufficiently large. Using $8$ transformer blocks suffices for an efficient learning in this case, which approximately verifies our construction of a transformer with $\cO(\log N)$ blocks.
\begin{figure}[H]
    \centering
    \includegraphics[width=\linewidth]{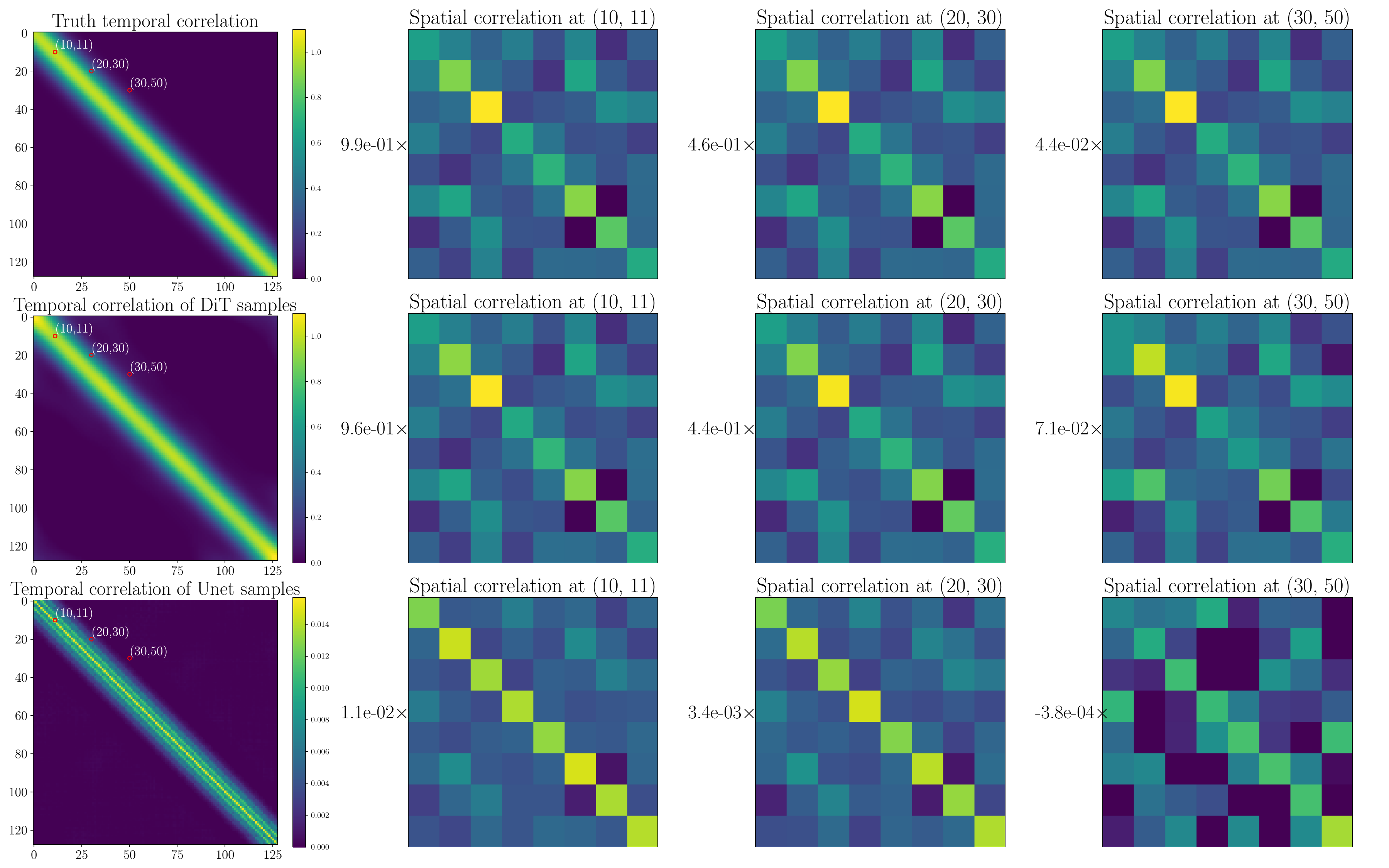}
    \caption{ Visualization of ground truth spatial-temporal dependencies (the first row), spatial-temporal dependencies of the DiT-generated samples (the second row), and the UNet-generated samples (the third row). We visualize the temporal dependencies in the first column, while visualize the spatial dependencies of three pairs $(i,j) \in \{(10,11),(20,30),(30,50)\}$ of tokens respectively, representing  short-horizon, medium-horizon, and long-horizon spatial dependencies. For each pair $(i,j)$, we compute the (empirical) covariance matrix between the $i$-th token and the $j$-token and visualize the matrix in a normalized version $\hat{\gamma}_{i,j} \times \hat{\bSigma}$, where we set $\|\hat{\bSigma}\|_{\rm F}=\norm{{\bSigma}}_{\rm F}$ for a better comparison with the true spatial covariance $\bSigma$.}
    \label{fig::rebuttal_comparison}
\end{figure}
\begin{figure}[H]
\centering
\subfigure[Relative error with varying $N$]{\includegraphics[width = 0.49\linewidth]{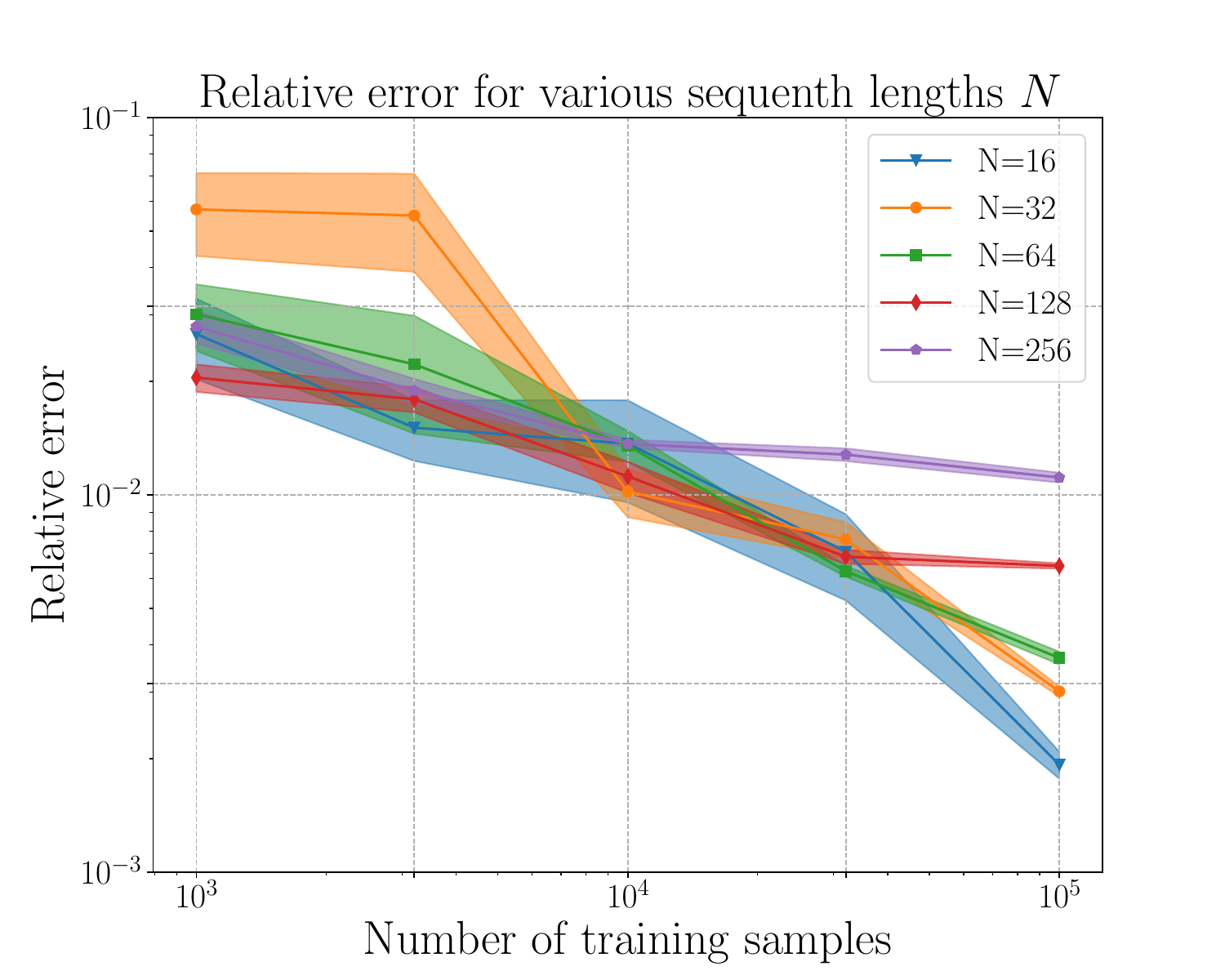}}\subfigure[Relative error with varying $L$]{\includegraphics[width = 0.49\linewidth]{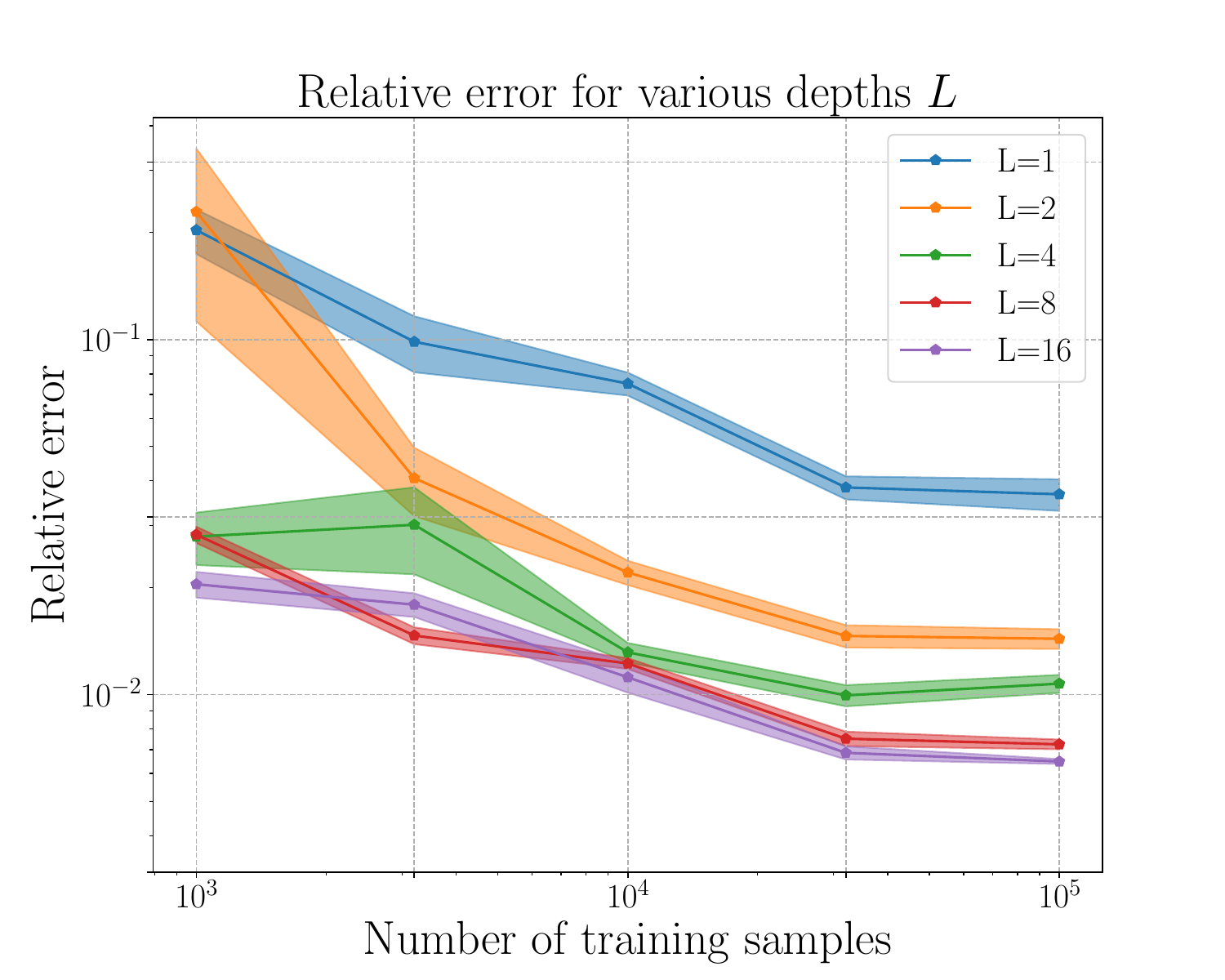}}\caption{ Relative error with different sequence lengths and numbers of transformer blocks. We observe a mild error increase when increasing the length of the sequence. Meanwhile, increasing the number of blocks in transformer leads to an initial gain of the performance. However, the performance gain saturated when the number of transformer blocks is sufficiently large. These observations support our theoretical results.}
\label{fig::error-N-D}
\end{figure}

\paragraph{Training dynamic of DiT} To further examine how DiT captures spatial-temporal dependencies, we present the training dynamic of DiT. As shown in Figure \ref{fig::rebuttal_train}, the spatial correlation is captured relatively accurately at an early stage of the training (after the first few epochs), while the temporal correlation is learned slower (after epoch 50). We observe the gradual change in the temporal pattern to match eventually the ground truth. In particular, large temporal dependencies are easier to learn, while weak temporal dependencies require more epochs to learn.
\begin{figure}[H]
    \centering
    \includegraphics[width=\linewidth]{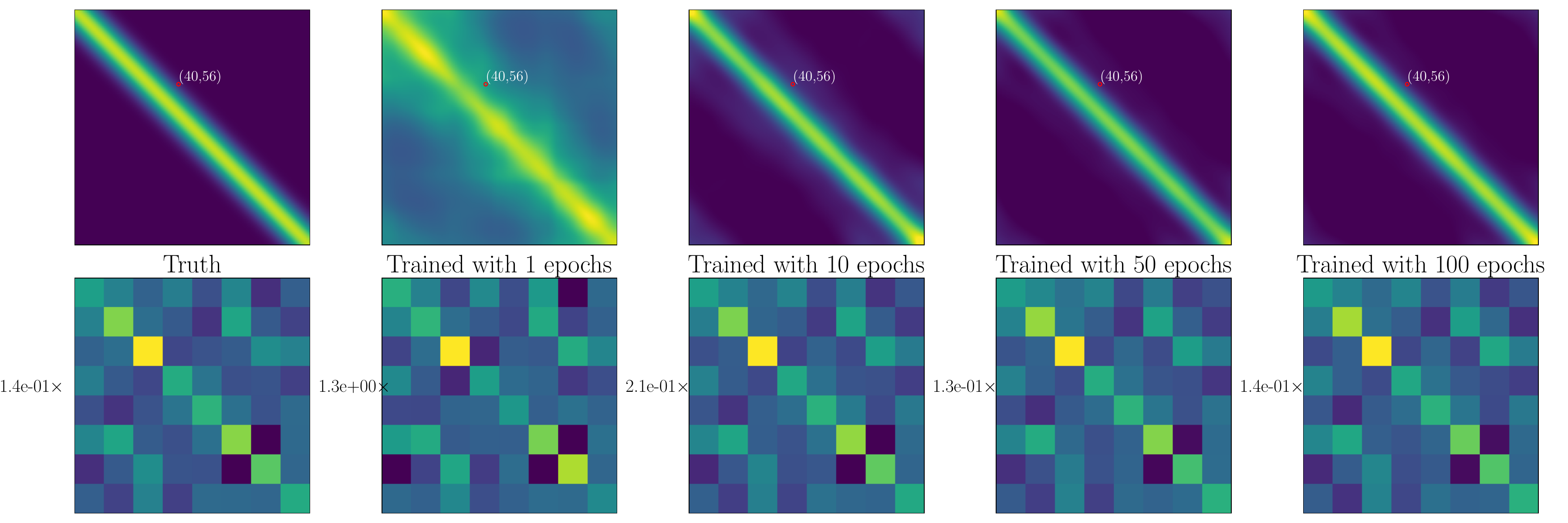}
    \caption{Spatial-temporal correlations of samples generated by DiT trained with different number of epochs. We train DiT with $10000$ samples and use Adam optimizer with a mini-batch size of $128$. We visualize the temporal correlation in the first row, and in the second row, we plot the estimated spatial correlation between the $40$-th token and the $60$-token using generated sequences.}
    \label{fig::rebuttal_train}
\end{figure}
}

\section{Supporting Technical Results}
\subsection{Gaussian Lemmas}
In this subsection, we will introduce several Lemmas to control the deviation of random variables which polynomially depend on some Gaussian random variables.
We will use a slightly modified version of Lemma 30 from \citet{damian2022neural}.
\begin{lemma}\label{lemma:polynomial concentration}
Let $g$ be a polynomial of degree $p$ and $x\sim \cN(0,I_d)$. Then there exists an absolute positive constant $C_p$ depending only on $p$ such that for any $\delta>1$,
$$
\mathbb{P}\left[|g(x)-\mathbb{E}[g(x)]| \geq \delta \sqrt{\operatorname{Var}(g(x))}\right] \leq 2 \exp \left(-C_p \delta^{2 / p}\right).
$$
\end{lemma}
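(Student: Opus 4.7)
The statement is a standard Gaussian polynomial concentration inequality, and the natural route is via Gaussian hypercontractivity (Nelson's theorem, equivalently the Bonami--Beckner inequality for the Ornstein--Uhlenbeck semigroup). The key analytic input is the moment bound: for any polynomial $g$ of degree at most $p$ in independent standard Gaussians and any $q \geq 2$,
\begin{equation*}
\bigl(\mathbb{E}|g(x)|^q\bigr)^{1/q} \;\leq\; (q-1)^{p/2}\,\bigl(\mathbb{E}|g(x)|^2\bigr)^{1/2}.
\end{equation*}
I would quote this estimate from a standard reference (e.g.\ Janson's \emph{Gaussian Hilbert Spaces}, Thm.~5.10, or Ledoux--Talagrand) rather than reprove it; once it is available, the rest is a short Chebyshev--Markov optimization.

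\textbf{Main steps.} First, set $Y = g(x) - \mathbb{E}[g(x)]$, which is again a polynomial of degree at most $p$ in $x$, and satisfies $\mathbb{E}[Y]=0$ and $\mathbb{E}[Y^2] = \operatorname{Var}(g(x))$. Apply the hypercontractive moment bound to $Y$ to obtain, for every $q \geq 2$,
\begin{equation*}
\mathbb{E}|Y|^q \;\leq\; (q-1)^{pq/2}\,\bigl(\operatorname{Var}(g(x))\bigr)^{q/2}.
\end{equation*}
Second, apply Markov's inequality at power $q$ with threshold $\delta\sqrt{\operatorname{Var}(g(x))}$:
\begin{equation*}
\mathbb{P}\bigl[\,|Y| \geq \delta\sqrt{\operatorname{Var}(g(x))}\,\bigr]
\;\leq\; \frac{\mathbb{E}|Y|^q}{\delta^q\,(\operatorname{Var}(g(x)))^{q/2}}
\;\leq\; \frac{(q-1)^{pq/2}}{\delta^q}.
\end{equation*}
Third, optimize over $q$. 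Choosing $q - 1 = (\delta/e)^{2/p}$ makes $(q-1)^{pq/2} = (\delta/e)^q$, so the bound collapses to $e^{-q} \leq \exp\!\bigl(-e^{-2/p}\,\delta^{2/p}\bigr)$. This is valid provided $q \geq 2$, i.e.\ $\delta \geq e\,(\,\text{some constant depending on }p\,)$; for the remaining range $\delta \in (1, \text{const})$ the claimed bound is trivial after absorbing an extra constant into $C_p$, which finally justifies the factor of $2$ on the right-hand side.

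\textbf{Anticipated obstacle.} There is no deep obstacle: the whole proof is essentially ``quote hypercontractivity, then optimize.'' The only subtle point is handling the restriction $q \geq 2$ in the Markov step, which forces a case split for small $\delta$ and is the reason the final constant $C_p$ absorbs both the $e^{-2/p}$ factor from the optimization and a normalization constant from the small-$\delta$ trivial regime. One should also note that Assumption~\ref{assumption:decay} and the surrounding Gaussian-process structure play no role here --- the lemma is a purely analytic fact about polynomials in i.i.d.\ standard Gaussians, applied elsewhere in the paper (e.g.\ in Lemma~\ref{lemma::bound truth score} and Lemma~\ref{lemma::data range}) to quadratic forms $\zb^\top A \zb$ with $p=2$.
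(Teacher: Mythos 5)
Your proof is correct, and the optimization $q-1=(\delta/e)^{2/p}$ together with the case split for $\delta\in(1,e)$ (absorbing the slack into $C_p$ and using the factor $2$) is handled properly. Note, however, that the paper does not prove this lemma at all --- it simply imports it as ``a slightly modified version of Lemma 30'' from \citet{damian2022neural} --- so there is no in-paper argument to compare against; your hypercontractivity-plus-moment-optimization route is precisely the standard proof underlying that cited result (and is the same mechanism as the paper's own Lemma~\ref{lemma: gaussian hypercontractivity}), so there is nothing to object to.
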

 The next Lemma orginates from Theorem 4.3, \citet{Prato2007WickPI}. 
\begin{lemma}\label{lemma: gaussian hypercontractivity}
For any $\ell \in \mathbb{N}$ and $f \in L^2(\cN(0,\bI_d))$ to be a degree $\ell$ polynomial, for any $q \geq 2$, we have
$$
\mathbb{E}_{z \sim \gamma}\left[f(z)^q\right] \leq C_{q,\ell}\left(\mathbb{E}_{z \sim \gamma}\left[f(z)^2\right]\right)^{q / 2}.
$$
where we use $C_{q,\ell}$ to denote some universal constant that only depends on $q,\ell$.
\end{lemma}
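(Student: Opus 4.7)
The plan is to reduce the claim to Nelson's hypercontractivity for the Ornstein--Uhlenbeck semigroup via the Hermite polynomial decomposition of $f$. First I would expand $f$ in the orthonormal basis of (tensor) Hermite polynomials on $L^2(\gamma)$ with $\gamma = \cN(0,\bI_d)$. Because $f$ is a polynomial of degree at most $\ell$, this expansion truncates as
\begin{equation*}
f = \sum_{k=0}^{\ell} f_k,\qquad f_k \in \mathcal{H}_k,
\end{equation*}
where $\mathcal{H}_k$ denotes the $k$-th Wiener chaos (the closed span of products of Hermite polynomials whose total degree equals $k$). Orthogonality of distinct chaoses gives $\|f\|_{L^2(\gamma)}^2 = \sum_{k=0}^{\ell} \|f_k\|_{L^2(\gamma)}^2$.

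Next I would invoke the Ornstein--Uhlenbeck semigroup $(T_t)_{t\ge 0}$ on $L^2(\gamma)$, characterized by $T_t g(x) = \EE[g(e^{-t} x + \sqrt{1-e^{-2t}}\, \zb)]$ with $\zb \sim \gamma$; on the $k$-th chaos it acts diagonally as $T_t f_k = e^{-kt} f_k$. Nelson's hypercontractivity inequality asserts that $T_t : L^2(\gamma) \to L^q(\gamma)$ is a contraction as soon as $q - 1 \le e^{2t}$. Setting $t_\star = \tfrac{1}{2}\log(q-1)$ so that equality holds, and defining the auxiliary function $g := \sum_{k=0}^{\ell} e^{k t_\star} f_k$, we have $T_{t_\star} g = f$. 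Therefore
\begin{equation*}
\|f\|_{L^q(\gamma)} \;=\; \|T_{t_\star} g\|_{L^q(\gamma)} \;\le\; \|g\|_{L^2(\gamma)} \;=\; \Bigl(\sum_{k=0}^{\ell} e^{2 k t_\star}\|f_k\|_{L^2(\gamma)}^2\Bigr)^{1/2} \;\le\; e^{\ell t_\star}\|f\|_{L^2(\gamma)} \;=\; (q-1)^{\ell/2}\|f\|_{L^2(\gamma)}.
\end{equation*}
Raising to the $q$-th power yields the desired moment comparison with explicit constant $C_{q,\ell} = (q-1)^{q\ell/2}$, and the dimension $d$ drops out entirely.

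The only nontrivial ingredient is Nelson's hypercontractivity itself, and that is the main obstacle if one tried to prove the lemma from first principles. A self-contained derivation would either (i) start from the two-point inequality on $\{\pm 1\}$, tensorize to the discrete cube, and pass to the Gaussian limit via the central limit theorem, or (ii) invoke the equivalence (due to Gross) with the Gaussian logarithmic Sobolev inequality. For the purposes of this paper it suffices to cite Nelson's theorem, exactly as the authors do by referencing Prato's result; the role of the Hermite truncation above is simply to convert the operator inequality into the polynomial moment bound actually needed in Lemma~\ref{lemma: gaussian hypercontractivity}.
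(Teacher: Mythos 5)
Your argument is correct. Note that the paper itself does not prove Lemma~\ref{lemma: gaussian hypercontractivity}; it imports the statement wholesale from Theorem 4.3 of \citet{Prato2007WickPI}, so there is no in-paper proof to compare against. What you have written is the standard derivation of that imported fact: truncate the Wiener chaos expansion of $f$ at degree $\ell$, observe that the Ornstein--Uhlenbeck semigroup acts as $e^{-kt}$ on the $k$-th chaos so that $f = T_{t_\star}g$ with $g = \sum_{k\le\ell} e^{kt_\star}f_k$ and $t_\star = \tfrac{1}{2}\log(q-1)$, and then apply Nelson's $L^2\to L^q$ hypercontractive contraction. Each step checks out, and your version has the concrete advantage of producing the explicit constant $C_{q,\ell} = (q-1)^{q\ell/2}$, where the paper leaves $C_{q,\ell}$ unspecified. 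Two cosmetic remarks: for non-even $q$ the left-hand side of the lemma should be read as $\EE[|f(z)|^q]$, which is exactly what your bound on $\|f\|_{L^q(\gamma)}^q$ controls and which dominates $\EE[f(z)^q]$ whenever the latter is meaningful; and at $q=2$ your $t_\star=0$ reduces the inequality to an identity, so the estimate is uniform over all $q\ge 2$ as the lemma requires.
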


\subsection{Lipschitz Continuity of Activation Functions}
\label{appendix::softmax}
\begin{lemma}[1-Lipschitz continuity of softmax function]
\label{lemma:softmax-lipschitz}
For the softmax function $\sigma:\bbR^d\rightarrow \bbR^d$, it is 1-Lipschitz continuous under $l_2$ norm. 
\end{lemma}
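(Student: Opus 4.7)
The plan is to establish the Lipschitz bound by computing the Jacobian of softmax and controlling its spectral norm, then invoking the mean value inequality. Writing $\sigma(x)_i = e^{x_i}/\sum_{j} e^{x_j}$, a direct differentiation yields the Jacobian
\begin{align*}
J(x) \;=\; \mathrm{diag}(\sigma(x)) - \sigma(x)\,\sigma(x)^\top,
\end{align*}
which is symmetric and has a natural probabilistic interpretation: for any $v \in \mathbb{R}^d$, $v^\top J(x)\, v$ equals the variance of the coordinate-function $i \mapsto v_i$ under the categorical distribution with probabilities $\sigma(x)$.

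First I would verify that $J(x)$ is positive semidefinite and bound its operator norm by $1$. The variance representation gives
\begin{align*}
0 \;\le\; v^\top J(x)\, v \;=\; \sum_{i} \sigma(x)_i v_i^2 - \Big(\sum_{i} \sigma(x)_i v_i\Big)^2 \;\le\; \sum_{i} \sigma(x)_i v_i^2 \;\le\; \|v\|_\infty^2 \;\le\; \|v\|_2^2,
\end{align*}
using that $\sigma(x)$ is a probability vector. Since $J(x)$ is symmetric, its spectral norm equals its largest eigenvalue, hence $\|J(x)\|_{2} \le 1$ uniformly in $x$.

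Second I would upgrade the pointwise Jacobian bound to a global Lipschitz estimate by the fundamental theorem of calculus applied along the segment $t \mapsto x + t(y-x)$: writing $\sigma(y) - \sigma(x) = \int_0^1 J(x + t(y-x))\,(y-x)\,\diff t$ and taking Euclidean norms gives
\begin{align*}
\|\sigma(y) - \sigma(x)\|_2 \;\le\; \int_0^1 \|J(x + t(y-x))\|_2 \,\diff t \cdot \|y - x\|_2 \;\le\; \|y - x\|_2,
\end{align*}
which is the claim. I do not anticipate a real obstacle here; the only subtle step is the operator-norm bound on $J(x)$, which is handled cleanly by the variance identity above rather than by a looser Gershgorin-type estimate.
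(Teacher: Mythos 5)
Your proof is correct, and it follows the same overall strategy as the paper: compute the Jacobian $J(x)=\mathrm{diag}(\sigma(x))-\sigma(x)\sigma(x)^\top$ and bound a matrix norm of it by $1$. The difference lies in which norm is controlled and how. The paper bounds the Frobenius norm directly by algebraic manipulation of the entries $p_i(1-p_i)$ and $-p_ip_j$, arriving at $\norm{J(x)}_{\rm F}\le 1$ (in fact its displayed intermediate inequality $\sum_i p_i^2(1-p_i)^2+\sum_{i\neq j}p_i^2p_j^2\le(\sum_i p_i^2)^2$ fails already for the uniform vector in $d=3$, though the endpoint $\norm{J(x)}_{\rm F}\le 1$ is true, e.g.\ via $\norm{J}_{\rm F}^2=\sum_i\lambda_i^2\le(\sum_i\lambda_i)^2=(1-\sum_i p_i^2)^2$ using positive semidefiniteness). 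You instead bound the operator norm via the variance identity $v^\top J(x)v=\mathrm{Var}(v_I)\le\sum_i p_iv_i^2\le\norm{v}_2^2$, which is the quantity actually relevant for the $\ell_2$ Lipschitz constant, is sharper than the Frobenius bound, and avoids the entrywise algebra entirely. You also make explicit the fundamental-theorem-of-calculus step converting the pointwise Jacobian bound into a global Lipschitz estimate, which the paper only asserts. No gaps.
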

\begin{proof}
For any $x\in \bbR^d$, denote $J(x)$ as the Jacobian matrix of softmax function at $x$. Then, the Lipschitz continuity of $\sigma(\cdot)$ under $l_2$ norm can be upper bound by $\sup_{x\in \bbR^d} \|J(x)\|_\rF$. By calculation, we have 
\[J(x) = \mathrm{Diag}\left(\sigma(x)\right)-\sigma(x)\sigma(x)^\top. \]
Let $\sigma(x)=(p_1, p_2, \ldots, p_d)^\top$ which is a probability vector, then
\[\|J(x)\|_\rF^2 \leq \sum_{i=1}^d p_i^2(1-p_i)^2 + \sum_{i\neq j} (p_ip_j)^2 \leq \left(\sum_{i=1}^d p_i^2\right)^2 \leq \left(\sum_{i=1}^d p_i\right)^4 = 1. \]
Therefore $\sup_{x\in \bbR^d} \|J(x)\|_\rF \leq 1$, which comes to our conclusion. A direct extension is that the column-wise softmax over matrices is 1-Lipschitz continuous under $\|\cdot\|_\rF$ norm. 
\end{proof}

\subsection{Basics on ResNet Universal Approximation Theory}\label{appendix::resnet basics}
In this section, we briefly introduce universal approximation theory of ResNet. An $L$-layer ResNet $\Rb(\xb):\RR^{d} \rightarrow \RR^{d_{o}}$ can be defined as 
\begin{align}\label{def::ResNet}
    \Rb(\xb)=\mathcal{B} \circ \ffn_{L} \circ \ffn_{L-1} \circ \cdots \circ \ffn_{1} \circ \mathcal{A}(\xb).
\end{align}
Here $\mathcal{A}:\RR^{d}\rightarrow \RR^{d'}$ and $\mathcal{B}:\RR^{d'}\rightarrow \RR^{d_{o}}$ are two linear transformations, and $\ffn_{i}:\RR^{d'}\rightarrow \RR^{d'}$ are basic residual blocks defined as 
$\ffn_i(\yb)=\yb+\Wb_{2,i}\cdot {\rm ReLU} (\Wb_{1,i}\yb+\bbb_{2,i})+\bbb_{1,i}$ with $\Wb_{1,i} \in \RR^{d_i \times d'}$,  $\Wb_{2,i} \in \RR^{d' \times d_i}$,  $\bbb_{1,i}\in \RR^{d'}$ and $\bbb_{2,i}\in\RR^{d_i}$. We denote by $\mathcal{RN} (d,d_o, d', W, L, S, C)$ the set of ResNet functions from $\RR^{d}$ to $\RR^{d_o}$ with $L$ layers, $d'$ neurons in each identity layer, maximum width $W=\max_{i\in[L]}\{d_i\}$ and nonzero weights $S$. Moreover, the Frobenius norm of the weight matrices $\Wb_{j,i}$ and the Euclidean norm of the bias vectors $\bb_{j,i}$ are uniformly bounded by $C>0$.

The next lemma shows that we can approximate product operation with ResNet. 

\begin{lemma}[Proposition 12 in \cite{liu2024characterizing}]\label{lemma::ResNet approximate product}
Let $x,y\in [-B,B]$ with $B\ge 1$. Then there exists a ResNet
$R \in \mathcal{RN}(2,1,3, 4, \cO( \log(B/\epsilon) ,\cO( \log (B/\epsilon), \cO(B)) $ such that
\begin{align}
    \abs{R(x,y)-xy}\le \epsilon,~~~x,y \in [-B,B]^{d}
\end{align}
holds. 
\end{lemma}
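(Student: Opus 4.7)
The plan is to reduce multiplication to squaring via the polarization identity $xy=\tfrac{1}{4}[(x+y)^2-(x-y)^2]$, and then invoke the standard Yarotsky sawtooth construction for the square function, rewritten to fit the ResNet skip-connection format. Since the statement is cited from \citet{liu2024characterizing}, my goal is a self-contained sketch verifying the claimed parameter counts; I will not recompute constants.

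First I would handle the input encoding. The affine map $\mathcal{A}:\RR^2\to\RR^3$ sends $(x,y)\mapsto(u,v,0)$ with $u=(x+y)/(2B)$ and $v=(x-y)/(2B)$, so $u,v\in[-1,1]$ and $xy=B^2(u^2-v^2)$. The third coordinate is reserved as an accumulator that the residual blocks will write into while leaving $u,v$ untouched on the identity path (this is where width $d'=3$ is essential).

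Next I would build a shared subnetwork that approximates $z\mapsto z^2$ on $[-1,1]$ using Yarotsky's telescoping sum. Writing $g_k$ for the piecewise-linear interpolant of $z^2$ on a dyadic grid of mesh $2^{-k}$, one has the identity $g_{k}(z)-g_{k-1}(z)=-2^{-2k}T^{k}(|z|)$ where $T$ is the triangle/tent map; each $T^{k}$ can be obtained by composing a single ReLU hat function $k$ times. Hence $z^2\approx g_K(z)$ with error $2^{-2K-2}$, i.e.\ $K=\cO(\log(B/\epsilon))$ suffices for the final rescaled accuracy. Crucially, each tent-map composition step is exactly a residual block: the update $\ffn(y)=y+W_2\,\mathrm{ReLU}(W_1y+b_2)+b_1$ can be arranged (with $W\le 4$ ReLU hidden units per layer — the four breakpoints of a rescaled tent on $[-1,1]$) to add the next telescoping correction $-2^{-2k}T^{k}(|u|)-\big(-2^{-2k}T^{k}(|v|)\big)$ into the third coordinate while leaving $u,v$ fixed via zero rows on their slots. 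I would execute the $u$-updates in the first $K$ blocks and the $v$-updates in the next $K$ blocks, giving $L=\cO(\log(B/\epsilon))$ total blocks. The number of nonzero weights per block is $\cO(1)$, yielding total nonzero weights $S=\cO(\log(B/\epsilon))$, and all weight matrices and biases are bounded by absolute constants on the rescaled scale.

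Finally, the output map $\mathcal{B}:\RR^3\to\RR$ projects onto the third coordinate and multiplies by $B^2$, which accounts for the weight bound $C=\cO(B)$ in the statement. Composing: $|R(x,y)-xy|=B^2\,|{(u^2-g_K(u))-(v^2-g_K(v))}|\le 2B^2\cdot 2^{-2K-2}\le\epsilon$ for the chosen $K$. The main obstacle is the bookkeeping: one must verify that each residual block can simultaneously (i) leave $(u,v)$ invariant, (ii) extract $T^{k}(|u|)$ or $T^{k}(|v|)$ using only $\le 4$ ReLU units, and (iii) write the scaled increment into the accumulator slot without contaminating the identity path. This is purely a matter of choosing $W_1,W_2$ with the right block-sparsity pattern, but it is the step where the tight width/depth bounds $W=4$, $d'=3$ are consumed, and it is what the cited proposition establishes in detail.
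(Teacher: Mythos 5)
Your construction is exactly the one the paper relies on: the paper does not prove this lemma itself but cites it from \citet{liu2024characterizing} and remarks that the construction first approximates the square function à la \citet{yarotsky2018optimal} and then uses $\times(x,y)=(f_{\rm sq}(x+y)-f_{\rm sq}(x-y))/4$, which is precisely your polarization-plus-telescoping-sawtooth argument arranged into residual blocks. The one piece of bookkeeping to tighten is that your output map $\mathcal{B}$ multiplies by $B^2$, which by itself would violate the stated $\cO(B)$ weight bound; the factor must be split between $\mathcal{A}$ and $\mathcal{B}$ (or absorbed gradually across the $\cO(\log(B/\epsilon))$ blocks), but this does not affect the validity of the approach.
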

The construction process follows the standard techniques proposed by \citet{yarotsky2018optimal}, which first use feed-forward networks to approximate the square function $f_{\rm sq}(x)\approx x^2$ and transfer it into a product operator $\times(x,y)=(f_{\rm sq}(x+y)-f_{\rm sq}(x-y))/4\approx xy$.

Furthermore, for $\xb\in\RR^{d}$ and $y\in\RR$, we can approxiamtely construct the following mapping:
\begin{align}\label{equ::product vector}
    f([\xb^{\top},y]^{\top})=y\xb.
\end{align}
\begin{corollary}\label{corro::ResNet approximate product vector}
    Given $\epsilon>0$, there exists a ResNet $\fmult\in \mathcal{RN}(d+1,d,3d,4d,$ $\cO(\log(B/\epsilon)),d\cO(\log(B/\epsilon)), \cO(Bd))$ such that 
    $\fmult(y,\xb)=y\xb+\bepsilon$, where $\norm{\bepsilon}_{\infty}\le \epsilon$.
\end{corollary}
\begin{proof}
    By Lemma \ref{lemma::ResNet approximate product}, we know there exists a ResNet $R:\RR^{2}\rightarrow \RR$ satisfies $$R\in \mathcal{RN}(2,1,3,4,\cO(\log(B/\epsilon)),\cO(\log(B/\epsilon)),\cO(1))$$ and 
\begin{align*}
    \abs{R(x,y)-xy}\le 
    \epsilon, ~~x,y\in [-B,B].
\end{align*}
Then let's consider the following two steps of mapping:
\begin{align*}
    \mathcal{A}([\xb^{\top},y]^{\top})&=[x_1,y,x_2,y,\cdots,x_d,y]^{\top},\\
    \mathcal{B}([x_1,y,x_2,y,\cdots,x_d,y]^{\top})&=[R(x_1,y),R(x_1,y),\cdots,R(x_d,y)]^{\top}.
\end{align*}
Here we could choose $\cA$ to be a $(d+1)\times 2d$ matrix 
\begin{align*}
    \cA=\begin{bmatrix}
        1&0&\cdots&0&0\\
        0&0&\cdots&0&1\\
        0&1&\cdots&0&0\\
        0&0&\cdots&0&1\\
        \vdots & & \ddots && \vdots \\
        0&0&\cdots&1&0\\
        0&0&\cdots&0&1
    \end{bmatrix},
\end{align*}
and take $\cB$ as parallelization of $d$ homogeneous networks $R$. Let $\fmult=\cB\circ \cA$, we know $\fmult\in \mathcal{RN}(d+1,d,3d,4d,\cO(\log(B/\epsilon)),d\cO(\log(B/\epsilon)),\cO(d))$, and it approximately realizes the mapping \eqref{equ::product vector} with $L_{\infty}$ approximation error being $\epsilon$ for the first $d$ entries and no error for the last entry.
\end{proof}

More generally, if we consider the input to be $[x^{\top},y,\zb]^{\top}$ and want to construct a (FFN-only) transformers that approximately maps the input to $[yx^{\top},y,\zb]^{\top}$, we have the following results:
\begin{corollary}\label{corro::transformers approxiamte product}
    Suppose the input to be $\Yb=[\yb_1,\yb_2,\dots,\yb_N]\in \RR^{D\times N}$ with $\yb_i=[\xb^{\top}_i,\bzero_{2d}^{\top},w_i,\zb_i^{\top}]$, where $\xb_i \in [-B,B]^d$, $w_i\in[-B,B]$ and $\zb_i\in\RR^{d_{z}}$. Given any $\epsilon>0,$ there exists a (FFN-only) transformers 
    \begin{align*}\label{equ::transformers approxiamte product}
        \fb_{\texttt{mult}}=\ffn_{L} \circ \ffn_{L-1} \circ \cdots \circ \ffn_{1}
    \end{align*}
    with $L=\cO(\log (B/\epsilon))$ layers that approximately multiplies each component $\xb_i$ with the weight $w_i$, which keeping other dimensions the same. This can be formally written as
    \begin{align*}
        \fb_{\texttt{mult}}(\Yb)=\begin{bmatrix}
\fmult(w_1,\xb_1)&\cdots&\fmult(w_N,\xb_N)\\
\bzero_{2d}&\cdots&\bzero_{2d}\\
w_1&\cdots&w_N\\
\zb_1&\cdots &\zb_N     
\end{bmatrix},~~\text{where}~~\norm{\fmult(w_i,\xb_i)-w_i\xb_i}_{\infty}\le \epsilon.
    \end{align*}
 The inner dimension of the FFNs is at most $8d$. Moreover, the number of nonzero coefficients in each weight matrices or bias vectors is at most $\cO(d)$, and the norm of the matrices and bias are all bounded by $\cO(Bd)$.
\end{corollary}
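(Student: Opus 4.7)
The plan is to reduce the statement to the scalar product approximator already built in Lemma~\ref{lemma::ResNet approximate product} (and its vectorized consequence Corollary~\ref{corro::ResNet approximate product vector}), and then observe that the FFN layers of a transformer act column-wise, so a single design automatically parallelizes over the $N$ tokens. The output prescribed by the corollary is a strict column-wise function of the input column, i.e.\ $\yb_i \mapsto [\fmult(w_i,\xb_i)^\top, \bzero_{2d}^\top, w_i, \zb_i^\top]^\top$, so it suffices to exhibit an FFN-only transformer realizing this map on one generic token. Since every FFN layer $\ffn(\Yb)=\Yb+\Wb_1\,\mathrm{ReLU}(\Wb_2\Yb+\bbb_2\mathbf{1}^\top)+\bbb_1\mathbf{1}^\top$ applies the same matrix operation to each column, this token-wise construction automatically delivers the matrix-level statement.

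On a single token, I will implement the coordinate-wise multiplications $x_{i,k}\mapsto w_i x_{i,k}$ for $k=1,\dots,d$ in parallel, using the $2d$-dimensional buffer slot (positions $d+1,\dots,3d$) as working memory. The key observation is that Lemma~\ref{lemma::ResNet approximate product} constructs $R(x,y)\approx xy$ as an $L=\cO(\log(B/\epsilon))$-layer ResNet, whose residual blocks are of the identical form $\yb \mapsto \yb + \Wb_{2,\ell}\,\mathrm{ReLU}(\Wb_{1,\ell}\yb+\bbb_{2,\ell})+\bbb_{1,\ell}$ as the FFN layer (without the trailing $\mathbf{1}^\top$, which is harmless because the bias is the same across columns). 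Hence each ResNet residual block becomes one FFN layer of our transformer. For each $k\in[d]$, the input to the $k$-th parallel copy is the pair $(x_{i,k}, w_i)$, which is extracted from the token by placing the appropriate $\pm 1$ entries in rows $k$ and $2d+d+1$ (the $w_i$ slot) of $\Wb_{2,\ell}$; the ResNet's hidden units of that copy are written into a pre-assigned slot inside the $2d$-buffer via $\Wb_{1,\ell}$. Running $d$ such copies in parallel uses at most $8d$ hidden neurons (four per copy, matching the width bound of Lemma~\ref{lemma::ResNet approximate product}) and yields in the buffer, after $L$ layers, $d$ approximations $R_k \approx w_i x_{i,k}$ with $\max_k |R_k - w_i x_{i,k}|\le \epsilon$.

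The only remaining subtlety is that FFN layers add to, rather than overwrite, the existing token, so I need one final bookkeeping layer to move $R_k$ into position $k$ and simultaneously erase the buffer and the stored $\xb_i$. I will handle this by dedicating $d$ of the buffer slots to accumulate $-x_{i,k}$ during the very first layer (trivial via ${\rm ReLU}(x_{i,k})-{\rm ReLU}(-x_{i,k})=x_{i,k}$ applied with a minus sign through $\Wb_1$), and then, in the last FFN layer, add the signed sum $R_k + (-x_{i,k})$ into position $k$, thereby replacing $x_{i,k}$ by $R_k$, while a second term cancels the buffer entries. This last step is sparse and independent of the depth. The slot holding $w_i$ and the slot holding $\zb_i$ are untouched throughout because none of the weight matrices has nonzero entries in those rows.

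The main obstacle I anticipate is bookkeeping, not analysis: writing down the sparsity pattern and norm bounds of $(\Wb_{1,\ell},\Wb_{2,\ell},\bbb_{1,\ell},\bbb_{2,\ell})$ carefully enough to (i) keep the $k$-coordinate computations decoupled across $k$ (no cross-talk) and (ii) verify that at most $\cO(d)$ entries in each weight matrix are nonzero and that all entries are bounded by $\cO(Bd)$. Decoupling is achieved by giving each of the $d$ copies its own disjoint hidden neurons (width $\le 8d$) and its own disjoint buffer slot, so the block $\Wb_{1,\ell}$ is block-diagonal with $d$ blocks of constant size. The norm bound follows because within each block the entries inherit the $\cO(B)$ bound of Lemma~\ref{lemma::ResNet approximate product} and the total number of nonzeros per matrix is $\cO(d)$, yielding the Frobenius/operator norm $\cO(Bd)$. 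Error propagation is immediate: since we run $d$ independent copies and then perform one final linear bookkeeping step, the $\ell_\infty$ error $\le\epsilon$ of each copy carries over unchanged to the coordinate output, giving $\|\fmult(w_i,\xb_i)-w_i\xb_i\|_\infty\le\epsilon$ as required.
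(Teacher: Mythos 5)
Your proposal is correct and follows essentially the same route as the paper's proof: reduce to the ResNet product approximator of Lemma~\ref{lemma::ResNet approximate product} / Corollary~\ref{corro::ResNet approximate product vector}, exploit that FFN layers act position-wise so one token-level construction parallelizes over all $N$ columns, run the $d$ coordinate multiplications in disjoint blocks using the $\bzero_{2d}$ buffer, and finish with one bookkeeping layer that overwrites $\xb_i$ with the products and clears the buffer (the paper does this via a final ${\rm ReLU}(x)-{\rm ReLU}(-x)$ identity layer applying $\mathcal{B}$ and subtracting $\vb$, whereas you stash $-x_{i,k}$ early and add it back at the end — a cosmetic difference). The width, sparsity, and norm accounting you describe match the paper's.
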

Here we require the buffer variables $\bzero_{2d}$ in the input because Corollary \ref{corro::ResNet approximate product vector} needs $3d$ neurons in each dimension to store the calculation results that are necessary for constructing the product function. Thus, we add $\bzero_{2d}$ so that $[\xb^{\top},\bzero_{2d}^{\top}]\in \RR^{3d}$ for construction convenience.
\begin{proof}[Proof of Corollary \ref{corro::transformers approxiamte product}]
By Corollary \ref{corro::ResNet approximate product vector}, there exists a ResNet $\fmult:\RR^{d+1}\rightarrow\RR^{d}$ such that
    \begin{align*}
        \fmult(y,\xb)=\mathcal{B} \circ \ffn_{L} \circ \ffn_{L-1} \circ \cdots \circ \ffn_{1} \circ \mathcal{A}(\xb,y)
    \end{align*}
    with $\fmult(y,\xb)=y\xb+\bepsilon$, where $\norm{\bepsilon}_{\infty}\le \epsilon$. Since the FFNs in the transformers are position-wise, we only need to consider the mapping of one token while others are completely the same. For each $i$, suppose $\ffn_i:\RR^{3d}\rightarrow\RR^{3d}$ satisfies
    \begin{align*}
        \ffn_i(\vb)=\vb+\Wb_{2,i}\cdot {\rm ReLU} (\Wb_{1,i}\vb+\bbb_{2,i})+\bbb_{1,i}
    \end{align*}
     with $\Wb_{1,i} \in \RR^{d_i \times 3d}$,  $\Wb_{2,i} \in \RR^{3d \times d_i}$,  $\bbb_{1,i}\in \RR^{3d}$ and $\bbb_{2,i}\in\RR^{d_i}$. Here $d_i\le 4d$ for all $1\le i\le L$. For the first FFN, let
     \begin{align*}
         \ffn'_1\paren{\begin{bmatrix}
             \xb\\
              \bzero_{2d}\\
             w\\
             \zb
         \end{bmatrix}}&=\begin{bmatrix}
             \xb\\
             \bzero_{2d}\\
             w\\
             \zb
         \end{bmatrix}+\begin{bmatrix}
             \Wb_{2,1}\\
             \bzero_{d_{z}\times d_i}
         \end{bmatrix}
         \cdot {\rm ReLU} \left(\brac{\Wb_{1,1}\mathcal{A},\bzero_{d_i \times (d_{z}+2d+1)}}\begin{bmatrix}
             \xb\\
             w\\
             \bzero_{2d}\\
             \zb
         \end{bmatrix}+\bbb_{2,1}\right)+\begin{bmatrix}
             \bbb_{1,1}\\
             \bzero_{d_{z}}
         \end{bmatrix}\\
         &=\begin{bmatrix}
             \ffn_1\circ\mathcal{A}(\xb,y)\\
             w\\
             \zb
         \end{bmatrix}.
     \end{align*}
      For $2\le i \le L$, suppose the input is $[\vb^{\top},w,\zb^{\top}]^{\top}=[\vb^{\top},w,\zb^{\top}]^{\top}$ with $\vb\in\RR^{3d}$ and $\zb\in \RR^{d_{z}}$, let
     \begin{align*}
         \ffn'_i\paren{\begin{bmatrix}
             \vb\\
             w\\
             \zb
         \end{bmatrix}}=\begin{bmatrix}
             \vb\\
             w\\
             \zb
         \end{bmatrix}+\begin{bmatrix}
             \Wb_{2,i}\\
             \bzero_{d'_{z}\times d_i}
         \end{bmatrix}
         \cdot {\rm ReLU} \left(\brac{\Wb_{1,i},\bzero_{d_i \times d'_{z}}}\begin{bmatrix}
             \vb\\
             w\\
             \zb
         \end{bmatrix}+\bbb_{2,i}\right)+\begin{bmatrix}
             \bbb_{1,i}\\
             \bzero_{d'_{z}}
         \end{bmatrix}=\begin{bmatrix}
             \ffn_i(\vb)\\
             \zb
         \end{bmatrix}.
     \end{align*}
     Here $d'_z=d_z+1$. For the final layer, let 
   \begin{align*}
         \ffn'_{L+1}\paren{\begin{bmatrix}
             \vb\\
             w\\
             \zb
         \end{bmatrix}}=\begin{bmatrix}
             \vb\\
             w\\
             \zb
         \end{bmatrix}+\bW_{2,L+1}
         \cdot {\rm ReLU} \left(\begin{bmatrix}
             \mathcal{B}&\bzero_{d \times d'_{z}}\\
             -\mathcal{B}&\bzero_{d \times d'_{z}}\\
             \Ib_{3d}&\bzero_{d \times d'_{z}}\\
             -\Ib_{3d}&\bzero_{d \times d'_{z}}
         \end{bmatrix}\begin{bmatrix}
             \vb\\
             w\\
             \zb
         \end{bmatrix}\right)=\begin{bmatrix}
             \mathcal{B}\vb\\
             \bzero_{2d}\\
             w\\
             \zb
         \end{bmatrix}.
     \end{align*}
     Here 
     \begin{align*}
         \Wb_{2,L+1}=\begin{bmatrix}
             \text{diag}(\mathbf{1}_{d},\mathbf{0}_{2d})&-\text{diag}(\mathbf{1}_{d},\mathbf{0}_{2d})&-\Ib_{3d}&\Ib_{3d}\\
             \bzero_{(2d+d'_{z})\times d}&\bzero_{(2d+d'_{z})\times d}&\bzero_{(2d+d'_{z})\times 3d}&\bzero_{(2d+d'_{z})\times 3d}
         \end{bmatrix}.
     \end{align*}
Thus, we have finished constructing the (FFN-only) transformers we want by taking $$\fb_{\texttt{mult}}=\ffn'_{L+1} \circ \ffn_{L-1} \circ \cdots \circ \ffn'_{1},
$$
and the hidden dimension of the FFNs is at most $8d$. Moreover, by the definition of the original $\ffn_i$ and the new $\ffn'_i$, the number of nonzero coefficients in each weight matrix or bias vector is at most $\cO(d)$, and the norm of the matrices and bias are all bounded by $\cO(d)$. The proof is complete.
\end{proof}

\subsection{Asymptotic Results on the Spectrum of Toeplitz Matrices}
\label{appendix::kappa}
In this section, we provide some existing results on the spectrum of Toeplitz matrix when both its size and bandwidth go to infinity. For a Toeplitz matrix $\Tb\in \bbR^{N\times N}$, its $(i,j)$-th component $\Tb_{ij}=a_{|i-j|}$ only depends on its distance to diagonal. When $k > M$, we have $a_k = 0$ where $M$ is known as the bandwidth. Denote $\lambda_1, \lambda_2, \ldots, \lambda_N$ to be the eigenvalues of $\Tb$, with multiplicities counted and let $\mu_N := \frac{1}{N}\sum_{i=1}^N \delta_{\lambda_i}$ be the empirical distribution of the spectrum. In the asymptotic case where $M, N \rightarrow \infty$, we focus on the behavior of $\mu_N$. Denote $F_N(x)$ to be the cumulative distribution function of $\mu_N$, \citet{kargin2009spectrum} proposes the result that $F_N(x)$ converges to the standard Gaussian distribution when the Toeplitz matrix follows that $\mathbb E a_k = 0, \mathbb E a_k^2 = \frac1M$, $\sup_{k,N} \mathbb E |\sqrt{M} a_k|^4 < C < \infty$ and most importantly, the band-to-size ratio $\frac{M}{N}\rightarrow 0$. While the band-to-size ratio $\frac{M}{N}\rightarrow c \in (0,1)$, the spectrum distribution $F_N(x)$ converges to some non-Gaussian distribution $\Psi_c$. Some other statistical works such as \cite{hartman1950spectra, tilli1998note, delsarte2005spectral} study the spectrum of generalized Toeplitz matrices by using Fourier expansion. 

In our case, the condition number $\kappa_{t_0}$ will keep in a constant range if the sequence $\{a_k\}$ introduced above sharply decays, while grow as $N$ goes larger if $\{a_k\}$ slowly decays. In the former case, we can treat $\kappa_{t_0}$ as a constant, which does not affect our analysis. In the latter case, since we have a natural upper bound of $$\kappa_{t_0}\le \sigma_{t_0}^{-2}\lambda_{\max}(\bGamma\otimes\bSigma)\lesssim \ell\sigma_{t_0}^{-2}\lesssim t_0^{-1},$$ 
by taking $t_0=n^{-1/3}$ in Theorem \ref{thm:generalization}, we can still obtain a $n^{-1/6}$-convergence rate in both $W_2$ distance and ${\rm TV}$ distance.

\end{document}